\documentclass[11pt]{article}
\usepackage[top=1in,bottom=1in,right=1in,left=1in]{geometry}
\usepackage{amssymb,amsmath,amsthm}
\usepackage{pgfplots}
\usepackage{pgfplotstable}
\usepackage{arydshln}
\usepackage{verbatim}
\usepackage{titlesec}
\usepackage{subfigure}
\usepackage[linesnumbered,ruled]{algorithm2e}
\usepackage{bm}
\usepackage{pdfpages}

\definecolor{forestgreen}{rgb}{0.13, 0.55, 0.13}
\usepackage[colorlinks, linkcolor=blue,citecolor=forestgreen,urlcolor = black, bookmarks=true]{hyperref}
\usepackage[T1]{fontenc}
 
\usepackage[nameinlink, noabbrev, capitalize]{cleveref}
\usepackage{times}

\crefname{equation}{}{}
\crefrangeformat{equation}{(#3#1#4) --~(#5#2#6)}
\crefmultiformat{equation}{(#2#1#3)}{, (#2#1#3)}{, (#2#1#3)}{, (#2#1#3)}
\crefname{lem}{Lemma}{Lemmas}
\crefname{section}{Section}{Sections}
\crefname{subsubsubsection}{Section}{Sections}
\crefname{rem}{Remark}{Remarks}
\crefname{figure}{Figure}{Figures}
\crefname{table}{Table}{Tables}
\Crefname{lem}{Lemma}{Lemmas}
\crefname{thm}{Theorem}{Theorems}
\Crefname{thm}{Theorem}{Theorems}

\usepackage{commath}

\newtheorem{thm}{Theorem}[section]
\newtheorem{assumption}{Assumption}
\newtheorem{claim}{Claim}[section]

\newtheorem{example}[thm]{Example}
\newtheorem{remark}[thm]{Remark}

\newtheorem{question}{Question}
\newtheorem{lem}[thm]{Lemma}
\newtheorem{fact}{Fact}[section]

\newtheorem{proposition}[thm]{Proposition}
\newtheorem{corollary}[thm]{Corollary}

\theoremstyle{definition}
 
\theoremstyle{definition}
\newtheorem{defn}{Definition}

\newcommand\blfootnote[1]{%
  \begingroup
  \renewcommand\thefootnote{}\footnote{#1}%
  \addtocounter{footnote}{-1}%
  \endgroup
}

\title{A Unified Approach to Learning Ising Models: Beyond Independence and Bounded Width\blfootnote{J.G. is supported by Vannevar Bush Faculty Fellowship ONR-N00014-20-1-2826 and Simons Investigator Award 622132. E.M. is supported in part by Vannevar Bush Faculty Fellowship ONR-N00014-20-1-2826, Simons Investigator Award 622132, Simons-NSF DMS-2031883,  and NSF Award CCF 1918421.}}
\author{Jason Gaitonde\\
Massachusetts Institute of Technology\\
\texttt{gaitonde@mit.edu}
\and Elchanan Mossel\\
Massachusetts Institute of Technology\\
\texttt{elmos@mit.edu}}

\pgfplotsset{compat=1.18}
\begin{document}
\clearpage\maketitle
\thispagestyle{empty}

\begin{abstract}

We revisit the well-studied problem of efficiently learning the underlying 
structure and parameters of an Ising model from data. Current algorithmic approaches achieve essentially optimal sample complexity when samples are generated i.i.d. from the stationary measure and the underlying model satisfies ``width'' constraints that bound the total 
$\ell_1$ interaction involving each node.
However, these assumptions are 
not satisfied in
some important settings of interest, like temporally correlated data or more complicated models (like spin glasses) that do not satisfy width bounds.

We analyze a simple existing approach based on node-wise logistic regression, and show it provably succeeds at efficiently recovering the underlying Ising model in several new settings:
\begin{enumerate}
    \item Given dynamically generated data from a wide variety of local Markov chains, including Glauber, block, and round-robin dynamics, logistic regression recovers the parameters with sample complexity that is optimal up to $\log\log n$ factors. 
    This generalizes work of Bresler, Gamarnik, and Shah [IEEE Trans. Inf. Theory '18], which provides a specialized algorithm for only structure recovery in bounded degree graphs from continuous-time Glauber dynamics.
    
    \item For the Sherrington-Kirkpatrick model of spin glasses, given $\mathsf{poly}(n)$ independent samples, logistic regression recovers the parameters in most of the known high-temperature regime, including the entire high-temperature regime with zero field. This improves on recent work of Anari, Jain, Koehler, Pham, and Vuong [ArXiv'23] which gives distribution learning at very high temperature via a reduction of Koehler, Heckett, and Risteski [ICLR'23]. Our analysis provides a much simpler reduction from parameter learning to weaker structural properties of the Gibbs measure.
    
    \item As a straightforward byproduct of our techniques, we also obtain an exponential improvement in learning from samples in the M-regime of data considered by Dutt, Lokhov, Vuffray, and Misra [ICML'21] as well as novel guarantees for learning from the adversarial Glauber dynamics of Chin, Moitra, Mossel, and Sandon [ArXiv'23]. For the former, this sample complexity is exponentially better than what can be achieved using i.i.d. samples from the stationary measure.
\end{enumerate}  

Our approach thus provides a significant generalization of the elegant analysis of logistic regression by Wu, Sanghavi, and Dimakis [Neurips'19] without any algorithmic modification in each setting. Our techniques are quite modular and may have further applications. 
\end{abstract}

\newpage

\clearpage
\setcounter{page}{1}
\section{Introduction}

In this work, we study the problem of efficiently recovering the structure and parameters of an Ising model given data generated from the underlying model. Ising models are special cases of Markov random fields and provide a convenient graphical representation of the joint dependency structure of high-dimensional data. More precisely, given a symmetric matrix of interactions $A\in \mathbb{R}^{n\times n}$ and a vector of external fields $\bm{h}\in \mathbb{R}^n$, the Ising model $\mu_{A,\bm{h}}$ is the distribution on vectors in $\{-1,1\}^n$ where
\begin{equation*}
    \mu_{A,\bm{h}}(\bm{x})=\exp\left(\frac{1}{2}\bm{x}^TA\bm{x}+\bm{h}^T\bm{x}-\log Z_{A,\bm{h}}\right),
\end{equation*}
and the partition function $Z_{A,\bm{h}}$ is defined by 
\begin{equation*}
    Z_{A,\bm{h}}\triangleq \sum_{\bm{x}\in \{-1,1\}^n} \exp\left(\frac{1}{2}\bm{x}^TA\bm{x}+\bm{h}^T\bm{x}\right).
\end{equation*}
In this formalism, the magnitude and sign of each coefficient $A_{i,j}\in \mathbb{R}$ represent the strength and direction of the local dependence of the variables $x_i$ and $x_j$. The quintessential property of such models is that the dependency structure for a given node $i\in [n]$ is entirely given by its graph-theoretic neighborhood; conditioned on the value of $x_k$ for all $k\neq i$, the conditional distribution of $x_i$ only depends on the values of $x_j$ for those $j$ with $A_{i,j}\neq 0$.

Due to their conceptual simplicity and wide-ranging applicability across numerous domains (for instance as abstractions of social networks or atomic structure), Ising models have been an object of intense study in computer science, economics, statistical physics, and beyond. As a result, the statistical and computational problem of recovering the underlying structure or estimating the parameters of an Ising model has been the topic of numerous research works. While learning arbitrary high-dimensional distributions is generically intractable, the significantly lower dimensional parametric structure of Ising models makes them much more amenable to these kinds of statistical tasks. The fundamental question our work sheds light on is the following:

\begin{question}
\label{question:q1}
    What kinds of data generated from $\mu_{A,\bm{h}}$ and what kinds of assumptions on $(A,\bm{h})$ are sufficient for efficiently learning the underlying model?
\end{question}

By far the most commonly studied setting for learning Ising models restricts to the case where one observes independent samples $X_1,\ldots,X_p\sim \mu_{A,\bm{h}}$, and the parameters $(A,\bm{h})$ are assumed to satisfy further constraints that make the learning problem tractable. 
Under these classical assumptions, there are by now multiple algorithmic approaches to this problem. Early work by Bresler, Mossel, and Sly provides an algorithm for structure learning (i.e. recovering the edges of the underlying graph) on graphs with maximum degree $d$ with run-time $\Omega(n^d)$ \cite{DBLP:journals/siamcomp/BreslerMS13}. The first provable guarantees for general bounded-degree models with polynomial run-time independent of the degree (though doubly-exponential in the implicit constants) were obtained by Bresler \cite{DBLP:conf/stoc/Bresler15} using a greedy, combinatorial approach, which was later extended by Hamilton, Koehler, and Moitra to more general Markov random fields \cite{DBLP:conf/nips/HamiltonKM17}. 
More recent approaches take an optimization-oriented approach by implicitly or explicitly minimizing a suitable convex loss function, possibly with regularization \cite{ravikumar2010high,DBLP:conf/nips/VuffrayMLC16,DBLP:conf/focs/KlivansM17,DBLP:conf/nips/WuSD19,DBLP:conf/nips/VuffrayML20}.

To state the current best guarantees, we say the $\ell_1$ width $\lambda=\lambda(A,\bm{h})$ of an Ising model parameterized by $(A,\bm{h})$ is given by $\lambda\triangleq \max_{i\in [n]} \|A_i\|_1+\vert h_i\vert$, where $A_i$ denotes the $i$th row of $A$. For Ising models with $\ell_1$ width known to be at most $\lambda$, the sample complexity 
 \cite{DBLP:conf/focs/KlivansM17, DBLP:conf/nips/WuSD19,DBLP:conf/nips/VuffrayML20} of efficiently learning the parameters up to some small additive accuracy scales roughly like $p=\exp(C\lambda)\log(n)$, where $C\geq 1$ is a reasonable constant. For the general problem of learning over this class, these guarantees are essentially optimal up to the precise constant in the exponential scaling with $\lambda$ due to information-theoretic lower bounds shown by Santhanam and Wainwright \cite{DBLP:journals/tit/SanthanamW12}. We note that for the task of learning \emph{some} Ising model that is statistically close to the generative Ising model from samples without requiring parameter recovery, Devroye, Mehrabian, and Reddad \cite{devroye2020minimax} showed that the minimax sample complexity is $p=\Theta(n^2)$ by VC-type arguments. However, this result does not lead to an efficient algorithm.

While the theoretical problem of efficiently learning the parameters of Ising models from i.i.d. data under \emph{just} these assumptions is thus essentially resolved, these algorithmic guarantees are not entirely satisfying for several reasons. First, obtaining i.i.d. samples from a general Ising model may be provably intractable \cite{DBLP:conf/focs/SlyS12} or at least practically quite difficult, even in cases where the interactions are known (though of course the learning problem is trivial in this case). An arguably more natural assumption is instead that samples are generated from some local Markov chain whose stationary distribution is that of the Ising model, since such samples can be significantly easier to generate. In this case, samples are both temporally and spatially correlated. In certain settings like coordination games \cite{DBLP:conf/focs/MontanariS09}, one may even view these dynamics as the more natural observational primitive than stationary samples. Of course, if the Markov chain rapidly mixes, one may extract nearly independent samples from the stationary measure. But rapid mixing is often analytically unknown, quantitatively slow, or worse, provably false even in simple models where learning is nonetheless possible. A natural question is to determine the extent to which existing methods are robust to these kinds of data generation, or whether specialized approaches are necessary.

To our knowledge, the only prior work on provably and efficiently learning Ising models in the dynamical setting was done by Bresler, Gamarnik, and Shah \cite{DBLP:journals/tit/BreslerGS18}. Their main result is that given the trajectory of single-site, continuous Glauber dynamics on bounded-degree graphs, a suitable thresholding algorithm can succeed in recovering the \emph{structure} of the underlying Ising model (given some minimum value of the nonzero elements in the support of $A$). Their main result shows that the per-node sample complexity is roughly the same as that of the i.i.d. setting up to constants in the exponent: if $A$ has maximum entry at most $\beta$ and each node has degree at most $\Delta$, they show that essentially the same $\exp(O(\beta\Delta))\log n$ updates per node suffices for structure recovery. 

However, the problem of structure recovery is in principle much weaker than parameter recovery in the non-i.i.d. setting, and indeed, their algorithm analysis only coarsely distinguishes between edge and non-edge correlations. 
Moreover, it is not clear 
whether their result extends to the more general bounded $\ell_1$ width setting, nor whether it extends to any other local Markov chain without further alteration. Even with a reduction from algorithms for general graphs to algorithms for bounded degree graphs (which does not appear to be known), one would na\"ively pay a bound of $\exp(\Omega(\lambda^2))\log (n)$ in the sample complexity to obtain constant additive accuracy in all parameters since an $\ell_1$ bounded interaction matrix may have $\Omega(\lambda)$ entries with constant magnitude, as well as an upper bound of $\lambda$ for the maximum entry.

Returning to the i.i.d. setting, the statement of current algorithmic guarantees in terms of $\ell_1$ width also does not appear to reflect any natural \emph{structural properties} of the Gibbs measure under consideration. For instance, consider the Curie-Weiss model $\mu_{\mathsf{CW},\beta}$ that is proportional to $\exp((\beta/n)\sum_{i<j} x_ix_j)$ at inverse temperature $\beta\geq 0$. It is well-known that this model exhibits a phase transition at the critical inverse temperature $\beta = 1$; for $\beta<1$, the model exhibits very weak dependencies between sites, while for $\beta>1$, the model develops global correlations and the Gibbs measure nontrivially polarizes (see e.g. \cite{friedli_velenik_2017}). On the other hand, the above sample complexity guarantees smoothly depend on $\lambda=\beta - o(1)$ so do not reflect any such phase transition. While the width parameter appears related to the well-known Dobrushin condition for fast mixing \cite{dobrushin1987completely}, the correspondence in terms of provable guarantees for learning is seemingly superficial since the Dobrushin condition experiences the same phase transition.

This difference is somewhat mild in the case of the Curie-Weiss model, but this issue manifests much more dramatically when considering more complex Ising models of great importance, like the Sherrington-Kirkpatrick (SK) model of spin glasses \cite{PhysRevLett.35.1792}. In this model, the interaction matrix $A$ is a scaled sample from the Gaussian Orthogonal Ensemble (GOE); that is, $A$ is a symmetric matrix such that each entry above the diagonal is sampled according to a Gaussian with variance $\beta^2/n$. The SK model has profound applications in various problems across computer science, statistical physics, and statistics, and generally serves as a testbed for a variety of algorithmic and mathematical techniques as a tractable mean-field model with intricate geometric structure (as captured by the famous Parisi ansatz \cite{Parisi_1980, talagrand2010mean, panchenko2013sherrington}). With no external field, the high-temperature regime of the SK model where the model is \emph{replica-symmetric}, meaning two independent draws of the measure are nearly orthogonal, provably holds precisely when $\beta<1$ \cite{talagrand2011mean}. The problem of learning the parameters of this model from i.i.d. samples in this regime has been previously considered via non-exact and heuristic methods in the statistical physics community under the name of ``spin glass inversion'' (see e.g. \cite{mezard2009constraint,Bachschmid_Romano_2017}).

Because the $\ell_1$ width concentrates around $\beta\sqrt{n}$ with very high probability, existing algorithmic guarantees generically suffer a sample complexity bound of $\exp(\Omega(\sqrt{n}))$ for any $\beta>0$. Only very recent work by Anari, Jain, Koehler, Pham, and Vuong \cite{anari_universality} has broken this barrier by obtaining an exponential improvement on this bound: in their work, they show that a standard pseudolikelihood maximization approach succeeds in learning the distribution (in total variation distance) up to inverse temperature $\beta=1/4$ given $\mathsf{poly}(n)$ samples. They achieve this result by leveraging an recent insight of Koehler, Heckett, and Risteski \cite{DBLP:conf/iclr/KoehlerHR23} that reduces learning in KL divergence from samples to a quantitative form of ``approximate tensorization of entropy'' \cite{caputo2015approximate}.

Other than the fact that their learning guarantee does not imply parameter recovery, one downside of this approach in general is that approximate tensorization of entropy is a difficult condition to prove, if it holds at all. Indeed, it implies modified log-Sobolev inequalities for Glauber dynamics, and even establishing polynomial mixing of the SK model in the entire high-temperature regime $\beta<1$ seems beyond current techniques.\footnote{See \cite{DBLP:conf/focs/AlaouiMS22,celentano} for an alternative sampling algorithm that succeeds asymptotically, albeit in a much weaker sense than total variation distance.} The proof that SK satisfies this powerful condition for $\beta<1/4$ relies on very recent breakthroughs for proving functional inequalities \cite{bauerschmidt,eldan2022spectral,adhikari2022spectral} that necessarily cannot extend past $\beta=1/4$ in general, whereas the regime for efficient parameter estimation for this model plausibly extends to the entire high-temperature regime $\beta<1$. Moreover, it is also not clear that \emph{dynamical} notions like approximate tensorization of entropy are the right technical approach to this problem since they fundamentally correspond to the relationship between global and local properties of the underlying Ising measure. To efficiently learn, one might expect that only weaker structural properties of the measure should matter.

\subsection{Our Results}

In this work, we validate the intuition that the intrinsic complexity of learning should depend only on the typical structural properties of samples, both in the i.i.d. setting and beyond. In particular, we prove that arguably the simplest of known algorithms for learning Ising models, the constrained logistic regression approach of Wu, Sanghavi, and Dimakis \cite{DBLP:conf/nips/WuSD19}, actually succeeds in recovering the underlying parameters of Ising models under significantly more general conditions than previously known. Our work thus helps clarify the connection between relevant structural features of the Ising measure under consideration and the efficient learnability of the model as per \Cref{question:q1}.

Our first main contribution in \Cref{sec:dynamics} is to show that logistic regression is surprisingly \emph{robust} to the data-generating mechanism. In particular, given samples generated from a wide class of local Markov chains, we show that vanilla logistic regression attains near-optimal sample complexity. An informal version of our result reads as follows:

\begin{thm}[\Cref{thm:learning_block}, informal] 
\label{eq:thm:block_intro} Let $\mu_{A,\bm{h}}$ be an Ising measure with known width $\lambda$. Given a trajectory of a ``nice'' local Markov chain which updates variables according to the conditional law of $\mu_{A,\bm{h}}$  such that each node is updated at least\footnote{Here, we use $\tilde{O}(f(n))$ to denote quantities of the form $f(n)\mathsf{polylog}(f(n))$.}
\begin{equation*}
\tilde{O}\left(\frac{\lambda^2\exp(C\lambda)\log(n)}{\varepsilon^4}\right)
\end{equation*}
times, the output of node-wise logistic regression returns an $\varepsilon$-additive estimate of $A$ with high probability. In particular, this holds for any symmetric block dynamics (i.e. including Glauber and uniform block updates) or round-robin dynamics given any starting configuration for the Markov chain.
\end{thm}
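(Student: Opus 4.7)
The plan is to reduce this analysis to a per-node logistic regression argument in the style of Wu--Sanghavi--Dimakis (WSD) by exploiting a key observation: at any time $t$ at which node $i$ is updated, the new value $x_i^{(t+1)}$ is an exact draw from the Ising conditional distribution $\sigma(2\langle w_i^*, z^{(t)}\rangle)$, where $w_i^* = (A_i, h_i)$ and $z^{(t)} = x_{-i}^{(t)}$, regardless of how far the chain is from stationarity. For symmetric block dynamics where $i$ updates jointly with other nodes in a block $S$, the Markov property of the Ising measure still guarantees that $x_i^{(t+1)}$ is sigmoid in $\langle w_i^*, x_{-i}^{(t+1)}\rangle$ relative to an appropriately enlarged filtration $\mathcal{G}_t$. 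So each update event for node $i$ provides a clean conditional sample, albeit one from a non-stationary and dependent sequence.

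For each $i$, let $T_i$ be the set of its update times and $N_i = |T_i|$. Define the empirical logistic loss $L_i(w) = N_i^{-1}\sum_{t\in T_i}\ell(w; z^{(t)}, x_i^{(t+1)})$ over the constraint set $\{\|w\|_1\leq \lambda\}$, and its predictable counterpart $\bar{L}_i(w) = N_i^{-1}\sum_{t\in T_i}\mathbb{E}[\ell(w; z^{(t)}, x_i^{(t+1)})\mid \mathcal{G}_t]$. Since $\ell$ is $O(\lambda)$-bounded and $O(1)$-Lipschitz in $w$ on this set, Azuma--Hoeffding combined with an $\varepsilon'$-net of the $\ell_1$-ball gives $\sup_{\|w\|_1\leq\lambda}|L_i(w)-\bar{L}_i(w)|\leq \eta$ with failure probability exponentially small in $N_i \eta^2/\lambda^2$, up to $\lambda\log(n/\eta)$ covering factors.

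Next, I would adapt the curvature portion of WSD's analysis. Because the logistic loss is the KL loss for the Bernoulli sigmoid, $\bar{L}_i(w) - \bar{L}_i(w_i^*) = N_i^{-1}\sum_t D_{\mathrm{KL}}(\sigma(2\langle w_i^*, z^{(t)}\rangle)\,\|\,\sigma(2\langle w, z^{(t)}\rangle))$, and a pointwise KL lower bound for sigmoids with arguments in $[-2\lambda,2\lambda]$ gives $\bar{L}_i(w)-\bar{L}_i(w_i^*)\gtrsim e^{-O(\lambda)} N_i^{-1}\sum_{t\in T_i} \langle w-w_i^*, z^{(t)}\rangle^2$. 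Converting this quadratic form into a parameter error bound requires anti-concentration of $\langle w-w_i^*, z^{(t)}\rangle$ along the empirical distribution indexed by $T_i$; in the i.i.d. Ising setting WSD obtain this by flipping a single coordinate $z_j$ and using that the Ising measure assigns each outcome probability $\geq e^{-O(\lambda)}$.

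The main obstacle is transferring this sign-flip anti-concentration from the stationary Ising distribution to the empirical distribution of $z^{(t)}$ along $T_i$, which need not be stationary or even mixing. The key point that rescues the argument is that the dynamics themselves provide the necessary balance as a second, independent martingale statement: for every coordinate $j$, the updates of $j$ produce fresh Bernoulli draws with success probability pinned in $[e^{-O(\lambda)}, 1 - e^{-O(\lambda)}]$ regardless of the rest of the configuration, so a concentration argument along $T_j$ shows that the empirical frequency of each sign of $z_j^{(t)}$ along $T_i$ (weighted across the interleaved update schedule) concentrates around a quantity bounded away from $0$ and $1$. This restores the sign-flip anti-concentration at the cost of only an $e^{O(\lambda)}$ factor. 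Combining the uniform concentration, the curvature bound, and union bounds over $i,j \in [n]$ yields the per-node sample complexity $\tilde{O}(\lambda^2 e^{C\lambda}\log(n)/\varepsilon^4)$; the extension to symmetric block and round-robin dynamics is essentially bookkeeping for $\mathcal{G}_t$, since in every case each update of $i$ is a clean draw from its Ising conditional.
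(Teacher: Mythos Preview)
Your proposal has two genuine gaps.

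\textbf{Uniform concentration.} The net argument cannot give the claimed $\log n$ factor. The relevant metric on $w$ is $\sup_{z \in \{-1,1\}^{n-1}} |\langle w-w', z\rangle| = \|w-w'\|_1$, and an $\varepsilon'$-net of the radius-$\lambda$ $\ell_1$-ball in this metric has log-cardinality $\Theta(n \log(\lambda/\varepsilon'))$, not $O(\lambda\log(n/\eta))$. Azuma plus a union bound over this net therefore forces per-node sample complexity linear in $n$, not logarithmic; the paper makes exactly this observation. To recover $\log n$ dependence one needs uniform \emph{martingale} deviation bounds governed by the \emph{sequential Rademacher complexity} of the loss class (via Rakhlin--Sridharan--Tewari), which for $\ell_1$-constrained linear predictors over $\{-1,1\}^{n-1}$ scales like $\lambda\sqrt{T\log n}$ up to $\mathsf{polylog}(T)$ factors from Lipschitz contraction.

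\textbf{Anti-concentration.} Your claim that ``the empirical frequency of each sign of $z_j^{(t)}$ along $T_i$ concentrates away from $0$ and $1$'' does not restore the sign-flip lower bound. The WSD flip argument needs that, for the \emph{same} realization of $z_{-j}$, both signs of $z_j$ occur with comparable probability, so that the two sigmoid predictions can be differenced to isolate $2(w_j-w_j^*)$. Marginal balance of $z_j$ along $T_i$ says nothing about this: if $z^{(t)}$ alternates between $+\bm{1}$ and $-\bm{1}$ the marginals are perfectly balanced yet $N_i^{-1}\sum_t\langle w-w^*, z^{(t)}\rangle^2 = \bigl(\sum_k(w_k-w_k^*)\bigr)^2$ vanishes on a hyperplane. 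In the dynamics, conditioning on $z_{-j}^{(t)}$ genuinely biases $z_j^{(t)}$ through the intervening updates --- in Curie--Weiss from $+\bm{1}$, conditioning on $z_{-\{i,j\}}^{(t)}=+\bm{1}$ at $i$'s first update makes $i$'s update time atypically small, so $j$ was almost surely never refreshed and $z_j^{(t)}=+1$ is nearly forced. The paper resolves this by conditioning on the \emph{partial path} of updates with the value of $j$'s last refresh withheld, and bounding the likelihood ratio of $z_j=\pm 1$ given the rest of the path by $\exp\bigl(O(\sum_{k\neq j}|A_{jk}|N_k)\bigr)$, where $N_k$ counts refreshes of $k$ between $j$'s last update and $i$'s next; for symmetric block and round-robin dynamics $\sum_k|A_{jk}|N_k=O(\lambda)$ with constant probability, and that is the step your outline is missing.
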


In \Cref{eq:thm:block_intro}, ``nice'' corresponds to mild conditions that ensure some amount of nondegeneracy in the local dynamics; roughly speaking, they amount to requiring that that no subset of nodes gets frequently updated while others do not. On a per-node basis, this result is essentially the same as what can be obtained given i.i.d. samples from the stationary measure. Importantly, no particular specialization is required in the algorithm to account for the fact that the data is correlated across samples. The only difference is we provide a substantially more general \emph{analysis} than that of Wu, Sanghavi, and Dimakis to account for the correlation structure. \Cref{eq:thm:block_intro} thus provides a wide-reaching and conceptually simpler generalization of the specialized and weaker approach of Bresler, Gamarnik, and Shah \cite{DBLP:journals/tit/BreslerGS18} that unifies this setting with the i.i.d. case algorithmically. We demonstrate simple sufficient conditions (\Cref{prop:sufficiecy_lb}) on local Markov chains that enable efficient learning without algorithmic modification. We remark that this partially addresses a question of Dutt, Lokhov, Vuffray, and Misra \cite{DBLP:conf/icml/DuttLVM21}.

In \Cref{sec:sk}, we further show that vanilla logistic regression can significantly improve on the na\"ive sample complexity suggested by current algorithmic guarantees even in the i.i.d. setting. In particular, we show it also succeeds at learning parameters of the Sherrington-Kirkpatrick model (with no external field) with high probability in the entire high-temperature region $\beta<1$:

\begin{thm}[\Cref{thm:sk_no_external}, informal]
   For any $\beta<1$, with $1-o(1)$ probability over the realization of the Sherrington-Kirkpatrick measure $\mu_{\beta,A}$ where $A\sim \mathsf{GOE}(n)$, the following holds. Given $\tilde{O}_{\beta}(n^9)$ independent samples from $\mu_{\beta,A}$, node-wise logistic regression outputs a matrix $\widehat{A}$ such that the total variation distance between $\mu_{\widehat{A}}$ and $\mu_{\beta,A}$ is $o(1)$ with high probability over the samples.
\end{thm}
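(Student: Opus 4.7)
The plan is to apply the refined logistic-regression analysis developed earlier in the paper --- which replaces the worst-case $\exp(-\lambda)$ strong-convexity bound by a quantity governed by typical structural properties of the Gibbs measure --- to the Sherrington--Kirkpatrick measure at $\beta<1$, and then to convert the resulting per-node parameter recovery into a total variation guarantee via a standard exponential-family KL-divergence computation.

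First, one sets up the node-wise optimization: for each $i\in[n]$, run constrained logistic regression of $x_i^{(t)}$ on $\bm{x}_{-i}^{(t)}$ with $\ell_1$ budget $\lambda = O(\beta\sqrt{n})$, which serves as a deterministic high-probability upper bound on $\|A_i\|_1$ when $A\sim\mathsf{GOE}(n)$. The refined analysis reduces the task of proving $\|\widehat{A}_i - A_i\|_2 \leq \varepsilon$ to controlling (i) concentration of the empirical loss gradient at $A_i$ and (ii) a restricted strong convexity parameter $\alpha_i$ of the population logistic loss at $A_i$. The trivial bound $\alpha_i \geq e^{-2\lambda}$ is useless for SK, so the entire argument hinges on lower bounding $\alpha_i$ through typical behavior of the local field $\langle A_i, \bm{x}_{-i}\rangle$ rather than through its $\ell_1$-envelope.

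The key structural input I would establish is that at $\beta<1$, with $1-o(1)$ probability over $A\sim\mathsf{GOE}(n)$, the local field $\langle A_i, \bm{x}_{-i}\rangle$ under $\bm{x}\sim\mu_{\beta,A}$ has $O_\beta(1)$-scale fluctuations: its variance is governed by $\|A_i\|_2^2 \approx \beta^2$ rather than $\|A_i\|_1^2 \approx \beta^2 n$, while replica symmetry (overlap concentration in the zero-field high-temperature regime) precludes polarization. Consequently, on an event of probability $\Omega_\beta(1)$ one has $|\langle A_i, \bm{x}_{-i}\rangle| \leq M$ for some constant $M = M(\beta)$, on which the logistic derivative satisfies $\sigma'' \geq e^{-O(M)} = \Omega_\beta(1)$. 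Paired with a spectral lower bound on the conditional second moment of $\bm{x}_{-i}$ on this event (again coming from high-temperature covariance bounds and concentration of GOE rows), this produces $\alpha_i \geq 1/\mathsf{poly}_\beta(n)$ uniformly in $i$. Standard matrix-concentration inequalities then handle the empirical gradient at this scale.

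To pass from per-node $\ell_2$ parameter recovery to total variation, I would apply Pinsker's inequality to the standard exponential-family identity that bounds $D_{\mathrm{KL}}(\mu_{\beta,A} \,\|\, \mu_{\widehat{A}})$ by a quadratic form in $A - \widehat{A}$ weighted by the Hessian of $\log Z$ along the segment from $\widehat{A}$ to $A$; at $\beta<1$ this Hessian has $\mathsf{poly}(n)$-bounded operator norm (a consequence of high-temperature covariance bounds, which persist along the interpolation since the perturbation is of order $n^{-O(1)}$), so one obtains $\mathsf{TV}^2 \lesssim \mathsf{poly}(n)\cdot\|\widehat{A} - A\|_F^2 \lesssim \mathsf{poly}(n)\cdot n\cdot \max_i \|\widehat{A}_i - A_i\|_2^2$. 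Asking for $o(1)$ TV forces the per-node target $\varepsilon = o(1/\mathsf{poly}(n))$; substituting into the refined bound $p = \tilde{O}(\alpha_i^{-2}\varepsilon^{-4}\log n)$ with $\alpha_i = 1/\mathsf{poly}_\beta(n)$ and taking a union bound over nodes yields the stated $\tilde{O}_\beta(n^9)$ sample complexity.

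The principal obstacle is the structural claim itself: quantitative $O_\beta(1)$-scale fluctuations of the local field, a matching spectral lower bound on the restricted covariance of $\bm{x}_{-i}$, and both holding uniformly in $i$ with $1-o(1)$ probability over $A$. These properties ultimately rest on replica symmetry and cavity-style decoupling for SK at high temperature, combined with sharp concentration of GOE rows, and become increasingly delicate as $\beta \to 1$. Once they are in hand, the application of the refined logistic-regression bound and the Pinsker-plus-KL conversion step are comparatively mechanical.
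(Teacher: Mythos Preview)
Your high-level plan---replacing the worst-case $\exp(-\lambda)$ strong convexity by a bound governed by the \emph{typical} size of the local field $\langle A_i, X_{-i}\rangle$, which is controlled by $\|A_i\|_2=O_\beta(1)$ rather than $\|A_i\|_1=\Theta(\beta\sqrt n)$---is exactly the paper's idea. But two of your concrete steps deviate from the paper in ways that matter.

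\textbf{The structural input and the route to strong convexity.} You propose to pair the bounded-local-field event with a \emph{spectral lower bound} on the (restricted) second moment of $X_{-i}$ to get restricted strong convexity $\alpha_i\ge 1/\mathsf{poly}(n)$. The paper does not need, and does not attempt, any such spectral lower bound. Instead it stays inside the coordinate-wise Wu--Sanghavi--Dimakis framework: for each coordinate $j\neq i$, it shows that with probability at least $1/2$ over $X\sim\mu_{\beta,A}$ both $|X^TA_i|$ and $|X^TA_j|$ are $O_\beta(1)$; the second of these gives $\Pr(X_j=\pm 1\mid X_{-\{i,j\}})\ge\exp(-O_\beta(1))$, and then the usual WSD calculation yields the per-coordinate lower bound with constant $c=\exp(-O_\beta(1))$---a \emph{constant} in $n$, not $1/\mathsf{poly}(n)$. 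The only structural input needed to certify ``$|X^TA_j|=O_\beta(1)$ with probability $\ge 3/4$'' is Chebyshev applied to $\mathrm{Var}(\langle A_j,X\rangle)=A_j^T\mathrm{Cov}(\mu_{\beta,A})A_j\le\|\mathrm{Cov}(\mu_{\beta,A})\|_{\mathsf{op}}\|A_j\|_2^2$, i.e.\ an \emph{operator norm upper bound on the covariance matrix}. This is what the paper cites (proven for all $\beta<1$ in recent work), and the paper explicitly remarks that replica symmetry/overlap concentration alone is \emph{not} sufficient in a black-box way. Your spectral-lower-bound route, by contrast, would need the smallest eigenvalue of $\mathbb{E}[XX^T\mathbf{1}_{\mathcal E}]$ bounded below, which does not follow from covariance upper bounds and is a genuinely harder (and unnecessary) statement. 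With $\alpha_i=1/\mathsf{poly}(n)$ and an unspecified polynomial, your final accounting to $n^9$ is also not substantiated; in the paper the $n^9$ comes cleanly from one factor of $n$ in uniform convergence (since the $\ell_1$ width is $O(\sqrt n)$) and $\varepsilon^{-4}=n^8$ from the choice $\varepsilon=\Theta(\alpha^2/n^2)$.

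\textbf{The TV conversion.} The paper uses the trivial pointwise bound $\mathsf d_{KL}(\mu_A,\mu_B)\le n^2\|A-B\|_\infty$, which requires nothing about the models. Your route via a second-order expansion of $\log Z$ along the segment from $\widehat A$ to $\beta A$ requires controlling the Hessian for \emph{every} intermediate model $(1-t)\widehat A+t\beta A$; these are not SK models, so ``high-temperature covariance bounds persist along the interpolation'' is not justified by anything in the paper and would be an additional (and avoidable) burden.
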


We provide more precise guarantees in \Cref{sec:sk}; we obtain this result by showing that one can actually recover the underlying parameters of model with high probability. When done to sufficient accuracy, this is easily seen to provide strong guarantees in terms of total variation distance. A key feature of our analysis is that in the case of zero external field, we show that learnability reduces to \emph{operator norm bounds of the covariance matrix}:

\begin{thm}[\Cref{cor:ising_cov}, informal]
    Suppose $\mu_{A}$ is a zero field Ising model such that each row of $A$ has $\ell_2$ norm at most $C_1$, and the covariance matrix of $\mu_A$ has operator norm at most $C_2$. Then given $\tilde{O}_{C_1,C_2}(n/\varepsilon^4)$ samples from $\mu_A$, logistic regression recovers a matrix $\widehat{A}$ that $\varepsilon$-additively approximates $A$ with high probability.
\end{thm}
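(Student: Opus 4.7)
The proof plan is to apply node-wise constrained logistic regression. For each $i \in [n]$, solve
$$\hat{A}_i = \arg\min_{\|w\|_2 \leq C_1} \hat{L}_i(w), \qquad \hat{L}_i(w) = \frac{1}{p}\sum_{t=1}^p \log\bigl(1 + \exp(-2 x_i^{(t)}\, w \cdot x_{-i}^{(t)})\bigr),$$
and assemble $\hat{A}$ row-by-row. First I would establish uniform convergence of $\hat{L}_i$ to the population loss $L_i$ over the $\ell_2$ ball: the logistic loss is $1$-Lipschitz in the linear predictor $w \cdot x_{-i}$, which takes values in $[-C_1\sqrt{n}, C_1\sqrt{n}]$ on the feasible set, so standard Rademacher bounds yield $\sup_{\|w\|_2 \leq C_1}|\hat{L}_i(w) - L_i(w)| \leq \tilde{O}_{C_1}(\sqrt{n/p})$ with high probability. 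Because $A_i^*$ is the population minimizer (the conditional law of $x_i$ given $x_{-i}$ is exactly logistic with these coefficients and zero field), this bounds the excess population loss $L_i(\hat{A}_i) - L_i(A_i^*)$ by the same quantity.

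The core technical step is a Hessian-type lower bound that converts excess loss into parameter recovery. Taylor expanding $L_i$ around $A_i^*$, where $\nabla L_i(A_i^*) = 0$, gives
$$L_i(\hat{A}_i) - L_i(A_i^*) = \tfrac{1}{2} v^T \mathbb{E}\bigl[\mathrm{sech}^2(\xi \cdot x_{-i})\, x_{-i} x_{-i}^T\bigr] v, \qquad v := \hat{A}_i - A_i^*,$$
for some $\xi$ on the segment with $\|\xi\|_2 \leq C_1$. I would then establish that this quadratic form is at least $c(C_1, C_2) \cdot v^T \Sigma v$. The mechanism: since $\mathbb{E}[(\xi \cdot x_{-i})^2] = \xi^T \Sigma \xi \leq C_1^2 C_2$, Chebyshev gives $|\xi \cdot x_{-i}| \leq T$ with probability $\geq 3/4$ for $T = \Theta(C_1\sqrt{C_2})$. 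On this event $\mathrm{sech}^2(\xi \cdot x_{-i}) \geq \mathrm{sech}^2(T) = \Omega_{C_1,C_2}(1)$, so the truncation
$$\mathbb{E}[\mathrm{sech}^2(\xi \cdot x_{-i})(v \cdot x_{-i})^2] \geq \mathrm{sech}^2(T)\bigl(v^T\Sigma v - \mathbb{E}\bigl[(v \cdot x_{-i})^2\, \mathbf{1}_{|\xi \cdot x_{-i}|>T}\bigr]\bigr)$$
reduces everything to controlling the indicated tail term. Combined with the uniform convergence step and a union bound over $i$, taking $p = \tilde{O}_{C_1, C_2}(n/\varepsilon^4)$ yields $v^T \Sigma v \leq \varepsilon^2$ for each row, the $\varepsilon$-additive guarantee in the natural covariance-weighted sense.

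The main obstacle is bounding the tail $\mathbb{E}[(v \cdot x_{-i})^2\, \mathbf{1}_{|\xi \cdot x_{-i}|>T}]$ by a constant fraction of $v^T \Sigma v$ \emph{without} letting $T$ scale with $n$. A direct Cauchy--Schwarz reduces this to a fourth-moment bound on $\mathbb{E}[(v \cdot x_{-i})^4]$, for which the trivial estimate $|v \cdot x_{-i}| \leq \sqrt{n}\|v\|_2$ is far too weak---it would force $T = \Theta(\sqrt{n})$, making $\mathrm{sech}^2(T)$ exponentially small and destroying the polynomial sample complexity. What is needed is a hypercontractive-type comparison $\mathbb{E}[(v \cdot x_{-i})^4] \lesssim_{C_1,C_2} (v^T \Sigma v)^2$ up to constants depending only on $(C_1, C_2)$, or a sharper higher-moment Markov estimate applied to $\xi \cdot x_{-i}$. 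This is precisely where the zero-field symmetry of $\mu_A$ (which identifies covariance with second moment and kills odd cross terms) does substantive work alongside the operator norm control, and it is also where downstream applications such as the SK model at high temperature supply an additional spectral lower bound $\Sigma \succeq \alpha I$ needed to upgrade from covariance-weighted recovery to genuine $\ell_2$ or entrywise recovery.
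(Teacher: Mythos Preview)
Your approach differs substantially from the paper's, and the obstacle you identify is genuine and not resolved by the remedies you propose. You try to lower bound the Hessian quadratic form $\mathbb{E}[\mathrm{sech}^2(\xi\cdot x_{-i})(v\cdot x_{-i})^2]$ by $c(C_1,C_2)\,v^T\Sigma v$, but the tail term $\mathbb{E}[(v\cdot x_{-i})^2\,\mathbf{1}_{|\xi\cdot x_{-i}|>T}]$ cannot be controlled as a fraction of $v^T\Sigma v$ from operator norm bounds alone: you only know second moments of linear forms, and nothing rules out the variance of $v\cdot x_{-i}$ living almost entirely on the bad event $\{|\xi\cdot x_{-i}|>T\}$. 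The hypercontractive comparison $\mathbb{E}[(v\cdot x_{-i})^4]\lesssim (v^T\Sigma v)^2$ you invoke is simply not available for generic zero-field Ising models under these hypotheses, and the sign symmetry only kills odd moments, not fourth-moment blowup. Even if this gap were filled, your conclusion is $v^T\Sigma v\le\varepsilon^2$, which is weaker than the stated entrywise guarantee; you need $\Sigma\succeq \alpha I$ to upgrade, and that is an extra assumption not present in the theorem.

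The paper sidesteps both issues by never attempting a spectral Hessian bound. It uses the exact inequality $L_i(w)-L_i(w^*)\ge 2\,\mathbb{E}[(\sigma(2w\cdot X)-\sigma(2w^*\cdot X))^2]$ and then argues \emph{coordinate by coordinate}: for each $j\neq i$, it conditions on $X_{-\{i,j\}}$ and shows that on an event of probability at least $1/2$ the spin $X_j$ retains conditional variance $\ge\exp(-O(C_1\sqrt{C_2}))$. This event is exactly that the effective fields $A_i\cdot X$ and $A_j\cdot X$ are both $O(C_1\sqrt{C_2})$, which follows from Chebyshev because these are linear forms in the \emph{specific} directions $A_i,A_j$ with known $\ell_2$ norm $\le C_1$---not the arbitrary error direction $v$. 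On that event one compares the sigmoid predictions at $X^{j,+}$ versus $X^{j,-}$ via the deterministic bound $|\sigma(x)-\sigma(y)|\ge \tfrac{e^{-|x|}}{4e}\min\{1,|x-y|\}$, which immediately isolates $(w_j-w_j^*)^2$. This yields the entrywise bound directly, with constants $\exp(O(C_1\sqrt{C_2}))$, without ever touching fourth moments or tails of $v\cdot x_{-i}$. The constraint set used is the $\ell_1$ ball of radius $C_1\sqrt{n}$ (containing $A_i$ by Cauchy--Schwarz), and uniform convergence there gives the same $\sqrt{n/p}$ rate you obtain.
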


The boundedness of covariance matrices, while nontrivial to establish, is \emph{provably weaker} than approximate factorization of entropy, as was required for the result of Anari, et al. \cite{anari_universality}. This is because it is an immediate consequence of a Poincar\'e inequality\footnote{A Poincar\'e inequality for Glauber dynamics essentially asserts that the global variance of a function with respect to the stationary measure is dominated by the local variation of the function along Glauber transitions from stationarity. The boundedness of the operator norm of the covariance matrix can be shown to hold by applying such a bound to just \emph{linear} functions.} for Glauber dynamics, while approximate tensorization of entropy itself implies a much stronger modified log-Sobolev inequality for Glauber dynamics. Therefore, this reduction is qualitatively \emph{stronger} (and conceptually much simpler) than that of Koehler, et al. \cite{DBLP:conf/iclr/KoehlerHR23}. While rapid mixing of the Sherrington-Kirkpatrick model up to $\beta<1$ remains a challenging open question, the boundedness of the covariance matrix was recently established independently by El Alaoui and Gaitonde \cite{alaoui2022bounds} and Brennecke, Scherster, Xu, and Yau \cite{brennecke2022two,brennecke2023operator}.

With significant technical work, we further show that this result can be extended to handle i.i.d. Gaussian external fields in a very large portion of the provably high-temperature regime:

\begin{thm}[\Cref{thm:sk_external}, informal]
    Suppose that the entries of $\bm{h}$ are i.i.d. from a common Gaussian distribution (not necessarily centered) and $A\sim \mathsf{GOE}(n)$. Then for any $\beta\geq 0$ satisfying a ``high-temperature condition'' (see \Cref{assumption:high_temp}), it holds with arbitrarily high probability over $(A,\bm{h})$ that given $\mathsf{poly}(n)$
    samples from $\mu_{\beta,A,\bm{h}}$, logistic regression returns $(\widehat{A},\bm{\widehat{h}})$ such that $\mu_{\widehat{A},\bm{\widehat{h}}}$ is $o(1)$ close to $\mu_{\beta,A,\bm{h}}$ in total variation distance.
\end{thm}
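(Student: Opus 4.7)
The plan is to reduce the statement to a covariance-norm and magnetization-control condition on $\mu_{\beta,A,\bm{h}}$ analogous to \Cref{cor:ising_cov}, and then verify that this condition holds with high probability for the SK model with i.i.d.\ Gaussian external field in the regime specified by \Cref{assumption:high_temp}. First, I would extend the node-wise logistic regression analysis to account for a nonzero external field. Since the conditional law
\begin{equation*}
\mathbb{P}(x_i = 1 \mid x_{-i}) = \sigma\!\bigl(2(A_i^T x_{-i} + h_i)\bigr)
\end{equation*}
is still logistic in the augmented feature vector $(x_{-i}, 1)$, one can regress simultaneously for $(A_i, h_i)$. The strong-convexity lower bound underlying the zero-field analysis translates to a lower bound on the smallest eigenvalue of the augmented Gram matrix, which in expectation is controlled jointly by the operator norm of the centered covariance matrix of $\mu_{\beta,A,\bm h}$ and the $\ell_2$ norm of the magnetization vector $\bm m = \mathbb{E}[\bm x]$.

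Next, I would obtain concentration of the empirical Gram matrix around its expectation using matrix Bernstein, which at $\mathsf{poly}(n)$ samples yields additive recovery of $(A, \bm h)$ in Frobenius norm of accuracy $\varepsilon = 1/\mathsf{poly}(n)$. Such accuracy translates to $o(1)$ total variation distance between $\mu_{\beta, A, \bm h}$ and $\mu_{\widehat A, \widehat{\bm h}}$ via Pinsker combined with a smoothness estimate for $\log Z$ in terms of the parameter error; the relevant Hessian of the log-partition is itself the covariance matrix, so the same operator-norm bound controlling Step~1 feeds directly into this translation.

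Third, I would verify that the structural condition---an operator norm bound on the centered covariance together with an $O(\sqrt{n})$ bound on the magnetization---holds with arbitrarily high probability over $(A, \bm h)$ in the SK setting under \Cref{assumption:high_temp}. For the covariance bound, the expectation is that one extends the techniques of \cite{alaoui2022bounds, brennecke2023operator} to the external-field regime, where the symmetry used in the zero-field argument is broken. For the magnetization bound, a TAP/cavity-style analysis combined with the stability implicit in \Cref{assumption:high_temp} should show that $\bm m$ lives near a deterministic fixed point of the appropriate TAP equation and has the required norm.

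The main obstacle is this last step. In the zero-field case, spin-flip symmetry makes the magnetization trivially zero and allows for a relatively clean operator-norm bound on the covariance via recent breakthroughs. With a nonzero random external field, the magnetization is nontrivially random and couples to the disorder $A$, so the covariance operator norm must be controlled in the presence of these mixed fluctuations. I expect one proceeds by conditioning on $\bm h$, linearizing around the deterministic TAP fixed point dictated by \Cref{assumption:high_temp}, and absorbing field-induced contributions via a decoupling or cavity-type argument; the high-temperature condition must be strong enough that the linear response around the TAP solution has well-behaved spectrum with overwhelming probability. Once this structural input is established, the reduction via logistic regression in Steps~1 and~2 is essentially mechanical.
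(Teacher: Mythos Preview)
Your proposal misses the central difficulty and the key insight that makes the external-field case work. You write that the structural condition needed is ``an operator norm bound on the centered covariance together with an $O(\sqrt{n})$ bound on the magnetization.'' But $\|\bm{m}\|_2 \leq \sqrt{n}$ is trivial since $\bm{m} \in [-1,1]^n$, and it is \emph{not} sufficient. The quantity that actually enters the strong-convexity step is the second moment matrix $M = \mathrm{Cov}(\mu) + \bm{m}\bm{m}^T$, because what must be shown is that $|X^T A_j + h_j|$ is bounded with constant probability; decomposing $X = (X-\bm{m}) + \bm{m}$, the covariance bound handles the first piece, but the second piece is $|\langle A_j, \bm{m}\rangle|$. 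Since $\|\bm{m}\|_2^2 \approx qn$, the rank-one spike $\bm{m}\bm{m}^T$ has operator norm $\Theta(n)$, and Cauchy--Schwarz with $\|A_j\|_2 = O(1)$ only gives $|\langle A_j,\bm{m}\rangle| = O(\sqrt{n})$, which would feed an $\exp(\Omega(\sqrt{n}))$ factor into the sample complexity and destroy the polynomial bound.

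The paper's actual argument is that $\|\beta A\bm{m}\|_\infty = O(\sqrt{\log n})$ with high probability, i.e., each row of $A$ is \emph{nearly orthogonal} to $\bm{m}$ despite their nontrivial dependence. This does not follow from any norm bound on $\bm{m}$; it is extracted from Talagrand's proof of the TAP equations by exhibiting a coupling of standard Gaussians $z_1,\dots,z_n$ with $\beta(A\bm{m})_i - \beta^2(1-q)\bm{m}_i \approx \beta z_i\sqrt{q}$, so that the Gaussian maximum bound gives $O(\sqrt{\log n})$. Your sketch mentions TAP only to locate $\bm{m}$ near a fixed point and bound its norm, which is both the wrong target and the easy part. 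Separately, your proposed route through the smallest eigenvalue of the augmented Gram matrix and a Hessian-of-$\log Z$ translation to TV is not what the paper does (it uses the WSD framework pointwise in each coordinate and the elementary \Cref{lem:additive_suff}), but that divergence is secondary: without the $\langle A_j,\bm{m}\rangle$ control, no version of the argument closes.
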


Here, the multiplicative constants depends on the desired probability parameter as well as the size of the high-temperature regime one considers since there are phase transitions as one approaches criticality. 

Finally, in \Cref{sec:extensions}, we provide two more applications of our general analysis. Recent work of Dutt, Lokhov, Vuffray, and Misra \cite{DBLP:conf/icml/DuttLVM21} considers the problem of learning Ising models from off-equilibrium samples. In their parlance, a single trajectory of Glauber dynamics is called the \emph{T-regime} and they show that experimentally, the behavior of existing learning algorithms for such samples approximates those of independent samples. On the other hand, they show that both theoretically and experimentally, these algorithms work much better given samples in the so-called \emph{M-regime}. These samples are obtained by taking a uniform configuration in $\{-1,1\}^n$ and then applying a single step of Glauber dynamics. Their main theoretical result shows that for interaction matrices with maximum degree $d$ and edge strength $\beta$, the sample complexity scales like $\exp(2\beta d)\cdot \log(n)$ per node to learn the underlying parameters. The constant in the exponential is lower than any known algorithm for the stationary, i.i.d. sample setting, suggesting that this regime is algorithmically easier. We show that even further improvements are possible:

\begin{thm}[\Cref{cor:sparse_m_regime}, informal]
    Let $A$ be an interaction matrix with maximum degree $\Delta$ and edge strength $\beta\geq 0$. Then given at least
    \begin{equation*}
        \frac{O((\beta \Delta)^2)\cdot \exp(O(\beta \sqrt{\Delta}))\log(n)}{\varepsilon^4}
    \end{equation*}
    M-regime samples per node, logistic regression recovers a matrix $\widehat{A}$ that $\varepsilon$-additively approximates $A$ with high probability.
\end{thm}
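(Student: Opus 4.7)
The plan is to combine the general sample-complexity analysis of node-wise logistic regression developed earlier in the paper with a direct second-moment computation that leverages the uniformity of the M-regime pre-update configuration. Restricting attention to those samples in which node $i$ is the updated site, one sees pairs $(x_i, x_{-i})$ with $x_{-i}\sim\mathrm{Unif}(\{-1,1\}^{n-1})$ and $\mathbb{P}(x_i=1\mid x_{-i})=\sigma(2\langle A_i,x_{-i}\rangle)$. The general framework reduces the per-node sample complexity to controlling a lower bound of the form $\gamma \leq \mathbb{E}[\mathrm{sech}^2(\langle w,x_{-i}\rangle)]$ on the expected Hessian of the logistic loss along the relevant parameter directions, after which this complexity scales roughly as $\widetilde O(\lambda^2 \gamma^{-2} \log(n)/\varepsilon^4)$, with $\lambda=\beta\Delta$ bounding the radius of the $\ell_1$ feasible set.

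The crux is a sharp lower bound on $\gamma$: because $x_{-i}$ is uniform, $\langle w, x_{-i}\rangle$ is a Rademacher sum and hence by Hoeffding's inequality is sub-Gaussian with variance proxy $\|w\|_2^2$. Consequently $|\langle w,x_{-i}\rangle|\leq C\|w\|_2$ with at least constant probability, on which event $\mathrm{sech}^2(\langle w,x_{-i}\rangle) \geq \exp(-2C\|w\|_2)$, so that
\begin{equation*}
\mathbb{E}\bigl[\mathrm{sech}^2(\langle w,x_{-i}\rangle)\bigr] \;\geq\; c\exp\bigl(-O(\|w\|_2)\bigr),
\end{equation*}
an exponential improvement over the worst-case bound $\exp(-O(\|w\|_1))$ when $w$ is sparse and spread out. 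Applied at $w=A_i$, the degree-$\Delta$ and edge-strength-$\beta$ hypotheses give $\|A_i\|_2\leq \beta\sqrt{\Delta}$, so $\gamma\geq \exp(-O(\beta\sqrt{\Delta}))$. Inserting this into the per-node complexity yields the claimed $\widetilde O((\beta\Delta)^2\exp(O(\beta\sqrt{\Delta}))\log(n)/\varepsilon^4)$, with a union bound over the $n$ nodes being absorbed into the logarithm.

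The main obstacle is the discrepancy between this clean expected-Hessian bound, which is sharpest at $w=A_i$, and the strong-convexity condition required by the general analysis, which must hold along the segment between the truth and the empirical minimizer $\widehat w$ inside the $\ell_1$ ball of radius $\beta\Delta$; on this ball, $\|w\|_2$ could a priori be as large as $\beta\Delta$, nullifying the improvement. To close this gap I would perform a localized, restricted strong-convexity argument: once the local curvature bound implies control of $\|\widehat w - A_i\|_2^2$ by the excess loss, one can bootstrap to show that convex combinations $A_i + t(\widehat w - A_i)$ stay within a neighborhood whose $\ell_2$ radius remains $O(\beta\sqrt{\Delta})$, so the improved Hessian bound persists along the entire error segment. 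Combining this local analysis with the standard empirical-process control of the logistic loss under the $\ell_1$ constraint then completes the argument.
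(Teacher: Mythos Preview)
Your core insight is exactly the one the paper uses: in the M-regime the pre-update configuration $x_{-i}$ is uniform, so $\langle A_i,x_{-i}\rangle$ concentrates at scale $\|A_i\|_2\le\beta\sqrt{\Delta}$ rather than $\|A_i\|_1\le\beta\Delta$, and this is what drives the $\exp(O(\beta\sqrt{\Delta}))$ dependence. The paper carries this out by observing that the covariance of the uniform law is the identity and invoking \Cref{cor:ising_cov} (packaged as \Cref{thm:m-regime}) directly.

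However, the ``obstacle'' you identify is a phantom created by misformulating the framework's strong-convexity condition. The paper does \emph{not} require a Hessian lower bound $\mathbb{E}[\mathrm{sech}^2(\langle w,x_{-i}\rangle)]\ge\gamma$ uniformly along the segment between $A_i$ and $\widehat w$. Instead, condition~\Cref{eq:inf_lb} in \Cref{thm:wsd} asks for
\[
\mathbb{E}\bigl[(\sigma(2\langle w,X\rangle)-\sigma(2\langle w^*,X\rangle))^2\bigr]\;\ge\;c\min\{1,8(w_j-w_j^*)^2\},
\]
which is established via \Cref{lem:loss_quadratic} together with the \emph{asymmetric} inequality of \Cref{fact:sigmoid_lb}: $|\sigma(x)-\sigma(y)|\ge\frac{\exp(-|x|)}{4e}\min\{1,|x-y|\}$. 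The exponential factor involves only $|x|=2|\langle w^*,X\rangle|$, i.e.\ the \emph{true} parameter $w^*=A_i$, never the candidate $w$. Thus the proof of \Cref{thm:ising_cov} only needs the good event that $|\langle A_i,X\rangle|$ and $|\langle A_j,X\rangle|$ are $O(\beta\sqrt{\Delta})$, which your concentration argument (or the paper's \Cref{lem:covprob} with identity covariance) already provides. No bootstrapping or restricted-strong-convexity localization is required; the bound holds for every $w\in\mathcal{C}$ automatically because $w$ never enters the exponential.

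In short: drop the third paragraph entirely, replace your Hessian formulation with the prediction-gap condition \Cref{eq:inf_lb}, and the argument goes through in a few lines exactly as in \Cref{cor:sparse_m_regime}.
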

In particular, we obtain via a very simple argument an exponential improvement in the sample complexity of learning from M-regime samples. This result actually certifies that M-regime samples are provably \emph{exponentially easier} for learning than stationary measure samples since the information-theoretic lower bounds of Santhanam and Wainwright \cite{DBLP:journals/tit/SanthanamW12} imply $\exp(\Omega(\beta d))\log n$ samples are necessary in that case. We obtain this result via a more general theorem (\Cref{thm:m-regime}) that also shows that given M-regime samples, there is no ``low-temperature hardness'' for learning in models like Sherrington-Kirkpatrick.

Finally, we show how our analysis for local Markov chains extends to the setting where some adversarial nodes can update dishonestly (i.e. not according to $\mu_A$); such a model of Glauber dynamics was recently introduced by Chin, Moitra, Mossel, and Sandon \cite{adversarial}. We show that under mild constraints on the behavior of adversarial nodes, logistic regression will nonetheless succeed in recovering the parameters for non-adversarial nodes that update honestly on bounded-degree graphs (see \Cref{thm:adversarial_glauber} for a precise statement). Our bounds degrade gracefully with the power of the adversarial nodes.

\subsection{Other Related Work}
The literature on statistical inference in Ising models, or more general graphical models, is quite large; as a result, we only highlight a few recent related works beyond those mentioned above.  Zhang, Kamath, Kulkarni, and Wu \cite{DBLP:conf/icml/Zhang0KW20} have shown that the analysis of Wu, Sanghavi, and Dimakis \cite{DBLP:conf/nips/WuSD19} can be extended to learning more general Markov random fields, as well as be made differentially private---we expect that the first extension should be possible for our results, though we have not done so for simplicity. Dagan, Daskalakis, Dikkala, and Kandiros \cite{DBLP:conf/stoc/DaganDDK21} considers recovery in Frobenius norm in the \emph{data-constrained} regime where one receives a very small number of samples, possibly just one. In the case that samples are stochastically corrupted, Goel, Kane, and Klivans \cite{DBLP:conf/colt/GoelKK19} show that it is possible to still perform parameter recovery using an appropriate denoising procedure; on the other hand, learning from samples in Huber's contamination model was studied by Prasad, Srinivasan, Balakrishnan, and Ravikumar \cite{DBLP:conf/nips/PrasadSBR20}, as well as by Diakonikolas, Kane, Stewart and Sun \cite{DBLP:conf/colt/DiakonikolasKSS21}. Moitra, Mossel, and Sandon provide algorithms to approximately \emph{sample} from censored, high-temperature Ising models \cite{DBLP:conf/colt/MoitraMS21}. Several works have also considered the related, but distinct, testing problem:  given sample access to an Ising model, can one determine whether it belongs to a certain class of distributions? See for instance the work of Daskalakis, Dikkala, and Kamath \cite{DBLP:conf/soda/DaskalakisDK18} and the references therein.

In the case that the underlying Ising model is known to have a tree structure, classical work of Chow and Liu \cite{chow_liu} gives an efficient algorithm for learning the maximum likelihood tree. Several recent works have provided more refined learning guarantees in this setting, see for instance, Bresler and Karzand \cite{bresler_karzand}, Daskalakis and Pan \cite{DBLP:conf/stoc/DaskalakisP21}, Boix-Adsera, Bresler, and Koehler  \cite{DBLP:conf/focs/Boix-AdseraBK21}, and Kandiros, Daskalakis, Dagan, and Choo \cite{DBLP:conf/colt/KandirosDDC23}.

Recent work has also considered the case where one must also learn with latent (unobserved) variables; these models are known as Boltzmann machines. This setting is provably quite computationally challenging in general (see, for instance, Bogdanov, Mossel, and Vadhan \cite{DBLP:conf/approx/BogdanovMV08} for hardness results), motivating algorithms for learning more restricted models. See Anandkumar and Valluvan \cite{anandkumar2013learning}, Bresler, Koehler, and Moitra \cite{DBLP:conf/stoc/BreslerKM19}, Goel \cite{DBLP:conf/aistats/Goel20}, and Goel, Klivans, and Koehler \cite{DBLP:conf/nips/GoelKK20} for recent results and discussion on this problem.\\
 
\noindent\textbf{Organization.}
In \Cref{sec:overview}, we provide a high-level overview of our technical analysis. In \Cref{sec:general}, we state and prove a somewhat abstract form of the main meta-theorem of Wu, Sanghavi, and Dimakis that provides sufficient conditions to obtain guarantees on the performance of logistic regression. In \Cref{sec:dynamics}, we demonstrate how to verify these conditions under simple assumptions on local Markov chains. In \Cref{sec:sk}, we then turn to showing how a similar analysis, combined with recent breakthroughs from the spin glass literature, leads to efficient algorithms for provably learning the Sherrington-Kirkpatrick model at high temperature. Finally, in \Cref{sec:extensions}, we provide further applications to learning in the M-regime and learning from adversarial Glauber dynamics.

\section{Technical Overview}
\label{sec:overview}
In this section, we discuss the main techniques and conceptual insights that will be used to obtain our main results. For the sake of exposition in this overview, we will assume that $\bm{h}=\bm{0}$.\\ 

\noindent\textbf{Logistic Regression: The Wu-Sanghavi-Dimakis Framework.} The starting point of our work is the elegant analysis of Wu, Sanghavi, and Dimakis \cite{DBLP:conf/nips/WuSD19} for the performance of arguably the simplest known approach to learning Ising models given independent stationary samples, which is to perform logistic regression to learn each node's interaction coefficients in $A$ separately. To state the algorithm and abstract guarantees, we focus on the case of learning the vector of interaction terms for node $n$, namely the row $A_n$. We assume that it is known that $A_n\in \mathcal{C}$ for a compact, convex set $\mathcal{C}\subseteq \mathbb{R}^{n-1}$.\footnote{Here, we slightly abuse notation and view $A_n\in \mathbb{R}^{n-1}$ by omitting the zero entry in coordinate $n$.} 

Suppose we are given samples $Z_1=(X_1,Y_1),\ldots,Z_T=(X_T,Y_T)\in \{-1,1\}^{n-1}\times \{-1,1\}$, and let $\mathcal{D}_t$ denote the law of $(X_t,Y_t)$ given $Z_1,\ldots,Z_{t-1}$. Suppose further that it holds that
\begin{equation}
\label{eq:faithful}
\Pr_{\mathcal{D}_t}\left(Y_t=1\vert X_t\right)=\frac{\exp(\langle A_{n}, X_{t}\rangle)}{\exp(\langle A_{n}, X_{t}\rangle)+\exp(-\langle A_{n}, X_{t}\rangle)} \triangleq \sigma(2\cdot(\langle A_{n}, X_{t}\rangle)),
\end{equation}
where $\sigma:\mathbb{R}\to [0,1]$ denotes the standard sigmoid function $\sigma(z)=\frac{1}{1+\exp(-z)}.$
In particular, we assume that $Z_t=(X_t,Y_t)\sim \mathcal{D}_t$ for a (possibly random) distribution $\mathcal{D}_t$ that may depend on the previous samples, but with the promise that the law of $Y_t$ given $X_t$ and the previous samples is taken from the conditional distribution of $\mu_A(\cdot \vert X_t)$ on the last spin. In the case that the $Z_t$ are drawn i.i.d. from $\mu_A$, there are no dependencies between samples and it holds that $\mathcal{D}_t=\mu_A$ surely.

Recall that the \emph{logistic loss} $\ell:\mathbb{R}\to \mathbb{R}$ is defined by $\ell(z)=\log\left(1+\exp(-z)\right).$
With this notation, the \emph{logistic regression} problem solves the convex program
\begin{equation}
\label{eq:lr_overview}
    \widehat{\bm{w}}=\arg\min_{\bm{w}\in \mathcal{C}} \frac{1}{T}\sum_{t=1}^T \ell\left(2Y_t\cdot \langle X_t,\bm{w}\rangle\right).
\end{equation}
For one last piece of notation, for any $\bm{w}\in \mathbb{R}^{n-1}$ and any distribution $\mathcal{D}$ on $\{-1,1\}^{n-1}\times \{-1,1\}$ satisfying \Cref{eq:faithful}, we define the \emph{population logistic loss} by $
    \mathcal{L}_{\mathcal{D}}(\bm{w}) = \mathbb{E}_{(X,Y)\sim \mathcal{D}}\left[\ell\left(2Y\cdot \langle X,\bm{w}\rangle\right)\right].$

We can now informally state a slightly generalized and non-quantitative form of the meta-theorem of Wu, Sanghavi, and Dimakis that provides performance guarantees for the optimization problem \Cref{eq:lr_overview}.
\begin{thm}[\Cref{thm:wsd}, informal]
\label{thm:wsd_overview}
   Let $\bm{w}^*=A_n$. Using the above notation and under the above conditions on the stochastic process $Z_1,\ldots,Z_T$, suppose further that:
   \begin{enumerate}
       \item (Uniform Convergence) It holds with high probability that both 
       \begin{equation*}
       \hspace*{-1.5cm}
           \sup_{\bm{w}\in \mathcal{C}}\frac{1}{T}\sum_{t=1}^T \mathcal{L}_{\mathcal{D}_t}(\bm{w})- \frac{1}{T}\sum_{t=1}^T \ell\left(2Y_t\cdot \langle X_t,\bm{w}\rangle\right)=o_T(1),\quad
           \left\vert\frac{1}{T}\sum_{t=1}^T \mathcal{L}_{\mathcal{D}_t}(\bm{w}^*)- \frac{1}{T}\sum_{t=1}^T \ell\left(2Y_t\cdot \langle X_t,\bm{w}^*\rangle\right)\right\vert=o_T(1).
       \end{equation*}
       \item (Strong Convexity) It almost surely holds that for all $t\in [T]$, $\bm{w}\in \mathcal{C}$, and any $i\in [n-1]$ that\footnote{Note that the distribution $\mathcal{D}_t$ itself may be random and depend on previous samples. Here, $\Omega_A(\cdot)$ hides problem-specific constants depending on $A$. Note that this inequality asserts that if $w_i$ differs from $w^*_i$, then necessarily the \emph{predictions of the bias of $Y_t$}.}
    \begin{equation*}\mathbb{E}_{\mathcal{D}_t}\left[\left(\sigma\left(2(\langle \bm{w},X_t\rangle)\right)-\sigma\left(2(\langle \bm{w}^*,X_t\rangle)\right)\right)^2\right]\gtrsim \Omega_A((w_i-w_i^*)^2).
    \end{equation*}
   \end{enumerate}
   Then for $T$ large enough, it holds with high probability that the solution $\widehat{\bm{w}}$ of \Cref{eq:lr_overview} satisfies $\widehat{\bm{w}}\approx \bm{w}^*$.
\end{thm}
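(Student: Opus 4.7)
The plan is to follow the classical empirical risk minimization (ERM) excess-risk argument, with the one twist that the per-step laws $\mathcal{D}_t$ are allowed to depend on the past. I would first use optimality of $\widehat{\bm{w}}$ in \eqref{eq:lr_overview} together with $\bm{w}^*\in\mathcal{C}$ to write $\frac{1}{T}\sum_t \ell(2Y_t\langle X_t,\widehat{\bm{w}}\rangle) \leq \frac{1}{T}\sum_t \ell(2Y_t\langle X_t,\bm{w}^*\rangle)$, and then invoke the two-sided uniform convergence of condition~(1) --- the supremum bound applied at $\widehat{\bm{w}}$ and the pointwise bound applied at $\bm{w}^*$ --- to sandwich the comparison and obtain, with high probability,
\[
\frac{1}{T}\sum_{t=1}^T\bigl[\mathcal{L}_{\mathcal{D}_t}(\widehat{\bm{w}}) - \mathcal{L}_{\mathcal{D}_t}(\bm{w}^*)\bigr] = o_T(1).
\]

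The next step is to show that each summand on the left is nonnegative and, more importantly, lower-bounds a mean-squared sigmoid prediction gap. Using \eqref{eq:faithful}, which forces the conditional law of $Y_t$ given $X_t$ to be $\mathrm{Ber}(\sigma(2\langle\bm{w}^*,X_t\rangle))$, and the identities $\ell(2z)=-\log\sigma(2z)$ and $\ell(-2z)=-\log(1-\sigma(2z))$, a direct computation gives
\[
\mathbb{E}\bigl[\ell(2Y_t\langle X_t,\bm{w}\rangle) - \ell(2Y_t\langle X_t,\bm{w}^*\rangle)\,\big|\,X_t\bigr] = \mathrm{KL}\bigl(\mathrm{Ber}(\sigma(2\langle\bm{w}^*,X_t\rangle))\,\big\|\,\mathrm{Ber}(\sigma(2\langle\bm{w},X_t\rangle))\bigr) \geq 0.
\]
Taking expectation over $X_t\sim\mathcal{D}_t$ and applying Pinsker's inequality then yields
\[
\mathcal{L}_{\mathcal{D}_t}(\bm{w}) - \mathcal{L}_{\mathcal{D}_t}(\bm{w}^*) \;\geq\; 2\,\mathbb{E}_{\mathcal{D}_t}\bigl[\bigl(\sigma(2\langle\bm{w},X_t\rangle)-\sigma(2\langle\bm{w}^*,X_t\rangle)\bigr)^2\bigr],
\]
which is the ``self-bounding'' property of the logistic loss I want to exploit.

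Combining this with condition~(2), coordinate by coordinate, gives $\mathcal{L}_{\mathcal{D}_t}(\bm{w}) - \mathcal{L}_{\mathcal{D}_t}(\bm{w}^*) \gtrsim \Omega_A\bigl((w_i-w_i^*)^2\bigr)$ for every $t$, every $\bm{w}\in\mathcal{C}$, and every coordinate $i$. Specializing to $\bm{w}=\widehat{\bm{w}}$, averaging over $t\in[T]$, and feeding in the $o_T(1)$ bound from the first step then yields $(\widehat{w}_i - w_i^*)^2 = o_T(1)$ coordinatewise, and hence $\widehat{\bm{w}}\approx\bm{w}^*$ with a rate that tracks the quantitative versions of hypotheses (1) and (2).

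The hard part is not in this abstract reduction --- it really just amounts to ERM plus Pinsker --- but in verifying hypotheses (1) and (2) in each of the settings the paper considers. Condition~(1) demands a uniform law of large numbers over $\mathcal{C}$ for a stochastic process whose conditional laws $\mathcal{D}_t$ are themselves random and temporally dependent, so standard i.i.d.\ Rademacher symmetrization must be replaced by martingale- or mixing-based concentration tailored to the dynamics (Glauber, block, round-robin, adversarial) or to the specific Gibbs measure (bounded-width or SK). Condition~(2) is a quantitative nondegeneracy assertion, essentially requiring that under $\mathcal{D}_t$ each coordinate of $X_t$ is excited often enough that the sigmoid prediction genuinely separates $w_i$ from $w_i^*$; establishing this uniformly in $t$ and $\bm{w}$ is where the bounded $\ell_1$ width assumption, the covariance operator-norm control for SK, and the niceness of the Markov chain schedule each enter in the later sections.
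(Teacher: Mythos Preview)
Your proposal is correct and follows essentially the same route as the paper's proof of \Cref{thm:wsd}: optimality of $\widehat{\bm{w}}$ plus the two uniform-convergence bounds gives $\frac{1}{T}\sum_t[\mathcal{L}_{\mathcal{D}_t}(\widehat{\bm{w}})-\mathcal{L}_{\mathcal{D}_t}(\bm{w}^*)]=o_T(1)$, the quadratic lower bound on the excess population loss is invoked, and then condition~(2) converts this into a coordinatewise bound on $\widehat{\bm{w}}-\bm{w}^*$. The only cosmetic difference is that you re-derive the inequality $\mathcal{L}_{\mathcal{D}_t}(\bm{w})-\mathcal{L}_{\mathcal{D}_t}(\bm{w}^*)\geq 2\,\mathbb{E}_{\mathcal{D}_t}[(\sigma(2\langle\bm{w},X_t\rangle)-\sigma(2\langle\bm{w}^*,X_t\rangle))^2]$ via the Bernoulli KL identity and Pinsker, whereas the paper quotes it as \Cref{lem:loss_quadratic} from Wu--Sanghavi--Dimakis; these are the same inequality.
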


The proof in the case of i.i.d. samples was essentially given by Wu, Sanghavi, and Dimakis and follows standard learning-theoretic arguments with important deterministic identities for $\ell$ and $\sigma$, but extends easily to general stochastic processes as we have defined them so we omit further discussion here. 

Given this result, one needs to establish strong quantitative forms of both conditions to deduce efficient learnability. To later contrast with our work, we briefly describe how this is done by Wu, Sanghavi, and Dimakis in the case of i.i.d. samples from $\mu_A$ with $\ell_1$ width bound $\lambda$, so that $\mathcal{C}$ is the $\ell_1$ ball of radius $\lambda$. The first condition of \Cref{thm:wsd_overview} is easily proven using standard uniform convergence arguments: since the logistic loss can be uniformly bounded in terms of $\lambda$, one may apply McDiarmid's bounded difference inequality to bound the deviation of both quantities from their mean. Note that this application of McDiarmid's inequality crucially relies on independence of samples. The mean of the supremum in the first quantity can in turn be easily bounded using the standard theory of Rademacher complexity.

For the second condition of \Cref{thm:wsd_overview}, the key observation of essentially all prior work on provably learning Ising models is that when samples are drawn i.i.d. from the stationary measure, a uniform $\ell_1$ bound implies that for each coordinate $j\in [n-1]$, spin $j$ retains nontrivial variance given the coordinates outside of $\{j,n\}$. This implies that if $\bm{w}_j$ differs from $\bm{w}^*_j$, it cannot be the case that $\bm{w}$ and $\bm{w}^*$ often makes the same predictions on the bias of spin $n$ since for any conditioning on the coordinates outside $\{j,n\}$, at least one of $X_j=1$ or $X_j=-1$ must give different predictions. Since both occur with non-negligible  probability for any conditioning of the other spins, with some  algebra, one can obtain the requisite lower bound of the second condition of \Cref{thm:wsd_overview}. Given the width bound $\lambda$, the conditional variance of the $j$th spin given the outside coordinates of an independent sample from $\mu_A$ can be easily shown to surely be at least $\exp(-O(\lambda))$, which explains this dependency in state-of-the-art sample complexities.

In the settings we consider, however, establishing one or both of these conditions becomes significantly more difficult. We now turn to explaining the challenges of learning from dynamic processes and for spin glasses, as well as the techniques we use to circumvent them. Simpler versions of these arguments will suffice for our remaining applications.\\

\noindent\textbf{Learning from Dynamics.} We first consider the case where we obtain dependent samples obtained by some local Markov chain with stationary measure $\mu_A$ where $A$ satisfies $\ell_1$ width $\lambda$. For simplicity, let us focus on the case that data is generated by Glauber dynamics: namely, we observe a trajectory of the form 
\begin{equation*}
    X_0\to (X_1,i_1)\to \ldots\to (X_T,i_T),
\end{equation*} 
where $X_0\in \{-1,1\}^n$ is an arbitrary initial configuration, and $X_{t+1}$ is obtained from $X_t$ by sampling $i_{t+1}$ uniformly from $[n]$ and only updating $X_{t+1,i_{t+1}}\in \{-1,1\}$ according to \Cref{eq:faithful} given the other coordinates $X_{t,-i_{t+1}}\in \{-1,1\}^{n-1}$. We again focus on a logistic regression problem associated to node $n$.

The most natural way to formulate the logistic regression problem \Cref{eq:lr_overview} for node $n$ is to extract appropriate samples by defining stopping times $\tau_{\ell}$ recursively: we set $\tau_1$ as the first update time $t$  of node $n$ (i.e. where $i_t=n$) and then let $\tau_{\ell+1}$ be the first time $t>\tau_{\ell}$ such that $i_{t}=n$. We then let our samples be
\begin{equation*}
    Z_1=(X_{\tau_1,-n}, X_{\tau_1,n}),\ldots, Z_{T_n}=(X_{\tau_{T_n},-n}, X_{\tau_{T_n},n}),
\end{equation*}
where $T_n$ is the number of times that $i_t=n$. Notice that the law of $Z_{\ell,n}$ given even the entire trajectory up to time $\tau_{\ell}$ is obtained by the update rule \Cref{eq:faithful} given just $Z_{\ell,-n}=X_{\tau_{\ell},-n}$ by the Markov property.

To leverage the above framework, we must first formulate an appropriate notion of ``population loss'' since the samples are not independently drawn, nor are they identically distributed from a common measure. There are two objectives illustrated above: first, the ``population loss'' must be chosen to satisfy a quantitatively strong version of uniform concentration with respect to the empirical losses. Second, the ``population loss'' must satisfy some sort of strong convexity condition; recall from our earlier discussion that in the i.i.d. case, this followed from nontrivial conditional variance in each coordinate given the rest. This second condition suggests that the ``population loss'' should use minimal conditioning.

A natural proposal is thus to define the (random) $k$th population loss of a vector $\bm{w}$ as the population logistic loss \emph{taken over the conditional distribution of $Z_k$ given just $Z_1,\ldots,Z_{k-1}$} (rather than the full trajectory until the $k$th update). For any fixed vector $\bm{w}\in \mathbb{R}^{n-1}$, the differences between the empirical losses and these population losses form a martingale difference sequence by construction. But unlike in the i.i.d. case, it is no longer clear that the \emph{supremum} of deviations of empirical and population losses over $\bm{w}\in \mathcal{C}$ tends to zero fast enough. As described above, standard uniform convergence bounds rely on McDiarmid's bounded difference inequality and Rademacher complexity arguments which critically require independence. In the dynamical setting, changing any $Z_{k}$ clearly affects the conditional distributions of $Z_{k+1},\ldots$, and therefore the corresponding population loss functions, invalidating the use of such tools. 

To overcome this, we must appeal to \emph{uniform martingale bounds}. To do so, we will leverage results of Rakhlin, Sridharan, and Tewari \cite{DBLP:conf/colt/RakhlinS17,DBLP:journals/jmlr/RakhlinST15} which show that uniform martingale bounds follow from control of the \emph{sequential Rademacher complexity} of the class of functions under consideration. Their main results imply that one can obtain similar concentration properties to the i.i.d. case for any such martingale difference sequence parameterized by a suitable (more precisely, online learnable) class, like the $\ell_1$-ball. By adapting some of their results, we will be able to recover the necessary uniform convergence in this setting up to a mild logarithmic factor, for \emph{any} process generating the samples. In particular, this will hold for any local Markov chain, not just Glauber dynamics.

Even with uniform convergence in hand, the second condition of \Cref{thm:wsd_overview} poses new technical challenges. Unlike the i.i.d. case, it is simply false that, given any initial configuration $Z_0=X_0$ and the coordinates outside of $\{j,n\}$ of $Z_1$ at the first update time of coordinate $n$, the conditional distribution of $j$ will retain nontrivial variance---this fact was crucial in deriving the convexity lower bound in the i.i.d. setting. To understand the difficulty, consider the Curie-Weiss model defined above with inverse temperature $\lambda>0$ initialized at the all-ones configuration. Conditioned on the event that all coordinates outside $\{j,n\}$ remain at $+1$ at the first update time of $n$, one may think that the variance of $Z_{1,j}$ will remain nontrivial since site $j$ and site $n$ were equally likely to update first by symmetry. If site $j$ updated before site $n$, then since the law of each site update can be shown to have variance at least $\exp(-O(\lambda))$ no matter the conditioning of all other nodes, this would appear to translate to the desired conclusion.

However, this reasoning is incorrect due to the following issue: conditioned on this particular value of the outside coordinates, it is actually very likely that site $n$ had an atypically small (i.e. $O_{\lambda}(1)$) first updating time. This arises because if the first update time is $\tau$, it holds with very high probability that at least $\exp(-O(\lambda))\tau$ sites updated to $-1$ at this update time. Conditioning on this configuration thus makes it quite likely that site $n$ was updated before site $j$. On this event, the spin at site $j$ is very likely fixed to be $+1$ as in the initial configuration, so the coordinate will be very biased under this conditioning.

To address this, our main observation is that it is not necessary to prove that large conditional variation in each coordinate holds almost surely conditional on the outside configuration at the update time. In fact, we only need to find some good event with constant probability for each $j\in [n-1]$ where this is the case. For instance, a prerequisite for site $j$ to have nontrivial variance is that $j$ is chosen for updating before site $n$, as otherwise the value of site $j$ is fixed to the value at the previous update time which we conditioned on. By symmetry, this occurs with probability at least $1/2$ conditioned just on the original configuration. However, the Curie-Weiss example above demonstrates that it is quite subtle to determine which good configurations of outside coordinates at the next update time we can condition on to ensure that the spin at site $j$ retains sufficient variance due to these complex dependencies between the conditioning and the law of the dynamics.

Our solution is that rather than conditioning on well-behaved outside configurations at the update time, it will be more technically convenient to condition on \emph{good paths} of the dynamics between updating times (which determine the new configuration given the configuration at the previous update time). More precisely, we will work on the event that the path of updates between updating times of $n$ includes an update of site $j$, and moreover, given that this update occurred, the variance of the value of this update remains controlled. To do this, we consider the law of the update path where we omit the actual value of the last update to site $j$ (if such an update occurred before the next update of site $n$). We will then show that so long as the dynamics are reasonable (in a very mild quantitative sense), then with constant probability over such partial paths, the \emph{likelihood ratio} of the value of spin $j$ conditional on the partial path remains bounded. Roughly speaking, this will hold because the $\lambda$ width condition implies that the spin value of $j$ has controlled influence on the subsequent path of updates with decent probability by explicit calculation. As in the i.i.d. case, an appropriately quantitative version of this argument will be enough to conclude that the population loss remains expressive enough to measure the variation in coordinates as needed for \Cref{thm:wsd_overview}.\\

\noindent\textbf{Learning the SK Model with No External Field.} We apply similar ideas to show how to learn the parameters of the Sherrington-Kirkpatrick model from i.i.d. samples in most of the known high-temperature regime from independent samples. Unlike the dynamical case, because we allow independent samples, well-known uniform convergence results directly apply to establish the first condition of \Cref{thm:wsd_overview}. Instead, the entire difficulty lies in the second condition: using \emph{just} an $\ell_1$ width bound of $\lambda$, this step pays a cost of $\exp(O(\lambda))$ in the sample complexity to account for the worst-case variance of a spin conditional on the others. For the SK model, this translates to $\exp(\Omega(\sqrt{n}))$ sample complexity for any constant inverse temperature $\beta>0$.

Our main insight in this setting is that while it is indeed true that the conditional variance of a spin scales in this way in the  \emph{worst-case} when outside spins are adversarially aligned, \Cref{thm:wsd_overview} itself does not require paying worst-case bounds. Rather, it again suffices to show that for a \emph{typical} (i.e. constant probability) sample from the stationary measure, any fixed coordinate will remain somewhat unbiased conditioned on the others. In the case of the SK model, we show this by observing the rows of the interaction matrix are not arbitrary vectors with $\ell_1$ norm $\Omega(\sqrt{n})$: rather, they are better viewed as vectors with \emph{constant $\ell_2$ norm}. 

With this observation, we obtain tail bounds on the typical variance of a site conditioned on the others by analyzing the quadratic form of the rows of the interaction matrix with the \emph{second moment matrix} $M=M(\mu_{A})$ of the distribution and then applying Chebyshev's inequality. If this concentrates near zero, then it will hold that the bias of sites $j$ and $n$ will simultaneously remain controlled on an event with constant probability, which will suffice for establishing the needed statement in \Cref{thm:wsd_overview}. This argument is conceptually simple and general, and may have applications in other important Ising models.
To actually implement this step, we can then simply appeal to recently established bounds on the operator norm of the covariance matrix (which agrees with the second moment matrix with no external field) at high-temperature that hold with probability $1-o(1)$ over $A\sim \mathsf{GOE}(n)$ for any $\beta<1$ \cite{alaoui2022bounds, brennecke2022two,brennecke2023operator}. While quite elementary, our reduction from learning Ising models with no external field to covariance operator norm bounds appears novel.\\ 

\noindent\textbf{Learning the SK Model with External Field.} The above argument breaks down when we introduce Gaussian external fields. The reason is that to control the typical conditional distribution of a spin, the second moment matrix $M$ is no longer spectrally bounded. Indeed, $M=\text{Cov}(\mu_{A,\bm{h}})+\bm{m}\bm{m}^T$, where $\text{Cov}$ is the covariance matrix and $\bm{m}$ is the mean vector. The second moment matrix is thus necessarily spectrally unbounded because of the rank-one spike due to the mean, which will have operator norm $\Theta(n)$. Therefore, it will no longer be true that \emph{arbitrary} linear functions of spins will concentrate near zero as above.

The key insight in saving the argument is that since the rows of the interaction matrix are Gaussian vectors by definition, one might hope that the quadratic form of the rows of $A$ with the rank-one spike $\bm{m}\bm{m}^T$ will remain bounded with high probability. Since operator norm bounds on the covariance matrix are known even in this setting, this would suffice to carry out the same argument as before. While this would be true by Gaussian concentration if $\bm{m}$ were \emph{independent} of $A$, this fails because the mean vector intimately depends on the realization of $A$ (as well as $\bm{h}$). Nonetheless, we show how approximate orthogonality follows from sufficiently quantitative versions of the well-known Thouless-Anderson-Palmer (TAP) equations for the SK model, as established by Talagrand. The TAP equations assert that for a certain quantity $q\in [0,1]$, it holds that
\begin{equation*}
    \bm{m}\approx \tanh\left(\beta A\bm{m}-\beta^2(1-q)\bm{m}+\bm{h}\right),
\end{equation*}
where $\tanh(\cdot)$ is applied entrywise. These equations are quite promising since they involve the quantity $A\bm{m}$, which we want to argue is entrywise small. If one had good \emph{a priori} bounds on $\|\bm{m}\|_{\infty}$, one could hope to approximately invert this identity and conclude boundedness; however, the only \emph{a priori} bounds we are aware of on $\|\bm{m}\|_{\infty}$ \cite{fan_mei_montanari}, as well as quantitative forms of ``$\approx$'' above, are far too loose to carry out such an argument.

To circumvent this issue, we instead employ machinery from Talagrand's \emph{proof} of the TAP equations \cite{talagrand2010mean} to argue there exists a coupling of $(A,\bm{h})$ with a random vector $\bm{z}$, whose coordinates are each marginally standard Gaussian, such that
\begin{equation*}
    \|\beta \bm{z}\sqrt{q}-\beta A\bm{m}-\beta^2(1-q)\bm{m}\|_{\infty}=o(1)
\end{equation*}
with high probability. This certifies $\|A\bm{m}\|_{\infty}=O(\sqrt{\log n})$ by standard bounds on the maximum of (arbitrarily correlated) Gaussian random variables and the fact $q\in [0,1]$. Our argument only pays exponentially in this quantity (as in the $\ell_1$ width case), so this incurs only sub-polynomial sample complexity.

\section{Preliminaries and Notation}
\label{sec:prelims}
We use the standard notion $\|\cdot\|_p$ for $\ell_p$ norms. For a matrix $A$, we use $\|A\|_p$ to denote the $\ell_p$ norm when vectorized. We let $\|A\|_{\mathsf{op}}$ denote the operator norm and let $I$ denote the identity matrix (where the dimension will be clear from context). Given a vector $\bm{v}\in \mathbb{R}^n$, we write $\text{diag}(\bm{v})$ to denote the diagonal matrix with $\bm{v}$ along the diagonal, and we write $\bm{v}^p\in \mathbb{R}^n$ for the vector obtained by applying $p$th powers entrywise to $\bm{v}$. We will also write $A_j$ for the $j$th row of $A$, and more generally, $A_{S,T}$ for the $\vert S\vert\times \vert T\vert$ submatrix indexed by $S\times T$. For a matrix $A$ with zero diagonal, we will often consider $A_j$ to be a vector in $\mathbb{R}^{n-1}$ (rather than $\mathbb{R}^n$) in the natural way by omitting the zero entry with the obvious indexing; this should not cause any confusion.

For a distribution $\nu$ on a discrete space $\mathcal{X}$ and a sub-sigma-algebra $\mathcal{F}\subseteq 2^{\mathcal{X}}$, we write $\nu(\cdot\vert \mathcal{F})$ to denote the conditional distribution with respect to $\mathcal{F}$. For a distribution $\mu$ on $\{-1,1\}^n$, we write $\text{Cov}(\mu)$ to denote the \emph{covariance matrix}:
\begin{equation*}
    \text{Cov}(\mu)_{ij} = \mathbb{E}_{\mu}[X_iX_j]-\mathbb{E}_{\mu}[X_i]\mathbb{E}_{\mu}[X_j].
\end{equation*}

Given random variables $Z_1,Z_2,\ldots,Z_k$ defined on a common probability space, we let $\sigma(Z_1,Z_2,\ldots,Z_k)$ denote the sigma-algebra generated by them. We write $\mathcal{N}(\mu,\sigma^2)$ to denote the Gaussian distribution with mean $\mu\in \mathbb{R}$ and variance $\sigma^2\geq 0$. We will require the standard tail bounds for  Gaussians:
\begin{fact}[see e.g. \cite{vershyninHighdimensionalProbabilityIntroduction2018}, Equation (2.10)]
\label{fact:gaussian_tails}
    Let $g\sim \mathcal{N}(0,1)$. Then for any $t\geq 0$,
    \begin{equation*}
        \Pr(g\geq t)\leq \exp\left(\frac{-t^2}{2}\right).
    \end{equation*}
\end{fact}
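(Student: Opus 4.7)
The plan is to establish the tail bound via the standard Chernoff/Cram\'er method, which is the cleanest route for a sub-Gaussian concentration inequality and extracts exactly the exponent $-t^2/2$ claimed. The essential input is the moment generating function of a standard Gaussian, $\mathbb{E}[\exp(\lambda g)] = \exp(\lambda^2/2)$ for all $\lambda \in \mathbb{R}$, which follows by completing the square inside the Gaussian density integral. This is the only analytic fact I would need beyond Markov's inequality.

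Given that, the argument proceeds in two quick steps. First, for any $\lambda \geq 0$, I would apply Markov's inequality to the nonnegative random variable $\exp(\lambda g)$ at the threshold $\exp(\lambda t)$ to obtain
\begin{equation*}
\Pr(g \geq t) \;=\; \Pr\!\bigl(\exp(\lambda g) \geq \exp(\lambda t)\bigr) \;\leq\; \exp(-\lambda t)\, \mathbb{E}[\exp(\lambda g)] \;=\; \exp\!\bigl(\lambda^2/2 - \lambda t\bigr).
\end{equation*}
Second, I would optimize the right-hand side over $\lambda \geq 0$. The exponent $\lambda^2/2 - \lambda t$ is a convex quadratic in $\lambda$ minimized at $\lambda = t$ (which is nonnegative by the hypothesis $t \geq 0$, so the choice is admissible), and substituting this value yields the bound $\exp(-t^2/2)$.

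There is essentially no obstacle here: the only place one could slip is handling the edge case $t = 0$, where the bound asserts the trivial inequality $\Pr(g \geq 0) \leq 1$, which is automatic. Everything else is a one-line Chernoff argument with the Gaussian MGF. If desired, I could alternatively derive the inequality by writing $\Pr(g \geq t) = \int_t^\infty \frac{1}{\sqrt{2\pi}} \exp(-s^2/2)\, ds$ and bounding via the substitution $s = t + u$ together with $\exp(-tu) \leq 1$ for $u \geq 0$, but the Chernoff approach is shorter and generalizes cleanly to the sub-Gaussian tail bounds used elsewhere in the paper.
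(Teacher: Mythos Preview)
Your proof is correct. The paper does not give its own proof of this fact; it simply states it with a citation to Vershynin, so there is nothing to compare against beyond noting that your Chernoff argument is the standard derivation and matches what one finds in the cited reference.
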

\begin{fact}[see e.g. \cite{vershyninHighdimensionalProbabilityIntroduction2018}, Theorem 3.1.1.]
\label{fact:norm_concentration}
    Let $\bm{g}=(g_1,\ldots,g_n)$ where each $g_i\sim \mathcal{N}(0,1)$ are independent. There exists a numerical constant $c_{\ref{fact:norm_concentration}}>0$ such for any $t\geq 0$,
    \begin{equation*} \Pr\left(\left\vert\frac{1}{\sqrt{n}}\|\bm{g}\|_2-1\right\vert\geq t\right)\leq 2\exp\left(-c_{\ref{fact:norm_concentration}}nt^2\right)
    \end{equation*}
\end{fact}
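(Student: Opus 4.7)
The plan is to reduce the concentration of $\tfrac{1}{\sqrt{n}}\|\bm{g}\|_2$ around $1$ to the concentration of the squared norm $\tfrac{1}{n}\|\bm{g}\|_2^2 = \tfrac{1}{n}\sum_{i=1}^n g_i^2$ around $1$. The squared norm is an average of $n$ i.i.d. centered random variables $g_i^2 - 1$, each of which is subexponential with sub-exponential norm bounded by an absolute constant; this can be verified directly from the Gaussian density by computing $\mathbb{E}[\exp(\lambda(g_i^2-1))]$ and checking it is finite (and at most $\exp(C\lambda^2)$) for all $|\lambda|$ below a fixed threshold.

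Given this subexponential structure, Bernstein's inequality for sums of i.i.d. subexponential random variables yields, for some absolute constant $c > 0$ and all $s \geq 0$,
\begin{equation*}
\Pr\left(\left|\tfrac{1}{n}\textstyle\sum_{i=1}^n (g_i^2 - 1)\right| \geq s\right) \leq 2\exp\bigl(-c\,n\min(s^2, s)\bigr).
\end{equation*}
This is the only probabilistic input; everything that follows is an elementary deterministic reduction based on the factorization $z^2 - 1 = (z-1)(z+1)$ applied to $z := \|\bm{g}\|_2/\sqrt{n} \geq 0$.

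For $t \in [0,1]$, a short case analysis on the sign of $z - 1$ shows that $|z - 1| \geq t$ implies $|z^2 - 1| \geq t$ (the upper deviation uses $z + 1 \geq 1$, and the lower deviation uses $t^2 \leq t$). Applying the squared-norm bound with $s = t \leq 1$ gives exponent $-cnt^2$. For $t > 1$, the event $z \leq 1 - t$ is impossible since $z \geq 0$, so $z \geq 1 + t$, and hence $z^2 - 1 \geq 2t + t^2 \geq t^2 \geq 1$; applying the squared-norm bound with $s = t^2$ then gives exponent $-c\,n\min(t^4, t^2) = -cnt^2$. Combining the two regimes yields the claim with $c_{\ref{fact:norm_concentration}}$ equal to $c$ (up to absolute constants).

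The only subtlety, rather than a genuine obstacle, is the case split on $t$. It is essential because the subexponential tail for the squared norm exhibits a subgaussian-to-exponential crossover near $s \approx 1$, whereas the target bound is purely subgaussian in $t$ throughout. The squaring step is precisely what converts the exponential regime ($t > 1$) back into a subgaussian tail in $t$, and I expect this reduction, rather than Bernstein, to be the only place requiring a small bit of care.
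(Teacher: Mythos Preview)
The paper does not prove this statement; it is stated as a fact with a citation to Vershynin's textbook (Theorem 3.1.1), so there is no proof in the paper to compare against. Your proposal is correct and is essentially the standard argument found in that reference: Bernstein for the subexponential variables $g_i^2-1$, followed by the elementary $|z-1|\to|z^2-1|$ reduction with a case split at $t=1$.
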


\subsection{Ising Models}
We consider Ising models on $\{-1,1\}^n$, so that $n$ parametrizes the number of sites. Given a symmetric interaction matrix $A\in \mathbb{R}^{n\times n}$ with zero diagonal and external fields $\bm{h}\in \mathbb{R}^n$, the corresponding Ising model $\mu_{A,\bm{h}}$ on $\{-1,1\}^n$ is given by
\begin{equation*}
    \mu_{A,\bm{h}}(\bm{x})\propto \exp\left(\frac{1}{2}\bm{x}^TA\bm{x}+\bm{h}^T\bm{x}\right),
\end{equation*}
where the normalization is to ensure $\mu_{A,\bm{h}}(\cdot)$ is a probability measure. Formally, we define the partition function $Z=Z_{A,\bm{h}}$ by
\begin{equation*}
    Z=\sum_{\bm{x}\in \{-1,1\}^n} \exp\left(\frac{1}{2}\bm{x}^TA\bm{x}+\bm{h}^T\bm{x}\right)
\end{equation*}
so that
\begin{equation*}
    \mu_{A,\bm{h}}(\bm{x})= \frac{\exp\left(\frac{1}{2}\bm{x}^TA\bm{x}+\bm{h}^T\bm{x}\right)}{Z}.
\end{equation*}
Given the interaction matrix $A$ and external fields $\bm{h}$, we define the $\ell_1$-width to be
\begin{equation*}
    \lambda=\lambda(A)\triangleq \max_{i\in [n]} \sum_{j\neq i} \vert A_{ij}\vert+\vert h_i\vert.
\end{equation*}
We say that the \textbf{support} of $A$, $\text{supp}(A)$, is the set of indices of nonzero entries of $A$ and we let $\mathcal{N}(i)$ denote the set of indices of nonzero entries in the $i$th row of $A$. The \textbf{degree} of $A$ is then the maximum of $\vert \mathcal{N}(i)\vert$ over $i\in [n]$ 
 (i.e. maximum number of nonzero entries in a row).

Given a configuration $X\in \{-1,1\}^n$, we write $X^{i,\pm}$ to denote the configuration that agrees with $X$ except $X_i$ is set to $\pm 1$. We also write $X_{-S}$ to denote the restriction of $X$ to the coordinates not in $S$. Note that given a partial configuration $X_{-S}\in \{-1,1\}^{-S}$, the conditional distribution of $X_S$ is given by 
\begin{equation*}
    \mu_A(X_S\vert X_{-S})\propto \exp\left(\frac{1}{2}X_S^TA_{S,S}X_S+X_S^TA_{S,-S}X_{-S}+\sum_{i\in S}h_iX_i\right),
\end{equation*}
where the implicit normalization sums over values of $X_S\in \{-1,1\}^S$.
In particular, if $S=\{i\}$ for some coordinate $i\in [n]$, then 
\begin{equation}
\label{eq:glauber}
    \mu_A(X_i = 1\vert X_{-i}) = \frac{\exp(\langle A_{i}, X_{-i}\rangle+h_i)}{\exp(\langle A_{i}, X_{-i}\rangle+h_i)+\exp(-(\langle A_{i}, X_{-i}\rangle+h_i)} = \sigma(2\cdot(\langle A_{i}, X_{-i}\rangle+h_i)),
\end{equation}
where we define $\sigma:\mathbb{R}\to [0,1]$ by 
\begin{equation*}
    \sigma(z)=\frac{1}{1+\exp(-z)}.
\end{equation*}
We will require the following deterministic fact about $\sigma(\cdot)$ that was shown by Klivans and Meka:
\begin{fact}[Claim 4.2 of \cite{DBLP:conf/focs/KlivansM17}]
\label{fact:sigmoid_lb}
For any $x,y\in \mathbb{R}$, it holds that
\begin{equation*}
    \vert \sigma(x)-\sigma(y)\vert\geq \frac{\exp(-\vert x\vert)}{4e}\min\{1,\vert x-y\vert\}.
\end{equation*}
\end{fact}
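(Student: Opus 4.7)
\medskip

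\noindent\textbf{Proof proposal for \Cref{fact:sigmoid_lb}.} The plan is to use the mean value theorem together with a uniform lower bound on the derivative of $\sigma$ in terms of $\exp(-|\cdot|)$, splitting on whether $|x-y|$ is at most $1$ or larger than $1$ to handle the two sides of the $\min$.

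First, I would record the derivative identity $\sigma'(z) = \sigma(z)(1-\sigma(z)) = \frac{e^{-z}}{(1+e^{-z})^2}$. For $z \geq 0$ we have $1 + e^{-z} \leq 2$, so $\sigma'(z) \geq e^{-z}/4$; for $z \leq 0$, by the symmetry $\sigma'(z) = \sigma'(-z)$, we get $\sigma'(z) \geq e^{z}/4$. Combining, $\sigma'(z) \geq \tfrac{1}{4}\exp(-|z|)$ for every $z \in \mathbb{R}$. This will be the only analytic fact about $\sigma$ that the proof needs.

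Next, I split into cases. If $|x-y| \leq 1$, the mean value theorem gives $|\sigma(x)-\sigma(y)| = \sigma'(\xi)\,|x-y|$ for some $\xi$ lying between $x$ and $y$, and in particular $|\xi| \leq |x| + |x-y| \leq |x| + 1$. Using the lower bound on $\sigma'$, $\sigma'(\xi) \geq \tfrac{1}{4}\exp(-|x|-1) = \frac{\exp(-|x|)}{4e}$, which yields
\[
|\sigma(x)-\sigma(y)| \geq \frac{\exp(-|x|)}{4e}\,|x-y| = \frac{\exp(-|x|)}{4e}\,\min\{1,|x-y|\}.
\]
If instead $|x-y| > 1$, then $\min\{1,|x-y|\} = 1$, and by monotonicity of $\sigma$ it suffices to lower bound $|\sigma(x) - \sigma(x')|$ for the point $x'$ that lies at distance exactly $1$ from $x$ on the same side as $y$. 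Another application of the mean value theorem on the unit-length interval between $x$ and $x'$ produces some $\xi$ with $|\xi| \leq |x|+1$, and the same derivative estimate gives $|\sigma(x)-\sigma(x')| \geq \frac{\exp(-|x|)}{4e}$, completing the argument.

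There is no real obstacle here: the entire content is the elementary derivative bound $\sigma'(z) \geq \tfrac{1}{4}\exp(-|z|)$ and the observation that the constant $e$ in the denominator is exactly the price paid for letting the intermediate point $\xi$ drift a unit distance away from $x$. The case split on $|x-y| \lessgtr 1$ is what introduces the $\min\{1,|x-y|\}$ factor; no convexity, log-concavity, or global identity for $\sigma$ beyond what is written above is needed.
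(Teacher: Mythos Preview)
Your proposal is correct and follows essentially the same approach as the paper: both arguments establish the derivative lower bound $\sigma'(z)\geq \tfrac{1}{4}\exp(-|z|)$ by a sign case split, use monotonicity of $\sigma$ to reduce the case $|x-y|>1$ to $y=x\pm 1$, and then apply the mean value theorem on an interval of length at most $1$, picking up the extra factor of $e^{-1}$ from the unit drift of the intermediate point.
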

\begin{proof}
    We provide an alternative proof: we simply observe that
    \begin{equation*}
        \sigma'(z) = \frac{\exp(-z)}{(\exp(-z)+1)^2}\geq \frac{\exp(-\vert z\vert)}{4}.
    \end{equation*}
    The last inequality can be proved by cases when $z$ is positive or negative. The desired claim follows then by Taylor's theorem, using the  monotonicity of $\sigma(\cdot)$ to reduce to the case that $y=x\pm 1$. This picks up at most an extra factor of $e^{-1}$.
\end{proof}

In \Cref{sec:sk}, we will investigate the efficient learning of the \emph{Sherrington-Kirkpatrick} model:
\begin{defn}
    Let $\mathsf{GOE}(n)$ denote the Gaussian Orthogonal Ensemble (GOE): this is the distribution over symmetric $n\times n$ matrices where the entries above the diagonal are independent Gaussians with variance $1/n$ and zero diagonal. The \textbf{Sherrington-Kirkpatrick model} $\mu_{\beta,A,\bm{h}}$ on $n$ nodes at inverse temperature $\beta\geq 0$ and with external field $\bm{h}$ is the (random) Ising model with interaction matrix $\beta A$ where $A\sim \mathsf{GOE}(n)$.
\end{defn}

\subsection{Dynamics in Ising Models}

We now define a general class of dynamic processes on Ising models.
\begin{defn}
\label{defn:sbd}
    Let $A\in \mathbb{R}^{n\times n}$ and $\bm{h}\in \mathbb{R}^n$ parametrize an Ising model on $\{-1,1\}^n$. A \textbf{sequential block dynamics} for the corresponding Ising measure is given as follows: the dynamics initialize at some (possible random) $X_0\in \{-1,1\}^n$ and set $S_0=\emptyset$. At each time $t\geq 1$, letting $\mathcal{F}_t=\sigma(X_0,\ldots,X_t,S_0,\ldots,S_t)$, the block dynamics proceed as follows:
    
    \begin{enumerate}
        \item A (possibly) random subset $S_t$ is drawn from some measure on $2^{[n]}$ that, conditional on $\mathcal{F}_{t-1}$, depends only on $\mathcal{S}_{t-1}=(S_0,\ldots,S_{t-1})$.
        \item Given $\mathcal{F}_{t-1}$ and $S_t$, the law of $X_t$ is given by $X_t\sim\mu_{A,\bm{h}}(\cdot\vert (X_{t-1})_{-S_t})$.
    \end{enumerate} 
These dynamics thus give rise to a distribution over paths $\mathcal{P}=((S_0,X_0),(S_1,X_1),\ldots,(S_t,X_t),\ldots)\in (2^{[n]}\times \{-1,1\}^n)^{\mathbb{N}}$.
\end{defn}

While somewhat abstract, this definition easily captures several important natural examples:
\begin{example}[Independent Samples]
\label{ex:independent}
    Suppose that $S_t\equiv [n]$ for all $t\geq 1$. In this case, the sequence $X_1,X_2,\ldots$ form independent samples from $\mu_{A,\bm{h}}$, which is the most commonly studied regime for learning Ising models.
\end{example}
\begin{example}[Glauber Dynamics]
\label{ex:glauber}
    Suppose that $S_t$ is taken to be a uniform element of $[n]$, chosen independently of all other events. In this case, the sequential block dynamics forms (standard) \textbf{Glauber dynamics} and the updates are given via \Cref{eq:glauber}.
\end{example}

\begin{example}[$\ell$-Block Dynamics]
A common generalization of \Cref{ex:independent} and \Cref{eq:glauber} is $\ell$-block dynamics, where $1\leq \ell\leq n$. In this case, $S_t$ is chosen uniformly among all subsets of $[n]$ of size $\ell$. Glauber dynamics corresponds to $\ell=1$ while independent samples corresponds to $\ell=n$.
\end{example}

An even broader generalization of the previous examples is the following:
\begin{defn}
    Let $\mathcal{Q}$ denote a \emph{symmetric} distribution over non-empty subsets of $[n]$.\footnote{The symmetry of the distribution means that it is invariant under any permutation.} Then the \textbf{symmetric block dynamics induced by $\mathcal{Q}$} is the sequential block dynamics where each subset $S_t$ is drawn from $\mathcal{Q}$ independently of all previous events.
\end{defn}
This easily captures the previous examples by letting $\mathcal{Q}$ be uniform on subsets of $[n]$ with a fixed size. However, it also captures more general updates: for example, it captures the dynamics where each $i\in [n]$ is included in $S_t$ with a fixed probability $p$ conditional on $S_t$ being nonempty.

Note further that \Cref{defn:sbd} does not require reversibility; indeed, it also applies to non-reversible Markov chains like the following:
\begin{example}[Round-Robin Dynamics]
 \textbf{Round-robin dynamics} corresponds to choosing a fixed permutation $\sigma\in \mathfrak{S}_n$ and letting $S_t=\{\sigma(t\mod n)\}$, where we consider the modulo operation to return a number between $1$ and $n$. This is a natural updating rule in practice, and a major open problem is to determine the quantitative relation between the mixing times of round-robin dynamics and Glauber dynamics (Question 3 of \cite{levin2017markov}).
\end{example}
\begin{remark}
    Note that any \emph{fixed} deterministic schedule of subsets is valid under \Cref{defn:sbd}; the key assumption is that conditional on the past, each update of spins must have the correct law conditioned on the current configuration outside the subset to be updated. Moreover, the law of the blocks chosen for updating cannot depend on the history of configurations, only possibly on the previous schedule of updated blocks.
\end{remark}

\subsection{Learning Ising Models}

\begin{defn}[Distances between Distributions]
The \textbf{KL divergence} of distributions $P,Q$ on $\{-1,1\}^n$ is defined as
\begin{equation*}
    \mathsf{d}_{KL}(P,Q)=\mathbb{E}_P[\log(P/Q)].
\end{equation*}
    The \textbf{total variation distance} is defined as
\begin{equation*}
    \mathsf{d}_{TV}(P,Q) = \frac{1}{2}\sum_{x\in \{-1,1\}^n}\vert P(x)-Q(x)\vert.
\end{equation*}
\end{defn}

The following simple claim shows how to translate $\ell_{\infty}$ recovery bounds on an Ising model to distances of probability measures.

\begin{lem}
\label{lem:additive_suff}
    For any matrices $A,B\in \mathbb{R}^{n\times n}$ and external fields $\bm{h}_A,\bm{h}_B\in \mathbb{R}^n$
    \begin{equation*}
        \mathsf{d}_{KL}(\mu_{A,\bm{h}_A},\mu_{B,\bm{h}_B})\leq n^2\|A-B\|_{\infty}+2n\|\bm{h}_A-\bm{h}_B\|_{\infty}.
    \end{equation*}

    In particular, 
    \begin{equation*}
    \mathsf{d}_{TV}(\mu_{A,\bm{h}_A},\mu_{B,\bm{h}_B})\leq n\sqrt{\frac{\|A-B\|_{\infty}}{2}}+\sqrt{n\|\bm{h}_A-\bm{h}_B\|_{\infty}}.
    \end{equation*}
\end{lem}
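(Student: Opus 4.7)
The plan is to directly compute the KL divergence using the explicit exponential form of the Ising measures, and then obtain the total variation bound as a routine consequence of Pinsker's inequality. Since $\log \mu_{A,\bm{h}_A}(\bm{x}) = \tfrac{1}{2}\bm{x}^T A\bm{x} + \bm{h}_A^T \bm{x} - \log Z_{A,\bm{h}_A}$ (and similarly for $B$), I would first write
\[
\log\frac{\mu_{A,\bm{h}_A}(\bm{x})}{\mu_{B,\bm{h}_B}(\bm{x})} = \tfrac{1}{2}\bm{x}^T(A-B)\bm{x} + (\bm{h}_A-\bm{h}_B)^T \bm{x} - \log\frac{Z_{A,\bm{h}_A}}{Z_{B,\bm{h}_B}},
\]
and take expectation under $\mu_{A,\bm{h}_A}$ to compute $\mathsf{d}_{KL}$.

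The first two terms are trivial to bound pointwise: since $\bm{x}\in\{-1,1\}^n$, one has $|\bm{x}^T(A-B)\bm{x}|\leq n^2\|A-B\|_\infty$ and $|(\bm{h}_A-\bm{h}_B)^T \bm{x}|\leq n\|\bm{h}_A-\bm{h}_B\|_\infty$, so their expectations under any distribution are bounded by $\tfrac{1}{2}n^2\|A-B\|_\infty$ and $n\|\bm{h}_A-\bm{h}_B\|_\infty$, respectively. For the partition function ratio, I would note that at every configuration $\bm{x}$ the exponents in the numerator and denominator defining $Z_{A,\bm{h}_A}$ and $Z_{B,\bm{h}_B}$ differ by at most $\Delta \triangleq \tfrac{1}{2}n^2\|A-B\|_\infty + n\|\bm{h}_A-\bm{h}_B\|_\infty$; consequently the termwise ratio lies in $[e^{-\Delta}, e^{\Delta}]$ and so $\bigl|\log(Z_{A,\bm{h}_A}/Z_{B,\bm{h}_B})\bigr|\leq \Delta$. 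Summing these three contributions gives $\mathsf{d}_{KL}\leq 2\Delta = n^2\|A-B\|_\infty + 2n\|\bm{h}_A-\bm{h}_B\|_\infty$.

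For the total variation bound, I would then apply Pinsker's inequality, $\mathsf{d}_{TV}(P,Q)\leq \sqrt{\mathsf{d}_{KL}(P,Q)/2}$, together with the elementary inequality $\sqrt{a+b}\leq \sqrt{a}+\sqrt{b}$ for nonnegative reals. Plugging in the KL bound yields
\[
\mathsf{d}_{TV}(\mu_{A,\bm{h}_A},\mu_{B,\bm{h}_B}) \leq \sqrt{\tfrac{1}{2}n^2\|A-B\|_\infty + n\|\bm{h}_A-\bm{h}_B\|_\infty} \leq n\sqrt{\tfrac{\|A-B\|_\infty}{2}} + \sqrt{n\|\bm{h}_A-\bm{h}_B\|_\infty},
\]
which is exactly the claimed inequality. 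There is no real obstacle here: the only slightly non-routine step is the uniform $\Delta$-bound on $\log(Z_{A,\bm{h}_A}/Z_{B,\bm{h}_B})$, which follows immediately from the fact that multiplying each summand by a factor in $[e^{-\Delta},e^{\Delta}]$ multiplies the whole sum by such a factor as well.
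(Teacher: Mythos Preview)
Your proof is correct and follows essentially the same approach as the paper: both bound the pointwise log-density ratio by $n^2\|A-B\|_\infty + 2n\|\bm{h}_A-\bm{h}_B\|_\infty$ via the same termwise estimate on the exponents and partition functions, then invoke Pinsker's inequality and subadditivity of the square root for the total variation bound. Your presentation separates the three contributions (quadratic term, linear term, log-partition ratio) a bit more explicitly, but the underlying calculation is identical.
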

\begin{proof}
    The second inequality is a consequence of the first by Pinsker's inequality and subadditivity of the square root function, so it suffices to prove the first.
Let $\Delta_1 = A-B$ and $\Delta_2 = \bm{h}_A-\bm{h}_B$ and set $\eta_1=\|\Delta_1\|_{\infty}$ and $\eta_2=\|\Delta_2\|_{\infty}$. Then for any $\sigma\in \{-1,1\}^n$, we have
    \begin{equation*}
        \frac{\exp\left(\sigma^TA\sigma/2+\bm{h}_A^T\sigma\right)}{\exp\left(\sigma^TB\sigma/2+ \bm{h}_B^T\sigma\right)}=\exp\left(\sigma^T\Delta_1\sigma/2+\Delta_2^T\sigma\right)\leq \exp(n^2\eta_1/2+n\eta_2),
    \end{equation*}
    and an identical bound for the reciprocal. This pointwise bound implies that 
    \begin{equation*}
        \exp(-n^2\eta_1/2-n\eta_2)Z_{B,\bm{h}_B}\leq Z_{A,\bm{h}_A}\leq \exp(n^2\eta_1/2+n\eta_2) Z_{B,\bm{h}_B}.
    \end{equation*}
The same calculation thus implies that
\begin{equation*}
    \frac{\mu_{A,\bm{h}_A}(\sigma)}{\mu_{B,\bm{h}_B}(\sigma)}=\frac{\exp\left(\sigma^TA\sigma/2+\bm{h}_A^T\sigma\right)}{\exp\left(\sigma^TB\sigma/2+ \bm{h}_B^T\sigma\right)}\cdot \frac{Z_{B,\bm{h}_B}}{Z_{A,\bm{h}_A}}\leq \exp(n^2\eta_1+2n\eta_2).
\end{equation*}
    
Plugging this trivial upper bound into the definition of KL divergence gives the result.
\end{proof}

We will consider the performance of neighborhood, constrained \emph{logistic regression} in learning Ising models. We first define how we set up the associated regression samples for a given vertex $i$ under any sequential block dynamics.

\begin{defn}
\label{defn:view}
    Let $A\in \mathbb{R}^{n\times n}$ be an interaction matrix, $\bm{h}\in \mathbb{R}^n$ be an external field, and $\mu_{A,\bm{h}}$ be the corresponding Ising model. Let $i\in [n]$ be any fixed node. For any sequential block dynamics, let $\tau_1,\tau_2,\ldots\in \mathbb{N}$ be stopping times defined recursively via:
    \begin{gather*}
        \tau_1=\inf\{t\in \mathbb{N}:i\in S_t\}\\
        \tau_{s+1}=\inf\{t> \tau_s:i\in S_t\}.
    \end{gather*}
    Then the \textbf{generated samples for node $i$} are the tuples $Z_1=(X_{\tau_1,-i},X_{\tau_1,i}),\ldots, Z_t = (X_{\tau_t,-i},X_{\tau_t,i})$.
\end{defn}

Note that in \Cref{defn:view}, sampling the coordinates in block $S_{\tau_j}$ can equivalently be done by ``deferred decisions'': first, sample the coordinates in $S_{\tau_j}\setminus \{i\}$ conditional on the coordinates outside $S_{\tau_j}$ according to $\mu_{A,\bm{h}}$, and then sample the spin at $i$ conditional on all other coordinates in the configuration as in \Cref{eq:glauber}. In particular, the law of $X_{\tau_j,i}$ conditional on $Z_1,\dots,Z_{j-1}, X_{\tau_j,-i}$ is exactly $\mu_A(\cdot\vert X_{\tau_j,-i})$.

We consider the performance of the constrained logistic regression problem for any vertex $i$. The \textbf{logistic loss} $\ell:\mathbb{R}\to \mathbb{R}$ is defined by 
\begin{equation*}
    \ell(z)=\log\left(1+\exp(-z)\right).
\end{equation*}
The following simple fact is well-known and can be obtained by differentiation.
\begin{fact}
\label{fact:ell_facts}
    $\ell$ is $1$-Lipschitz.
\end{fact}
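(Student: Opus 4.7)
The plan is to verify $1$-Lipschitzness by bounding the derivative of $\ell$ uniformly by $1$ and then invoking the mean value theorem. Direct differentiation gives
\[
\ell'(z) = -\frac{\exp(-z)}{1+\exp(-z)} = -\frac{1}{1+\exp(z)} = -\sigma(-z),
\]
where I multiplied numerator and denominator by $\exp(z)$ in the second step to rewrite $\ell'$ in terms of the sigmoid $\sigma$ defined earlier. Since $\sigma$ takes values in $(0,1)$ everywhere, $|\ell'(z)| = \sigma(-z) \in (0,1)$, so in particular $\sup_{z \in \mathbb{R}} |\ell'(z)| \leq 1$.

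With the derivative bound in hand, the mean value theorem applied to the (smooth) function $\ell$ immediately yields
\[
|\ell(x) - \ell(y)| \leq \Bigl(\sup_{z \in \mathbb{R}} |\ell'(z)|\Bigr) \cdot |x-y| \leq |x-y|,
\]
for all $x, y \in \mathbb{R}$, which is exactly the claimed $1$-Lipschitz property. The only step that needs any care at all is the algebraic simplification of $\ell'(z)$, which is purely mechanical, so no genuine obstacle arises here.
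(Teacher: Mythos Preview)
Your proof is correct and is exactly the approach the paper indicates: the paper simply remarks that the fact ``can be obtained by differentiation,'' and you carry out precisely that computation, bounding $|\ell'(z)|=\sigma(-z)\le 1$ and applying the mean value theorem.
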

Given a compact, convex set $\mathcal{C}\in \mathbb{R}^{n-1}\times \mathbb{R}$, and given a trajectory of sequential  block dynamics as in \Cref{defn:sbd}, we define the $\mathcal{C}$-constrained logistic regression problem for node $i$ as:

\begin{equation}
\label{eq:empiricalopt}
(\widehat{\bm{w}},\widehat{h})\triangleq \arg\min_{(\bm{w},h)\in \mathcal{C}} \frac{1}{T} \sum_{k=1}^T \ell\left(2\cdot X_{\tau_k,i}\cdot(\langle \bm{w},X_{\tau_k,-i}\rangle+h)\right)   
\end{equation}

To obtain an estimator $(\widehat{A},\widehat{\bm{h}})\in \mathbb{R}^{n\times n}\times \mathbb{R}^n$, we set for each $j\in [n]$,
\begin{gather*}
    \widehat{A}'_j = \widehat{\bm{w}}_j\\
    \bm{\widehat{h}}_j = \widehat{h}_j,
\end{gather*}
where $(\widehat{\bm{w}}_j,\widehat{h}_j)\in \mathbb{R}^{n-1}\times \mathbb{R}$ is the output of the analogous regression problem for node $j\in [n]$ using the samples generated for node $j$. We then set
\begin{equation*}
    \widehat{A} = \frac{\widehat{A}'+\widehat{A}'^T}{2}.
\end{equation*}
That is, we just symmetrize our estimates for each interaction coefficient. It is clear that if $\|\widehat{\bm{w}}_j-A_j\|_{\infty}\leq \varepsilon$ for each $j\in [n]$, then $\|A-\widehat{A}\|_{\infty}\leq \varepsilon$, so we will focus on proving the additive guarantee for each logistic regression problem separately.

For any distribution $\mathcal{D}$ on $(X,Y)\in \{-1,1\}^d\times \{-1,1\}$ such that there exists $(\bm{w}^*,h^*)\in \mathbb{R}^{d}\times \mathbb{R}$ with 
\begin{equation}
\label{eq:faithful_2}
    \Pr_{\mathcal{D}}(Y=1\vert X)=\sigma(2\cdot (\langle \bm{w}^*,X\rangle+h^*)),
\end{equation}
define the \textbf{population logistic loss} of $(\bm{w},h)\in \mathbb{R}^{d}\times \mathbb{R}$  via
\begin{equation}
\label{eq:pop_loss}
    \mathcal{L}(\bm{w},h)=\mathcal{L}_{\mathcal{D}}(\bm{w},h)\triangleq \mathbb{E}_{(X,Y)\sim \mathcal{D}}\left[\ell(2\cdot Y\cdot (\langle \bm{w},X\rangle+h))\right].
\end{equation}
We will require the following simple fact from Wu, Sanghavi, and Dimakis:
\begin{lem}[Lemma 8 and 9 of \cite{DBLP:conf/nips/WuSD19}]
\label{lem:loss_quadratic}
    For any $(\bm{w},h)\in \mathbb{R}^d\times \mathbb{R}$ and $\mathcal{D}$ satisfying \Cref{eq:faithful_2}, 
    \begin{equation*}
        \mathcal{L}_{\mathcal{D}}(\bm{w},h)-\mathcal{L}_{\mathcal{D}}(\bm{w}^*,h^*)\geq 2\cdot \mathbb{E}_{(X,Y)\sim \mathcal{D}}\left[\left(\sigma(2\cdot (\langle \bm{w},X\rangle+h))-\sigma(2\cdot (\langle \bm{w}^*,X\rangle+h^*)) \right)^2\right].
    \end{equation*}
\end{lem}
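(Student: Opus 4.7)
The cleanest route is to recognize that the population logistic loss, after conditioning on $X$, is exactly a cross-entropy between two Bernoulli distributions, so the loss gap becomes a KL divergence. The plan is then to finish with the two-point form of Pinsker's inequality.

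First I would use the identity $\ell(z) = \log(1+e^{-z}) = -\log \sigma(z)$ together with $\sigma(-z) = 1-\sigma(z)$. Writing $u = \langle \bm{w},X\rangle + h$ and $u^* = \langle \bm{w}^*,X\rangle+h^*$, the fact that $Y$ is $\pm 1$ lets me unpack
\[
\ell\bigl(2Y(\langle \bm{w},X\rangle+h)\bigr)
= -\mathbf{1}\{Y=1\}\log \sigma(2u) - \mathbf{1}\{Y=-1\}\log\bigl(1-\sigma(2u)\bigr).
\]
Now condition on $X$. By assumption \eqref{eq:faithful_2}, $\Pr_{\mathcal{D}}(Y=1\mid X) = \sigma(2u^*) =: p$, and setting $q := \sigma(2u)$, the conditional expectation becomes
\[
\mathbb{E}_{\mathcal{D}}\bigl[\ell(2Y(\langle \bm{w},X\rangle+h))\,\big|\,X\bigr] = -p\log q - (1-p)\log(1-q),
\]
which is precisely the cross-entropy $H(\mathrm{Ber}(p),\mathrm{Ber}(q))$.

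Next I would specialize the same identity at $(\bm{w}^*,h^*)$, where the conditional expectation is $H(\mathrm{Ber}(p),\mathrm{Ber}(p))$, the binary entropy of $p$. Subtracting, the pointwise (in $X$) difference is
\[
\mathbb{E}_{\mathcal{D}}\bigl[\ell(2Yu)\mid X\bigr] - \mathbb{E}_{\mathcal{D}}\bigl[\ell(2Yu^*)\mid X\bigr]
= \mathsf{d}_{KL}\bigl(\mathrm{Ber}(p)\,\|\,\mathrm{Ber}(q)\bigr).
\]
Pinsker's inequality for two-point distributions then gives $\mathsf{d}_{KL}(\mathrm{Ber}(p)\|\mathrm{Ber}(q)) \geq 2(p-q)^2 = 2(\sigma(2u^*)-\sigma(2u))^2$. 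Taking expectation over $X$ and applying the tower property yields the stated bound.

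I do not expect any real obstacle: every step is a one-line identity (the logistic/sigmoid duality, the cross-entropy rewriting, and Pinsker). The only thing to be slightly careful about is keeping the factor $2$ correctly, which comes directly from Pinsker's constant, and noting that the argument is agnostic to the joint law of $(X,Y)$ beyond the conditional form of $Y\mid X$, which is exactly the content of the assumption \eqref{eq:faithful_2} and hence suffices for all downstream applications in the paper (i.i.d., block dynamics, round-robin, etc.).
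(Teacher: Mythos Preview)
Your argument is correct. The paper itself does not prove this lemma; it simply cites it as Lemmas 8 and 9 of Wu, Sanghavi, and Dimakis~\cite{DBLP:conf/nips/WuSD19}, so there is no in-paper proof to compare against. Your route via the logistic/sigmoid identity $\ell(z)=-\log\sigma(z)$, the cross-entropy rewriting of the conditional loss, and Pinsker's inequality for two Bernoulli distributions is a clean and standard way to obtain exactly the stated bound with the factor $2$.
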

    In words, the population logistic loss is minimized at the true coefficient vector, and the difference between losses with the true terms is at least the quadratic difference in the \emph{predictions} of $Y$ between the two coefficient vectors.

\section{Ising Learning from General Samples}
\label{sec:general}

In this section, we state and prove the main meta-theorem of Wu, Sanghavi, and Dimakis~\cite{DBLP:conf/nips/WuSD19} in a slightly abstracted and generalized form that will allow us to apply the analysis to a wide variety of Ising models and dynamics. The only technical difference is we also provide a simple argument that sufficiently low-error recovery of $\bm{w}^*$ also implies approximate recovery of $h^*$ under their analysis.

\begin{thm}[Implicit in Wu-Sanghavi-Dimakis~\cite{DBLP:conf/nips/WuSD19}]
\label{thm:wsd}
Let $\mathcal{C}\subseteq \mathbb{R}^d\times \mathbb{R}$ be a closed, convex set with $d\geq 1$. Let $Z_1=(X_1,Y_1),\ldots,Z_T=(X_T,Y_T),\ldots\in \{-1,1\}^d\times \{-1,1\}$ be an arbitrary stochastic process such that the law of $Y_t$ conditional on $(Z_1,\ldots,Z_{t-1},X_t)$ is $+1$ with probability $\sigma(2(\langle \bm{w}^*,X_t\rangle+h^*))$ and $-1$ otherwise, where $(\bm{w}^*,h^*)\in \mathcal{C}$ is a fixed vector. Suppose that the following conditions hold:

\begin{enumerate}
    \item There exists a function $T: [0,1]\times [0,1]\to \mathbb{N}$ with the following guarantee: given $\varepsilon,\delta>0$, if $T\geq T(\varepsilon,\delta)$, then with probability at least $1-\delta$, the following two events hold:
    \begin{equation}
    \label{eq:convergence}
         \frac{1}{T}\sum_{t=1}^T \mathbb{E}_{Z_t}\left[\ell(2Y_t(\langle \bm{w},X_t\rangle+h))\vert Z_1,\ldots,Z_{t-1}\right]-\frac{1}{T}\sum_{t=1}^T \ell(2Y_t(\langle \bm{w},X_t\rangle+h)) \leq \varepsilon \quad\quad\forall (\bm{w},h)\in \mathcal{C},
    \end{equation}
    \begin{equation}
        \label{eq:opt_deviation}
        \left\vert\frac{1}{T}\sum_{t=1}^T \ell(2Y_t(\langle \bm{w}^*,X_t\rangle+h^*)-\frac{1}{T}\sum_{t=1}^T \mathbb{E}_{Z_t}\left[\ell(2Y_t(\langle \bm{w}^*,X_t\rangle+h^*))\vert Z_1,\ldots,Z_{t-1}\right] \right\vert\leq \varepsilon.
    \end{equation}

    \item There exists a constant $c_{\ref{thm:wsd}}>0$ such that the following holds for all $i\in [d]$: for all $(\bm{w},h)\in \mathcal{C}$,
    \begin{equation}
    \label{eq:inf_lb}
        \mathbb{E}_{Z_t}\left[\left(\sigma\left(2(\langle \bm{w},X_t\rangle+h)\right)-\sigma\left(2(\langle \bm{w}^*,X_t\rangle+h^*)\right)\right)^2\bigg\vert Z_1,\ldots,Z_{t-1}\right]\geq c_{\ref{thm:wsd}}\min\{1,8(w_i-w_i^*)^2\}
    \end{equation}
    almost surely.\footnote{For comparison to the conditions that appear in \cite{DBLP:conf/nips/WuSD19}, slightly different constants appear in due to minor differences in normalization.}
\end{enumerate}
    Define an empirical risk minimizer $(\hat{\bm{w}},\hat{h})\in \mathcal{C}$ by
    \begin{equation*}
        (\hat{\bm{w}},\hat{h})\in \arg\min_{(\bm{w},h)\in \mathcal{C}} \frac{1}{T}\sum_{t=1}^T \ell(2\cdot Y_t\cdot(\langle \bm{w},X_t\rangle+h)).
    \end{equation*}
    Then for any $\delta>0$ and $\varepsilon\in (0,1/\sqrt{8}]$, if $T\geq T(8c_{\ref{thm:wsd}}\varepsilon^2,\delta)$, then it holds with probability at least $1-\delta$ over the realization of $Z_1,\ldots,Z_T$ that
    \begin{equation*}
        \|\widehat{\bm{w}}-\bm{w}^*\|_{\infty}\leq \varepsilon.
    \end{equation*}

    Suppose that it further holds that for any $(\bm{w},h)\in \mathcal{C}$ such that $(d+1)\cdot \|\bm{w}-\bm{w}^*\|_{\infty}\leq \vert h-h^*\vert/2$, we have
    \begin{equation}
        \label{eq:constant_lb}
        \mathbb{E}_{Z_t}\left[\left(\sigma\left(2(\langle \bm{w},X_t\rangle+h)\right)-\sigma\left(2(\langle \bm{w}^*,X_t\rangle+h^*)\right)\right)^2\bigg\vert Z_1,\ldots,Z_{t-1}\right]\geq c_{\ref{thm:wsd}}\min\{1,(h-h^*)^2\}
    \end{equation}
    Then it must also hold on this event that $\vert \hat{h}-h^*\vert\leq 2(d+1)\varepsilon$.
\end{thm}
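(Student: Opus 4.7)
The overall plan is to leverage optimality of $(\hat{\bm{w}}, \hat{h})$ against the two uniform convergence bounds and the pointwise lower bound from \Cref{lem:loss_quadratic}, in order to transfer a small gap in empirical logistic loss into a small gap in the conditional second-moment distance between predictions. Once that is done, the strong-convexity-type assumptions \Cref{eq:inf_lb} and \Cref{eq:constant_lb} will be used in contrapositive fashion to force $\hat{\bm{w}}$ close to $\bm{w}^*$ in $\ell_\infty$, and then $\hat{h}$ close to $h^*$.

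Concretely, set $\varepsilon_0 = 8c_{\ref{thm:wsd}}\varepsilon^2$ and assume $T \ge T(\varepsilon_0,\delta)$. By optimality of $(\hat{\bm{w}}, \hat{h})$ for the empirical risk, the empirical average at $(\hat{\bm{w}}, \hat{h})$ is no larger than at $(\bm{w}^*, h^*)$. Applying \Cref{eq:convergence} at $(\hat{\bm{w}}, \hat{h})\in\mathcal{C}$ and \Cref{eq:opt_deviation}, both of which hold simultaneously with probability at least $1-\delta$, I convert the empirical comparison into a conditional population comparison:
\[
\frac{1}{T}\sum_{t=1}^T \mathbb{E}\bigl[\ell(2Y_t(\langle \hat{\bm{w}}, X_t\rangle+\hat{h}))\,\big\vert\, Z_1,\dots,Z_{t-1}\bigr] - \frac{1}{T}\sum_{t=1}^T \mathbb{E}\bigl[\ell(2Y_t(\langle \bm{w}^*, X_t\rangle+h^*))\,\big\vert\, Z_1,\dots,Z_{t-1}\bigr] \le 2\varepsilon_0.
\]
Because the law of $Y_t$ given $(Z_1,\ldots,Z_{t-1}, X_t)$ matches \Cref{eq:faithful_2} with the fixed $(\bm{w}^*, h^*)$, \Cref{lem:loss_quadratic} applies termwise conditional on the past. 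This yields
\[
\frac{2}{T}\sum_{t=1}^T \mathbb{E}\Bigl[\bigl(\sigma(2(\langle \hat{\bm{w}}, X_t\rangle+\hat{h})) - \sigma(2(\langle \bm{w}^*, X_t\rangle+h^*))\bigr)^2\,\Big\vert\, Z_1,\dots,Z_{t-1}\Bigr] \le 2\varepsilon_0.
\]

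For the $\bm{w}$-bound, suppose for contradiction that $|\hat{w}_i - w_i^*| > \varepsilon$ for some $i\in [d]$. Applying \Cref{eq:inf_lb} at $(\hat{\bm{w}}, \hat{h})$ termwise, each summand on the left is at least $c_{\ref{thm:wsd}}\min\{1, 8(\hat{w}_i - w_i^*)^2\}$. Using $\varepsilon \le 1/\sqrt{8}$ so that $8\varepsilon^2 \le 1$, a brief case analysis (on whether $8(\hat{w}_i - w_i^*)^2$ is above or below $1$) shows this minimum is at least $8\varepsilon^2$, hence the sum is at least $8c_{\ref{thm:wsd}}\varepsilon^2 = \varepsilon_0$, contradicting the display above (with care at the boundary $\varepsilon = 1/\sqrt{8}$, which can be handled by a limiting argument or by the strict slack in case $8(\hat{w}_i-w_i^*)^2<1$). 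This establishes $\|\hat{\bm{w}} - \bm{w}^*\|_\infty \le \varepsilon$ on the high-probability event.

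For the $h$-bound, assume now the additional condition \Cref{eq:constant_lb} and suppose $|\hat{h} - h^*| > 2(d+1)\varepsilon$. Then $|\hat{h}-h^*|/2 > (d+1)\varepsilon \ge (d+1)\|\hat{\bm{w}} - \bm{w}^*\|_\infty$, so the hypothesis of \Cref{eq:constant_lb} is met at $(\hat{\bm{w}}, \hat{h})$, giving termwise lower bound $c_{\ref{thm:wsd}}\min\{1, (\hat{h}-h^*)^2\}$. Combining with the display yields $\min\{1, (\hat{h}-h^*)^2\} \le 8\varepsilon^2$; since $8\varepsilon^2 \le 1$, the minimum is actually $(\hat{h}-h^*)^2$, so $|\hat{h}-h^*| \le 2\sqrt{2}\varepsilon$, which is at most $2(d+1)\varepsilon$ for $d \ge 1$, a contradiction.

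There is not really a deep obstacle here; the theorem is essentially a bookkeeping exercise once the uniform convergence and strong convexity hypotheses are in place, with the main subtlety being to verify the quantitative matching of constants (especially the choice $\varepsilon_0 = 8c_{\ref{thm:wsd}}\varepsilon^2$) so that the contradiction arguments go through cleanly, and handling the boundary case $\varepsilon = 1/\sqrt{8}$ where the $\min$ inequalities are only non-strict.
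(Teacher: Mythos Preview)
Your proof is correct and follows essentially the same approach as the paper: combine optimality of the empirical minimizer with the two deviation bounds to control the averaged conditional population loss gap by $2\varepsilon_0=16c_{\ref{thm:wsd}}\varepsilon^2$, apply \Cref{lem:loss_quadratic} termwise, and then invoke \Cref{eq:inf_lb} (resp.\ \Cref{eq:constant_lb}) to deduce the coordinate-wise bound. The only cosmetic difference is that you phrase the $\bm{w}$-bound as a contradiction while the paper argues directly that $\min\{1,8(\hat w_i-w_i^*)^2\}\le 8\varepsilon^2$ forces the minimum to be attained by the second term; your explicit acknowledgment of the boundary case $\varepsilon=1/\sqrt{8}$ is in fact slightly more careful than the paper's treatment.
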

\begin{proof}
    This first statement is a mild generalization of the argument of Theorem 1 of Wu, Sanghavi, and Dimakis \cite{DBLP:conf/nips/WuSD19}; we provide the argument for completeness. For $(\bm{w},h)\in \mathcal{C}$, let
    \begin{gather*}
        \widehat{\mathcal{L}}(\bm{w},h)\triangleq \frac{1}{T}\sum_{t=1}^T \ell(2\cdot Y_t\cdot(\langle \bm{w},X_t\rangle+h))\\
        \mathcal{L}_T(\bm{w},h)\triangleq \frac{1}{T}\sum_{t=1}^T \mathbb{E}_{Z_t}[\ell(2\cdot Y_t\cdot(\langle \bm{w},X_t\rangle+h))\vert \mathcal{F}_{t-1}]=\frac{1}{T}\sum_{t=1}^T \mathcal{L}_{Z_t\vert \mathcal{F}_{t-1}}(\bm{w},h),
    \end{gather*}
    where $\mathcal{F}_{t-1}=\sigma(Z_1,\ldots,Z_{t-1})$.
    Given $T\geq T(8c_{\ref{thm:wsd}}\varepsilon^2,\delta)$ samples, the condition above ensures 
    that with probability at least $1-\delta$, the following inequalities hold:
    \begin{equation}
    \label{eq:emp_opt_close}
\mathcal{L}_T(\widehat{\bm{w}},\widehat{h})\leq \widehat{\mathcal{L}}(\widehat{\bm{w}},\widehat{h})+8c_{\ref{thm:wsd}}\varepsilon^2\leq \widehat{\mathcal{L}}(\bm{w}^*,h^*)+8c_{\ref{thm:wsd}}\varepsilon^2\leq \mathcal{L}_T(\bm{w}^*,h^*)+16c_{\ref{thm:wsd}}\varepsilon^2.
    \end{equation}
    The first inequality follows the guarantee of \Cref{eq:convergence} given at least $T(8c_{\ref{thm:wsd}}\varepsilon^2,\delta)$ samples, the second by optimality of $(\widehat{\bm{w}},\widehat{h})$ with respect to $\widehat{\mathcal{L}}$, and the third by applying \Cref{eq:opt_deviation} with the true parameters $(\bm{w}^*,h^*)$. For the first step, it is crucial that the convergence of \Cref{eq:convergence} is uniform over $\mathcal{C}$ to apply it to the empirical optimizer.
    
    Combining the inequality \Cref{eq:emp_opt_close} with the identity given in \Cref{lem:loss_quadratic} and the assumed lower bound \Cref{eq:inf_lb}, it holds for any $i\in [d]$ that
    \begin{align*}
        8c_{\ref{thm:wsd}}\varepsilon^2&\geq \frac{\mathcal{L}_T(\widehat{\bm{w}},\widehat{h})-\mathcal{L}_T(\bm{w}^*,h^*)}{2}\\
        &= \frac{1}{2T}\sum_{t=1}^T \mathcal{L}_{Z_t\vert \mathcal{F}_{t-1}}(\bm{w},h)-\mathcal{L}_{Z_t\vert \mathcal{F}_{t-1}}(\bm{w}^*,h^*)\\
        &\geq\frac{1}{T} \sum_{t=1}^T \mathbb{E}_{X_t,Y_t}\left[\left(\sigma(2\cdot (\langle \bm{w},X_t\rangle+h))-\sigma(2\cdot (\langle \bm{w}^*,X_t\rangle+h^*)) \right)^2\bigg\vert \mathcal{F}_{t-1}\right]\\
        &\geq c_{\ref{thm:wsd}}\min\{1,8(\widehat{w}_i-w_i^*)^2\}.
    \end{align*}
    Since $8\varepsilon^2\leq 1$ by our assumption $\varepsilon\leq 1/\sqrt{8}$, the minimum is attained by the difference in $i$th components, and so we obtain $\vert \widehat{w}_i-w_i^*\vert\leq \varepsilon$. Therefore, $\|\widehat{w}-w^*\|_{\infty}\leq \varepsilon$ with probability at least $1-\delta$, as claimed.

    We now show how one gets approximate recovery of $h^*$ on this same event under the extra condition \Cref{eq:constant_lb}. Suppose that $\vert \widehat{h}-h^*\vert>2(d+1)\varepsilon\geq 2(d+1)\|\widehat{\bm{w}}-\bm{w}^*\|_{\infty}$ on this event so that \Cref{eq:constant_lb} attains for the vector $(\bm{\widehat{w}},\widehat{h})$. Then the same computation and \Cref{eq:constant_lb} would imply that
    \begin{equation*}
        8c_{\ref{thm:wsd}}\varepsilon^2\geq c_{\ref{thm:wsd}} (\widehat{h}-h^*)^2 \implies 8\varepsilon^2> 4(d+1)^2\varepsilon^2.
    \end{equation*}
    This is a contradiction for any $d\geq 1$ (which is the only nontrivial case).
\end{proof}

\begin{remark}
The results of Wu, Sanghavi, and Dimakis \cite{DBLP:conf/nips/WuSD19} and Klivans and Meka \cite{DBLP:conf/focs/KlivansM17} do not appear to explicitly provide guarantees about $\vert h^*-\hat{h}\vert_{\infty}$. Their technique relies on almost sure conditional variation in each spin component to deduce parameter recovery, which trivially fails for the constant term (since this can be viewed as a spin fixed to $+1$). The argument above shows that assuming $\bm{w}^*$ has been recovered to very high accuracy, necessarily $h^*$ must have been recovered to decent accuracy as well. It would be interesting to see if there are more direct recovery guarantees for the external field that remove the polynomial blow-up in the additive error, since this imposes $\mathsf{poly}(n)$ sample complexity rather than $\log(n)$ sample complexity even for constant additive error.
\end{remark}

\section{Learning from Sequential Block Dynamics}
\label{sec:dynamics}
In this section, we show how to verify the conditions of \Cref{thm:wsd} under quite general settings of sequential block dynamics under $\ell_1$ width conditions. Our main result, specialized to the concrete block dynamics mentioned above, is:

\begin{thm}
\label{thm:learning_block}
    For any $\varepsilon,\delta>0$, the following holds: suppose that we observe samples $Z_1,\ldots,Z_T\in \{-1,1\}^n$ generated from any symmetric block dynamics or round robin dynamics for an Ising model $\mu_{A,\bm{h}}$ with width $\lambda$ with any initial configuration. Suppose that $T$ is such that each site is updated at least
    \begin{equation*} \tilde{O}\left(\frac{\lambda^2 \exp(O(\lambda))\log(n/\delta)}{\varepsilon^4}\right)
    \end{equation*}
    times. For each node $i\in [n]$, solving the empirical risk minimization problem as in \Cref{eq:empiricalopt} yields an estimator $(\widehat{A},\widehat{\bm{h}})\in \mathbb{R}^{n\times n}\times \mathbb{R}^n$ such that $\|A-\widehat{A}\|_{\infty}\leq \varepsilon$ with probability at least $1-\delta$.
\end{thm}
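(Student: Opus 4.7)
The plan is to apply \Cref{thm:wsd} separately to each node $i \in [n]$ with the sample process $Z_k = (X_{\tau_k, -i}, X_{\tau_k, i})$ from \Cref{defn:view}, taking $\mathcal{C}$ to be the $\ell_1$-ball of radius $\lambda$. The faithfulness hypothesis of \Cref{thm:wsd} is immediate from the deferred-decisions observation after \Cref{defn:view}: conditional on $Z_1,\ldots,Z_{k-1}$ and $X_{\tau_k,-i}$, the spin $X_{\tau_k,i}$ is $+1$ with probability exactly $\sigma(2(\langle A_i, X_{\tau_k,-i}\rangle + h_i))$. It then suffices to verify the two abstract hypotheses: a uniform convergence bound at rate $T(\eta,\delta)$ and the conditional-variance lower bound \Cref{eq:inf_lb} with constant $c_{\ref{thm:wsd}} = \exp(-O(\lambda))$. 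Chaining these via the theorem with $\eta = 8 c_{\ref{thm:wsd}} \varepsilon^2$ yields the stated per-node count $\tilde O(\lambda^2 \exp(O(\lambda)) \log(n/\delta)/\varepsilon^4)$, and then a union bound over the $n$ regression problems (combined with the symmetrization step $\widehat A = (\widehat A' + \widehat A'^T)/2$) produces the claimed $\ell_\infty$ guarantee on $A$.

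For the uniform convergence inequalities \Cref{eq:convergence} and \Cref{eq:opt_deviation}, the obstacle is that, unlike the i.i.d.\ setting, the summands are not independent, so McDiarmid's inequality and classical Rademacher complexity do not directly apply. However, by construction the differences between empirical losses and conditional population losses form a martingale difference sequence indexed by $\mathcal{C}$. I would therefore invoke the sequential Rademacher complexity framework of Rakhlin--Sridharan--Tewari: the logistic loss is $1$-Lipschitz by \Cref{fact:ell_facts}, the per-sample losses on $\mathcal{C}$ are $O(\lambda)$-bounded, and the sequential Rademacher complexity of linear predictors with $\|w\|_1 \le \lambda$ over $\{-1,1\}^{n-1}$ scales as $O(\lambda \sqrt{\log n / T})$ up to log-log factors. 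Contracting through $\ell$ and turning sequential Rademacher bounds into high-probability martingale concentration gives $T(\eta,\delta) = \tilde O(\lambda^2 \log(n/\delta)/\eta^2)$, which under the substitution above exactly produces the $\varepsilon^{-4}$ dependence in the final bound.

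The main obstacle is the conditional-variance lower bound \Cref{eq:inf_lb} for each fixed coordinate $j \neq i$; by the Curie--Weiss counterexample highlighted in the overview, one cannot hope for such a bound to hold almost surely with respect to the natural conditioning on the outside configuration $X_{\tau_k,-\{i,j\}}$ at the update time. My approach is the ``good path'' strategy: I would define a random partial description $\mathcal{P}_k^\circ$ of the trajectory strictly between the updating times $\tau_{k-1}$ and $\tau_k$ of node $i$, obtained by recording all block choices and all spin updates except for the final update of $j$ (if any). On the event $E_j$ that (i) $j$ is updated at least once in this interval and (ii) $X_{\tau_k,-\{i,j\}}$ is not too pathological, I would show that the conditional likelihood ratio of the last value of $X_j$ given $\mathcal{P}_k^\circ$ is bounded by $\exp(O(\lambda))$ in either direction. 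This uses that, conditional on the partial path, the effect of that last $X_j$ update on subsequent spins occurs only through edges incident to $j$, whose total $\ell_1$ weight is at most $\lambda$, so the Radon--Nikodym derivative with respect to $X_j = \pm 1$ is controlled by $\exp(O(\lambda))$. To secure $\Pr(E_j) = \Omega(1)$ I would use the structural assumption on the dynamics: for symmetric block dynamics, the law of $\mathcal{P}_k^\circ$ is invariant under swapping the roles of $i$ and $j$, so with probability at least $1/2$ node $j$ is updated before $i$ in the interval; for round-robin dynamics the same holds deterministically within every cycle.

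Given $E_j$, flipping $X_{\tau_k,j}$ changes the argument of $\sigma$ in \Cref{eq:inf_lb} by $2(w_j - A_{ij})$ (for the true parameter vector the shift is $2 A_{ij}$), so by \Cref{fact:sigmoid_lb} together with the standard pair-variance identity used in the i.i.d.\ Wu--Sanghavi--Dimakis argument, the conditional expectation of the squared prediction gap is at least
\[
\exp(-O(\lambda)) \cdot \min\{1, 8(w_j - A_{ij})^2\} \cdot \Pr(E_j) \cdot \mathrm{Var}(X_{\tau_k,j} \mid \mathcal{P}_k^\circ) \gtrsim \exp(-O(\lambda)) \min\{1, 8(w_j - A_{ij})^2\},
\]
which is exactly \Cref{eq:inf_lb} with the required constant. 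The hard part will be making the quantitative likelihood-ratio estimate on $E_j$ robust enough to apply to \emph{every} symmetric block dynamics and to round-robin simultaneously; the overview's reduction to ``nice'' local chains via mild non-degeneracy conditions (captured later by \Cref{prop:sufficiecy_lb}) is what makes this uniform across the examples in \Cref{thm:learning_block}.
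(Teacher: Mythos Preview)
Your overall plan matches the paper's proof closely: apply \Cref{thm:wsd} nodewise, use sequential Rademacher complexity for the uniform martingale bound, and establish \Cref{eq:inf_lb} via the ``good path'' strategy of conditioning on the partial path $\mathcal{P}_k^\circ$ that omits the last update of $j$. The final union bound over nodes and the symmetrization step are also correct.

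There is, however, a genuine gap in your likelihood-ratio argument. You claim the Radon--Nikodym derivative is controlled by $\exp(O(\lambda))$ because ``the effect of that last $X_j$ update on subsequent spins occurs only through edges incident to $j$, whose total $\ell_1$ weight is at most $\lambda$.'' This does not follow. Each time a site $k$ is updated after $j$'s last update, the likelihood ratio picks up a factor of $\exp(O(|A_{jk}|))$ (this is \Cref{lem:ratios}), so the total ratio along the path is $\exp\bigl(O(\sum_{k} N_k |A_{jk}|)\bigr)$ where $N_k$ counts the number of updates of $k$ in the interval. The width bound $\|A_j\|_1\le\lambda$ gives no control on this sum if some neighbor of $j$ happens to be updated many times before node $i$ is. The paper's \Cref{prop:sufficiecy_lb} makes this explicit: the good event must include the \emph{path} condition $\sum_k N_k |A_{jk}| \leq C\lambda$, and \Cref{prop:goodblocks} then verifies that this holds with probability at least $1/4$ for symmetric block dynamics (using $\mathbb{E}[N_k]\leq 2$ by symmetry plus Markov's inequality) and trivially for round-robin. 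Your condition (ii) on the outside \emph{configuration} $X_{\tau_k,-\{i,j\}}$ being ``not too pathological'' is the wrong object here; the correct second condition is on the update counts $N_k$ along the path, which is precisely the non-degeneracy condition you allude to via \Cref{prop:sufficiecy_lb} at the end but do not actually use in your argument.
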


We will obtain this theorem as a simple consequence of a more general result (\Cref{prop:sufficiecy_lb}) that provides easily verified conditions for a sequential block dynamics to be efficiently learnable. As described in \Cref{sec:overview}, there are several technical difficulties in extending the analysis of \cite{DBLP:conf/nips/WuSD19} as in \Cref{thm:wsd} under independent samples to block dynamics.

In \Cref{sec:martingale}, we show how the powerful theory of \emph{sequential Rademacher complexity}, as introduced by Rakhlin, Sridharan, and Tewari \cite{DBLP:journals/jmlr/RakhlinST15} in the context of online learning, suffices to obtain near-optimal uniform convergence guarantees. Our main technical tool, \Cref{thm:radbound}, is quite general and holds for arbitrary stochastic processes. In \Cref{sec:population_loss}, we turn to the problem of establishing lower bounds on the difference in population losses for well-behaved block dynamics as in \Cref{eq:inf_lb} by carefully arguing that each coordinate in the Ising model maintains sufficient variability conditional on a \emph{typical} path of the dynamics between snapshots of the dynamics at each update time of the node under consideration. We then conclude \Cref{thm:learning_block} by appealing to \Cref{thm:wsd} with these intermediate results.

\subsection{Uniform Martingale Bounds}
\label{sec:martingale}

In this section, we turn to proving uniform martingale bounds of the form required in \Cref{eq:convergence}. We again consider the following general setting: suppose that $Z_1=(X_1,Y_1),\ldots,Z_T=(X_T,Y_T)$ is an arbitrary stochastic process in $\{-1,1\}^d\times \{-1,1\}$. Our goal is to provide uniform deviation bounds for empirical averages as in \Cref{eq:empiricalopt} to the \emph{conditional, path-dependent expected averages}. Clearly, we have that for any fixed $(\bm{w},h)\in \mathbb{R}^{d}\times \mathbb{R}$, the random deviation of
\begin{equation*}
    \frac{1}{T} \sum_{t=1}^T \ell\left(2\cdot Y_t(\langle \bm{w},X_t\rangle+h)\right) -\frac{1}{T}\sum_{t=1}^T \mathbb{E}_{Z_t}\left[\ell\left(2\cdot Y_t(\langle \bm{w},X_t\rangle+h)\rangle\right)\bigg\vert Z_1,\ldots,Z_{t-1}\right]
\end{equation*}
satisfies strong concentration bounds. This is a consequence of the Azuma-Hoeffding inequality for martingale differences, assuming upper bounds on the arguments of the loss functions. As we will require two-sided bounds on this quantity for the true parameters, we record this formally below:

\begin{lem}
\label{lem:opt_azuma}
    Fix any $(\bm{w},h)\in \mathbb{R}^d\times \mathbb{R}$ and suppose $\|(\bm{w},h)\|_1\leq \lambda$. Then for any stochastic process $Z_1=(X_1,Y_1),\ldots,Z_T=(X_T,Y_T)\in \{-1,1\}^d\times \{-1,1\}$, it holds that
    \begin{equation*}
        \Pr\left(\left\vert\frac{1}{T} \sum_{t=1}^T \ell\left(2\cdot Y_t(\langle \bm{w},X_t\rangle+h)\right) -\frac{1}{T}\sum_{t=1}^T \mathbb{E}_{Z_t}\left[\ell\left(2\cdot Y_t(\langle \bm{w},X_t\rangle+h)\rangle\right)\bigg\vert Z_1,\ldots,Z_{t-1}\right]\right\vert \geq \varepsilon\right)\leq 2\exp\left(\frac{-T\varepsilon^2}{32\lambda^2}\right).
    \end{equation*}
\end{lem}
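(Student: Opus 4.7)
The plan is to recognize the quantity inside the probability as the normalized sum of a bounded martingale difference sequence, and then apply the Azuma--Hoeffding inequality directly.

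More precisely, I would first define the filtration $\mathcal{F}_t = \sigma(Z_1,\ldots,Z_t)$ and, for each $t \in [T]$, set
\begin{equation*}
    D_t \triangleq \ell\bigl(2Y_t(\langle \bm{w}, X_t\rangle + h)\bigr) - \mathbb{E}_{Z_t}\bigl[\ell\bigl(2Y_t(\langle \bm{w}, X_t\rangle + h)\bigr)\bigm| \mathcal{F}_{t-1}\bigr].
\end{equation*}
By construction $\mathbb{E}[D_t \mid \mathcal{F}_{t-1}] = 0$, so $(D_t)_{t \le T}$ is a martingale difference sequence with respect to $(\mathcal{F}_t)$, and the quantity in the probability statement is precisely $\frac{1}{T}\sum_{t=1}^T D_t$.

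Next, I would obtain an almost sure uniform bound on $|D_t|$. Since $\|(\bm{w}, h)\|_1 \leq \lambda$, $X_t \in \{-1,1\}^d$, and $Y_t \in \{-1,1\}$, Hölder's inequality gives
\begin{equation*}
    \bigl|2Y_t(\langle \bm{w}, X_t\rangle + h)\bigr| \leq 2\bigl(\|\bm{w}\|_1 \|X_t\|_\infty + |h|\bigr) \leq 2\lambda
\end{equation*}
almost surely. Combined with the $1$-Lipschitz property of $\ell$ (\Cref{fact:ell_facts}), the random variable $\ell(2Y_t(\langle \bm{w}, X_t\rangle + h))$ takes values in an interval of length at most $4\lambda$, so its centered version satisfies $|D_t| \leq 4\lambda$ almost surely.

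Finally, I would apply the Azuma--Hoeffding inequality for bounded martingale differences: for any $\varepsilon > 0$,
\begin{equation*}
    \Pr\!\left(\left|\frac{1}{T}\sum_{t=1}^T D_t\right| \geq \varepsilon\right) \leq 2\exp\!\left(-\frac{2(T\varepsilon)^2}{T \cdot (2 \cdot 4\lambda)^2}\right) = 2\exp\!\left(-\frac{T\varepsilon^2}{32\lambda^2}\right),
\end{equation*}
which is exactly the stated bound. There is no real obstacle here; the argument is essentially bookkeeping, with the only minor subtlety being to note that $\|X_t\|_\infty = 1$ and $|Y_t| = 1$ so that the $\ell_1$ norm bound on $(\bm{w}, h)$ translates directly into an $\ell_\infty$ bound on the loss argument. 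The constant $32$ in the denominator reflects using the looser pointwise bound $|D_t| \leq 4\lambda$ in Azuma--Hoeffding; one could tighten this by exploiting that $\ell$ restricted to $[-2\lambda, 2\lambda]$ has range exactly $2\lambda$, but the stated constant suffices for later applications.
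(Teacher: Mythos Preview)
Your proposal is correct and follows essentially the same approach as the paper: both arguments bound each martingale difference by $4\lambda$ via the $1$-Lipschitz property of $\ell$ together with H\"older's inequality, and then invoke Azuma--Hoeffding. Your write-up is slightly more explicit in defining the filtration and the differences $D_t$, but the substance is identical.
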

\begin{proof}
    This is an immediate consequence of the Azuma-Hoeffding inequality (see, e.g. Corollary 2.20 of \cite{wainwright_2019}), since we have
    \begin{align*}
        &\left\vert \ell\left(2\cdot Y_t(\langle \bm{w},X_t\rangle+h)\right) -\mathbb{E}_{Z_t}\left[\ell\left(2\cdot Y_t(\langle \bm{w},X_t\rangle+h)\rangle\right)\bigg\vert Z_1,\ldots,Z_{t-1}\right]\right\vert\\
        &\leq \max_{X_t,Y_t,X_t',Y_t'}\left\vert \ell\left(2\cdot Y_t(\langle \bm{w},X_t\rangle+h)\right) - \ell\left(2\cdot Y'_t(\langle \bm{w},X'_t\rangle+h)\right)\right\vert\\
        &\leq 2\max_{X_t,Y_t,X_t',Y_t'}\left(\left\vert \langle \bm{w},X_t-X_t'\rangle\right\vert +2\vert h\vert\right)\\
        &\leq 4\lambda.
    \end{align*}
    Here, we replace the expectation with the largest feasible choice of $X_t',Y_t'$, and then we apply \Cref{fact:ell_facts} and H\"older's inequality. Since these provide uniform bounds on the almost sure behavior of each term in the martingale difference sequence, Azuma-Hoeffding provides the conclusion.
\end{proof}

However, we also require \emph{uniform} bounds on the one-sided version of this bound over all feasible regressors. This can be done by one-step chaining and union bounding a net, but will incur sample complexity $\mathsf{poly}(n)$ simply because the log-covering number is proportional to $n$. To recover near logarithmic sample complexity per node, we will appeal to stronger uniform martingale tail bounds. In particular, the main result of this section is the following:

\begin{thm}
\label{thm:hpbounds}
Let $\mathcal{X}=\{-1,1\}^d\times \{-1,1\}$ and $\mathcal{F}$ be the set of functions $(\mathbf{x},y)\in \mathcal{X}\mapsto - \ell(2y(\langle w,\mathbf{x}\rangle+h))$ over $(\bm{w},h)\in \mathbb{R}^d\times \mathbb{R}$ such that $\|(\bm{w},h)\|_1\leq \lambda$. 
Then there exists an absolute constant $C>0$ such that for any stochastic process $Z_1=(X_1,Y_1),\ldots,Z_T=(X_T,Y_T)\in \mathcal{X}$, it holds that
\begin{equation*}
    \Pr\left(\sup_{f\in \mathcal{F}} \sum_{t=1}^T f(Z_t)-\mathbb{E}[f(Z_t)\vert Z_1,\ldots, Z_{t-1}] \geq C\lambda\log^{3/2}(T)\sqrt{T\log d}+u\right)\leq \exp\left(-\frac{u^2}{64\lambda^2T}\right)
\end{equation*}

In particular, there exists an absolute constant $C_{\ref{thm:hpbounds}}>0$ such that it holds with probability at least $1-\delta$ that
\begin{equation*}
    \sup_{w\in \mathbb{R}^d:\|\bm{w},h\|_1\leq \lambda} \frac{1}{T}\sum_{t=1}^T\mathbb{E}_{X_t,Y_t}[\ell(Y_t\langle w,X_t\rangle)\vert Z_1,\ldots, Z_{t-1}] -\frac{1}{T}\sum_{t=1}^T \ell(Y_t\langle w,X_t\rangle)\leq C_{\ref{thm:hpbounds}}\lambda\sqrt{\frac{\log^{3}(T)\log (d/\delta)}{T}}.
\end{equation*}
\end{thm}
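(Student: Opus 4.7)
My plan is to invoke the uniform martingale concentration framework for sequential Rademacher complexity developed by Rakhlin, Sridharan, and Tewari \cite{DBLP:journals/jmlr/RakhlinST15, DBLP:conf/colt/RakhlinS17} on the loss class $\mathcal{F}$. One can view each $f \in \mathcal{F}$ as a function on $\{-1,1\}^{d+1} \times \{-1,1\}$ by absorbing the bias $h$ into a dummy coordinate $X_{d+1} \equiv 1$; this reduces the problem to the case $h = 0$ with the $\ell_1$-constraint on the resulting $(d+1)$-dimensional weight vector unchanged. Their framework produces, for any stochastic process $Z_1,\dots,Z_T$ and any class of functions uniformly bounded by $B$, a tail inequality of the form
$$\Pr\left(\sup_{f \in \mathcal{F}} \sum_{t=1}^T \bigl(f(Z_t) - \mathbb{E}[f(Z_t) \mid Z_1,\ldots,Z_{t-1}]\bigr) \geq C_1 \cdot \mathfrak{R}_T(\mathcal{F}) \cdot \mathrm{polylog}(T) + u\right) \leq \exp\!\left(-\frac{u^2}{C_2 B^2 T}\right),$$
where $\mathfrak{R}_T(\mathcal{F})$ denotes the sequential Rademacher complexity. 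The diameter calculation already carried out in the proof of Lemma~\ref{lem:opt_azuma} shows that each martingale difference is bounded in absolute value by $4\lambda$ almost surely, which is what yields the target constant $64\lambda^2$ in the Gaussian tail.

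The next step is to bound $\mathfrak{R}_T(\mathcal{F})$. Since $\ell$ is $1$-Lipschitz by Fact~\ref{fact:ell_facts}, the sequential contraction lemma of \cite{DBLP:journals/jmlr/RakhlinST15} bounds $\mathfrak{R}_T(\mathcal{F})$, up to an absolute constant, by the sequential Rademacher complexity of the linear class $\{(\mathbf{x},y) \mapsto 2y \langle \bm{w}, \mathbf{x}\rangle : \|\bm{w}\|_1 \leq \lambda\}$ on inputs in $\{-1,1\}^{d+1}\times \{-1,1\}$. Since these inputs have coordinates in $\{-1,1\}$, the sequential analog of the classical Massart $\ell_1/\ell_\infty$ argument — dualizing $\|\bm{w}\|_1 \leq \lambda$ against the per-coordinate sequential Rademacher averages and applying a finite-class maximal inequality over the $d+1$ coordinate functions, each of which is a sum of $\pm 1$ martingale differences — gives $\mathfrak{R}_T(\mathcal{F}) \leq C_3 \lambda \sqrt{T \log d}$. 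Substituting this bound and $B = O(\lambda)$ into the RST tail inequality, and absorbing the $\mathrm{polylog}(T)$ factor into a $\log^{3/2}(T)$ term, produces the first displayed inequality in the theorem.

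The second inequality follows by normalizing by $T$ and choosing $u$ so that the tail probability equals $\delta$: taking $u = 8\lambda\sqrt{T\log(1/\delta)}$ makes the exponential exactly $\delta$, after which the $\sqrt{\log d}$ contribution from the sequential Rademacher estimate and the $\sqrt{\log(1/\delta)}$ contribution from $u$ combine into $\sqrt{\log(d/\delta)}$ via subadditivity of the square root, with the polylog factor collecting into a single $\sqrt{\log^3(T)}$. The main technical obstacle is the $\log^{3/2}(T)$ polylog overhead relative to the cleaner in-expectation sequential Rademacher bound of order $\lambda\sqrt{T\log d}$: obtaining a sub-Gaussian tail on the \emph{supremum} over a non-i.i.d.\ stochastic process necessarily pays additional logarithmic factors through the adaptive (dyadic) chaining argument underlying the RST tail inequality, and achieving precisely $\log^{3/2}(T)$ requires invoking that tail bound carefully rather than any naive expected-value-plus-McDiarmid route, which would break down in the dependent-sample setting that is the whole point of this section.
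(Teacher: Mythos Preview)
Your approach is essentially the paper's: invoke the Rakhlin--Sridharan--Tewari uniform martingale tail bound, then control the sequential Rademacher complexity of $\mathcal{F}$ by passing through the Lipschitz contraction to the linear class and bounding the latter via $\ell_1/\ell_\infty$ duality. The overall architecture and the final bound match.

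However, you have misidentified where the $\log^{3/2}(T)$ factor enters. The RST tail inequality itself (Corollary~9 and Corollary~11 of \cite{DBLP:conf/colt/RakhlinS17}, quoted in the paper as \Cref{thm:radbound} and \Cref{thm:reduction}) produces a threshold of the form $\mathcal{R}_T(\mathcal{F}-\mathcal{F})+u$ with \emph{no} additional polylogarithmic overhead; the sub-Gaussian tail there is clean. The $\log^{3/2}(T)$ appears instead in the sequential Lipschitz contraction step: unlike the i.i.d.\ contraction principle, the sequential version (Corollary~5 of \cite{DBLP:journals/jmlr/RakhlinST15}) only gives $\mathcal{R}_T(\ell\circ\mathcal{B})\leq C\log^{3/2}(T)\,\mathcal{R}_T(\mathcal{B})$, not a bound ``up to an absolute constant'' as you wrote. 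So your sentence claiming contraction costs only a constant is incorrect, and your closing explanation that the polylog is paid by ``the adaptive (dyadic) chaining argument underlying the RST tail inequality'' misattributes the source. The paper also routes through $\mathcal{F}-\mathcal{F}$ explicitly (since Corollary~11 requires the hypothesis to hold for the symmetrized class), a step you gloss over; this is minor, but the $\mathcal{R}_T(\mathcal{F}-\mathcal{F})\leq 2\mathcal{R}_T(\mathcal{F})$ reduction is part of the argument.
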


To prove this tail bound, we recall the following definition from Rakhlin, Sridharan, and Tewari \cite{DBLP:journals/jmlr/RakhlinST15}: 

\begin{defn}[Sequential Rademacher Complexity]
  Let $\varepsilon_1,\ldots,\varepsilon_n\in \{-1,1\}$ denote a sequence of i.i.d. Rademacher random variables\footnote{That is, each $\varepsilon_i$ is uniform on $\{-1,1\}$.} and let $\mathcal{F}$ be a class of functions from some space $\mathcal{X}$ into $\mathbb{R}$. An \textbf{$\mathcal{X}$-valued process predictable with respect to the dyadic filtration} is a sequence of functions $\bm{z}=(\bm{z}_1,\ldots,\bm{z}_n)$ such that $\bm{z}_t:\{-1,1\}^{t-1}\to \mathcal{X}$ is a function of $\varepsilon_1,\ldots,\varepsilon_{t-1}$. Then the \textbf{sequential Rademacher complexity} of $\mathcal{F}$ is defined to be
\begin{equation*}
    \mathcal{R}_n(\mathcal{F})=\sup_{\mathbf{z}} \mathbb{E}\left[\sup_{f\in \mathcal{F}}\sum_{t=1}^n \varepsilon_t f(\mathbf{z}_t(\varepsilon_{1:t-1}))\right].
\end{equation*}  
\end{defn}

In another slight abuse of notation, we define $\mathcal{F}-\mathcal{G}$ as the set of functions on $\mathcal{X}^2$ to $\mathbb{R}$ by $f(x_1)-g(x_2)$ for $f\in \mathcal{F}$ and $g\in \mathcal{G}$ and $(x_1,x_2)\in \mathcal{X}^2$.
The following theorems are key results of Rakhlin and Sridharan:
\begin{thm}[Corollary 11 of \cite{DBLP:conf/colt/RakhlinS17}]
\label{thm:reduction}
    For any function class $\mathcal{F}$ mapping $\mathcal{X}\to \mathbb{R}$, if for every predictable $\mathcal{X}^2$-valued process $\mathbf{z}$ with respect to the dyadic filtration, the following tail bound holds:
    \begin{equation*}
        \Pr_{\bm{\varepsilon}}\left(\sup_{h\in \mathcal{F}- \mathcal{F}} \sum_{t=1}^n\varepsilon_t h(\mathbf{z}_t)\geq \mathcal{R}_n(\mathcal{F}-\mathcal{F})+u\right)\leq C\exp(-\mu(u)),
    \end{equation*}
    for some function $\mu:\mathbb{R}\to \mathbb{R}$, then for any stochastic process $Z_1,\ldots,Z_n\in \mathcal{X}$, it holds that
    \begin{equation*}
        \Pr_{Z_1,\ldots,Z_n}\left(\sup_{f\in \mathcal{F}} \sum_{t=1}^n f(Z_t)-\mathbb{E}[f(Z_t)\vert Z_1,\ldots Z_{t-1}]\geq \mathcal{R}_n(\mathcal{F}-\mathcal{F})+u\right)\leq C\exp(-\mu(u-\mu^{-1}(1))).
    \end{equation*}
\end{thm}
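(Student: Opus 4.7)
The plan is to establish this transfer theorem via a three-step reduction classical in sequential empirical process theory: (i) tangent-sequence symmetrization to replace the conditional mean deviations by a tangent-pair difference, (ii) Rademacher decoupling exploiting the exchangeability of tangent pairs, and (iii) a dominating-tree argument passing from the random tangent pair process to a worst-case predictable $\mathcal{X}^2$-valued process with respect to the dyadic filtration, so that the hypothesized tail bound applies directly.

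For step (i), on an enlarged probability space I construct a tangent sequence $(Z'_t)_{t=1}^n$ satisfying $Z'_t \perp Z_t \mid Z_{1:t-1}$ and $\mathcal{L}(Z'_t \mid Z_{1:t-1}) = \mathcal{L}(Z_t \mid Z_{1:t-1})$. Writing $M_n := \sup_{f \in \mathcal{F}} \sum_{t=1}^n (f(Z_t) - \mathbb{E}[f(Z_t) \mid Z_{1:t-1}])$, I use $\mathbb{E}[f(Z_t) \mid Z_{1:t-1}] = \mathbb{E}[f(Z'_t) \mid Z_{1:n}]$ (by the tangent-sequence property) and push the outer conditional expectation through the supremum by Jensen's inequality to obtain
\[
M_n \;\le\; \mathbb{E}\Bigl[\,\sup_{f \in \mathcal{F}} \sum_{t=1}^n \bigl(f(Z_t) - f(Z'_t)\bigr) \;\Big|\; Z_{1:n}\,\Bigr].
\]
A tangent-sequence tail-domination inequality in the spirit of de la Pe\~na--Gin\'e then lets me transfer tails of $M_n$ to tails of $\widetilde M_n := \sup_{f \in \mathcal{F}} \sum_{t=1}^n (f(Z_t) - f(Z'_t))$, up to absolute multiplicative constants absorbed in $C$ and a bounded shift that is the ultimate source of the $\mu^{-1}(1)$ term in the conclusion.

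For step (ii), I introduce i.i.d.\ Rademacher signs $\varepsilon_1,\ldots,\varepsilon_n$ independent of everything else. Tangent-pair exchangeability gives that the joint law of the swapped pair process $((Z_t,Z'_t))$ is invariant under coordinatewise $\varepsilon_t$-swaps, so in distribution
\[
\widetilde M_n \;=\; \sup_{f \in \mathcal{F}} \sum_{t=1}^n \varepsilon_t \bigl(f(Z_t) - f(Z'_t)\bigr) \;\le\; \sup_{h \in \mathcal{F} - \mathcal{F}} \sum_{t=1}^n \varepsilon_t\, h(Z_t, Z'_t),
\]
where on the right I have relaxed to a single function class over pairs. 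For step (iii), I encode the random pair sequence into a predictable $\mathcal{X}^2$-valued process $\mathbf{z}$ adapted to the dyadic filtration $\sigma(\varepsilon_{1:t-1})$: each node of the depth-$n$ binary tree is labeled by the pair $(Z_t,Z'_t)$ one would observe along the corresponding pattern of swaps. Taking the worst-case such labeling dominates the random pair process, yielding
\[
\Pr_{Z,\varepsilon}\Bigl(\sup_{h} \sum_{t=1}^n \varepsilon_t h(Z_t,Z'_t) \ge \mathcal{R}_n(\mathcal{F}-\mathcal{F}) + r\Bigr) \;\le\; \sup_{\mathbf{z}} \Pr_\varepsilon\Bigl(\sup_h \sum_{t=1}^n \varepsilon_t h(\mathbf{z}_t(\varepsilon_{1:t-1})) \ge \mathcal{R}_n(\mathcal{F}-\mathcal{F}) + r\Bigr) \;\le\; C e^{-\mu(r)},
\]
with the last inequality being precisely the hypothesis of the theorem applied to each fixed dyadic-predictable $\mathbf{z}$.

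Combining all three steps: pick $\Delta := \mu^{-1}(1)$, invoke the symmetrization tail-domination of step (i) with this shift, and then apply the chain of bounds from (ii)--(iii) with $r = u - \Delta$ to get
\[
\Pr\Bigl(M_n \ge \mathcal{R}_n(\mathcal{F}-\mathcal{F}) + u\Bigr) \;\le\; C\exp\bigl(-\mu(u - \mu^{-1}(1))\bigr),
\]
which is the stated conclusion (with the multiplicative constant absorbed into $C$ by definition of $\Delta$). The main obstacle I anticipate is making step (iii) fully rigorous in general: the worst-case tree encoding requires measurable selection and a careful pathwise construction (or, equivalently, a minimax interchange between the supremum over $\mathbf{z}$ and the expectation over $\varepsilon$). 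This is the technical heart of the Rakhlin--Sridharan framework and is where care is needed to avoid subtle measurability issues in the general Polish-space setting, even though for the applications in this paper $\mathcal{X}$ is a finite set and these issues are trivial.
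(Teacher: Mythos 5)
The paper does not actually prove this statement: it is invoked verbatim as Corollary 11 of Rakhlin and Sridharan \cite{DBLP:conf/colt/RakhlinS17}, so there is no in-paper argument to compare against. Your proposal is a reconstruction of an external result, and it should be judged on whether it would actually yield the stated conclusion.

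The high-level architecture you describe---tangent-sequence construction, conditional Jensen to pass from the martingale deviations to the decoupled pair process, Rademacher decoupling via pairwise exchangeability, and a worst-case tree encoding so that the hypothesized tree-uniform tail bound applies---is the correct skeleton for the Rakhlin--Sridharan framework. However, there is a genuine gap at the quantitative heart of the argument. You attribute the $\mu^{-1}(1)$ shift to ``a tangent-sequence tail-domination inequality in the spirit of de la Pe\~na--Gin\'e,'' but those decoupling inequalities give \emph{multiplicative} tail comparisons (of the shape $\Pr(X\ge u)\le c_1\Pr(c_2 Y\ge u)$), not an additive shift of the argument by a quantity read off from the rate function $\mu$. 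The problematic link is exactly the passage from your Jensen step $M_n\le \mathbb{E}\bigl[\widetilde M_n\mid Z_{1:n}\bigr]$ to a tail bound: a tail bound on a random variable $Y$ does not transfer to a tail bound on $\mathbb{E}[Y\mid\mathcal{G}]$ with only an additive loss in general, and the specific loss $u\mapsto u-\mu^{-1}(1)$ reflects a non-generic interaction with the assumed shape $C\exp(-\mu(\cdot))$ that has to be argued for directly (and is the actual content of the Rakhlin--Sridharan transfer lemma). Your further remark that ``the multiplicative constant [is] absorbed into $C$ by definition of $\Delta$'' is also suspect: the conclusion carries exactly the same constant $C$ as the hypothesis, so there is no slack to absorb a factor of $2$ or similar, and trading a multiplicative constant for an additive shift would require growth assumptions on $\mu$ that are not made. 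In short, your outline correctly identifies the symmetrization and tree machinery, but the quantitative core---why the rate degrades by exactly $\mu^{-1}(1)$ with no change of $C$---is hand-waved, and the mechanism you cite for it would not produce the stated bound.
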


\begin{thm}[Corollary 9 of \cite{DBLP:conf/colt/RakhlinS17}]
\label{thm:radbound}
    For any function class $\mathcal{F}$ from $\mathcal{X}$ into $\mathbb{R}$ and any $\mathcal{X}$-valued predictable process $\mathbf{x}$ with respect to the dyadic filtration, it holds that 
    \begin{equation*}
        \Pr_{\bm{\varepsilon}}\left(\sup_{f\in \mathcal{F}} \sum_{t=1}^n \varepsilon_tf(\mathbf{x}_t)\geq \mathcal{R}_n(\mathcal{F})+u\right)\leq \exp\left(\frac{-u^2}{4\sup_{\mathbf{z}}\sum_{t=1}^n \sup_{f\in \mathcal{F}} f(\mathbf{z}_t)^2}\right).
    \end{equation*}
\end{thm}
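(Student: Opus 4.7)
The plan is to split the proof into two independent pieces. Writing $W(\boldsymbol{\varepsilon}) \triangleq \sup_{f \in \mathcal{F}} \sum_{t=1}^n \varepsilon_t f(\mathbf{x}_t(\varepsilon_{1:t-1}))$ and $B \triangleq \sup_{\mathbf{z}} \sum_{t=1}^n \sup_{f \in \mathcal{F}} f(\mathbf{z}_t)^2$, I would (i) bound the expectation $\mathbb{E}_{\boldsymbol{\varepsilon}}[W(\boldsymbol{\varepsilon})] \leq \mathcal{R}_n(\mathcal{F})$ and (ii) show that $W(\boldsymbol{\varepsilon}) - \mathbb{E}[W]$ is sub-Gaussian with variance parameter proportional to $B$. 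Combining these — treating (i) as a deterministic shift of the threshold and (ii) as the Gaussian tail bound — yields the claimed inequality.

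The expectation estimate in (i) is essentially tautological: the specific predictable process $\mathbf{x}$ in the theorem is itself an admissible choice in the supremum defining the sequential Rademacher complexity, so $\mathbb{E}_{\boldsymbol{\varepsilon}}[W(\boldsymbol{\varepsilon})] \leq \sup_{\mathbf{z}} \mathbb{E}_{\boldsymbol{\varepsilon}}\bigl[\sup_f \sum_t \varepsilon_t f(\mathbf{z}_t(\varepsilon_{1:t-1}))\bigr] = \mathcal{R}_n(\mathcal{F})$ directly from the definition.

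For the concentration in (ii), I would set up the Doob martingale $M_t \triangleq \mathbb{E}[W \mid \varepsilon_{1:t}]$ with increments $D_t = M_t - M_{t-1}$ and then invoke Azuma–Hoeffding. Since $\varepsilon_t$ is Rademacher given $\varepsilon_{1:t-1}$, one has $|D_t| = \tfrac{1}{2}\bigl|\bar W(\varepsilon_{1:t-1}, +1) - \bar W(\varepsilon_{1:t-1}, -1)\bigr|$, where $\bar W(\varepsilon_{1:t}) \triangleq \mathbb{E}_{\varepsilon_{t+1:n}}[W]$. Coupling the two sub-trees to share the trailing Rademacher block $\varepsilon_{t+1:n}$ and then applying $|\sup_f A(f) - \sup_f B(f)| \leq \sup_f |A(f) - B(f)|$ reduces the difference to the explicit level-$t$ contribution $2 f(\mathbf{x}_t(\varepsilon_{1:t-1}))$ plus residual discrepancies at later levels of the tree. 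Absorbing those residuals into the worst-case tree supremum defining $B$ gives $\sum_{t=1}^n D_t^2 \leq O(B)$ almost surely, and Azuma–Hoeffding then produces the Gaussian tail $\Pr(W - \mathbb{E}[W] \geq u) \leq \exp\!\bigl(-u^2/(4B)\bigr)$. Composing with (i) yields the theorem.

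The principal obstacle is the bound on $|D_t|$ in the step above: because each downstream predictable value $\mathbf{x}_s(\varepsilon_{1:s-1})$ for $s > t$ depends on $\varepsilon_t$, flipping $\varepsilon_t$ does not perturb only a single summand (as in the classical McDiarmid setting) but rather the entire downstream sub-tree. The crucial idea for closing this step is that, however the downstream perturbations propagate for the particular tree $\mathbf{x}$, they are uniformly dominated by the supremum over all $\mathcal{X}$-valued trees $\mathbf{z}$ of $\sum_t \sup_f f(\mathbf{z}_t)^2$ — which is precisely the quantity (rather than its tree-specific analogue) appearing in the denominator of the stated tail bound. This worst-case replacement is what converts a seemingly path-dependent Azuma increment into a path-independent one and lets the martingale argument go through for an arbitrary predictable process.
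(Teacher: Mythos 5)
The paper does not prove this statement; it imports it verbatim as Corollary~9 of Rakhlin and Sridharan~\cite{DBLP:conf/colt/RakhlinS17}, so there is no in-paper argument to compare against. Your sketch, however, contains a genuine gap in step~(ii) that cannot be patched. The claim that the Doob-martingale increments satisfy $\sum_{t=1}^n D_t^2 \le O(B)$ almost surely is false. Flipping $\varepsilon_t$ perturbs the entire downstream subtree $\{\mathbf{x}_s(\varepsilon_{1:s-1})\}_{s>t}$, and after applying $|\sup_f A(f)-\sup_f B(f)| \le \sup_f|A(f)-B(f)|$ the residual left behind is a supremum over $f\in\mathcal{F}$ of a \emph{sum of $n-t$ signed differences} of evaluations on the two subtrees, which cannot be ``absorbed'' into $B$: the quantity $B$ aggregates single-level worst cases, whereas the residual at time $t$ aggregates $n-t$ of them. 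Concretely, take $\mathcal{X}=\{0,1,\dots,n\}$, $\mathcal{F}=\{f_\sigma:\sigma\in\{-1,1\}^n\}$ with $f_\sigma(0)=0$ and $f_\sigma(k)=\sigma_k$ for $k\ge 1$, and the tree with $\mathbf{x}_1=0$ and, for $t\ge 2$, $\mathbf{x}_t=t$ if $\varepsilon_1=+1$ and $\mathbf{x}_t=0$ otherwise. Then $W=n-1$ on $\{\varepsilon_1=+1\}$ (choose $\sigma_t=\varepsilon_t$) and $W=0$ otherwise, giving $|D_1|=(n-1)/2$ and hence $\sum_t D_t^2\ge(n-1)^2/4$, while $B=n$. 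Azuma on the Doob martingale with any valid a.s.\ increment bound can therefore yield at best $\exp(-u^2/\Omega(n^2))$ here, a factor of $n$ weaker than the claimed $\exp(-u^2/(4n))$.

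The actual proof in~\cite{DBLP:conf/colt/RakhlinS17} applies concentration to a \emph{different} martingale. One first exhibits an online prediction strategy producing predictable outputs $\hat{y}_t$ (a function of $\varepsilon_{1:t-1}$ and the tree up to level $t$, taking values in the pointwise range of $\mathcal{F}$) whose regret is bounded \emph{deterministically} by the sequential Rademacher complexity: for every sign sequence $\boldsymbol{\varepsilon}$, $\sup_{f\in\mathcal{F}}\sum_t\varepsilon_t f(\mathbf{x}_t)\le\mathcal{R}_n(\mathcal{F})+\sum_t\varepsilon_t\hat{y}_t$. This is the content of the Rakhlin--Sridharan--Tewari relaxation / random-playout machinery. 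Azuma--Hoeffding is then applied to the compensating martingale $\sum_t\varepsilon_t\hat{y}_t$, whose increments genuinely are single-level: $|\varepsilon_t\hat{y}_t|\le\sup_f|f(\mathbf{x}_t)|$, so the variance proxy is $\sum_t\sup_f f(\mathbf{x}_t)^2\le B$ and the stated tail follows. The idea your attempt is missing is precisely this exchange: replace the path-dependent supremum by a worst-case regret bound plus a well-behaved predictable martingale, instead of trying to control the Doob martingale of the supremum itself.
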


With these results stated, it is straightforward to deduce \Cref{thm:hpbounds}:
\begin{proof}[Proof of \Cref{thm:hpbounds}]
The last claim is a simple consequence of the tail bound after setting $u$ appropriately and simplifying, so we focus on the first statement. 
    We first apply \Cref{thm:radbound} for the class $\mathcal{G}=\mathcal{F}-\mathcal{F}$ to obtain that for any $\mathcal{X}^2$-valued predictable process $\mathbf{z}$ with respect to the dyadic filtration, it holds that
    \begin{equation*}
        \Pr\left(\sup_{f\in \mathcal{F}-\mathcal{F}} \sum_{t=1}^T \varepsilon_tf(\mathbf{z}_t)\geq \mathcal{R}_T(\mathcal{F}-\mathcal{F})+u\right)\leq \exp\left(\frac{-u^2}{4\sup_{\mathbf{z}}\sum_{t=1}^T \sup_{f\in \mathcal{F}-\mathcal{F}} f(\mathbf{z}_t)^2}\right)
    \end{equation*}
Note that for any $g\in \mathcal{F}-\mathcal{F}$, $g(\mathbf{z})^2\leq 16\lambda^2$ by the same calculation in the proof of \Cref{lem:opt_azuma} using the Lipschitz property. In particular, the tail bound holds with $\mu(u)=u^2/64\lambda^2T$. Applying \Cref{thm:reduction}, it therefore holds that for any stochastic process $Z_1=(X_1,Y_1),\ldots,Z_{T}=(X_T,Y_T)\in \mathcal{X}$,
\begin{equation*}
        \Pr\left(\sup_{f\in \mathcal{F}} \sum_{t=1}^T f(Z_t)-\mathbb{E}[f(Z_t)\vert Z_1,\ldots, Z_{t-1}] \geq \mathcal{R}_T(\mathcal{F}-\mathcal{F})+u\right)\leq \exp\left(-\frac{(u-8\lambda\sqrt{T})^2}{64\lambda^2 T}\right).
    \end{equation*}

It remains to estimate $\mathcal{R}_{T}(\mathcal{F}-\mathcal{F})$. It is easy to see that $\mathcal{R}_T(\mathcal{F}-\mathcal{F})\leq 2\mathcal{R}_T(\mathcal{F})$ since
\begin{align*}
    \mathcal{R}_T(\mathcal{F}-\mathcal{F})&=\sup_{\bm{z}=(\bm{z}^1,\bm{z}^2)}\mathbb{E}\left[\sup_{f_1,f_2\in \mathcal{F}}\sum_{t=1}^T \varepsilon_t (f_1(\mathbf{z}^1_t(\varepsilon_{1:t-1}))-f_2(\mathbf{z}^2_t(\varepsilon_{1:t-1})))\right]\\
    &\leq \sup_{\bm{z}=(\bm{z}^1,\bm{z}^2)}\mathbb{E}\left[\sup_{f_1\in \mathcal{F}}\sum_{t=1}^T \varepsilon_t (f_1(\mathbf{z}^1_t(\varepsilon_{1:t-1})))+\sup_{f_2\in \mathcal{F}}\sum_{t=1}^n\varepsilon_t(-f_2(\mathbf{z}^2_t(\varepsilon_{1:t-1})))\right]\\
    &=\sup_{\bm{z}=(\bm{z}^1,\bm{z}^2)}\mathbb{E}\left[\sup_{f_1\in \mathcal{F}}\sum_{t=1}^T \varepsilon_t (f_1(\mathbf{z}^1_t(\varepsilon_{1:t-1})))\right]+\mathbb{E}\left[\sup_{f_2\in \mathcal{F}}\sum_{t=1}^T\varepsilon_t(-f_2(\mathbf{z}^2_t(\varepsilon_{1:t-1})))\right]\\
    &=2\sup_{\bm{z}^1}\mathbb{E}\left[\sup_{f_1\in \mathcal{F}}\sum_{t=1}^n \varepsilon_t (f_1(\mathbf{z}^1_t(\varepsilon_{1:t-1})))\right]\\
    &=2\mathcal{R}_T(\mathcal{F}),
\end{align*}
where we first use the definition, the the triangle inequality, linearity of expectation, and the fact that the distribution of $\bm{\varepsilon}$ is invariant under negation.

Recall that by definition $\mathcal{F}=-\ell\circ \mathcal{B}$ where $\mathcal{B}$ is the set of maps $(\mathbf{x},y)\mapsto 2y(\langle \bm{w},\mathbf{x}\rangle+h)$ for $\|(\bm{w},h)\|_1\leq \lambda$. Because $-\ell$ is 1-Lipschitz by \Cref{fact:ell_facts}, the Lipschitz contraction principle for sequential Rademacher complexity of Rakhlin, Sridharan, and Tewari (Corollary 5 of \cite{DBLP:journals/jmlr/RakhlinST15}) implies that there is a constant $C>0$ such that\footnote{We note that the factor $3/2$ can be improved to $1$ using more refined results of Block, Dagan, and Rakhlin \cite{DBLP:conf/colt/BlockDR21} by combining Corollary 10 and Proposition 15 of their work with known bounds on the fat-shattering dimension of linear functions. However, their general Lipschitz contraction principle does not appear to apply to this class, so we do not know whether the logarithmic factor can be removed altogether.}
\begin{equation*}
    \mathcal{R}_T(\mathcal{F})\leq C\log^{3/2}(T)\mathcal{R}_T(\mathcal{B}).
\end{equation*}
Finally, we can easily compute this last Rademacher complexity: for any $\mathcal{X}$-valued predictable process $\mathbf{z}$, it holds by $\ell_p$ duality that
\begin{align*}   \mathbb{E}_{\varepsilon}\left[\sup_{\|(\bm{w},h)\|_1\leq \lambda}2\sum_{t=1}^T\varepsilon_ty_t(\varepsilon_{1:t-1}) (\langle \bm{w},\mathbf{x}_t(\varepsilon_{1:t-1})\rangle+h)\right]&=2\mathbb{E}_{\varepsilon}\left[\sup_{\|(\bm{w},h)\|_1\leq \lambda}\left\langle (\bm{w},h),\sum_{t=1}^T \varepsilon_ty_t(\varepsilon_{1:t-1})(\bm{x}_t(\varepsilon_{1:t-1}),1)\right\rangle \right]\\
&=2\lambda \mathbb{E}_{\varepsilon}\left[\left\|\sum_{t=1}^T\varepsilon_t (y_t(\varepsilon_{1:t-1})\mathbf{x}_t(\varepsilon_{1:t-1}),y_t(\varepsilon_{1:t-1}))\right\|_{\infty}\right]
\end{align*}
Now, note that since $(y_t\mathbf{x}_t,y_t)\in \{-1,1\}^{d+1}$ is predictable with respect to the dyadic filtration, each coordinate above forms a simple random walk on $\mathbb{Z}$ with $T$ steps. We now just use the following well-known bound: if $Y_1,\ldots,Y_d$ are $\sigma^2$-sub-Gaussian, then $\mathbb{E}[\max_{i\in [n]} Y_i]\leq \sqrt{2\sigma^2\log d}$. Applying this here for each coordinate and the negation with $\sigma^2=T$, we conclude that there is a constant $C'>0$ such that
\begin{equation*}
    \mathcal{R}_T(\mathcal{F}-\mathcal{F})\leq C'\lambda\log^{3/2}(T)\sqrt{T\log d}.
\end{equation*}

Putting all these estimates together, we conclude that for any stochastic process $Z_1=(X_1,Y_1),\ldots,Z_T=(X_T,Y_T)\in \{-1,1\}^d\times \{-1,1\}$,
\begin{equation*}
    \Pr\left(\sup_{f\in \mathcal{F}} \sum_{t=1}^T f(Z_t)-\mathbb{E}[f(Z_t)\vert Z_1,\ldots, Z_{t-1}] \geq C''\lambda\log^{3/2}(T)\sqrt{T\log d}+u\right)\leq \exp\left(-\frac{u^2}{64\lambda^2T}\right).
\end{equation*}
In this last step, we shifted $u$ to $u+8\lambda\sqrt{T}$ by just increasing the constant $C''>0$ in the tail bound.
\end{proof}

\subsection{Population Loss Lower Bounds}
\label{sec:population_loss}
In this section, we provide a general argument for sequential block dynamics towards certifying bounds of the form in \Cref{eq:inf_lb}. We will require the following simple bound on the likelihood ratio of a subset of spins conditioned on different configurations for the rest of the coordinates.

\begin{lem}
\label{lem:ratios}
    Let $S,T\subseteq [n]$ be disjoint subsets. Let $\sigma\in \{-1,1\}^S$ and $\tau,\tau'\in \{-1,1\}^T$ be partial configurations, and let $U\subseteq T$ denote the set of coordinates $i$ such that $\tau_i\neq \tau'_i$. Then
    \begin{equation}
    \label{eq:ratio_bound}
    \frac{\mu_{A,\bm{h}}(X_S=\sigma\vert X_T=\tau)}{\mu_{A,\bm{h}}(X_S=\sigma\vert X_T=\tau')}\leq \exp\left(4\sum_{i\in [n]\setminus T,j\in U}\vert A_{ij}\vert\right).
    \end{equation}
\end{lem}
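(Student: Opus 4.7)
The plan is a direct computation: expand both conditional probabilities in terms of the Gibbs weights, cancel out all contributions that do not interact with the disagreement set $U \subseteq T$, and then bound the surviving cross terms uniformly.

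First, I would set $R = [n]\setminus (S\cup T)$ and write, for either value $\tilde\tau \in \{\tau,\tau'\}$,
\[
\mu_{A,\bm h}(X_S=\sigma\mid X_T=\tilde\tau) \;=\; \frac{\sum_{\rho\in\{-1,1\}^R} w(\sigma,\tilde\tau,\rho)}{\sum_{\sigma'\in\{-1,1\}^S}\sum_{\rho\in\{-1,1\}^R} w(\sigma',\tilde\tau,\rho)},
\]
where $w(x) = \exp(\tfrac{1}{2}x^T A x + \bm{h}^T x)$. The partition function and any factor depending only on $\tilde\tau$ cancels between numerator and denominator of the target ratio, so the key step is to analyze $w(\sigma',\tau,\rho)/w(\sigma',\tau',\rho)$.

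Second, expanding $\tfrac{1}{2}x^TAx + \bm{h}^T x$ and using the symmetry and zero diagonal of $A$, a direct algebraic computation gives
\[
\log w(\sigma',\tau,\rho) - \log w(\sigma',\tau',\rho) \;=\; C(\tau,\tau') + D(\sigma',\rho;\tau,\tau'),
\]
where $C(\tau,\tau')$ collects the external-field contribution on $T$ and the $T\times T$ quadratic terms (which do not depend on $\sigma',\rho$), and the cross term is
\[
D(\sigma',\rho;\tau,\tau') \;=\; \sum_{i\in U,\, j\in [n]\setminus T} A_{ij}(\tau_i-\tau'_i)\,x_j,
\]
with $x_j=\sigma'_j$ when $j\in S$ and $x_j=\rho_j$ when $j\in R$; terms with $i\in T\setminus U$ vanish because $\tau_i=\tau'_i$ there. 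The uniform bound $|D(\sigma',\rho;\tau,\tau')|\le 2\sum_{i\in U,\,j\in [n]\setminus T}|A_{ij}|$ is immediate from $|\tau_i-\tau'_i|\le 2$ and $|x_j|=1$.

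Third, substituting this into the two conditional probabilities and pulling out the common factor $\exp(C(\tau,\tau'))$, I get
\[
\frac{\mu_{A,\bm h}(X_S=\sigma\mid X_T=\tau)}{\mu_{A,\bm h}(X_S=\sigma\mid X_T=\tau')} \;=\; \frac{\mathbb{E}_{\rho\sim\mu(\cdot\mid X_S=\sigma,\,X_T=\tau')}\bigl[e^{D(\sigma,\rho;\tau,\tau')}\bigr]}{\mathbb{E}_{(\sigma',\rho)\sim\mu(\cdot\mid X_T=\tau')}\bigl[e^{D(\sigma',\rho;\tau,\tau')}\bigr]}.
\]
Both expectations lie in the interval $[e^{-M},e^{M}]$ where $M = 2\sum_{i\in U,\,j\in[n]\setminus T}|A_{ij}|$, so the whole ratio is at most $e^{2M}=\exp\bigl(4\sum_{i\in U,\,j\in[n]\setminus T}|A_{ij}|\bigr)$, which is exactly the claimed bound after reindexing (using symmetry of $A$). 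No step here is conceptually subtle; the only thing to be careful about is the bookkeeping in the quadratic expansion to confirm that every contribution from coordinates in $T\setminus U$ (as well as the $T\times T$ block) genuinely cancels, so that $U$ and not $T$ appears in the final bound.
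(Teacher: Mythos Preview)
Your proposal is correct and follows essentially the same approach as the paper: a direct computation of the ratio of Gibbs weights, cancelling all contributions that do not interact with the disagreement set $U$, and bounding the surviving cross terms by $2\sum_{i\in U,\,j\in[n]\setminus T}|A_{ij}|$ in each of the numerator and denominator factors. The only organizational difference is that the paper first uses an averaging argument to reduce to the partition case $S=[n]\setminus T$ before computing, whereas you keep $R=[n]\setminus(S\cup T)$ explicit throughout and package the result as a ratio of conditional expectations; both routes are equivalent and yield the same constant.
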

\begin{proof}
    Let $V=[n]\setminus (S\cup T)$. First, note that by averaging,
    \begin{align*}
        \frac{\mu_{A,\bm{h}}(X_S=\sigma\vert X_T=\tau)}{\mu_{A,\bm{h}}(X_S=\sigma\vert X_T=\tau')}&=\frac{\sum_{z\in \{-1,1\}^V}\mu_{A,\bm{h}}(X_S=\sigma,X_V=z\vert X_T=\tau)}{\sum_{z\in \{-1,1\}^V}\mu_{A,\bm{h}}(X_S=\sigma, X_V=z\vert X_T=\tau')}\\
        &\leq \max_{z\in \{-1,1\}^V} \frac{\mu_{A,\bm{h}}(X_S=\sigma,X_V=z\vert X_T=\tau)}{\mu_{A,\bm{h}}(X_S=\sigma,X_V=z\vert X_T=\tau')}.
    \end{align*}
    Since the desired bound \Cref{eq:ratio_bound} does not depend on $\sigma$ nor $S$, by replacing $S$ with $S\cup V$ and taking the maximum over both $(\sigma,z)\in \{-1,1\}^{S\cup V}$, we may as well assume $S,T$ form a partition (i.e. $S=[n]\setminus T$).

    Next, note that
    \begin{align*}
        \mu_{A,\bm{h}}(X_S=\sigma\vert X_T=\tau)&=\frac{\exp\left(\frac{1}{2}(\sigma^TA_{S,S}\sigma+2\sigma^TA_{S,T}\tau+\tau^TA_{T,T}\tau)+\sum_{i\in S}h_i\sigma_i+\sum_{j\in T} h_j\tau_j\right)}{\sum_{\sigma'\in \{-1,1\}^S}\exp\left(\frac{1}{2}(\sigma'^TA_{S,S}\sigma'+2\sigma'^TA_{S,T}\tau+\tau^TA_{T,T}\tau)+\sum_{i\in S}h_i\sigma'_i+\sum_{j\in T} h_j\tau_j\right)}\\
        &=\frac{\exp\left(\frac{1}{2}(\sigma^TA_{S,S}\sigma+2\sigma^TA_{S,T}\tau)+\sum_{i\in S}h_i\sigma_i\right)}{\sum_{\sigma'\in \{-1,1\}^S}\exp\left(\frac{1}{2}(\sigma'^TA_{S,S}\sigma'+2\sigma'^TA_{S,T}\tau)+\sum_{i\in S}h_i\sigma_i'\right)},
    \end{align*}
    and analogously for $X_T=\tau'$. It follows that
    \begin{align*}
        \frac{\mu_{A,\bm{h}}(X_S=\sigma\vert X_T=\tau)}{\mu_{A,\bm{h}}(X_S=\sigma\vert X_T=\tau')}&=\frac{\exp\left(\frac{1}{2}(\sigma^TA_{S,S}\sigma+2\sigma^TA_{S,T}\tau)\right)}{\exp\left(\frac{1}{2}(\sigma^TA_{S,S}\sigma+2\sigma^TA_{S,T}\tau')\right)}\\
        &\times \frac{\sum_{\sigma'\in \{-1,1\}^S}\exp\left(\frac{1}{2}(\sigma'^TA_{S,S}\sigma'+2\sigma'^TA_{S,T}\tau')+\sum_{i\in S}h_i\sigma_i'\right)}{\sum_{\sigma'\in \{-1,1\}^S}\exp\left(\frac{1}{2}(\sigma'^TA_{S,S}\sigma'+2\sigma'^TA_{S,T}\tau)+\sum_{i\in S}h_i\sigma_i'\right)}\\
        &\leq \frac{\exp\left(\frac{1}{2}(\sigma^TA_{S,S}\sigma+2\sigma^TA_{S,T}\tau)\right)}{\exp\left(\frac{1}{2}(\sigma^TA_{S,S}\sigma+2\sigma^TA_{S,T}\tau')\right)}\\
        &\times \max_{\sigma'\in \{-1,1\}^S}\frac{\exp\left(\frac{1}{2}(\sigma'^TA_{S,S}\sigma'+2\sigma'^TA_{S,T}\tau')\right)}{\exp\left(\frac{1}{2}(\sigma'^TA_{S,S}\sigma'+2\sigma'^TA_{S,T}\tau)\right)}\\
        &\leq \max_{\sigma'\in \{-1,1\}^S}\frac{\exp\left(2\sigma'^TA_{S,T}\tau\right)}{\exp\left(2\sigma'^TA_{S,T}\tau'\right)}\\
        &\leq \exp\left(4\sum_{i\in S,j\in U} \vert A_{ij}\vert\right).
    \end{align*}
    In the last step, we canceled all identical terms, which keeps only the products that include a factor in $U$ since those terms differ. This picks up the extra factor of $2$.
\end{proof}

Next, we show that we can provide the requisite lower bounds as in \Cref{eq:inf_lb} for a wide class of sequential block dynamics. The key point is that we will be able to show that each site \emph{typically} does not have excessive influence over the path of updates during the dynamics under very mild assumptions. For us, ``typically'' can be some small constant probability; this alone will be enough to conclude a lower bound of the form \Cref{eq:inf_lb} for any reasonable updating dynamics.

In the below proposition, recall again that we consider the regression problem just for node $n$ for notational convenience.

\begin{proposition}
\label{prop:sufficiecy_lb}
    Fix any $X_0\in \{-1,1\}^n$ and history of sequential block dynamics up to time $0$ (which we denote $\mathcal{F}_0$ after reindexing) such that $X_0$ is obtained by updating site $n$. Let $Z=(X_{\tau_1,-n},X_{\tau_1,n})$ be the configuration at the next update time of node $n$ (as in \Cref{defn:view} after reindexing). For simplicity, write $X=X_{\tau_1,-n}\in \{-1,1\}^{n-1}$.

    There exists a constant $c=c_{\ref{prop:sufficiecy_lb}}>0$ such that the following holds. Suppose $\delta_{\ref{prop:sufficiecy_lb}}>0$, $C_{\ref{prop:sufficiecy_lb}}\geq 1$ satisfy the following conditions: for any $j\neq n$, with probability at least $\delta_{\ref{prop:sufficiecy_lb}}>0$ (over the updates in the sequential block dynamics before $\tau_1$, conditional on all previous events that determined $X_0$), the following events simultaneously occur:\footnote{Note that we do not require that these events hold simultaneously for all $j\neq n$ with this probability. We only require this to be true for each index separately.}
    \begin{enumerate}
        \item There exists $0\leq t\leq \tau_1$ such that $j\in S_t$. In other words, site $j$ was chosen for updating as part of some block before or with site $n$.

        \item For any $k\neq j$, let $N_k$ denote the (random) number of sets $S_t$ such that $k\in S_t$ for $0\leq t\leq \tau_1$ during the sequential block dynamics. Then it holds that
        \begin{equation*}
            \sum_{k\neq j} \vert A_{j,k}\vert N_k\leq C_{\ref{prop:sufficiecy_lb}}\lambda.
        \end{equation*}
    \end{enumerate}

    Then it holds that for all $j\neq n$, and any $(\bm{w},h)\in \mathbb{R}^{n-1}\times \mathbb{R}$,
    \begin{equation*}
        \mathbb{E}_{X}\left[\left(\sigma\left(2(\langle \bm{w},X\rangle+h)\right)-\sigma\left(2(\langle \bm{w}^*,X\rangle+h^*)\right)\right)^2\bigg\vert \mathcal{F}_0\right]\geq c_{\ref{prop:sufficiecy_lb}}\delta_{\ref{prop:sufficiecy_lb}}\exp(-O(C_{\ref{prop:sufficiecy_lb}}\lambda))\min\{1,8(w_j-w_j^*)^2\}.
    \end{equation*}
\end{proposition}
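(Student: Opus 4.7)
The plan is, for each $j \neq n$, to identify an event $E_j$ of probability at least $\delta_{\ref{prop:sufficiecy_lb}}$ and a partial path $\mathcal{P}_j^-$ so that, conditional on $\mathcal{P}_j^-$ on $E_j$, the spin $X_{\tau_1, j}$ retains bounded likelihood ratio (hence nontrivial conditional variance). An anticoncentration argument via \Cref{fact:sigmoid_lb} then yields the claimed lower bound. Concretely, fix $j \neq n$ and let $E_j$ be the event on which both hypothesized conditions hold; on $E_j$, let $t^\star$ be the largest $t$ with $1 \leq t^\star \leq \tau_1$ and $j \in S_{t^\star}$, which exists by condition~1. Since $j \notin S_t$ for $t^\star < t \leq \tau_1$, we have $X_{\tau_1, j} = X_{t^\star, j}$. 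Define $\mathcal{P}_j^-$ as the $\sigma$-algebra generated by all block selections $(S_t)_{1 \leq t \leq \tau_1}$ and all sampled spin values $(X_{t, S_t})_{1 \leq t \leq \tau_1}$ along the trajectory, \emph{except} the single scalar $X_{t^\star, j}$. Then $E_j$, $\tau_1$, and $t^\star$ are $\mathcal{P}_j^-$-measurable, and $\mathcal{P}_j^-$ determines the entire configuration $X_{\tau_1, -j}$; the only remaining randomness under $\mathcal{P}_j^-$ is the single bit $X_{t^\star, j} = X_{\tau_1, j}$.

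Next I bound the likelihood ratio $\Pr(X_{t^\star, j} = +1 \mid \mathcal{P}_j^-) / \Pr(X_{t^\star, j} = -1 \mid \mathcal{P}_j^-)$ on $E_j$. Writing the joint probability of the trajectory as a telescoping product of conditionals $\mu_{A,\bm{h}}(X_{t, S_t} \mid X_{t-1, -S_t})$, flipping $X_{t^\star, j}$ affects only the factors at times $t \geq t^\star$. At $t = t^\star$ the factor changes multiplicatively by $\exp\bigl(\pm 2(\langle A_j, X_{t^\star, -j}\rangle + h_j)\bigr)$, whose log is bounded by $2(\|A_j\|_1 + |h_j|) \leq 2\lambda$. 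At each $t^\star < t \leq \tau_1$, the flipped coordinate $X_{t-1, j} = X_{t^\star, j}$ appears in the conditioning, and applying \Cref{lem:ratios} (with $S = S_t$, $T = [n]\setminus S_t$, $U = \{j\}$) the induced log-ratio is at most $4 \sum_{k \in S_t} |A_{j,k}|$. Summing and invoking condition~2,
\begin{equation*}
2\lambda + 4\sum_{t^\star < t \leq \tau_1}\sum_{k \in S_t} |A_{j,k}| \;\leq\; 2\lambda + 4 \sum_{k \neq j}|A_{j,k}| N_k \;\leq\; (4 C_{\ref{prop:sufficiecy_lb}} + 2)\lambda.
\end{equation*}
Hence on $E_j$, both conditional probabilities $\Pr(X_{t^\star, j} = \pm 1 \mid \mathcal{P}_j^-)$ are at least $\tfrac{1}{2}\exp(-(4C_{\ref{prop:sufficiecy_lb}} + 2)\lambda)$. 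This likelihood-ratio step is the main obstacle: a single flipped spin at $j$ can in principle bias every subsequent update, and controlling the cumulative propagation is precisely what condition~2 is designed to enable.

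Finally, set $\Delta = \bm{w} - \bm{w}^*$ and $\delta_0 = h - h^*$. Since $\|\bm{w}^*\|_1 + |h^*| \leq \lambda$ yields $|2(\langle \bm{w}^*, X\rangle + h^*)| \leq 2\lambda$, \Cref{fact:sigmoid_lb} gives
\begin{equation*}
\bigl(\sigma(2(\langle \bm{w}, X\rangle + h)) - \sigma(2(\langle \bm{w}^*, X\rangle + h^*))\bigr)^2 \;\geq\; \frac{\exp(-4\lambda)}{16 e^2}\, \min\{1,\, 4(\langle \Delta, X\rangle + \delta_0)^2\}.
\end{equation*}
Conditional on $\mathcal{P}_j^-$ (on $E_j$), the argument $\langle \Delta, X\rangle + \delta_0$ equals $\pm \Delta_j + \bar{c}$ for $X_j = \pm 1$, where $\bar{c} = \langle \Delta_{-j}, X_{-j}\rangle + \delta_0$ is deterministic. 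From the identity $(\Delta_j + \bar{c})^2 + (-\Delta_j + \bar{c})^2 = 2(\Delta_j^2 + \bar{c}^2) \geq 2\Delta_j^2$, a short case analysis (on whether either $|\pm\Delta_j + \bar{c}| \geq 1/2$) gives $\min\{1, 4(\Delta_j + \bar{c})^2\} + \min\{1, 4(-\Delta_j + \bar{c})^2\} \geq \min\{1, 8\Delta_j^2\}$. Combining this with the likelihood-ratio lower bound on $\Pr(X_j = \pm 1 \mid \mathcal{P}_j^-)$ shows that the conditional expectation of the loss gap given $\mathcal{P}_j^-$ is at least $c\cdot \exp(-O(C_{\ref{prop:sufficiecy_lb}}\lambda))\min\{1, 8\Delta_j^2\}$ for a universal constant $c > 0$. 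Integrating over $\mathcal{P}_j^-$ and using $\Pr(E_j \mid \mathcal{F}_0) \geq \delta_{\ref{prop:sufficiecy_lb}}$ yields the claimed inequality.
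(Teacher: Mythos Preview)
Your proof is correct and follows essentially the same strategy as the paper: condition on a partial path revealing everything except the value of the last update to site $j$, bound the posterior likelihood ratio of $X_{t^\star,j}$ by summing the $\ell_1$-influence of spin $j$ on each subsequent block update via \Cref{lem:ratios}, and then apply the deterministic anticoncentration step (\Cref{fact:sigmoid_lb} plus the inequality $\min\{1,a^2\}+\min\{1,b^2\}\geq \min\{1,(a-b)^2/2\}$). The one organisational difference is that the paper splits into two cases according to whether $|S_{\tau_1}|=1$ or $|S_{\tau_1}|>1$, and in the latter case uses the deferred-decisions decomposition so that the path $\mathcal{P}$ stops just before $X_{\tau_1,n}$ is sampled; you instead include the full block $X_{\tau_1,S_{\tau_1}}$ (and hence $X_{\tau_1,n}$) in $\mathcal{P}_j^-$, which is harmless since the squared-sigmoid gap does not depend on $X_{\tau_1,n}$ and the likelihood-ratio contribution at $t=\tau_1$ is already absorbed by your $\sum_{k\in S_t}|A_{jk}|$ term. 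This buys you a slightly cleaner, case-free presentation at no cost.
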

\begin{proof}
    Fix $j\neq n$ and let $\mathcal{E}$ denote the good event of the proposition statement. For technical reasons, we first modify the sequence of updates depending on the size of $S_{\tau_1}$, the first block where $n$ is updated:
    \begin{enumerate}
        \item \textbf{Case 1}: $S_{\tau_1}=\{n\}$. In this case, we define the path of updates $\mathcal{P}$ to be
    \begin{equation*}
        \mathcal{P}=(X_0,(S_1,X_1),\ldots,(S_{\tau_1-1},X_{\tau_1-1})),
    \end{equation*}

    \item \textbf{Case 2}: $\vert S_{\tau_1}\vert>1$. In this case, we define the path by
    \begin{equation*}
        \mathcal{P}=(X_0,(S_1,X_1),\ldots,(S_{\tau_1-1},X_{\tau_1-1}),(S'_{\tau_1}, X'_{\tau_1}))
    \end{equation*}
    where $S'_{\tau_1}=S_{\tau_1}\setminus \{n\}$ denotes the nodes in $S_{\tau_1}$ other than $n$, and $X'_{\tau_1}$ applies the update of block $S_{\tau_1}$ only to the coordinates outside of $\{n\}$ (so in particular, site $n$ does not update in this step). We then define $X_{\tau_1,-n}=X'_{\tau_1,-n}$. 
    \end{enumerate}

    The key point is that in both cases, the law of $X_{\tau_1,n}$ conditional on $\mathcal{P}$ and $\mathcal{F}_0$ is the same as the law of $X_{\tau_1,n}$ conditional on $X_{\tau_1,-n}$; however, in \textbf{Case 2}, the law of $X'_{\tau_1,S'_{\tau_1}}$ at time $\tau_1$ is taken conditional on $X_{\tau_1-1,-S_{\tau_1}}$ (as opposed to $X_{\tau_1-1,-S'_{\tau_1}}$, as holds for all other block updates). This minor difference accounts for the fact that in \textbf{Case 1}, the last update $S_{\tau_1-1}$ is taken conditional also on the initial value at node $n$, while in \textbf{Case 2}, the last update on $S'_{\tau_1}$ does not condition on node $n$ as it is also updated at the same time.
    
    Now, let $\mathcal{P}'$ denote the sequence of updates in $\mathcal{P}$ where the \emph{last} update of site $j$ (if it exists) is replaced by the symbol ``*''.  Notice that the law of $\mathcal{P}$ naturally induces a law on $\mathcal{P}'$ and that the event $\mathcal{E}$ is measurable with respect to $\mathcal{P}'$ since the first condition is determined by the existence of ``*'' and the second does not depend on the value of node $j$. Moreover, note that $X=X_{\tau_1,-n}$ is determined by $\mathcal{P}'$ and the value of $X_j$.

    For convenience of notation, for a configuration $X\in \{-1,1\}^{n-1}$ and $(\bm{w},h)\in \mathbb{R}^{n-1}\times \mathbb{R}$, let us define
    \begin{equation*}
        X(\bm{w},h)\triangleq \sigma(2(\langle \bm{w},X\rangle + h)).
    \end{equation*}
    
    We therefore have the lower bound
    \begin{align*}
        \mathbb{E}_{Z}&\left[\left(\sigma\left(2(\langle \bm{w},X\rangle+h)\right)-\sigma\left(2(\langle \bm{w}^*,X\rangle+h^*)\right)\right)^2\bigg\vert \mathcal{F}_0\right]= \mathbb{E}_{\mathcal{P}'}\left[\mathbb{E}_{X_{j}}\left[\left(X(\bm{w},h)-X(\bm{w}^*,h^*)\right)^2\bigg\vert \mathcal{P}'\right]\bigg\vert \mathcal{F}_0\right]\\
        &= \mathbb{E}_{\mathcal{P}'}[\underbrace{\Pr(X_j=1\vert \mathcal{P}')(X^{j,+}(\bm{w},h)-X^{j,+}(\bm{w}^*,h^*))^2+\Pr(X_j=-1\vert \mathcal{P}')(X^{j,-}(\bm{w},h)-X^{j,-}(\bm{w}^*,h^*))^2}_{=A(\mathcal{P}')}\vert \mathcal{F}_0]\\
        &\geq \mathbb{E}_{\mathcal{P}'}[A(\mathcal{P}')\mathbf{1}[\mathcal{E}]\vert \mathcal{F}_0].
    \end{align*}
    Here, $\Pr(\cdot\vert \mathcal{P}')$ is conditional on $\mathcal{F}_0$ as well, though we omit this for convenience. The penultimate line follows since $X$ is determined by $X_j$ and $\mathcal{P}'$ (given the initial configuration $X_0$) by construction. 

    We claim that on the event $\mathcal{E}$, we have for $\varepsilon\in \{-1,1\}$,
    \begin{equation}
    \label{eq:prob_lb}
        \Pr(X_j=\varepsilon \vert \mathcal{P}')\geq \exp(-O(C_{\ref{prop:sufficiecy_lb}}\lambda)).
    \end{equation}
    The key point is that by the definition of $\mathcal{E}$, it will hold that the likelihood ratio of $X_j=\varepsilon$ given $\mathcal{P}'$ will be bounded above and below; in other words, for paths $\mathcal{P}'\in \mathcal{E}$, the posterior distribution on the true spin at site $j$ will remain somewhat unbiased. 
    
    It suffices to show that on $\mathcal{E}$, we have
    \begin{equation*}
        \exp(-O(C_{\ref{prop:sufficiecy_lb}}\lambda))\leq \frac{\Pr(X_j=1 \vert \mathcal{P}')}{\Pr(X_j=-1 \vert \mathcal{P}')}\leq \exp(O(C_{\ref{prop:sufficiecy_lb}}\lambda)).
    \end{equation*}
    As our argument will be symmetric in $\pm 1$, it will suffice to establish the upper bound. First note that by Bayes rule,
        \begin{equation}
        \label{eq:path_bayes}
\frac{\Pr(X_j=+1\vert \mathcal{P}')}{\Pr(X_j=-1\vert \mathcal{P}')}=\frac{\Pr(\mathcal{P}^{+1})}{\Pr(\mathcal{P}^{-1})}
\end{equation}
where $\mathcal{P}^{\varepsilon}$ is the path of updates given by $\mathcal{P}'$ where the last update of $j$ (which was replaced by the symbol ``*'' on $\mathcal{E}$) is substituted with the value $\varepsilon$.

For a path $\mathcal{P}'\in \mathcal{E}$, let $\tau$ denote the last update time of site $j$ before $\tau_1$ (which exists by definition of $\mathcal{E}$). For \textbf{Case 1}, we calculate:
\begin{align*}
\Pr(\mathcal{P}^{\varepsilon})&=\left(\prod_{t=1}^{\tau_1} \Pr(S_t\vert \mathcal{F}_0, S_1,\ldots,S_{t-1})\right)\left(\prod_{t=1}^{\tau-1}\mu_{A,\bm{h}}(X_{t,S_t}\vert X_{t-1,-S_{t}})\right)\\
    &\times \mu_{A,\bm{h}}(X_{\tau,S_{\tau}}^{j,\varepsilon}\vert X_{\tau-1,-S_{\tau}})\times \left(\prod_{t=\tau+1}^{\tau_1-1} \mu_{A,\bm{h}}(X_{t,S_t}\vert X_{t-1,-S_{t}}^{j,\varepsilon})\right)
\end{align*}
Here, we factorize the probability of the path the choice of blocks and of the actual spin updates using the  independent of the actual updates conditioned on the previous blocks by \Cref{defn:sbd}.

For \textbf{Case 2}, if $j\not\in S_{\tau_1}$ we can calculate in the analogous way:
\begin{align*}
\Pr(\mathcal{P}^{\varepsilon})&=\left(\prod_{t=1}^{\tau_1} \Pr(S_t\vert \mathcal{F}_0, S_1,\ldots,S_{t-1})\right)\left(\prod_{t=1}^{\tau-1}\mu_{A,\bm{h}}(X_{t,S_t}\vert X_{t-1,-S_{t}})\right)\\
    &\times \mu_{A,\bm{h}}(X_{\tau,S_{\tau}}^{j,\varepsilon}\vert X_{\tau-1,-S_{\tau}})\times \left(\prod_{t=\tau+1}^{\tau_1-1} \mu_{A,\bm{h}}(X_{t,S_t}\vert X_{t-1,-S_{t}}^{j,\varepsilon})\right)\times \mu_{A,\bm{h}}(X_{\tau_1,S'_{\tau_1}} \vert X^{j,\varepsilon}_{\tau_1-1,-S_{\tau_1}}).
\end{align*}
In the event that $j\in S_{\tau_1}$, the same factorization holds omitting the last two terms, and replacing $\mu_{A,\bm{h}}(X_{\tau,S_{\tau}}^{j,\varepsilon}\vert X_{\tau-1,-S_{\tau}})$ with $\mu_{A,\bm{h}}(X_{\tau,S'_{\tau}}^{j,\varepsilon}\vert X_{\tau-1,-S_{\tau}})$ (i.e. updating only $S'_{\tau}=S'_{\tau_1}=S_{\tau_1}\setminus \{n\}$).

It thus follows that in \textbf{Case 1}, on the event $\mathcal{E}$, we have by the above display and \Cref{eq:path_bayes},
\begin{equation*}
    \frac{\Pr(X_j=1 \vert \mathcal{P}')}{\Pr(X_j=-1 \vert \mathcal{P}')}=\frac{\mu_{A,\bm{h}}(X_{\tau,S_{\tau}}^{j,+}\vert X_{\tau-1,-S_{\tau}})\prod_{t=\tau+1}^{\tau_1-1} \mu_{A,\bm{h}}(X_{t,S_t}\vert X_{t-1,-S_{\tau}}^{j,+})}{\mu_{A,\bm{h}}(X_{\tau,S_{\tau}}^{j,-}\vert X_{t-1,-S_{t}})\prod_{t=\tau+1}^{\tau_1-1} \mu_{A,\bm{h}}(X_{t,S_t}\vert X_{t-1,-S_{t}}^{j,-})},
\end{equation*}
while in \textbf{Case 2}, we have
\begin{equation*}
    \frac{\Pr(X_j=1 \vert \mathcal{P}')}{\Pr(X_j=-1 \vert \mathcal{P}')}=\frac{\mu_{A,\bm{h}}(X_{\tau,S_{\tau}}^{j,+}\vert X_{\tau-1,-S_{\tau}})\prod_{t=\tau+1}^{\tau_1} \mu_{A,\bm{h}}(X_{t,S_t}\vert X_{t-1,-S_{\tau}}^{j,+})\mu_{A,\bm{h}}(X_{\tau_1,S'_{\tau_1}} \vert X^{j,+}_{\tau_1-1,-S_{\tau_1}})}{\mu_{A,\bm{h}}(X_{\tau,S_{\tau}}^{j,-}\vert X_{t-1,-S_{t}})\prod_{t=\tau+1}^{\tau_1-1} \mu_{A,\bm{h}}(X_{t,S_t}\vert X_{t-1,-S_{t}}^{j,-})\mu_{A,\bm{h}}(X_{\tau_1,S'_{\tau_1}} \vert X^{j,-}_{\tau_1-1,-S_{\tau_1}})},
\end{equation*}
where again, if $j\in S_{\tau_1}$, we omit the last two products and make the same replacement of $S_{\tau}$ with $S'_{\tau}$ in the first ratio.

In both cases, applying \Cref{lem:ratios} shows that the first product ratio in both cases is bounded by $\exp(4\lambda)$ (to see this, apply Bayes rule to reduce this to calculating the ratio of the probability of spins at $j$ conditional on all other sites and apply the lemma). Similarly, \Cref{lem:ratios} implies that for $\tau+1\leq t\leq\tau_1-1$,
\begin{equation*}
    \frac{\mu_{A,\bm{h}}(X_{t,S_t}\vert X_{t-1,-S_{t}}^{j,+})}{\mu_{A,\bm{h}}(X_{t,S_t}\vert X_{t-1,-S_{t}}^{j,-})}\leq \exp\left(4\sum_{k\in S_t} \vert A_{jk}\vert\right),
\end{equation*}
and
\begin{equation*}
    \frac{\mu_{A,\bm{h}}(X_{\tau_1,S'_{\tau_1}} \vert X^{j,+}_{\tau_1-1,-S_{\tau_1}})}{\mu_{A,\bm{h}}(X_{\tau_1,S'_{\tau_1}} \vert X^{j,-}_{\tau_1-1,-S_{\tau_1}})}\leq \exp\left(4\sum_{k\in S_{\tau_1}} \vert A_{jk}\vert\right)
\end{equation*}
Thus, in either case, on the event $\mathcal{E}$, multiplying out all the products and using the assumption on the $N_k$ implies that 
\begin{equation*}
    \frac{\Pr(X_j=1 \vert \mathcal{P}')}{\Pr(X_j=-1 \vert \mathcal{P}')}\leq \exp\left(4\left(\lambda+\sum_{k\neq j} N_k \vert A_{jk}\vert\right)\right)\leq \exp(4(C_{\ref{prop:sufficiecy_lb}}+1)\lambda)\leq \exp(8C_{\ref{prop:sufficiecy_lb}}\lambda),
\end{equation*}
where we use the assumption $C_{\ref{prop:sufficiecy_lb}}\geq 1$.
 By our reduction, this verifies the lower bound \Cref{eq:prob_lb}. It follows that
 \begin{align*}
     \mathbb{E}_{\mathcal{P}'}[A(\mathcal{P}')\mathbf{1}[\mathcal{E}]\vert \mathcal{F}_0]&\geq \exp(-O(C_{\ref{prop:sufficiecy_lb}}\lambda))\mathbb{E}_{\mathcal{P}'}[\mathbf{1}(\mathcal{E})[(X^{j,+}(\bm{w},h)-X^{j,+}(\bm{w}^*,h^*))^2\\
     &+(X^{j,-}(\bm{w},h)-X^{j,-}(\bm{w}^*,h^*))^2]\vert \mathcal{F}_0].
 \end{align*}
 At this point, the remainder of the argument is entirely deterministic and follows the calculation of \cite{DBLP:conf/nips/WuSD19}. By \Cref{fact:sigmoid_lb}, we have
 \begin{equation*}
     (X^{j,\pm}(\bm{w},h)-X^{j,\pm}(\bm{w}^*,h^*))^2\geq \frac{\exp(-2(\langle \bm{w}^*,X^{j,\pm}\rangle+h^*))}{16e^2}\cdot \min\{1,4(\langle \bm{w}^*-\bm{w},X^{j,\pm}\rangle + (h^*-h))^2\}.
 \end{equation*}
 By the inequality (c.f. Equation (35) of \cite{DBLP:conf/nips/WuSD19}, which can be proved by cases)
 \begin{equation*}
     \min\{1,a^2\}+\min\{1,b^2\}\geq \min\{1,(a-b)^2/2\},
 \end{equation*}
 we further obtain
 \begin{align*}
     (X^{j,+}(\bm{w},h)-X^{j,+}(\bm{w}^*,h^*))^2&+(X^{j,-}(\bm{w},h)-X^{j,-}(\bm{w}^*,h^*))^2\\
     &\geq \frac{\exp(-2\lambda)}{16e^2}\min\{1,8(w^*_j-w_j)^2\},
 \end{align*}
 where we use the fact $X^{j,+}-X^{j,-}=2\bm{e}_j$, the $j$th standard basis vector. Therefore, we obtain a lower bound 
 \begin{align*}
     \mathbb{E}_{\mathcal{P}'}[A(\mathcal{P}')\mathbf{1}[\mathcal{E}]\vert \mathcal{F}_0]&\geq \exp(-O(C_{\ref{prop:sufficiecy_lb}}\lambda))\min\{1,8(w_j-w_j^*)^2\}\mathbb{E}_{\mathcal{P}'}[\mathbf{1}[\mathcal{E}]\vert \mathcal{F}_0]\\
     &\geq \delta_{\ref{prop:sufficiecy_lb}}\exp(-O(C_{\ref{prop:sufficiecy_lb}}\lambda))\min\{1,8(w_j-w_j^*)^2\},
 \end{align*}
 where the last inequality is the assumed lower bound on the probability of the good event. This completes the proof.
\end{proof}

It is quite simple to show that \Cref{prop:sufficiecy_lb} holds for a wide range of natural dyamics:
\begin{proposition}
\label{prop:goodblocks}
    For any symmetric block dynamics and for round robin dynamics, the conditions of \Cref{prop:sufficiecy_lb} hold with parameters $\delta_{\ref{prop:sufficiecy_lb}}=1/4$ and $C_{\ref{prop:sufficiecy_lb}}=8$.
\end{proposition}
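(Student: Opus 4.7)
The plan is to verify conditions~1 and~2 of \Cref{prop:sufficiecy_lb} for each class of dynamics separately. I expect round robin to be essentially deterministic and hence trivial; the symmetric block case will rely on permutation invariance of $\mathcal{Q}$ combined with a Markov-type bound. The only subtle step is event~1 in the symmetric case, which I will reduce to an exchangeability argument between the first hitting times of $j$ and $n$.

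For round robin dynamics, I would note that conditioning on the history ending in an update of node $n$ fixes the remainder of the schedule: cyclicity forces $\tau_1 = n$, and each of the other $n-1$ coordinates is updated exactly once in the window $[1, \tau_1]$. Hence $N_k = 1$ surely for every $k$; event~1 is immediate, and event~2 reduces to $\sum_{k \neq j} |A_{jk}| \leq \lambda \leq 8\lambda$ by the $\ell_1$-width hypothesis, so both events hold with probability one.

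For symmetric block dynamics, the blocks $(S_t)_{t \geq 1}$ are i.i.d.\ from $\mathcal{Q}$ and independent of $\mathcal{F}_0$, so the conditioning on the past is irrelevant for the future blocks. Fix $j \neq n$. For event~1, I would use permutation invariance of $\mathcal{Q}$ to argue that swapping coordinates $j \leftrightarrow n$ leaves $(S_t)_{t\geq 1}$ equal in law, so the first hitting times $\tau_j := \inf\{t \geq 1 : j \in S_t\}$ and $\tau_n = \tau_1$ are exchangeable; hence $\Pr(\tau_j \leq \tau_n) \geq 1/2$. For event~2, set $p := \Pr_{S \sim \mathcal{Q}}(n \in S)$, which by symmetry also equals $\Pr_{S \sim \mathcal{Q}}(k \in S)$ for any other coordinate $k$. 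Since $\mathbf{1}[t \leq \tau_1]$ depends only on $S_1, \ldots, S_{t-1}$, it is independent of $S_t$, yielding
\[
\mathbb{E}[N_k] = \sum_{t \geq 1} \Pr(k \in S_t)\, \Pr(t \leq \tau_1) = p \cdot \mathbb{E}[\tau_1] = 1
\]
for $k \neq n$ (since $\tau_1$ is geometric with parameter $p$), and $N_n = 1$ surely. Thus $\mathbb{E}\bigl[\sum_{k \neq j} |A_{jk}| N_k\bigr] \leq \lambda$, and Markov's inequality gives $\Pr\bigl(\sum_{k \neq j} |A_{jk}| N_k > 8\lambda\bigr) \leq 1/8$. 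A union bound with the event~1 bound of $1/2$ produces simultaneous occurrence with probability at least $3/8 > 1/4$, establishing that $\delta_{\ref{prop:sufficiecy_lb}} = 1/4$ and $C_{\ref{prop:sufficiecy_lb}} = 8$ suffice.
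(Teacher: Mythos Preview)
Your proposal is correct and follows essentially the same approach as the paper: exchangeability of first hitting times for event~1, a Markov bound on $\sum_{k\neq j}|A_{jk}|N_k$ for event~2, and a union bound to combine them; the round robin case is handled identically. Your computation $\mathbb{E}[N_k]=p\cdot\mathbb{E}[\tau_1]=1$ is in fact slightly sharper than the paper's bound $\mathbb{E}[N_k]\leq 2$ (obtained there via stochastic domination by a geometric with parameter $1/2$), which is why you land on $3/8$ rather than the paper's tight $1/4$, but this is a cosmetic improvement within the same argument.
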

\begin{proof}
    For symmetric block dynamics, it is easy to see that the first condition holds with probability at least $1/2$ by the assumed permutation invariance, since the permutation that transposes $j$ and $n$ weakly reverses which site was chosen for updating first while preserving the law of the updated sites. For round robin dynamics, the first condition trivially holds with probability $1$ since after an update of site $n$, each site updates exactly once before site $n$ is updated again.

    Meanwhile for any node $k<n$ and for any symmetric block dynamics, we have $\mathbb{E}[N_k]\leq 2$ since the number of times a node $k$ is updated before $n$ is updated is stochastically dominated by a geometric random variable with probability $1/2$ (by the same symmetry argument), so in particular,
    \begin{equation*}
        \mathbb{E}\left[\sum_{k\neq i,j} \vert A_{j,k}\vert N_k\right]\leq 2\sum_{k\neq i,j} \vert A_{j,k}\vert\leq 2\lambda
    \end{equation*}
    By Markov's inequality, it thus holds with probability at least $3/4$ that $\sum_{k\neq i,j} \vert A_{j,k}\vert N_k\leq 8\lambda$. A union bound then implies that both events hold with probability at least $1/4$. Again, for round robin dynamics, the condition actually holds for $C=1$ trivially.
\end{proof}

We can finally complete the proof of \Cref{thm:learning_block} as a simple consequence:
\begin{proof}[Proof of \Cref{thm:learning_block}]
It suffices to verify the conditions of \Cref{thm:wsd} when solving the regression problem \Cref{eq:empiricalopt} for each node separately and then setting parameters appropriately. By the tail bound of \Cref{thm:hpbounds} and the two-sided bounds of \Cref{lem:opt_azuma}, the sample complexity function for the required inequalities in the first condition \Cref{eq:convergence,eq:opt_deviation} satisfies 
\begin{equation*}
T(\varepsilon,\delta)\leq O\left(\frac{\lambda^2\log(n/\delta)\log^3\left(\lambda^2\log(n/\delta)/\varepsilon^2\right)}{\varepsilon^2}\right)=\tilde{O}\left(\frac{\lambda^2\log(n/\delta)}{\varepsilon^2}\right)
\end{equation*} 
for the number of samples generated for each node. We also obtain the second condition \Cref{eq:inf_lb} for each of these dynamics with $c_{\ref{thm:wsd}}=\exp(-O(\lambda))$ and $\delta_{\ref{thm:wsd}}=1/4$ by applying \Cref{prop:sufficiecy_lb} conditioned on the snapshots obtained at each update time of a given node, where the assumptions for the stated dynamics are verified by \Cref{prop:goodblocks}. Applying \Cref{thm:wsd} with additive accuracy parameter $\varepsilon$ and failure probability $\delta/n$ for each node and then taking a union bound yields the desired conclusion.
\end{proof}

\section{Learning the Sherrington-Kirkpatrick Model at High-Temperature}
\label{sec:sk}

In this section, we show how a similar analysis leads to new efficient learning guarantees in even in for the well-studied setting of independent samples. In particular, we will derive as a result of a simple but general \Cref{thm:ising_cov} efficient algorithms for learning the Sherrington-Kirkpatrick (SK) model in essentially the entire (currently provable) ``high-temperature'' regime where the model is known to be replica-symmetric. To our knowledge, the only previous guarantee for learning the SK model with $\mathsf{poly}(n)$ samples was obtained by Anari, Jain, Koehler, Pham, and Vuong \cite{anari_universality} for inverse temperature $\beta<1/4$. As described before, their analysis relies on difficult-to-prove properties of Gibbs measure using techniques that inherently (at least in general) cannot break this barrier. On the other hand, without external field, the high-temperature regime of the SK model extends to $\beta<1$, at least at the level of free energy and replica-symmetry \cite{talagrand2011mean}.

Nonetheless, our analysis bypasses these difficulties and entirely derives from the following simple observation: the performance of logistic regression should depend on the \emph{typical} properties of samples, not worst-case bounds. Moreover, the typical properties of the samples should rely on global properties of the measure, rather than local to global principles like approximate tensorization of entropy. We show that this is indeed the case: operator norm bounds on the covariance matrix are already sufficient to learning these models in total variation with no external field. Such bounds have very recently been obtained \cite{alaoui2022bounds,brennecke2022two,brennecke2023operator} throughout the high-temperature regime. For models with suitable external field, we provide a somewhat more complex argument relying on the so-called TAP equations (or the analysis thereof). In particular, we show how well-studied and simpler structural properties in the entire high-temperature regime are already sufficient to establishing efficient learning via logistic regression.

We now carry out this plan in the result below, which makes precise the idea that the typical behavior of spins at stationarity suffices for learning from i.i.d. samples:

\begin{thm}
\label{thm:ising_cov}
    Let $A\in \mathbb{R}^{n\times n}$ be an interaction matrix and $\bm{h}^*\in \mathbb{R}^n$ the external fields for an Ising model. Suppose there is a constant $C_{\ref{thm:ising_cov}}\geq 1$ such that:
    \begin{enumerate}
        \item Each fixed row $A_j$ satisfies $\|A_j\|_{\infty}\leq C_{\ref{thm:ising_cov}}$.
        \item For each fixed row $A_j$ for $j\in [n]$, it holds with probability at least $3/4$ over $X\sim \mu_{A,\bm{h}}$ that $\vert X^T A_j+h^*_j\vert\leq C_{\ref{thm:ising_cov}}$.\footnote{Note that since $A_{j,j}=0$, this event depends only on spins outside $j$.}
    \end{enumerate}
    
    For fixed $i\in [n]$, let $w^* = A_i$, the $i$th row of $A$, and $h^*=h^*_i$. Then it holds that for any $(\bm{w},h)\in \mathbb{R}^{n-1}\times \mathbb{R}$ and any $j\neq i$ that
    \begin{equation*}
        \mathbb{E}_{X}\left[\left(\sigma\left(2(\langle \bm{w},X\rangle+h)\right)-\sigma\left(2(\langle \bm{w}^*,X\rangle+h^*)\right)\right)^2\right]\geq \exp(-O(C_{\ref{thm:ising_cov}}))\cdot \min\{1,8(w_j-w^*_j)^2\}.
    \end{equation*}
    Here, we slightly abuse notation and also consider $X\in \{-1,1\}^{[n]\setminus \{i\}}$ since the above expectation does not depend on $X_i$. Furthermore, for any $(\bm{w},h)\in \mathbb{R}^{n-1}\times \mathbb{R}$ such that $\|\bm{w}-\bm{w}^*\|_\infty\leq \vert h-h^*\vert/2n$, it holds that
    \begin{equation*}
        \mathbb{E}_{X}\left[\left(\sigma\left(2(\langle \bm{w},X\rangle+h)\right)-\sigma\left(2(\langle \bm{w}^*,X\rangle+h^*)\right)\right)^2\right]\geq \exp(-O(C_{\ref{thm:ising_cov}}))\cdot \min\{1,(h-h^*)^2\}
    \end{equation*}
\end{thm}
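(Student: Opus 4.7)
The plan is to avoid the conditioning machinery used for the first part by exploiting the gap hypothesis $\|\bm{w}-\bm{w}^*\|_\infty \le |h-h^*|/(2n)$, which forces the linear deviation $\langle \bm{w}-\bm{w}^*, X\rangle$ to be dominated by the constant deviation $h-h^*$ \emph{simultaneously} for every $X \in \{-1,1\}^{n-1}$. As a result, a pointwise application of \Cref{fact:sigmoid_lb} on the single event where $|\langle \bm{w}^*, X\rangle + h^*|$ is bounded suffices; there is no need to reason about the conditional law of any spin or to combine the $X_j = \pm 1$ contributions as in the first part.

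Concretely, I would first apply \Cref{fact:sigmoid_lb} in the symmetric form $|\sigma(x)-\sigma(y)| \ge \frac{e^{-|y|}}{4e}\min\{1,|x-y|\}$ with $x = 2(\langle \bm{w}, X\rangle + h)$ and $y = 2(\langle \bm{w}^*, X\rangle + h^*)$, and square to obtain the pointwise lower bound
\begin{equation*}
(\sigma(x) - \sigma(y))^2 \ge \frac{\exp(-4|\langle \bm{w}^*, X\rangle + h^*|)}{16 e^2}\,\min\bigl\{1,\, 4(\langle \bm{w}-\bm{w}^*, X\rangle + (h-h^*))^2\bigr\}.
\end{equation*}
Second, I would restrict to the event $\mathcal{E}_i := \{X : |\langle \bm{w}^*, X\rangle + h^*| \le C_{\ref{thm:ising_cov}}\}$, which by hypothesis~(2) applied with index $i$ has $\mu$-probability at least $3/4$ and pins the prefactor to $\exp(-O(C_{\ref{thm:ising_cov}}))$. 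Third, for the $\min$ factor, $\ell_1$--$\ell_\infty$ duality gives $|\langle \bm{w}-\bm{w}^*, X\rangle| \le (n-1)\|\bm{w}-\bm{w}^*\|_\infty \le |h-h^*|/2$ uniformly over $X \in \{-1,1\}^{n-1}$ by the gap hypothesis, so the reverse triangle inequality forces $|\langle \bm{w}-\bm{w}^*, X\rangle + (h-h^*)| \ge |h-h^*|/2$; a one-line case split on whether $(h-h^*)^2 \le 1$ then upgrades this to $\min\{1, 4(\langle \bm{w}-\bm{w}^*, X\rangle + (h-h^*))^2\} \ge \min\{1, (h-h^*)^2\}$, valid for every $X$.

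Combining these pointwise estimates and integrating over $\mathcal{E}_i$ (absorbing the $3/4$ into the implicit constant) yields the claimed bound. I do not anticipate a real obstacle here: the only slightly delicate point is that the denominator in the gap hypothesis is $2n$ rather than $2(n-1)$, which is precisely what is needed to make the crude $\ell_1$--$\ell_\infty$ estimate collapse to $|h-h^*|/2$ with room to spare for every $n \ge 1$, so that the Klivans--Meka $\min$ retains the $(h-h^*)^2$ scaling uniformly in $X$.
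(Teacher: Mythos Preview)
Your argument for the second conclusion is correct and is essentially the same as the paper's: restrict to the event where $|\langle \bm{w}^*,X\rangle+h^*|$ is bounded, apply \Cref{fact:sigmoid_lb} pointwise, and use $|\langle \bm{w}-\bm{w}^*,X\rangle|\le (n-1)\|\bm{w}-\bm{w}^*\|_\infty\le |h-h^*|/2$ so that $(x+y)^2\ge y^2/4$. The only cosmetic difference is that the paper reuses the event $\mathcal{E}$ already constructed for the first part (which also controls row $A_j$ and thus has probability $\ge 1/2$), whereas you observe that for this inequality the condition on row $A_j$ is irrelevant and the single event $\mathcal{E}_i$ with probability $\ge 3/4$ suffices; either way the constant is absorbed into the $\exp(-O(C_{\ref{thm:ising_cov}}))$.
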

\begin{proof}
    The conclusion is symmetric in the index of the coordinate, so we consider just $i=n$. Fix a site $j<n$, set $\bm{w}^*=A_n$ and $h^*=h_n$, and let $A_j$ be the row of interactions with site $j$. We turn to lower bounding the quantity:
    \begin{equation*}
\mathbb{E}_{X}\left[\left(\sigma\left(2(\langle \bm{w},X\rangle+h)\right)-\sigma\left(2(\langle \bm{w}^*,X\rangle+h^*)\right)\right)^2\right],
    \end{equation*}
    where $X$ is a sample from $\mu_{A}$ restricted to the coordinates outside $n$ in terms of the deviation of $w_j$ and $w^*_j$. By the second assumption above and a union bound, it holds with probability at least $1/2$ over $X\sim \mu_{A}$ that 
    \begin{gather*}
        \max\{\vert X^T A_j+h^*\vert,\vert X^T w^*+h^*\vert\}\leq C_{\ref{thm:ising_cov}}.
    \end{gather*} 
    By \Cref{eq:glauber}, this implies that with probability at least $1/2$ over $X_{-\{j,n\}}\sim \mu_{A}\vert_{-\{j,n\}}$ (i.e. the marginal over the coordinates outside $\{j,n\}$), we have for both $\varepsilon\in \{-1,1\}$ 
    \begin{align*}
        \Pr(X_j=\varepsilon\vert X_{-j})&\geq \min\{\Pr(X_j=\varepsilon\vert X_{-j},X_n=1),\Pr(X_j=\varepsilon\vert X_{-j},X_n=-1)\}\\
        &\geq \exp(-O(\vert \langle A_j,X\rangle+h_j\vert +\vert A_{jn}\vert))\\
        &\geq \exp(-O(C_{\ref{thm:ising_cov}})),
    \end{align*}
    and 
    \begin{equation*}
        \vert X^T\bm{w}^*+h^*\vert\leq \vert X_{-j}^T\bm{w}^*_{-j}+h^*\vert+\vert X_j\bm{w}^*_j\vert\leq  2C_{\ref{thm:ising_cov}}.
    \end{equation*}

    Let $\mathcal{E}$ be the event over $X\sim \mu_A$ restricted to $X_{-j}$ that both these events hold. We thus have
    \begin{align*}
        \mathbb{E}_{X}\left[\left(\sigma\left(2(\langle \bm{w},X\rangle+h)\right)-\sigma\left(2(\langle \bm{w}^*,X\rangle+h^*)\right)\right)^2\right]&\geq 
        \mathbb{E}_{X}\left[\left(\sigma\left(2(\langle \bm{w},X\rangle+h)\right)-\sigma\left(2(\langle \bm{w}^*,X\rangle+h^*)\right)\right)^2\mathbf{1}(\mathcal{E})\right]\\
        &\geq \exp(-O(C_{\ref{thm:ising_cov}}))\min\{1,8(w_j-w_j^*)^2\}\mathbb{E}[\mathbf{1}(\mathcal{E})]\\
        &\geq \exp(-O(C_{\ref{thm:ising_cov}}))\min\{1,8(w_j-w_j^*)^2\}/2,
    \end{align*}
    by following the exact same deterministic argument as in \Cref{prop:sufficiecy_lb} and absorbing constants, using the fact that the good event has probability at least $1/2$. Here, we are using the fact that the argument of the sigmoid evaluated at $(\bm{w}^*,h^*)$ in the expression cannot be too large on $\mathcal{E}$ by definition, so we may apply \Cref{fact:sigmoid_lb} and pay at most $\exp(-O(C_{\ref{thm:ising_cov}}))$ in deriving the lower bound, and the conditional probability of $X_j=\varepsilon$ given the remaining coordinates outside $\{j,n\}$ is lower bounded by the same quantity on $\mathcal{E}$.

    The same computation shows the desired claim about $\vert h-h^*\vert$ when $\|\bm{w}-\bm{w}^*\|_{\infty}\leq  \vert h-h^*\vert/2n$: an even more direct argument using \Cref{fact:sigmoid_lb} shows that
    \begin{align*}
        \mathbb{E}_{X}&\left[\left(\sigma\left(2(\langle \bm{w},X\rangle+h)\right)-\sigma\left(2(\langle \bm{w}^*,X\rangle+h^*)\right)\right)^2\right]\geq 
        \mathbb{E}_{X}\left[\left(\sigma\left(2(\langle \bm{w},X\rangle+h)\right)-\sigma\left(2(\langle \bm{w}^*,X\rangle+h^*)\right)\right)^2\mathbf{1}(\mathcal{E})\right]\\
        &\geq \exp(-O(C_{\ref{thm:ising_cov}}))\mathbb{E}[\min\{1,4(\langle X,\bm{w}-\bm{w}^*\rangle+(h-h^*))^2\}\mathbf{1}(\mathcal{E})]\\
        &\geq \exp(-O(C_{\ref{thm:ising_cov}}))\mathbb{E}[\min\{1,4((h-h^*)/2)^2\}\mathbf{1}(\mathcal{E})]\\
        &=\exp(-O(C_{\ref{thm:ising_cov}}))\mathbb{E}[\min\{1,(h-h^*)^2\}\mathbf{1}(\mathcal{E})]\\
        &\geq \exp(-O(C_{\ref{thm:ising_cov}}))\min\{1,(h-h^*)^2\}/2.
    \end{align*}
    Here, we use the fact that if $\vert x\vert\leq \vert y\vert/2$, then $(x+y)^2\geq y^2/4$. This is precisely guaranteed by our assumption on the closeness of $\bm{w}$ and $\bm{w}^*$ in $\ell_{\infty}$ and Holder's inequality since $\|X\|_1=n-1$.
\end{proof}

We now show a simple condition that suffices to certify the condition in \Cref{thm:ising_cov}. For zero-mean Ising measures, we can easily obtain sufficient conditions for \Cref{thm:ising_cov} from operator norm bounds on the covariance matrix. To see this, we apply the following simple lemma:

\begin{lem}
\label{lem:covprob}
    Let $\mu$ be any zero-mean distribution over $\{-1,1\}^n$ and suppose that $\|\mathrm{Cov}(\mu)\|_{\mathsf{op}}\leq C$. Then for fixed vector $\bm{z}\in \mathbb{R}^n$, it holds with probability at least $3/4$ over $X\sim \mu$ that $\vert X^T\bm{z}\vert\leq 4\sqrt{C}\|\bm{z}\|_2$.
\end{lem}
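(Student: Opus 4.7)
The plan is to prove this via a direct application of Chebyshev's inequality to the scalar random variable $W := X^T\bm{z}$, using the covariance operator norm bound to control its variance. This is a standard second-moment argument and there is no real obstacle.

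First, I would observe that since $\mu$ is zero-mean, $\mathbb{E}_{X\sim \mu}[W] = \bm{z}^T \mathbb{E}_{\mu}[X] = 0$, and therefore
\begin{equation*}
\mathrm{Var}(W) = \mathbb{E}[W^2] = \bm{z}^T \mathbb{E}_{\mu}[XX^T] \bm{z} = \bm{z}^T \mathrm{Cov}(\mu) \bm{z},
\end{equation*}
where in the last step I use once more that $\mu$ is centered, so that the second moment matrix and the covariance matrix coincide. By the variational characterization of operator norm and the hypothesis $\|\mathrm{Cov}(\mu)\|_{\mathsf{op}}\leq C$, this yields $\mathrm{Var}(W) \leq C\|\bm{z}\|_2^2$.

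Second, I would apply Chebyshev's inequality (equivalently, Markov's inequality to $W^2$) with threshold $t = 4\sqrt{C}\|\bm{z}\|_2$:
\begin{equation*}
\Pr\left(|W| \geq 4\sqrt{C}\|\bm{z}\|_2\right) \;\leq\; \frac{\mathrm{Var}(W)}{16\, C\|\bm{z}\|_2^2} \;\leq\; \frac{C\|\bm{z}\|_2^2}{16\,C\|\bm{z}\|_2^2} \;=\; \frac{1}{16} \;\leq\; \frac{1}{4}.
\end{equation*}
Taking complements gives $\Pr(|X^T\bm{z}|\leq 4\sqrt{C}\|\bm{z}\|_2)\geq 3/4$, which is the claim. (Indeed the argument leaves considerable slack; one could replace $4$ by $2$.) To handle the degenerate case $\|\bm{z}\|_2 = 0$, the bound holds trivially since $X^T\bm{z} = 0$ almost surely.
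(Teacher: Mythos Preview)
Your proof is correct and follows essentially the same approach as the paper: bound the second moment of $X^T\bm{z}$ via the covariance operator norm and apply a Markov-type tail bound. The only cosmetic difference is that the paper first passes to $\mathbb{E}[|X^T\bm{z}|]$ via Jensen and then applies Markov's inequality (yielding tail probability $1/4$), whereas you apply Chebyshev directly to the second moment (yielding the tighter $1/16$); both are standard and equivalent for this purpose.
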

\begin{proof}
    By Jensen's inequality, the assumed covariance bound, and then Cauchy-Schwarz, we have
    \begin{equation*}
        \mathbb{E}_{X}[\vert \bm{z}^TX\vert]\leq \mathbb{E}[\bm{z}^TXX^T\bm{z}]^{1/2}=(\bm{z}^T\text{Cov}(\mu)\bm{z}^T)^{1/2}\leq \sqrt{C}\|\bm{z}\|_2.
    \end{equation*}
    By Markov's inequality, it follows that the probability over $X\sim \mu$ that $\vert \bm{z}^TX\vert\geq 4\sqrt{C}\|\bm{z}\|_2$ is at most $1/4$.
\end{proof}

\begin{corollary}
\label{cor:ising_cov}
    Let $A\in \mathbb{R}^{n,n}$ be an interaction matrix for an zero-field, Ising model with the following properties:
    \begin{enumerate}
        \item Each row of $A$ has $\ell_2$ norm at most $C_1$.
        \item There exists $C_2\geq 1$ such that $\|\mathrm{Cov}(\mu_A)\|_{\mathsf{op}}\leq C_2$.
    \end{enumerate}
    Then to learn each entry $A_{i,j}$ up to accuracy $\varepsilon$ with $1-\delta$ probability simultaneously, it suffices to run logistic regression as in \Cref{eq:empiricalopt} with constraint set $\mathcal{C}=\{\bm{w}:\mathbb{R}^{n-1}:\|w\|_1\leq C_1\sqrt{n}\}$ for each node with $T=\frac{C_1^2\exp(O(C_1\sqrt{C_2}))n\log (n/\delta)}{\varepsilon^4}$ samples.
\end{corollary}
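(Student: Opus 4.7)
The plan is to combine \Cref{lem:covprob}, \Cref{thm:ising_cov}, and \Cref{thm:wsd}: one checks the hypotheses of the latter at each node and then union bounds. Let me first verify that the constraint set $\mathcal{C}=\{\bm{w}\in \mathbb{R}^{n-1}:\|\bm{w}\|_1\leq C_1\sqrt{n}\}$ contains each true vector $A_i$: by Cauchy--Schwarz, $\|A_i\|_1 \leq \sqrt{n}\|A_i\|_2 \leq C_1\sqrt{n}$, so $A_i\in \mathcal{C}$. Similarly, $\|A_i\|_\infty \leq \|A_i\|_2 \leq C_1$, which handles the first hypothesis of \Cref{thm:ising_cov}. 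For the second hypothesis, observe that a zero-field Ising measure is symmetric under $\bm{x}\mapsto -\bm{x}$ and therefore zero-mean, so \Cref{lem:covprob} applies. Taking $\bm{z} = A_j$ yields $|X^T A_j| \leq 4C_1\sqrt{C_2}$ with probability at least $3/4$. Together, these checks let us invoke \Cref{thm:ising_cov} with parameter $C_{\ref{thm:ising_cov}} = O(C_1\sqrt{C_2})$, which furnishes the strong-convexity-type lower bound required by the second hypothesis of \Cref{thm:wsd} with $c_{\ref{thm:wsd}} = \exp(-O(C_1\sqrt{C_2}))$.

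Next, I would verify the uniform convergence condition of \Cref{thm:wsd}. Since the samples are i.i.d., we can apply \Cref{thm:hpbounds} (which is stated for arbitrary stochastic processes) with $d = n-1$ and width parameter $\lambda = C_1\sqrt{n}$ to get the one-sided supremum deviation \Cref{eq:convergence}, and \Cref{lem:opt_azuma} to get the two-sided deviation \Cref{eq:opt_deviation} at the true parameter. Both yield a sample complexity of $T(\varepsilon,\delta) = \tilde{O}(C_1^2 n \log(n/\delta)/\varepsilon^2)$ per node.

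Finally, applying \Cref{thm:wsd} with accuracy parameter $\varepsilon$ and failure probability $\delta/n$ for the regression problem at each node $i\in[n]$, the sample complexity becomes
\begin{equation*}
T \geq T(8c_{\ref{thm:wsd}}\varepsilon^2, \delta/n) = \tilde{O}\!\left(\frac{C_1^2 n \log(n/\delta)}{(c_{\ref{thm:wsd}}\varepsilon^2)^2}\right) = \tilde{O}\!\left(\frac{C_1^2 \exp(O(C_1\sqrt{C_2})) n \log(n/\delta)}{\varepsilon^4}\right),
\end{equation*}
matching the claimed bound. The theorem gives $\|\widehat{\bm{w}}_i - A_i\|_\infty \leq \varepsilon$ with probability at least $1-\delta/n$; a union bound over the $n$ regression problems and the subsequent symmetrization step (which preserves $\ell_\infty$ error) yields $\|\widehat{A}-A\|_\infty\leq \varepsilon$ with probability at least $1-\delta$.

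There is no serious obstacle here since the work has been front-loaded into \Cref{thm:ising_cov} and \Cref{lem:covprob}: the only mild subtlety is checking that the $\ell_1$-ball of radius $C_1\sqrt{n}$ is the right constraint set (small enough to keep the Rademacher complexity manageable but large enough to contain the truth), and tracking how the exponent $O(C_1\sqrt{C_2})$ emerges by combining the $\ell_2$ row bound with the operator-norm bound through \Cref{lem:covprob} rather than paying the much worse worst-case factor $\exp(O(C_1\sqrt{n}))$ that a naive application of the $\ell_1$-width argument would incur.
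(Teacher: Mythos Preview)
Your proposal is correct and follows essentially the same route as the paper: verify feasibility via Cauchy--Schwarz, feed the $\ell_2$ row bound and covariance bound through \Cref{lem:covprob} into \Cref{thm:ising_cov} to get the strong-convexity constant $c_{\ref{thm:wsd}}=\exp(-O(C_1\sqrt{C_2}))$, establish uniform convergence, and conclude via \Cref{thm:wsd} with a union bound over nodes.

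The one small divergence is in the uniform-convergence step. You invoke \Cref{thm:hpbounds}, the sequential Rademacher bound, which is valid but picks up an extra $\log^{3}(T)$ factor (hence your $\tilde{O}$). The paper instead appeals directly to the standard i.i.d.\ Rademacher complexity bound (Theorem 26.12 of \cite{books/daglib/0033642}), which is available here precisely because the samples are independent, and this gives the clean $T=\frac{C_1^2\exp(O(C_1\sqrt{C_2}))n\log(n/\delta)}{\varepsilon^4}$ stated in the corollary without polylogarithmic overhead. Your argument is a correct proof of a marginally weaker statement; to match the corollary exactly, swap in the classical i.i.d.\ bound.
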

\begin{proof}
    As usual, we focus on recovering the row of interactions for each node $i\in [n]$ with probability at least $1-\delta/n$ and then taking a union bound. Let $\mathcal{C}=\{w:\mathbb{R}^{n-1}:\|w\|_1\leq C_1\sqrt{n}\}$. Note that by assumption, $\bm{w}^*=A_i$ is feasible by Cauchy-Schwarz. We again consider solving the logistic regression problem for each node $i$ of the form
    \begin{equation*}
        {\hat{\bm{w}}}\in \arg\min_{\bm{w}\in \mathcal{C}} \frac{1}{T}\sum_{t=1}^T \ell(2Y_t\langle \bm{w},X_t\rangle),
    \end{equation*}
    where we write $Y_t=Z_{t,i}$ and $X_t=Z_{t,-i}$ for independent samples $Z_1,\ldots,Z_T\sim \mu_A$. Standard uniform convergence bounds for the i.i.d. setting (see, e.g. Theorem 26.12 of \cite{books/daglib/0033642}) 
    imply that with probability at least $1-\delta$,
    \begin{equation*}
        \sup_{\bm{w}\in \mathcal{C}} \frac{1}{T}\sum_{t=1}^T \mathbb{E}_{Z_t}[\ell(2Y_t\langle \bm{w},X_t\rangle)]-\frac{1}{T}\sum_{t=1}^T \ell(2Y_t\langle \bm{w},X_t\rangle) \leq O\left(C_1\sqrt{\frac{n\ln(n/\delta)}{T}}\right).
    \end{equation*}
    Combined with the simpler uniform convergence bound \Cref{lem:opt_azuma} for the true interaction coefficients $\bm{w}^*$, we find that $T(\varepsilon,\delta)\leq O(C_1^2n\ln(n/\delta)/\varepsilon^2)$.

    Next, note that the above assumptions on the rows of $A$ and operator norm of the covariance, along with \Cref{lem:covprob}, certify the conditions of \Cref{thm:ising_cov} with parameter $4C_1\sqrt{C_2}\geq 4C_1$.\footnote{Note that $C_2\geq 1$ since the covariance matrix of a zero field Ising model has all ones on the diagonal, so the variational characterization of eigenvalues implies the inequality.} Putting these two statements together allows us to conclude via \Cref{thm:wsd}.
\end{proof}

\subsection{Zero External Field}

While \Cref{cor:ising_cov} is quite elementary, we show that it already implies efficient algorithms for learning the SK model using $\widetilde{O}_{\beta}(n^9)$ samples in the entire high-temperature regime $\beta<1$ with no external field. We write $\mu_{\beta,A}$ to denote the Sherrington-Kirkpatrick measure at inverse temperature $\beta\geq 0$ and $A\sim \mathsf{GOE}(n)$:

\begin{equation*}
    \mu_{\beta,A}(\bm{x})\propto \exp\left(\frac{\beta}{2}\bm{x}^TA\bm{x}\right).
\end{equation*}
In this section, we will use the following facts valid for any fixed $\beta<1$.

\begin{proposition}
\label{prop:skfacts}
Fix $\beta<1$. Then the following simultaneously holds for the Sherrington-Kirkpatrick measure $\mu_{\beta,A}$ where $A\sim \mathsf{GOE}(n)$ with probability $1-o(1)$:
\begin{enumerate}
    \item It holds that
    \begin{equation}
    \label{eq:covbound}
        \|\mathrm{Cov}(\mu_{\beta,A})\|_{\mathsf{op}}\leq \frac{2}{(1-\beta)^2}.
    \end{equation}
    
\item It holds that 
    \begin{equation}
    \label{eq:normbound}
        \max_{i\in [n]} \sqrt{\sum_{j\neq i} a^2_{ij}}\leq 2\beta .
    \end{equation}
    In particular, each row of $\beta\cdot A$ has $\ell_2$ norm at most $2$.
\end{enumerate}
\end{proposition}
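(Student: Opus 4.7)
The two claims are independent; the simultaneous statement follows from a union bound over the two events.

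For the covariance bound (\ref{eq:covbound}), the plan is not to prove it from scratch but rather to invoke the very recent spin-glass results of El Alaoui and Gaitonde \cite{alaoui2022bounds} and, independently, Brennecke, Schertzer, Xu, and Yau \cite{brennecke2022two,brennecke2023operator}. Both works establish that for any fixed $\beta<1$, the covariance matrix $\mathrm{Cov}(\mu_{\beta,A})$ has operator norm $O((1-\beta)^{-2})$ with probability $1-o(1)$ over $A \sim \mathsf{GOE}(n)$, which is exactly the form of the bound we need. The conceptual depth of the proposition lies entirely in these external references---they are precisely what licenses our general reduction from \Cref{cor:ising_cov} to apply in the SK setting in the first place.

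For the row-norm bound (\ref{eq:normbound}), the plan is a routine Gaussian concentration argument plus a union bound. Fix a row $i\in [n]$. The off-diagonal entries of $A$ satisfy $a_{ij} \sim \mathcal{N}(0, 1/n)$ independently for $j \neq i$, so the rescaled vector $\bm{g}^{(i)} \triangleq (\sqrt{n}\, a_{ij})_{j\neq i}\in \mathbb{R}^{n-1}$ consists of $n-1$ i.i.d. standard Gaussians. Applying \Cref{fact:norm_concentration} to $\bm{g}^{(i)}$ with $t=1$ yields
\begin{equation*}
\Pr\left(\tfrac{1}{\sqrt{n-1}}\|\bm{g}^{(i)}\|_2 > 2\right) \leq 2\exp\bigl(-c_{\ref{fact:norm_concentration}}(n-1)\bigr),
\end{equation*}
and on the complementary event $\beta \sqrt{\sum_{j\neq i} a_{ij}^2} = \tfrac{\beta}{\sqrt{n}}\|\bm{g}^{(i)}\|_2 \leq 2\beta\sqrt{(n-1)/n}\leq 2\beta$. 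Taking a union bound over the $n$ choices of $i$, every row of $\beta A$ has $\ell_2$ norm at most $2\beta$ with probability at least $1 - 2n\exp(-c_{\ref{fact:norm_concentration}}(n-1)) = 1-o(1)$. Both the displayed bound (\ref{eq:normbound}) and the ``in particular'' clause (using $\beta<1$) follow.

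There is no material obstacle. The heart of the statement is the covariance bound, whose proof is entirely outsourced to the cited spin-glass breakthroughs, while the row-norm control reduces to textbook concentration of the $\chi$-norm of a standard Gaussian vector.
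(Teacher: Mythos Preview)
Your proposal is correct and matches the paper's own proof essentially line for line: the paper cites \cite{brennecke2022two} for the covariance bound and invokes \Cref{fact:norm_concentration} with $t=1$ for the row-norm bound, exactly as you do (you merely spell out the union bound over rows more explicitly).
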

\begin{proof}
    The first claim is shown by Brennecke, Scherster, Xu, and Yau \cite{brennecke2022two} to hold with probability $1-o(1)$. The second is obtained from \Cref{fact:norm_concentration} with $t=1$.
\end{proof}
\begin{remark}
    The probability error term appears to be nonexplicit, as the result of \cite{brennecke2022two} appears to rely on asymptotic convergence in distribution in their argument. However, the $o(1)$ term in  \Cref{prop:skfacts} can be made explicit using the result of El Alaoui and Gaitonde \cite{alaoui2022bounds} at minimal cost to the bound in \Cref{eq:covbound}; their result shows constant bounds (in terms of $\beta$) on the \emph{expected} operator norm, so one can obtain an error rate of $1/\log^c n$ for small enough $c>0$ by Markov's inequality without significantly affecting the guarantees of the remainder of this section.
\end{remark}

\begin{thm}
\label{thm:sk_no_external}
    There exists a universal constant $C_{\ref{thm:sk_no_external}}>0$ such that the following holds. For any fixed $\beta<1$, and for any $\alpha>0$, it holds with probability $1-o(1)$ over $A\sim \mathsf{GOE}(n)$ that given $T\geq \frac{\exp\left(\frac{C}{1-\beta}\right)n^9\log(n/\delta)}{\alpha^8}$ samples from $\mu_{\beta,A}$, with probability at least $1-\delta$ over the samples, solving the sparse logistic regression problem \Cref{eq:empiricalopt} with $C_1=2\sqrt{n}$ returns an Ising model $\widehat{A}$ such that $\mathsf{d}_{TV}(\mu_{\widehat{A}},\mu_{\beta,A})\leq \alpha$.
\end{thm}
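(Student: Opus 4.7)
The plan is to simply combine \Cref{cor:ising_cov} (the general reduction from learning zero-field Ising models to covariance operator norm bounds) with the structural facts recorded in \Cref{prop:skfacts} about the SK model at high temperature, and then convert the $\ell_\infty$ parameter recovery to total variation via \Cref{lem:additive_suff}. Since the matrix we are trying to recover is $\beta A$ (with $A \sim \mathsf{GOE}(n)$), \Cref{prop:skfacts} gives us, with probability $1 - o(1)$ over $A$, that (i) every row of $\beta A$ has $\ell_2$ norm at most $C_1 = 2$, and (ii) $\|\mathrm{Cov}(\mu_{\beta,A})\|_{\mathsf{op}} \leq C_2 = 2/(1-\beta)^2$. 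We condition on this high-probability event for the remainder of the proof.

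Given this conditioning, the hypotheses of \Cref{cor:ising_cov} hold with constants $C_1 = 2$ and $C_2 = O(1/(1-\beta)^2)$. Running the constrained logistic regression \Cref{eq:empiricalopt} for each node with constraint set $\mathcal{C} = \{\bm{w} \in \mathbb{R}^{n-1} : \|\bm{w}\|_1 \leq 2\sqrt{n}\}$ therefore returns, with probability at least $1-\delta$ over the samples, an estimator $\widehat{A}$ satisfying
\begin{equation*}
    \|\widehat{A} - \beta A\|_{\infty} \leq \varepsilon
\end{equation*}
provided we use at least $T = \frac{\exp(O(C_1\sqrt{C_2}))\, n \log(n/\delta)}{\varepsilon^4} = \frac{\exp(O(1/(1-\beta)))\, n \log(n/\delta)}{\varepsilon^4}$ independent samples from $\mu_{\beta,A}$.

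To convert this into a total variation guarantee, I would apply \Cref{lem:additive_suff} (with zero external field), which yields
\begin{equation*}
    \mathsf{d}_{TV}(\mu_{\widehat{A}}, \mu_{\beta, A}) \leq n \sqrt{\|\widehat{A} - \beta A\|_{\infty} / 2} \leq n \sqrt{\varepsilon / 2}.
\end{equation*}
To make this at most $\alpha$, it suffices to set $\varepsilon = 2\alpha^2/n^2$. Plugging this choice back into the sample complexity produces
\begin{equation*}
    T = \frac{\exp(O(1/(1-\beta)))\, n \log(n/\delta)}{(2\alpha^2/n^2)^4} = \frac{\exp(C_{\ref{thm:sk_no_external}}/(1-\beta))\, n^9 \log(n/\delta)}{\alpha^8}
\end{equation*}
for an appropriate universal constant $C_{\ref{thm:sk_no_external}} > 0$, which matches the claimed bound. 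There is no genuine obstacle beyond bookkeeping here: all the conceptual work (operator norm bounds, reduction of learning to typical conditional variance, and logistic regression analysis) has already been packaged into \Cref{prop:skfacts} and \Cref{cor:ising_cov}. The only thing worth checking carefully is that the $\ell_1$ constraint $\|\bm{w}\|_1 \leq 2\sqrt{n}$ is indeed feasible for the true parameter vector $\bm{w}^* = (\beta A)_i$, which follows from the $\ell_2$ norm bound in \Cref{prop:skfacts} combined with Cauchy--Schwarz, exactly as in the proof of \Cref{cor:ising_cov}.
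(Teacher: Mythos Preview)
Your proposal is correct and follows essentially the same approach as the paper's own proof: apply \Cref{prop:skfacts} to obtain $C_1=2$ and $C_2=2/(1-\beta)^2$, invoke \Cref{cor:ising_cov} for parameter recovery, and then use \Cref{lem:additive_suff} with $\varepsilon=2\alpha^2/n^2$ to convert to total variation. The paper's proof presents these steps in a slightly different order (it first invokes \Cref{lem:additive_suff} to determine the needed accuracy, then plugs into \Cref{cor:ising_cov}), but the substance is identical.
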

\begin{proof}
    By \Cref{lem:additive_suff}, it suffices to learn the entries of $\beta A$ to additive accuracy $\varepsilon=\frac{2\alpha^2}{n^{2}}$ with probability at least $1-\delta$ over the samples. Applying \Cref{cor:ising_cov} shows that with $1-o(1)$ probability over $A\sim \mathsf{GOE}(n)$, it suffices to run logistic regression with sample complexity
    \begin{equation*}
        T=\frac{C_1^2\exp(O(C_1\sqrt{C_2}))n\log(n/\delta)}{\varepsilon^4}=\frac{\exp\left(\frac{C}{1-\beta}\right)n^9\log(n/\delta)}{\alpha^8},
    \end{equation*}
    for some absolute constant $C>0$. We use \Cref{prop:skfacts} in the last step to provide values of $C_1$ and $C_2$ for the Sherrington-Kirkpatrick interaction matrix that hold with $1-o(1)$ probability.
\end{proof}

A natural question is whether even weaker conditions than covariance norm bounds suffice for the above result. We note that the assumption of replica-symmetry (for instance, subgaussian tails of overlaps from independent samples drawn from the measure) does not appear sufficient to work in a black-box manner. Indeed, \Cref{lem:covprob} is false when the assumption is weakened in this way. To see this, consider a planted measure where the first $\sqrt{n}$ coordinates are fixed to all identically $\pm 1$ with equal probability and the remaining coordinates are independent and uniform. Each coordinate is unbiased and the overlap also satisfies similar subgaussian tails (with a slight positive bias) since the overlaps will typically remain of order $1/\sqrt{n}$. But in the direction of the unit vector $v$ with entries $+1/n^{1/4}$ in the first $\sqrt{n}$ coordinates, the absolute overlap with $v$ is $n^{1/4}$ with probability $1$. This example can be approximately realized by an Ising model with ferromagnetic clique interactions on the first $\sqrt{n}$ nodes and no other edges. 

\subsection{Gaussian External Field}

In this section, we show via a somewhat more involved argument how existing structural results are already sufficient in our framework to establish the efficient learnability of the Sherrington-Kirkpatrick model with Gaussian external field in a large portion of the high-temperature regime.\footnote{The case of uniform, or more general non-centered Gaussian, fields is the most commonly considered model in the literature.} To state the result, we require the following notation: given a (possibly degenerate) normal random variable $h\sim \mathcal{N}(\mu,\sigma^2)$, define a fixed point
\begin{equation}
\label{eq:RSq}
    q = q(\beta,\mu,\sigma^2)=\mathbb{E}_{h,z}\left[\tanh^2(\beta z\sqrt{q}+h)\right],
\end{equation}
where $z$ is standard normal.
It was shown by Lata\l{}a and Guerra (see Proposition 1.3.8 of \cite{talagrand2010mean}) that the solution exists, clearly lies in $[0,1)$ and is unique whenever $\mathbb{E}[h^2]>0$ or if $\beta<1$. We will consider the Sherrington-Kirkpatrick measure $\mu$ on $\{-1,1\}^n$ at inverse temperature $\beta\geq 0$ and with i.i.d. external fields drawn from $\mathcal{N}(\mu,\sigma^2)$:
\begin{equation}
\label{eq:sk_with_external}
    \mu_{\beta,A,\bm{h}}(\bm{x})\propto \exp\left(\frac{\beta}{2}\bm{x}^TA\bm{x}+\sum_{i=1}^n h_ix_i\right),
\end{equation}
where $A\sim \mathsf{GOE}(n)$ and each $h_i\sim \mathcal{N}(\mu,\sigma^2)$ are independent of all other random variables. Furthermore, define the following function for $q\leq q'\leq 1$ and $0<m\leq 1$:

\begin{equation*}
    \Phi_{\beta,q}(m,q')=\log 2+\frac{\beta^2}{4}(1-q')^2-\frac{\beta^2}{4}m(q'^2-q^2)+\frac{1}{m}\mathbb{E}_{h,z}\left[\log\mathbb{E}_{z'}\left[\cosh^m\left(\beta z\sqrt{q}+\beta z'\sqrt{q'-q}+h\right)\right]\right],
\end{equation*}
where $z,z'$ are standard normal and $h\sim \mathcal{N}(\mu,\sigma^2)$.

Throughout this section, we will impose the following "high-temperature" assumptions:

\begin{assumption}[High-Temperature Region \cite{talagrand2010mean}]
\label{assumption:high_temp}
    Let $\mathcal{A}=\mathcal{A}(\overline{\beta},\underline{h},\overline{h},\underline{\sigma}^2,\overline{\sigma}^2)\subseteq \mathbb{R}^3$, denote the region $[0,\overline{\beta}]\times [\underline{h},\overline{h}]\times [\underline{\sigma}^2,\overline{\sigma}^2]$. For a given set of parameters $(\beta,\mu,\sigma^2)\in \mathcal{A}$, let $q=q(\beta,\mu,\sigma^2)$ be as in \Cref{eq:RSq}. Suppose that for each $(\beta,\mu,\sigma^2)\in \mathcal{A}$, the following holds:
    \begin{enumerate}
        \item For all $1\geq q'>q=q(\beta,\mu,\sigma^2)$, it holds that
        \begin{equation}
        \label{eq:high_temp_derivative}
        \frac{\partial}{\partial m} \Phi_{\beta,q}(m,q')\bigg\vert_{m=1}<0.
        \end{equation}
        \item It holds that
        \begin{equation}
        \label{eq:at_line}
        \beta^2\mathbb{E}_{h,z}\left[\frac{1}{\cosh^4(\beta z\sqrt{q}+h)}\right]< 1.
        \end{equation}
    \end{enumerate}
    Note that these inequalities are both continuous in the parameters and so are satisfied on $\mathcal{A}$ with some uniform additive gap $\delta=\delta(\mathcal{A})$.
\end{assumption}

\begin{remark}
    The second condition \Cref{eq:at_line} is the well-known ``de Almeida-Thouless (AT) line'' where it is expected that the Sherrington-Kirkpatrick model is replica-symmetric: the law of the overlap between two independent samples from the corresponding measure with external field drawn i.i.d. concentrates about the deterministic value $q$. The first condition \Cref{eq:high_temp_derivative} is a somewhat technical condition of Talagrand that suffices for the validity of the replica-symmetric solution to the free energy of the model; it morally asserts that the so-called ``1-replica-symmetry-breaking'' upper bound of the free energy does not strictly improve on the replica-symmetric bound. We remark that the only reason for considering \Cref{eq:high_temp_derivative} is that this is provably sufficient for Talagrand's ``bounds for coupled copies'' (Chapter 13 of \cite{talagrand2011mean}); conjecturally, this condition should not be necessary.
\end{remark}

We may now state our main result of this section:
\begin{thm}
\label{thm:sk_external}
    Let $\mathcal{A}=\mathcal{A}(\overline{\beta},\underline{h},\overline{h},\underline{\sigma}^2,\overline{\sigma}^2)$ satisfy \Cref{assumption:high_temp}, where either $\underline{h}>0$ or $\underline{\sigma}^2>0$. Let $A\sim \mathsf{GOE}(n)$ and suppose that $(\beta,\mu,\sigma^2)\in \mathcal{A}$. Suppose further that one of the following two conditions holds on the true Sherrington-Kirkpatrick model \Cref{eq:sk_with_external} where each $h_i\sim \mathcal{N}(\mu,\sigma^2)$:
    \begin{enumerate}
        \item (Small Inverse Temperature) It holds that $\overline{\beta}<1/2-\eta$ for some $\eta>0$ (that is, the inverse temperature is small enough).
        \item (Deterministic Field) It holds that $\overline{\sigma}=0$ (that is, the external field is deterministic and parallel to $\mathbf{1}$).
    \end{enumerate}

    Then for any $\delta>0$, there exists a constant $C_{\ref{thm:sk_external}}=C_{\ref{thm:sk_external}}(\mathcal{A},\delta)$ ($=C_{\ref{thm:sk_external}}(\mathcal{A},\delta,\eta)$ in the first case) such that with probability at least $1-\delta$ over the randomness of $A\sim \mathsf{GOE}(n)$ and $\bm{h}\sim \mathcal{N}(\mu\cdot \bm{1},\sigma^2\cdot I)$, the following statement holds for $\mu_{\beta,A,\bm{h}}$. For any $\delta'>0$, with probability at least $1-\delta'$, given $\exp(C_{\ref{thm:sk_external}}\sqrt{\log n})\cdot \mathsf{poly}(n\log (n/\delta')/\alpha)$ samples from $\mu_{\beta,A,\bm{h}}$, the output of the learning algorithm $(\widehat{A},\widehat{h})$ will satisfy $\mathsf{d}_{TV}(\mu_{\beta,A,\bm{h}},\mu_{\widehat{A},\widehat{h}})\leq \alpha$.
\end{thm}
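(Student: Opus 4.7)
The plan is to apply Theorem \ref{thm:wsd} by verifying both of its conditions for the node-wise logistic regression problem with feasible set $\mathcal{C} = \{(\bm{w},h): \|\bm{w}\|_1 \le O(\sqrt{n}), |h|\le O(\sqrt{\log n})\}$, then convert the $\ell_\infty$ parameter recovery bound into a total variation bound via \Cref{lem:additive_suff}. Because samples are i.i.d., the uniform convergence condition \Cref{eq:convergence} follows from standard Rademacher complexity arguments for $\ell_1$-bounded linear predictors exactly as in the proof of \Cref{cor:ising_cov} (giving $T(\varepsilon,\delta) = \tilde{O}(n\log(n/\delta)/\varepsilon^2)$), and \Cref{eq:opt_deviation} follows from \Cref{lem:opt_azuma}. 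The entire difficulty lies in verifying the strong convexity condition \Cref{eq:inf_lb}, for which I will invoke \Cref{thm:ising_cov}.

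To apply \Cref{thm:ising_cov}, I need the rows of $\beta A$ to have bounded $\ell_\infty$ norm (which holds with probability $1 - o(1)$ since $\max_{ij} |A_{ij}| = O(\sqrt{\log n / n})$ by a union bound and \Cref{fact:gaussian_tails}), and I need the tail bound that $|X^T A_j + h_j^*| \le C$ with probability at least $3/4$ over $X \sim \mu_{\beta,A,\bm{h}}$, for each fixed $j$. Writing $\bm{m} = \mathbb{E}_{\mu_{\beta,A,\bm{h}}}[X]$, decompose
\begin{equation*}
\beta A_j^T X + h_j \;=\; \underbrace{\beta A_j^T \bm{m} + h_j}_{\text{deterministic}} \;+\; \underbrace{\beta A_j^T(X - \bm{m})}_{\text{fluctuation}}.
\end{equation*}
The fluctuation term has variance at most $\beta^2 \|A_j\|_2^2 \cdot \|\mathrm{Cov}(\mu_{\beta,A,\bm{h}})\|_{\mathsf{op}} = O(1)$ with high probability over $(A,\bm{h})$, using \Cref{fact:norm_concentration} for $\|A_j\|_2$ and the operator norm covariance bounds of \cite{brennecke2023operator} in the external-field high-temperature regime; by Chebyshev it is $O(1)$ with probability at least $7/8$ over $X$.

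The deterministic term is where the two cases diverge and where the real work happens. The goal is to show that $\|\beta A \bm{m} + \bm{h}\|_\infty = O(\sqrt{\log n})$ with high probability over $(A, \bm{h})$. I will follow the strategy sketched in \Cref{sec:overview}: use Talagrand's proof of the TAP equations (Chapter 1 of \cite{talagrand2010mean}) to construct a coupling of $(A, \bm{h})$ with a vector $\bm{z} \in \mathbb{R}^n$ whose coordinates are marginally $\mathcal{N}(0,1)$, such that
\begin{equation*}
\bigl\|\beta\sqrt{q}\,\bm{z} - \beta A\bm{m} - \beta^2(1-q)\bm{m}\bigr\|_\infty = o(1)
\end{equation*}
with high probability, where $q = q(\beta,\mu,\sigma^2)$ is the replica-symmetric fixed point from \Cref{eq:RSq}. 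Taking the maximum over $n$ Gaussians and using $\|\bm{m}\|_\infty \le 1$ then yields $\|\beta A \bm{m}\|_\infty = O(\sqrt{\log n})$, and since $\|\bm{h}\|_\infty = O(\sqrt{\log n})$ by \Cref{fact:gaussian_tails} and a union bound, the combined deterministic part is $O(\sqrt{\log n})$ with high probability. The two hypotheses (small $\overline{\beta} < 1/2$, or deterministic field with $\overline{\sigma} = 0$) are precisely the regimes where Talagrand's cavity / smart path arguments give quantitative enough TAP bounds for this coupling to be constructed; the \Cref{assumption:high_temp} controls the replica-symmetric free energy and overlap concentration needed as input. Establishing this coupling quantitatively — in particular, in the deterministic-field case where the small-$\beta$ contraction does not apply — is the main obstacle and will require carefully extracting an entrywise (not just $\ell_2$-averaged) TAP remainder bound from Talagrand's ``bounds for coupled copies'' machinery.

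Putting these ingredients together, \Cref{thm:ising_cov} applies with $C_{\ref{thm:ising_cov}} = O(\sqrt{\log n})$, giving a strong convexity constant $c_{\ref{thm:wsd}} = \exp(-O(\sqrt{\log n}))$ in \Cref{eq:inf_lb}. Feeding this into \Cref{thm:wsd} with additive accuracy $\varepsilon = \Theta(\alpha^2 / n^2)$ (as demanded by \Cref{lem:additive_suff} to obtain TV distance $\alpha$) yields sample complexity $\exp(O(\sqrt{\log n})) \cdot \mathrm{poly}(n/\alpha) \cdot \log(n/\delta')$. For the external field entries, I apply the second half of \Cref{thm:ising_cov}: since we recover $\bm{w}^*$ to $\ell_\infty$ accuracy $\varepsilon$, the condition $\|\bm{w}-\bm{w}^*\|_\infty \le |h - h^*|/(2n)$ in the second statement of \Cref{thm:wsd} forces $|h - h^*| \le O(n\varepsilon)$, costing an extra polynomial factor in $n$ absorbed into the overall polynomial. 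A union bound over the $n$ node-wise regressions and over the events ``$A$ is typical'' completes the argument.
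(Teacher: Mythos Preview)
Your proposal is correct and follows essentially the same route as the paper: decompose $\beta A_j^T X + h_j$ into a fluctuation term controlled by the covariance operator norm bound (\Cref{thm:cov_bound_external}) and a deterministic term $\beta A_j^T\bm{m} + h_j$ controlled via the TAP coupling (\Cref{thm:tap_consequence}, \Cref{cor:tap_bound}), then feed the resulting $O(\sqrt{\log n})$ bound into \Cref{thm:ising_cov} and \Cref{thm:wsd}.

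One small misattribution: you write that the two extra hypotheses (small $\overline{\beta}<1/2$ or deterministic field) are ``precisely the regimes where Talagrand's cavity / smart path arguments give quantitative enough TAP bounds.'' In the paper this is reversed. The TAP coupling in \Cref{thm:tap_consequence} holds under \Cref{assumption:high_temp} alone (plus the nondegeneracy $\underline{h}>0$ or $\underline{\sigma}^2>0$, which is what forces $q$ bounded away from zero). The two extra hypotheses are needed instead for the \emph{covariance} bound \Cref{thm:cov_bound_external}, because the result of \cite{brennecke2023operator} requires control of $\|L^{-1}\|_{\mathsf{op}}$ for $L=\mathrm{diag}((\bm{1}-\bm{m}^2)^{-1})+\beta^2(1-q)I-\beta A$, and this is currently only known under one of those two side conditions. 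This does not affect your argument since you invoke both ingredients anyway, but it explains why the theorem statement carries those hypotheses.
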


To clarify the quantifiers, the above theorem says that except with arbitrarily small (constant) probability, the random SK measure will satisfy deterministic conditions that will ensure that the learning algorithm succeeds with high probability given sufficient (polynomial) samples, where this latter probability is taken just over the samples and the sub-polynomial factor of the sample complexity depends on the original probability parameter.

To establish \Cref{thm:sk_external}, we will follow similar reasoning to \Cref{thm:sk_no_external} by leveraging recent bounds on the operator norm of the covariance matrix. Notice that the argument actually required control of the \emph{second moment} matrix of the measure; in the zero field case, this is identical to the covariance matrix. With external fields, the covariance matrix has a nontrivial negative rank-one spike from the mean of the measure. For notation, we write
\begin{equation*}
    \bm{m}=\bm{m}_{\beta,A,\bm{h}} =\mathbb{E}_{X\sim \mu_{\beta,A,\bm{h}}}[X]\in \mathbb{R}^n.
\end{equation*}

From well-known results in the statistical physics literature, we expect that $\|\bm{m}\|_2^2\approx qn$ with very high probability, so the rank-one spike $\bm{m}\bm{m}^T$ cannot be neglected when attempting similar calculations. In fact, from the variational characterization of eigenvalues, the second moment matrix will truly be spectrally quite large. We show that this does not affect the learning guarantees by arguing that $\bm{m}$ will be roughly orthogonal with the rows of $A$ despite their nontrivial dependencies\footnote{If $\bm{m}$ were independent of the rows of $\beta A$, this would be a simple consequence of Gaussian concentration.}; this will in turn be a consequence of the (proof of the) \emph{Thouless-Anderson-Palmer (TAP) equations} as rigorously established by Talagrand.

We will require the following two results to make this argument precise. Both can be obtained from the existing spin glass literature; for the interested reader, we explain how to derive them in \Cref{sec:appendix}. The first are bounds on the covariance matrices for the Sherrington-Kirkpatrick model with external field given by Brennecke, Xu, and Yau \cite{brennecke2023operator}.
\begin{thm}
\label{thm:cov_bound_external}
    Let $\mathcal{A}$ satisfy the conditions of \Cref{assumption:high_temp} and let $(\beta,\mu,\sigma^2)\in \mathcal{A}$. Suppose that either:
    \begin{enumerate}
        \item (Small Inverse Temperature) It holds that $\overline{\beta}<1/2-\eta$ for some $\eta>0$.
        \item (Deterministic Field) It holds that $\overline{\sigma}=0$ (that is, the external field is deterministic and parallel to $\mathbf{1}$).
    \end{enumerate}
    
    Then for any $\delta>0$, there exists $K_{\ref{thm:cov_bound_external}}(\mathcal{A},\delta,\eta)$ independent of $n$ such that with probability at least $1-\delta$ over $A\sim \mathsf{GOE}(n)$,
    \begin{equation*} \|\mathrm{Cov}(\mu_{\beta,A,\bm{h}})\|_{\mathsf{op}}\leq K_{\ref{thm:cov_bound_external}}(\mathcal{A},\delta,\eta).
    \end{equation*}
\end{thm}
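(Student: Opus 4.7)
The proof naturally splits along the two alternative hypotheses, each invoking a different input from the spin glass literature, and the plan is to unify the resulting bounds at the end.

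For the small inverse temperature case ($\overline{\beta} < 1/2 - \eta$), the strategy is to reduce to a purely spectral condition on the interaction matrix. Since $A \sim \mathsf{GOE}(n)$, standard exponential concentration for the top eigenvalue (Bai--Yin together with the Borell--TIS inequality for Lipschitz functions of Gaussians) gives $\|A\|_{\mathsf{op}} \leq 2 + o(1)$ with probability at least $1 - \delta$. Combined with $\overline{\beta} < 1/2 - \eta$, this yields $\|\beta A\|_{\mathsf{op}} \leq 1 - \eta'$ uniformly for $\beta \in [0, \overline{\beta}]$ and some $\eta' = \eta'(\eta) > 0$, with probability at least $1 - \delta$. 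In this regime one invokes a general ``spectral-gap from spectral norm'' principle for Ising models: whenever $\|\beta A\|_{\mathsf{op}} < 1$, the Gibbs measure $\mu_{\beta, A, \bm{h}}$ satisfies a Poincar\'e inequality with constant depending only on $1 - \|\beta A\|_{\mathsf{op}}$, independent of the external field $\bm{h}$. This can be derived either via the Eldan--Koehler--Zeitouni spectral-independence machinery or via the Bauerschmidt--Bodineau stochastic-localization approach. Applying the resulting Poincar\'e inequality to the family of linear functions $X \mapsto \langle \bm{v}, X\rangle$ for $\bm{v}$ ranging over unit vectors immediately yields $\|\mathrm{Cov}(\mu_{\beta, A, \bm{h}})\|_{\mathsf{op}} \leq C/\eta'$, as desired.

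For the deterministic-field case ($\overline{\sigma}^2 = 0$), $\bm{h}$ is a scalar multiple of $\mathbf{1}$, and $\overline{\beta}$ may range anywhere up to the AT line, so the simple spectral argument above is unavailable. The plan is to invoke the main operator-norm theorem of Brennecke, Xu, and Yau~\cite{brennecke2023operator}, which extends the zero-field bound of~\cite{brennecke2022two} throughout the replica-symmetric regime encoded by \Cref{assumption:high_temp}. Their proof is based on a quantitative TAP analysis combined with a multiscale resolvent expansion of the Hessian of the free energy around the TAP fixed point, and crucially works in the presence of an external field provided the AT condition \Cref{eq:at_line} and Talagrand's derivative condition \Cref{eq:high_temp_derivative} both hold with a positive additive gap. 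Specializing their result to the case $\bm{h} = \mu \mathbf{1}$ gives the desired $n$-independent, high-probability operator-norm bound.

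The main technical obstacle is the deterministic-field case: one must verify that the constants in the Brennecke--Xu--Yau bound depend only on the compact parameter region $\mathcal{A}$ (and on $\delta$) rather than implicitly on $n$ or the particular $(\beta, \mu)$. Here the key point is that \Cref{assumption:high_temp} already imposes a uniform positive gap in both \Cref{eq:high_temp_derivative} and \Cref{eq:at_line} across $\mathcal{A}$, so the resolvent estimates driving their argument remain uniformly controlled by continuity in the parameters. Finally, one assembles the constant $K_{\ref{thm:cov_bound_external}}(\mathcal{A}, \delta, \eta)$ simply as the maximum of the two regime-specific constants produced above, completing the bound that holds with probability at least $1 - \delta$ over $A \sim \mathsf{GOE}(n)$.
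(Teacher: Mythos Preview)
Your treatment of the deterministic-field case is essentially correct and aligns with the paper: both route through the main theorem of Brennecke--Xu--Yau~\cite{brennecke2023operator}, which bounds $\|\mathrm{Cov}(\mu_{\beta,A,\bm{h}})\|_{\mathsf{op}}$ in terms of $\|L^{-1}\|_{\mathsf{op}}$ for the TAP-Hessian matrix $L = \mathrm{diag}((\bm{1}-\bm{m}^2)^{-1}) + \beta^2(1-q)I - \beta A$, and then (for deterministic field) invokes their separate bound on $\|L^{-1}\|_{\mathsf{op}}$.

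Your argument for the small-inverse-temperature case, however, has a genuine gap. You assert that the bare condition $\|\beta A\|_{\mathsf{op}} < 1$ implies a Poincar\'e inequality for $\mu_{\beta,A,\bm{h}}$ via Eldan--Koehler--Zeitouni or Bauerschmidt--Bodineau. That is not what those results say: after the free shift by a multiple of the identity, both require the resulting \emph{positive-semidefinite} interaction matrix to have operator norm below $1$, i.e.\ the spectral diameter $\lambda_{\max}(\beta A)-\lambda_{\min}(\beta A)$ must be less than $1$. For $A\sim\mathsf{GOE}(n)$ this diameter concentrates near $4\beta$, so the condition forces $\beta<1/4$, not $\beta<1/2$. (The paper itself notes in the introduction that these techniques ``necessarily cannot extend past $\beta=1/4$ in general.'') Your route therefore only covers $\overline{\beta}<1/4$ and leaves the interval $[1/4,\,1/2)$ unhandled.

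The paper does not go through a Poincar\'e inequality here at all. It applies the \emph{same} Brennecke--Xu--Yau reduction as in the deterministic-field case, and then bounds $\|L^{-1}\|_{\mathsf{op}}$ directly: since $\mathrm{diag}((\bm{1}-\bm{m}^2)^{-1})\succeq I$ and $\beta^2(1-q)\geq 0$, one has $L\succeq I-\beta A$, and on the high-probability event $\|A\|_{\mathsf{op}}\leq 2+\eta$ with $\beta<1/2-\eta$ this gives $L\succeq(\eta/2)I$, hence $\|L^{-1}\|_{\mathsf{op}}\leq 2/\eta$. So the spectral bound $\|\beta A\|_{\mathsf{op}}<1$ \emph{is} the right ingredient, but it enters only after the reduction to $L$, not as the hypothesis of a black-box functional inequality.
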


We also have the following key theorem which is instrumental in Talagrand's proof of the TAP equations \cite{talagrand2010mean}:

\begin{thm}
\label{thm:tap_consequence}
    Let $\mathcal{A}=\mathcal{A}(\overline{\beta},\underline{h},\overline{h},\underline{\sigma}^2,\overline{\sigma}^2)$ satisfy the conditions of \Cref{assumption:high_temp} with either $\underline{h}>0$ or $\underline{\sigma}^2>0$. There is a constant $C_{\ref{thm:tap_consequence}}(\mathcal{A})>0$ such that for any $(\beta,\mu,\sigma^2)\in \mathcal{A}$, there exists standard Gaussian random variables $z_1,\ldots,z_n$ such that
    \begin{equation*}
        \mathbb{E}_{A,\bm{h},\bm{z}}\left[\max_{i\in [n]} \left\vert \beta\sum_{j\neq i} A_{ij}\bm{m}_j -\beta^2(1-q)\bm{m}_i -\beta z_i\sqrt{q}\right\vert\right]\leq \frac{C_{\ref{thm:tap_consequence}}(\mathcal{A})}{n^{1/4}}.
    \end{equation*}
\end{thm}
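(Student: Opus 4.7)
The plan is to prove Theorem \ref{thm:tap_consequence} by the cavity method, following Talagrand's derivation of the TAP equations in the replica-symmetric regime (see Chapter 1 of \cite{talagrand2010mean}). The guiding principle is that, after passing to the ``cavity'' system where a single site $i$ is removed, the field acting on $i$ becomes a genuine Gaussian in the randomness of $A$ with a variance that concentrates at $\beta^2 q$; matching this to $\beta z_i\sqrt{q}$ yields the required coupling, and the Onsager correction $-\beta^2(1-q)\bm{m}_i$ arises from the replacement of cavity magnetizations by full magnetizations.

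Fix $i \in [n]$, let $\bm{m}^{(-i)} \in \mathbb{R}^{n-1}$ denote the mean vector of the SK measure on the remaining sites with the $i$-th row of $A$ removed, and let $v_i := \frac{1}{n}\|\bm{m}^{(-i)}\|_2^2$. The standard cavity expansion (Proposition 1.6.7 of \cite{talagrand2010mean}) gives the pointwise identity
\begin{equation*}
    \bm{m}_j = \bm{m}_j^{(-i)} + \beta A_{ij}\bigl(1 - \bm{m}_j^2\bigr)\,\bm{m}_i + R_{ij},
\end{equation*}
where the remainder satisfies $\mathbb{E}[|R_{ij}|^q]^{1/q} = O(n^{-1})$ uniformly on $\mathcal{A}$ for each fixed $q$. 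Multiplying by $\beta A_{ij}$, summing over $j\neq i$, and using the replica-symmetric concentration $\frac{1}{n}\sum_{j\neq i}(1 - \bm{m}_j^2) = 1 - q + O(n^{-1/2})$ in $L^q$ (which follows from Assumption \ref{assumption:high_temp} via Talagrand's bounds for coupled copies, Chapter 13 of \cite{talagrand2011mean}, combined with Gaussian concentration for the $A_{ij}$) gives
\begin{equation*}
    \beta \sum_{j \neq i} A_{ij}\,\bm{m}_j \;=\; G_i + \beta^2(1-q)\,\bm{m}_i + E_i^{(1)}, \qquad G_i := \beta \sum_{j \neq i} A_{ij}\,\bm{m}_j^{(-i)},
\end{equation*}
with $\|E_i^{(1)}\|_{L^4} = O(n^{-1/2})$. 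This identifies the Onsager correction and reduces the task to coupling $G_i$ to $\beta z_i \sqrt{q}$.

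The key observation is that, conditionally on the cavity system, $(A_{ij})_{j \neq i}$ is i.i.d.\ $\mathcal{N}(0, 1/n)$ and independent of $\bm{m}^{(-i)}$, so $G_i$ is conditionally $\mathcal{N}(0, \beta^2 v_i)$. Since the hypothesis that $\underline{h}>0$ or $\underline{\sigma}^2 > 0$, combined with the compactness of $\mathcal{A}$ and continuity of $q$ in $(\beta,\mu,\sigma^2)$, provides a uniform lower bound $q \geq q_0(\mathcal{A}) > 0$, one can define $z_i := G_i/(\beta \sqrt{v_i})$ (extended by independent noise on the null event $v_i = 0$), so that $z_i \sim \mathcal{N}(0,1)$ marginally and
\begin{equation*}
    G_i - \beta z_i \sqrt{q} \;=\; \beta z_i \bigl(\sqrt{v_i} - \sqrt{q}\bigr) \;=:\; E_i^{(2)}.
\end{equation*}
Using $|\sqrt{v_i} - \sqrt{q}| \leq |v_i - q|/\sqrt{q_0}$, Cauchy--Schwarz, and the $L^8$ version of replica-symmetric concentration of $v_i$ at $q$, one obtains $\|E_i^{(2)}\|_{L^4} = O(n^{-1/2})$ uniformly on $\mathcal{A}$.

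Combining, the per-site deviation $D_i := \beta\sum_{j\neq i} A_{ij}\bm{m}_j - \beta^2(1-q)\bm{m}_i - \beta z_i\sqrt{q} = E_i^{(1)} + E_i^{(2)}$ satisfies $\|D_i\|_{L^4} \leq C(\mathcal{A})/\sqrt{n}$. A crude fourth-moment union bound then gives
\begin{equation*}
    \mathbb{E}\left[\max_{i \in [n]} |D_i|\right] \;\leq\; \left(\sum_{i=1}^n \mathbb{E}\bigl[|D_i|^4\bigr]\right)^{1/4} \;\leq\; n^{1/4}\cdot \max_i \|D_i\|_{L^4} \;\leq\; \frac{C(\mathcal{A})}{n^{1/4}},
\end{equation*}
which is the claimed bound. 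The principal technical obstacle is ensuring that all cavity remainders and overlap concentrations hold in $L^4$ (or higher) \emph{uniformly} over the whole region $\mathcal{A}$; this is precisely what Assumption \ref{assumption:high_temp} is designed to provide, as \eqref{eq:high_temp_derivative} guarantees validity of the Parisi--Guerra replica-symmetric bound and \eqref{eq:at_line} provides the spectral-gap-type condition needed for Talagrand's coupled-copies moment estimates, while the non-degeneracy of the external field prevents the coupling denominator $\sqrt{q}$ from collapsing.
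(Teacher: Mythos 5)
Your proposal is correct and follows essentially the same route as the paper: decompose the per-site error into a cavity-expansion (Onsager) term and a Gaussian-coupling term, bound each in $L^4$ using replica-symmetric overlap concentration, and conclude with the fourth-moment union bound $\mathbb{E}[\max_i|D_i|]\leq (\sum_i\mathbb{E}|D_i|^4)^{1/4}$, which is exactly the paper's final step. The two $L^4$ estimates you re-derive via the cavity method are precisely the contents of the two Talagrand results the paper invokes directly (the Gaussian coupling from the proof of his Lemma 1.7.6, and the Onsager-correction bound from the proof of his Theorem 1.7.7), so both arguments ultimately rest on the same input, namely Talagrand's coupled-copies moment bounds (\Cref{thm:moment_bounds}). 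One detail you gloss over that the paper treats explicitly: the cavity means $\bm{m}^{(-i)}$ belong to an $(n-1)$-site SK model whose effective inverse temperature is $\beta_-=\beta\sqrt{1-1/n}$, so $v_i$ concentrates around $q_-=q(\beta_-,\mu,\sigma^2)$ rather than $q$; the paper bridges this via \Cref{lem:q_close}, showing $|q-q_-|=O(1/n)$ under \Cref{assumption:high_temp}, before applying \Cref{thm:moment_bounds} to the cavity system. Your implicit identification of $q$ and $q_-$ is harmless given that lemma, but it is a step that needs to be said.
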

\begin{remark}
    The Gaussian random variables in the above are not independent of the disorder, nor are they independent of each other. The key point is that one can define random variables $z_1,\ldots,z_n$ on the same sample space as $A,\bm{h}$ such that the above conclusion holds and each $z_i$ is marginally standard normal. The rate in $n$ can also be easily improved to $n^{1/2-\gamma}$ for arbitrarily small $\gamma>0$ using techniques of Talagrand, though this is not necessary for our application. We further note that the requirement on $\underline{h}>0$ or $\underline{\sigma}>0$ is almost certainly technical and could likely be removed. It arises from a certain ``discontinuity'' when $q\to 0$ (see Chapter 1.6 of \cite{talagrand2010mean} for more discussion), even though the claim remains true in the limiting case trivially since all terms are identically zero.
\end{remark}

Again, we defer the proof of this result to \Cref{sec:appendix}; it follows with some moderate effort from existing results in the spin glass literature due to Talagrand \cite{talagrand2010mean}. However, we obtain the following crucial corollary:

\begin{corollary}
\label{cor:tap_bound}
    Let $\mathcal{A}=\mathcal{A}(\overline{\beta},\underline{h},\overline{h},\underline{\sigma}^2,\overline{\sigma}^2)$ satisfy the conditions of \Cref{assumption:high_temp} with either $\underline{h}>0$ or $\underline{\sigma}^2>0$. Then for any $(\beta,\mu,\sigma^2)\in \mathcal{A}$, and any $\delta>0$, we have\footnote{The dependence on $\delta$ can be improved by showing a version of \Cref{thm:tap_consequence} for higher powers. However, since we can only take $\delta>0$ constant anyway, we omit this improvement.}
    \begin{equation*}
        \Pr_{A,\bm{h}}\left(\max_{i\in [n]} \left\vert \beta\sum_{j\neq i} A_{i,j} \bm{m}_j\right\vert\leq \beta\sqrt{2\log (4n/\delta)}+\beta^2+\frac{2C_{\ref{thm:tap_consequence}}(\mathcal{A})}{\delta n^{1/4}}\right)\geq 1-\delta.
    \end{equation*}
\end{corollary}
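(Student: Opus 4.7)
The proof will be a direct combination of Markov's inequality applied to Theorem \ref{thm:tap_consequence}, a Gaussian maximum bound applied to the marginally-standard Gaussians $z_1, \ldots, z_n$, and the triangle inequality. The strategy is to isolate the quantity $\beta \sum_{j \neq i} A_{ij} \bm{m}_j$ from the expression inside the max in Theorem \ref{thm:tap_consequence} by moving the other two terms, $\beta^2(1-q)\bm{m}_i$ and $\beta z_i \sqrt{q}$, to the other side of the triangle inequality, and then bounding each of them in turn.

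Concretely, I will proceed as follows. First, apply Markov's inequality to the expectation bound in Theorem \ref{thm:tap_consequence}: since the displayed quantity is a nonnegative random variable with expectation at most $C_{\ref{thm:tap_consequence}}(\mathcal{A})/n^{1/4}$, it follows that with probability at least $1-\delta/2$ over $(A, \bm{h}, \bm{z})$, we have
\begin{equation*}
\max_{i \in [n]} \left| \beta \sum_{j\neq i} A_{ij} \bm{m}_j - \beta^2(1-q)\bm{m}_i - \beta z_i \sqrt{q}\right| \leq \frac{2 C_{\ref{thm:tap_consequence}}(\mathcal{A})}{\delta n^{1/4}}.
\end{equation*}
Second, since each $\bm{m}_i = \mathbb{E}_{\mu_{\beta,A,\bm{h}}}[X_i] \in [-1, 1]$ and $q \in [0,1]$, we deterministically have $|\beta^2(1-q)\bm{m}_i| \leq \beta^2$ for every $i \in [n]$.

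Third, control the Gaussian term uniformly in $i$. Although the $z_i$ are not jointly independent, Theorem \ref{thm:tap_consequence} guarantees that each $z_i$ is marginally standard normal. Applying Fact \ref{fact:gaussian_tails} to both $\pm z_i$ and taking a union bound over $i \in [n]$ yields
\begin{equation*}
\Pr\left(\max_{i \in [n]} |z_i| \geq t\right) \leq 2n \exp(-t^2/2),
\end{equation*}
and choosing $t = \sqrt{2\log(4n/\delta)}$ makes the right-hand side at most $\delta/2$. Since $\sqrt{q} \leq 1$, this gives $\max_i |\beta z_i \sqrt{q}| \leq \beta \sqrt{2\log(4n/\delta)}$ on an event of probability at least $1 - \delta/2$.

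Finally, take the union bound of the two events above, so that both hold simultaneously with probability at least $1-\delta$. On this intersection, the triangle inequality gives
\begin{equation*}
\max_{i \in [n]} \left| \beta \sum_{j \neq i} A_{ij} \bm{m}_j \right| \leq \frac{2 C_{\ref{thm:tap_consequence}}(\mathcal{A})}{\delta n^{1/4}} + \beta^2 + \beta \sqrt{2\log(4n/\delta)},
\end{equation*}
which is exactly the claimed bound. There is no real obstacle here beyond bookkeeping; the entire content is packed into Theorem \ref{thm:tap_consequence}, and the corollary is just its clean high-probability consequence.
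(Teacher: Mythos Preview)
Your proof is correct and follows essentially the same approach as the paper: Markov's inequality on \Cref{thm:tap_consequence} for the approximation term, the Gaussian tail bound from \Cref{fact:gaussian_tails} with a union bound for $\max_i|z_i|$, the trivial bound $|\beta^2(1-q)\bm{m}_i|\leq \beta^2$, and a final union bound plus triangle inequality. The only cosmetic difference is that you explicitly note $\bm{m}_i\in[-1,1]$ rather than $[0,1]$, which is the correct range and what is actually needed.
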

\begin{proof}
    This follows readily from \Cref{thm:tap_consequence} and standard bounds on the maximum of Gaussian random variables (no matter how they are correlated). 
    First, we note that by Markov's inequality and \Cref{thm:tap_consequence}, we have
    \begin{equation}
    \label{eq:markov_tap}
        \Pr_{A,\bm{h},\bm{z}}\left(\max_{i\in [n]} \left\vert \beta\sum_{j\neq i} A_{ij}\bm{m}_j -\beta^2(1-q)\bm{m}_i-\beta z_i\sqrt{q}\right\vert\geq \frac{2C(\mathcal{A})}{\delta n^{1/4}}\right)\leq \delta/2.
    \end{equation}

    Next, for any $n$ standard normal random variables $z_1,\ldots,z_n$ on some probability space, we have
    \begin{equation}
    \label{eq:gaussian_concentration}
        \Pr\left(\max_{i\in [n]}\left\vert z_i\right\vert\geq \sqrt{2\log(4n/\delta)}\right)\leq n\Pr(\vert z_1\vert\geq \sqrt{2\log(4n/\delta)})\leq 2n\cdot \exp(-\log (4n/\delta))=\frac{\delta}{2},
    \end{equation}
   by \Cref{fact:gaussian_tails} and a union bound. Since the complement of these two events imply the event of the corollary statement by the triangle inequality, union bounding \Cref{eq:markov_tap,eq:gaussian_concentration} gives the desired claim, noting that $q,\bm{m}_i\in [0,1]$.
\end{proof}

With these results in hand, we may formally show that the rows of $\beta\cdot A$ are typically orthogonal to samples from $\mu_{\beta,A,\bm{h}}$, which is sufficient to carry out the proof of learnability.

\begin{corollary}
\label{cor:approx_orth}
    Let $\mathcal{A}=\mathcal{A}(\overline{\beta},\underline{h},\overline{h},\underline{\sigma}^2,\overline{\sigma}^2)$ satisfy the conditions of \Cref{assumption:high_temp} with either $\underline{h}>0$ or $\underline{\sigma}^2>0$, and also assume one of the two additional conditions of \Cref{thm:cov_bound_external}. Then for any $(\beta,\mu,\sigma^2)\in \mathcal{A}$, and any $\delta>0$, the following holds with probability at least $1-\delta$ over $A\sim \mathsf{GOE}(n)$ and $\bm{h}\sim \mathcal{N}(\mu\cdot \bm{1},\sigma^2\cdot I)$. 
    \begin{enumerate}
        \item It holds that
        \begin{equation*}
            \max_{i\in [n]} \|\beta A_i\|_1+\vert h_i\vert\leq \beta\left(\sqrt{n}+\sqrt{\frac{\log(8n/\delta)}{c_{\ref{fact:norm_concentration}}}}\right)+ \mu+\sqrt{2\sigma^2\ln(8n/\delta)}.
        \end{equation*}

        \item For $X\sim \mu_{\beta,A,\bm{h}}$ and any $j\in [n]$, it holds with probability at least $3/4$ over $X$ that:
    \begin{align*}
        \left\vert \beta\langle X,A_j\rangle + h_j\right\vert &\leq 4\beta\left(1+\sqrt{\frac{\log(8n/\delta)}{c_{\ref{fact:norm_concentration}}n}}\right)\cdot \sqrt{K_{\ref{thm:cov_bound_external}}(\mathcal{A},\delta/4,\eta)}\\
        &+4\beta\sqrt{2\log \left(\frac{16n}{\delta}\right)}+\frac{32C_{\ref{thm:tap_consequence}}(\mathcal{A})}{\delta n^{1/4}}+\beta^2+ \mu+\sqrt{2\sigma^2\ln\left(\frac{8n}{\delta}\right)}.
    \end{align*}
    \end{enumerate}
\end{corollary}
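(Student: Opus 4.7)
The plan is to derive the two claims as a bookkeeping exercise combining four high-probability estimates, splitting the total failure budget into four events each of probability at most $\delta/4$ over the disorder $(A, \bm{h})$ so that a union bound yields the required $1 - \delta$.

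For part (1), my first step is to observe that each row $\beta A_i$ of the scaled interaction matrix is a vector of $n-1$ independent centered Gaussians with variance $\beta^2/n$. Applying \Cref{fact:norm_concentration} to $\|\beta A_i\|_2/\beta$ with parameter $t = \sqrt{\log(8n/\delta)/(c_{\ref{fact:norm_concentration}} n)}$, followed by a union bound over $i \in [n]$, yields the stated bound on $\max_i \|\beta A_i\|_2$ with probability at least $1 - \delta/4$; Cauchy--Schwarz then converts this to the claimed $\ell_1$ bound by multiplying by $\sqrt{n-1} \leq \sqrt{n}$. Independently, \Cref{fact:gaussian_tails} applied to the standardized fields $(h_i - \mu)/\sigma$ combined with a union bound gives $\max_i |h_i - \mu| \leq \sqrt{2\sigma^2 \log(8n/\delta)}$, hence $\max_i |h_i| \leq \mu + \sqrt{2\sigma^2\log(8n/\delta)}$, with probability at least $1 - \delta/4$. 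Combining these two events delivers part (1).

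For part (2), I decompose
\begin{equation*}
\beta \langle X, A_j \rangle + h_j = \beta \langle X - \bm{m}, A_j \rangle + \beta \langle \bm{m}, A_j \rangle + h_j
\end{equation*}
and bound each summand separately. The field contribution $|h_j|$ is already controlled by part (1). For the centered linear term, I apply \Cref{thm:cov_bound_external} with failure parameter $\delta/4$ to obtain $\|\mathrm{Cov}(\mu_{\beta, A, \bm{h}})\|_{\mathsf{op}} \leq K_{\ref{thm:cov_bound_external}}(\mathcal{A}, \delta/4, \eta)$ with probability at least $1 - \delta/4$ over $(A, \bm{h})$. Conditional on this event, \Cref{lem:covprob} applied to the shifted, zero-mean distribution of $X - \bm{m}$ (which has the same covariance) produces
\begin{equation*}
|\beta \langle X - \bm{m}, A_j\rangle| \leq 4\sqrt{K_{\ref{thm:cov_bound_external}}(\mathcal{A}, \delta/4, \eta)} \cdot \|\beta A_j\|_2
\end{equation*}
with probability at least $3/4$ over $X \sim \mu_{\beta, A, \bm{h}}$, and part (1) upper bounds $\|\beta A_j\|_2$. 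For the mean term, since $A$ has zero diagonal we may write $\beta\langle \bm{m}, A_j\rangle = \beta \sum_{k \neq j} A_{j,k} \bm{m}_k$, and \Cref{cor:tap_bound} with parameter $\delta/4$ controls $\max_j |\beta\langle \bm{m}, A_j\rangle|$ with probability at least $1 - \delta/4$. Union bounding the four failure events over the disorder gives that all deterministic bounds hold simultaneously with probability at least $1 - \delta$; on this event, the triangle inequality combined with the probability-$3/4$ bound over $X$ yields the stated conclusion, with the remaining slack absorbed into the explicit numerical factors in the corollary.

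The proof requires no technical ingenuity beyond what has already been developed: every quantitative ingredient is prepackaged in the cited statements. The only subtlety worth flagging is the two-stage probabilistic structure the corollary asserts: the four failure events live over the disorder $(A, \bm{h})$ and are controlled by the outer $1 - \delta$ probability, whereas the $3/4$ bound for the centered linear functional is an inner probability over the Ising sample $X$ conditional on the disorder lying in the good event. Keeping these two sources of randomness cleanly separated is the essential bookkeeping step; the rest is triangle inequality and substitution of numerical constants.
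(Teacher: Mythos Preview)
Your proposal is correct and follows essentially the same approach as the paper: split the failure probability into four $\delta/4$-events (row $\ell_2$ norms via \Cref{fact:norm_concentration}, field magnitudes via \Cref{fact:gaussian_tails}, covariance operator norm via \Cref{thm:cov_bound_external}, and $\|\beta A\bm{m}\|_\infty$ via \Cref{cor:tap_bound}), then for part (2) decompose $\beta\langle X,A_j\rangle+h_j$ into centered, mean, and field pieces and control the centered piece with the covariance bound plus Markov. The only cosmetic difference is that you invoke \Cref{lem:covprob} for the centered term whereas the paper redoes the Jensen-plus-Markov argument inline; since the proof of \Cref{lem:covprob} nowhere uses that the support is $\{-1,1\}^n$, your application to the shifted variable $X-\bm m$ goes through unchanged.
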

While this bound looks complicated, we stress that for fixed $\mathcal{A}$ and $\delta>0$, the bound grows only like $\sqrt{\log n}$ up to constants.
\begin{proof}[Proof of \Cref{cor:approx_orth}]
    First, observe by \Cref{fact:gaussian_tails} and a union bound, we have that
    \begin{equation*}
        \Pr\left(\max_{i\in [n]} \vert h_i\vert \geq \mu+\sqrt{2\sigma^2\ln(8n/\delta)}\right)\leq \delta/4.
    \end{equation*}
    We also have by \Cref{fact:norm_concentration} and a union bound that
    \begin{equation*}
        \Pr\left(\max_{i\in [n]} \|\beta A_j\|_2\geq \beta\left(1+\sqrt{\frac{\log(8n/\delta)}{c_{\ref{fact:norm_concentration}}n}}\right)\right)\leq \delta/4.
    \end{equation*}
    We then apply \Cref{thm:cov_bound_external} and \Cref{cor:tap_bound} with error probability $\delta/4$ to conclude that with probability at least $1-\delta$ over $A\sim \mathsf{GOE}(n)$ and $\bm{h}\sim \mathcal{N}(\mu\cdot \bm{1},\sigma^2\cdot I)$, the following four events hold:
    \begin{gather*}
        \max_{i\in [n]} \vert h_i\vert \leq \mu+\sqrt{2\sigma^2\ln(8n/\delta)}\\
        \max_{i\in [n]} \|\beta A_j\|_2\leq \beta\left(1+\sqrt{\frac{\log(8n/\delta)}{c_{\ref{fact:norm_concentration}}n}}\right)\\
        \|\mathrm{Cov}(\mu_{\beta,A,\bm{h}})\|_{\mathsf{op}}\leq K_{\ref{thm:cov_bound_external}}(\mathcal{A},\delta/4,\eta)\\
        \max_{i\in [n]} \left\vert \beta\sum_{j\neq i} A_{i,j} \bm{m}_j\right\vert\leq \beta\sqrt{2\log (16n/\delta)}+\beta^2+\frac{8C_{\ref{thm:tap_consequence}}(\mathcal{A})}{\delta n^{1/4}}.
    \end{gather*}
    Holder's inequality with the first two inequalities implies the validity of the first inequality in the theorem statement.

    We now show that the second statement holds on this event. We easily compute for any fixed $j\in [n]$,
    \begin{align*}
        \mathbb{E}_X\left[\left\vert\beta\langle X,A_j\rangle\right\vert\right]&\leq \mathbb{E}_X\left[\left\vert\beta\langle X-\bm{m},A_j\rangle\right\vert\right]+\left\vert \beta\sum_{j\neq i} A_{i,j} \bm{m}_j\right\vert\\
        &\leq \mathbb{E}_X\left[\left\vert\beta\langle X-\bm{m},A_j\rangle\right\vert^2\right]^{1/2}+\beta\sqrt{2\log (16n/\delta)}+\beta^2+\frac{8C_{\ref{thm:tap_consequence}}(\mathcal{A})}{\delta n^{1/4}}\\
        &\leq \|\beta A_j\|_2 \sqrt{\|\mathrm{Cov}(\mu_{\beta,A,\bm{h}})\|_{\mathsf{op}}}+\beta\sqrt{2\log (16n/\delta)}+\beta^2+\frac{8C_{\ref{thm:tap_consequence}}(\mathcal{A})}{\delta n^{1/4}}\\
        &\leq \beta\left(1+\sqrt{\frac{\log(8n/\delta)}{c_{\ref{fact:norm_concentration}}n}}\right)\cdot \sqrt{K_{\ref{thm:cov_bound_external}}(\mathcal{A},\delta/4,\eta)}+\beta\sqrt{2\log (16n/\delta)}+\beta^2+\frac{8C_{\ref{thm:tap_consequence}}(\mathcal{A})}{\delta n^{1/4}}.
    \end{align*}
    The stated bound holds with probability at least $3/4$ over $X\sim \mu_{\beta,A,\bm{h}}$ by applying a triangle inequality to deal with the $h_j$ term on this good event and then Markov's inequality over $X\sim \mu_{\beta,A,\bm{h}}$ to deal with the inner product.
\end{proof}

After all of these intermediate results, we may now easily conclude \Cref{thm:sk_external} in the same way as \Cref{thm:sk_no_external}. The only difference is slightly more bookkeeping for recovery of the external field.
\begin{proof}[Proof of \Cref{thm:sk_external}]
    \Cref{cor:approx_orth} shows that under the assumptions of the theorem, with probability at least $1-\delta$ over $A\sim \mathsf{GOE},\bm{h}\sim \mathcal{N}(\mu\cdot \bm{1},\sigma^2\cdot I)$:

    \begin{enumerate}
        \item It holds that
        \begin{equation*}
            \max_{i\in [n]} \|\beta A_i\|_1+\vert h_i\vert\leq \beta\left(\sqrt{n}+\sqrt{\frac{\log(8n/\delta)}{c_{\ref{fact:norm_concentration}}}}\right)+ \mu+\sqrt{2\sigma^2\ln(8n/\delta)}\triangleq C_1(\mathcal{A},n,\delta).
        \end{equation*}

        \item For $X\sim \mu_{\beta,A,\bm{h}}$ and any $j\in [n]$, it holds with probability at least $3/4$ over $X$ that:
    \begin{align*}
        \left\vert \beta\langle X,A_j\rangle + h_j\right\vert &\leq 4\beta\left(1+\sqrt{\frac{\log(8n/\delta)}{c_{\ref{fact:norm_concentration}}n}}\right)\cdot \sqrt{K_{\ref{thm:cov_bound_external}}(\mathcal{A},\delta/4,\eta)}\\
        &+4\beta\sqrt{2\log \left(\frac{16n}{\delta}\right)}+\frac{32C_{\ref{thm:tap_consequence}}(\mathcal{A})}{\delta n^{1/4}}+\beta^2+ \mu+\sqrt{2\sigma^2\ln\left(\frac{8n}{\delta}\right)}\triangleq C_2(\mathcal{A},n,\delta,\eta).
    \end{align*}
    \end{enumerate}

    We may therefore apply \Cref{thm:ising_cov} with $C_{\ref{thm:ising_cov}}=C_2=O_{\mathcal{A},\delta,\eta}(\sqrt{\log n})$, which shows that on this event, for any $j\in [n]$ and letting $\bm{w}^*=A_j$ and $h^*=h_i$, we have the lower bounds for any $(\bm{w},h)$
        \begin{equation*}
        \mathbb{E}_{X}\left[\left(\sigma\left(2(\langle \bm{w},X\rangle+h)\right)-\sigma\left(2(\langle \bm{w}^*,X\rangle+h^*)\right)\right)^2\right]\geq \exp(-O(C_{\ref{thm:ising_cov}}))\cdot \min\{1,8(w_j-w^*_j)^2\}.
    \end{equation*}
    and for any $(\bm{w},h)\in \mathbb{R}^{n-1}\times \mathbb{R}$ such that $\|\bm{w}-\bm{w}^*\|_\infty\leq \vert h-h^*\vert/2n$
    \begin{equation*}
        \mathbb{E}_{X}\left[\left(\sigma\left(2(\langle \bm{w},X\rangle+h)\right)-\sigma\left(2(\langle \bm{w}^*,X\rangle+h^*)\right)\right)^2\right]\geq \exp(-O(C_{\ref{thm:ising_cov}}))\cdot \min\{1,(h-h^*)^2\}.
    \end{equation*}

    Let $\mathcal{C}=\{(\bm{w},h)\in \mathbb{R}^{n-1}\times \mathbb{R}: \|(\bm{w},h)\|_1\leq C_1\}$ so that each $(\beta A_j,h_j)\in \mathcal{C}$. Let $Z_1,\ldots,Z_T\in \{-1,1\}^n$ denote the independent samples from $\mu_{\beta,A,\bm{h}}$. For a fixed node $i\in [n]$, as usual, write $X_t=Z_{t,-i}$ and $Y_t=Z_{t,i}$. Standard uniform convergence bounds (again, see Theorem 26.12 of \cite{books/daglib/0033642}) again imply that for node $i$'s logistic regression problem, with probability at least $1-\delta'$, we have the one-sided deviation bounds
        \begin{equation*}
        \sup_{(\bm{w},h)\in \mathcal{C}} \frac{1}{T}\sum_{t=1}^T \mathbb{E}[\ell(2Y_t\langle \bm{w},X_t\rangle)]-\frac{1}{T}\sum_{t=1}^T \ell(2Y_t\langle \bm{w},X_t\rangle) \leq O\left(C_1\sqrt{\frac{\ln(n/\delta)}{T}}\right).
    \end{equation*}

    Combined with the two-sided deviation bound \Cref{lem:opt_azuma} for $(\bm{w}^*,h^*)$, we find  $T(\varepsilon,\delta')\leq O(C_1^2\ln(n/\delta')/\varepsilon^2)$. We now may apply \Cref{thm:wsd} to obtain an estimate $\widehat{A}$ satisfying $\|\widehat{A}-\beta A\|_{\infty}\leq \varepsilon$ with probability at least $1-\delta'$ given
    \begin{equation*}
        T\geq \frac{O(C_1^2)\exp(O(C_2))\log(n/\delta')}{\varepsilon^4}.
    \end{equation*}
    Setting $\varepsilon=\alpha^2/8n^2$, the last conclusion of \Cref{thm:wsd} further implies that on the event $\|\widehat{A}-\beta A\|_{\infty}\leq \varepsilon$, we must also have
    \begin{equation*}
        \|\widehat{\bm{h}}-\bm{h}\|_{\infty}\leq 2n\varepsilon=\alpha^2/4n.
    \end{equation*}
    By \Cref{lem:additive_suff}, we thus find that
    \begin{equation*}
        \mathsf{d}_{TV}(\mu_{\widehat{A},\widehat{\bm{h}}},\mu_{\beta,A,\bm{h}})\leq \frac{\alpha}{2\sqrt{2}}+\frac{\alpha}{2}<\alpha,
    \end{equation*}
    on this event, as claimed. This completes the proof, noting that $C_1=O_{\mathcal{A},\delta}(\sqrt{n})$ and $C_2=O_{\mathcal{A},\delta,\eta}(\sqrt{\log n})$.
\end{proof}

\section{Extensions: Off-Equilibrium and Adversarial Dynamics}
\label{sec:extensions}

In this last section, we demonstrate how our above framework, though quite elementary, can rather efforlessly lead to significant improvements in the learning guarantees of other dynamic settings. In \Cref{sec:m_regime}, we study the setting of learning in the $M$-regime, as considered by Dutt, Lokhov, Vuffray, and Misra \cite{DBLP:conf/icml/DuttLVM21}. In this model, the independent samples from the Ising model one receives are one-step Glauber dynamics started from a uniformly random configuration in $\{-1,1\}^n$. This setting is meant to capture the learning problem of Glauber dynamics far from equilibrium. As was shown by Dutt, et al., samples in the M-regime appear to be significantly more informative about the underlying Ising parameters, both in theory and in practice. 

Their main theoretical result is that there exists an algorithm, D-RISE, such that for Ising models with maximum degree $d$ and maximum edge strength $\beta$ (with some minimum edge strength $\alpha>0$), the sample complexity of learning the support scales in the width at most like $\exp(2\beta d)$ per node \cite{DBLP:conf/icml/DuttLVM21}. This provable guarantee is stronger than any known algorithm in the exponent than any known algorithm in the i.i.d. case from the stationary measure, and is closer to the information-theoretic scaling lower bound in this setting of $\exp(\beta d)$ established by Santhanam and Wainwright \cite{DBLP:journals/tit/SanthanamW12}. Experimentally, Dutt, et al. also show that the scaling of these algorithms are even better than these theoretical bounds.

Using our simple framework, we show that the gains of learning in the M-regime are significantly more pronounced. Conceptually, the reason is transparent within our analysis: with a uniformly random configuration before each Glauber step, the magnitude of the effective field of the updated site will typically be of order the $\ell_2$ norm of the row of interactions, rather than the $\ell_1$ norm. As in the previous sections, the existence of this typical event is sufficient to conclude that the population loss in logistic regression measures the variation from the true interaction vector component-wise by paying a bound that is exponential in $\beta\sqrt{d}$ rather than $\beta d$. In particular, this result shows that structure recovery in the M-regime is \emph{strictly easier} than learning from i.i.d. samples from the stationary measure.

In \Cref{sec:adversarial_glauber}, we will consider a simple model of \emph{adversarial Glauber dynamics}, which was recently introduced by Chin, Moitra, Mossel, and Sandon \cite{adversarial}.\footnote{The exact adversary model considered by Chin, et al., is strictly stronger than the one we consider here. In their work, adversarial nodes are allowed to apply an arbitrary update rule, coordinate amongst themselves, and even update at any time, not just when selected for updating.} In this model, some fraction of nodes are corrupted and may update according to an arbitrary decision rule at each time they are selected for updating. Using our framework for sequential block dynamics, it will be straightforward to show that on bounded degree graphs, if adversarial nodes are \emph{smooth and local}, then running logistic regression will nonetheless succeed in recovering the interaction coefficients for non-corrupted nodes. In particular, if the identities of corrupted nodes are known, then one can recover the interactions between non-corrupted nodes. We will make these notions more precise, but they roughly correspond to somewhat mild constraints on how adversarial nodes may update. A more refined notion of learning guarantees under fully adversarial behavior remains an interesting open question we leave for future work.

Throughout the remainder of this section, we focus on the case with no external field for simplicity.

\subsection{Learning in the M-Regime}
\label{sec:m_regime}

We now proceed to formally the sample generation process in the so-called M-regime. In words, the samples in the M-regime are obtained by doing a single step of Glauber dynamics from a uniformly random initial configuration. The formal definition is as follows:

\begin{defn}[M-Regime \cite{DBLP:conf/icml/DuttLVM21}]
    Let $A\in \mathbb{R}^{n\times n}$ be the interaction matrix of an Ising model. Samples $(X_1,X_1',i_1),\ldots,(X_T,X_T',i_T)\in \{-1,1\}^n\times \{-1,1\}^n\times [n]$ from the M-regime from $\mu_A$ are obtained as follows. For each $t\geq 1$,
    \begin{enumerate}
        \item Sample $X_t\sim \{-1,1\}^n$ uniformly and independently.
        \item Sample $i_t\sim [n]$ uniformly at random, and update the $i_t$th coordinate of $X_t$ via \Cref{eq:glauber}. Let $X_t'$ denote the new configuration.
    \end{enumerate}
    For a node $j\in [n]$, let $I_j\subseteq [T]$ denote the set of times $t$ where $i_t=j$. Then the samples generated for $j$ are the samples $(X_{\ell,-j},X'_{\ell,j})$ for $\ell\in I_j$.\footnote{Note that for each $\ell\in I_j$, the law of $X'_{\ell,j}$ conditional on all other samples is indeed given by \Cref{eq:glauber} conditional on $X_{\ell,-j}$.}
\end{defn}

We are now ready to state and prove the main result of this subsection:

\begin{thm}
\label{thm:m-regime}
    Let $\mathcal{C}$ be a closed convex set and let $A$ be an Ising model such that $A_j\in \mathcal{C}$ for each $j\in [n]$. Let $R_1\triangleq \max_{\bm{z}\in \mathcal{C}}\|z\|_1$ and $R_2\triangleq \max_{j\in [n]} \|A_j\|_2$. Then given at least 
    \begin{equation*}
        \frac{O\left(R_1^2\exp(O(R_2))\cdot \log(n/\delta)\right)}{\varepsilon^4}
    \end{equation*}
    updates per node from M-regime samples, it holds with probability at least $1-\delta$ that running logistic regression as in \Cref{eq:empiricalopt} with constraint set $\mathcal{C}$ for each $j\in [n]$ returns a matrix $\widehat{A}$ such that $\|A-\widehat{A}\|_{\infty}\leq \varepsilon$.
\end{thm}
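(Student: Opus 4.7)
The approach is to apply the Wu--Sanghavi--Dimakis meta-theorem (\Cref{thm:wsd}) to the per-node logistic regression, exploiting the fact that in the M-regime the covariate distribution is maximally benign. Specifically, for the samples $(X_{\ell,-j}, X'_{\ell,j})$ assigned to node $j$, the covariate $X_{\ell,-j}$ is i.i.d.\ uniform on $\{-1,1\}^{n-1}$ (independent of $A$), while the response $X'_{\ell,j}$ is generated via the exact Glauber rule $\sigma(2\langle A_j, X_{\ell,-j}\rangle)$. Thus the per-node problem reduces to a textbook i.i.d.\ logistic regression under the uniform product measure on the features, and it suffices to verify the two conditions of \Cref{thm:wsd} with $\bm{w}^* = A_j$ and no bias term.

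For the uniform convergence conditions \Cref{eq:convergence,eq:opt_deviation}, I would invoke standard i.i.d.\ Rademacher complexity bounds: on $\mathcal{C}$ the logistic loss is $O(R_1)$-bounded and $2$-Lipschitz (via \Cref{fact:ell_facts}), and the empirical Rademacher complexity of the linear class indexed by the $\ell_1$-ball of radius $R_1$ with covariates in $\{-1,1\}^{n-1}$ is $O(R_1\sqrt{\log n/T})$ (this is already invoked in the proof of \Cref{cor:ising_cov}). Combined with the two-sided deviation bound \Cref{lem:opt_azuma} at $\bm{w}^*$, this yields $T(\varepsilon,\delta) = O(R_1^2\log(n/\delta)/\varepsilon^2)$.

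The main technical step is verifying the strong convexity condition \Cref{eq:inf_lb}, and here I would mirror the argument of \Cref{thm:ising_cov} but apply it to the uniform marginal rather than the Ising marginal. For each fixed $k\neq j$, let $\mathcal{E}'_k = \{|\langle \bm{w}^*_{-k}, X_{-k}\rangle|\leq 2R_2\}$, which depends only on $X_{-k}$. Under the uniform law, $\mathbb{E}[\langle \bm{w}^*_{-k}, X_{-k}\rangle^2] = \|\bm{w}^*_{-k}\|_2^2 \leq \|A_j\|_2^2 \leq R_2^2$, so Chebyshev's inequality gives $\Pr(\mathcal{E}'_k)\geq 3/4$; on $\mathcal{E}'_k$ both $|\langle \bm{w}^*, X^{k,\pm}\rangle| \leq 2R_2 + |w^*_k| \leq 3R_2$ using $|w^*_k|\leq \|\bm{w}^*\|_2 \leq R_2$. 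I would then condition on $X_{-k}$, use that $X_k$ is independent and uniform on $\{\pm 1\}$, and apply \Cref{fact:sigmoid_lb} to each of the two branches $X_k = \pm 1$, paying only a factor $\exp(-O(R_2))$. Averaging over $X_k$ and applying the WSD inequality $\min\{1,a^2\}+\min\{1,b^2\}\geq \min\{1,(a-b)^2/2\}$ with $X^{k,+} - X^{k,-} = 2\bm{e}_k$ collapses the variation in $\bm{w}-\bm{w}^*$ to the $k$th coordinate, producing the required lower bound $\exp(-O(R_2))\min\{1,8(w_k-w_k^*)^2\}$.

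Plugging $c_{\ref{thm:wsd}} = \exp(-O(R_2))$ and $T(\varepsilon,\delta) = O(R_1^2\log(n/\delta)/\varepsilon^2)$ into \Cref{thm:wsd} yields per-node sample complexity $O(R_1^2\exp(O(R_2))\log(n/\delta)/\varepsilon^4)$, and a union bound over the $n$ nodes (replacing $\delta$ by $\delta/n$) completes the proof. I do not anticipate a serious obstacle: the entire conceptual improvement over the stationary regime is captured by the observation that the M-regime replaces the Ising marginal on $X_{-j}$ with the uniform product measure, so the effective field $\langle \bm{w}^*, X\rangle$ concentrates at scale $\|\bm{w}^*\|_2 = R_2$ rather than $\|\bm{w}^*\|_1 = R_1$, costing only $\exp(R_2)$ rather than $\exp(R_1)$ in the strong convexity factor.
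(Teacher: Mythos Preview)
Your proposal is correct and follows essentially the same approach as the paper, which simply observes that the M-regime covariate distribution is uniform on $\{-1,1\}^{n-1}$ (hence has identity covariance) and then invokes the argument behind \Cref{cor:ising_cov} with the general constraint set $\mathcal{C}$ in place of the $\ell_1$-ball of radius $C_1\sqrt{n}$. Your slightly more explicit treatment---noting that under the uniform law $\Pr(X_k=\pm 1\mid X_{-k})=1/2$ is trivially constant, so only the bound on $|\langle \bm{w}^*,X\rangle|$ needs verification via Chebyshev---is exactly what the paper means by ``simple modifications.''
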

\begin{proof}
    Note that by assumption, each $A_j$ is feasible for the corresponding logistic regression problem for node $j$. The result is now essentially immediate from \Cref{cor:ising_cov}, with simple modifications. 
    For each $j\in [n]$, and any $\ell\in I_j$, the covariance matrix of $X_{\ell,-j}$ is the identity matrix by the assumption $X_{\ell}$ is uniform on $\{-1,1\}^n$ and the fact that the updated index is independent of the configuration. Therefore, the same argument as \Cref{cor:ising_cov} applies but with the smaller constraint set $\mathcal{C}$. This argument thus gives the sample complexity bound for each node of
    \begin{equation*}
        \frac{O\left(R_1^2\exp(O(R_2))\cdot\log (n/\delta)\right)}{\varepsilon^4}
    \end{equation*}
    to obtain an $\varepsilon$ additive approximation with probability at least $1-\delta$ for each node.
\end{proof}

While quite simple, \Cref{thm:m-regime} already implies that with bounded degree interaction matrices $A$, the sample complexity of recovering the support of $A$ from samples in the M-regime scales like $\exp(O(\beta\sqrt{d}))$, an exponential improvement (per node) over what is information-theoretically possible given samples from the stationary distribution:

\begin{corollary}
\label{cor:sparse_m_regime}
    Let $A\in \mathbb{R}^{n\times n}$ be an interaction matrix with maximum degree at most $d$, maximum absolute coefficient at most $\beta>0$, and minimum nonzero coefficient at least $\alpha>0$. Then, given at least 
    \begin{equation*}
        \frac{O((\beta d)^2\exp(O(\beta\sqrt{d})))\log (n/\delta)}{\varepsilon^4}
    \end{equation*} 
    M-regime samples per node, running logistic regression for each node with $\mathcal{C}=\{\bm{w}\in \mathbb{R}^{n-1}:\|\bm{w}\|_1\leq \beta d\}$ as in \Cref{eq:empiricalopt} yields $\widehat{A}$ such that $\|\widehat{A}-A\|_{\infty}\leq \varepsilon$ with probability at least $1-\delta$. 

    In particular, if $\varepsilon<\alpha/2$, then
    \begin{equation*}
        \text{supp}(A)=\{(i,j)\in [n]^2: \vert\widehat{A}_{i,j}\vert \geq \alpha/2\}.
    \end{equation*}
    Thus, given at least $O((\beta d)^2)\exp(O(\beta\sqrt{d}))\log (n/\delta)/\alpha^4$ updates for each node, with probability at least $1-\delta$, one may recover the support of $A$.
\end{corollary}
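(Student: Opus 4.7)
The plan is to observe that this corollary is essentially a direct specialization of \Cref{thm:m-regime} to the bounded-degree, bounded-coefficient setting, together with a standard thresholding argument for turning $\ell_\infty$ parameter recovery into exact support recovery.

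First, I would verify that the choice $\mathcal{C}=\{\bm{w}\in\mathbb{R}^{n-1}:\|\bm{w}\|_1\le \beta d\}$ is feasible for each row. Since $A$ has maximum degree at most $d$ and each entry is bounded in absolute value by $\beta$, every row satisfies $\|A_j\|_1\le \beta d$, so $A_j\in\mathcal{C}$. I would then compute the two relevant radii appearing in \Cref{thm:m-regime}: by construction $R_1=\max_{\bm{z}\in\mathcal{C}}\|\bm{z}\|_1=\beta d$, and by Cauchy--Schwarz restricted to the support,
\[
R_2=\max_{j\in[n]}\|A_j\|_2\le \sqrt{d}\cdot \beta=\beta\sqrt{d}.
\]
Plugging these into the sample complexity bound of \Cref{thm:m-regime} gives exactly
\[
\frac{O(R_1^2\exp(O(R_2))\log(n/\delta))}{\varepsilon^4}=\frac{O((\beta d)^2\exp(O(\beta\sqrt{d}))\log(n/\delta))}{\varepsilon^4}
\]
updates per node, which yields the first claim: with probability at least $1-\delta$, logistic regression returns $\widehat{A}$ with $\|\widehat{A}-A\|_\infty\le\varepsilon$.

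For the support recovery statement, I would argue by cases on the entries of $A$. On the event $\|\widehat{A}-A\|_\infty\le\varepsilon<\alpha/2$: if $(i,j)\in\mathrm{supp}(A)$ then $|A_{ij}|\ge\alpha$ so $|\widehat{A}_{ij}|\ge |A_{ij}|-\varepsilon>\alpha/2$; conversely, if $(i,j)\notin\mathrm{supp}(A)$ then $A_{ij}=0$ and $|\widehat{A}_{ij}|\le\varepsilon<\alpha/2$. Therefore thresholding $\widehat{A}$ at $\alpha/2$ exactly recovers $\mathrm{supp}(A)$. Setting $\varepsilon=\alpha/4$ (any value strictly less than $\alpha/2$ suffices) in the first bound yields the per-node sample complexity $O((\beta d)^2\exp(O(\beta\sqrt{d}))\log(n/\delta))/\alpha^4$ quoted in the corollary.

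There is essentially no obstacle here beyond bookkeeping: the substantive work was carried out in \Cref{cor:ising_cov} and \Cref{thm:m-regime}, where the replacement of the $\ell_1$ width $\beta d$ by the $\ell_2$ width $\beta\sqrt{d}$ inside the exponent was achieved via the uniform-configuration covariance bound. The corollary simply records the specialization to the sparse, bounded-strength regime and notes that $\ell_\infty$ parameter recovery at accuracy below $\alpha/2$ is strictly stronger than structure recovery.
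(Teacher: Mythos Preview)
Your proposal is correct and matches the paper's proof essentially step for step: verify feasibility of each $A_j$ in $\mathcal{C}$, bound $R_1=\beta d$ and $R_2\le\beta\sqrt{d}$, apply \Cref{thm:m-regime}, and then use the standard thresholding argument for support recovery. The only cosmetic difference is that the paper bounds $\|A_j\|_2$ via the interpolation $\|A_j\|_2\le\sqrt{\|A_j\|_1\|A_j\|_\infty}$ rather than Cauchy--Schwarz on the support, but both yield $\beta\sqrt{d}$.
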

\begin{proof}
    It is well-known that the second claim is a consequence of the first; it is clear that if the minimum nonzero edge strength is at least $\alpha>0$, then if $\|A-\widehat{A}\|_{\infty}< \alpha/2$, the absolute value of $\widehat{A}$ for each edge must be at least $\alpha/2$ while the value for each nonedge must be strictly smaller. Thresholding thus recovers the true support of $A$.

    Observe that each row is indeed feasible for the logistic regression problem by the assumption on the degree and maximum absolute coefficient condition. Moreover, under our assumptions, each row of $A$ has $\ell_2$ norm at most $\beta\sqrt{d}$ since
    \begin{equation*}
        \|A_j\|_2\leq \sqrt{\|A_j\|_1\|A_j\|_{\infty}}\leq \sqrt{\beta^2 d}=\beta \sqrt{d}.
    \end{equation*}
    to obtain an $\varepsilon$ additive approximation with probability at least $1-\delta$ for each node. Applying \Cref{thm:m-regime} gives the conclusion.
\end{proof}

Returning to the setting of the Sherrington-Kirkpatrick model, we also easily obtain that in the M-regime, there is effectively no ``low-temperature'' regime for learning:

\begin{corollary}
    Let $\overline{\beta}\geq 0$ be fixed and let $\mathcal{C}=\{\bm{w}\in \mathbb{R}^{n-1}:\|\bm{w}\|_1\leq 2\overline{\beta}\sqrt{n}$. Then with probability $1-o(1)$ over $A\sim \mathsf{GOE}(n)$, the following holds: for any $\beta\leq \overline{\beta}$, it holds with probability at least $1-\delta$ that, given at least
    \begin{equation*}
        \frac{O\left(\overline{\beta}^2\cdot n\cdot\exp(O(\overline{\beta})))\cdot \log(n/\delta) \right)}{\varepsilon^4}
    \end{equation*}
    M-regime samples per node of the SK model $\mu_{\beta,A}$, running logistic regression \Cref{eq:empiricalopt} for each node with constraint set $\mathcal{C}$ returns $\widehat{A}$ such that $\|\widehat{A}-\beta A\|_{\infty}\leq \varepsilon$.
\end{corollary}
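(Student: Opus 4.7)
The plan is to reduce directly to \Cref{thm:m-regime} applied to the random interaction matrix $\beta A$ with the given constraint set $\mathcal{C}$. The key observation---and the conceptual point of the corollary---is that in the M-regime the sample complexity depends only on the $\ell_1$ and $\ell_2$ norms of the rows of the interaction matrix, not on any quantity that degrades as one enters low-temperature. Since the GOE matrix $A$ has rows whose $\ell_2$ norm concentrates around $1$, the relevant constants $R_1,R_2$ in \Cref{thm:m-regime} scale only with $\overline{\beta}$ and $\sqrt{n}$, uniformly in $\beta \le \overline{\beta}$.

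Concretely, I would proceed as follows. First, apply \Cref{fact:norm_concentration} with $t=1$ (as in \Cref{prop:skfacts}) together with a union bound to conclude that with probability $1-o(1)$ over $A\sim\mathsf{GOE}(n)$, one has
\begin{equation*}
\max_{j\in[n]} \|A_j\|_2 \le 2.
\end{equation*}
Condition on this $1-o(1)$ event for the remainder of the proof; note that it is independent of $\beta$. Then for every $\beta \le \overline{\beta}$ simultaneously,
\begin{equation*}
\max_{j\in[n]} \|\beta A_j\|_2 \le 2\overline{\beta} \triangleq R_2, \qquad \max_{j\in[n]} \|\beta A_j\|_1 \le \sqrt{n-1}\cdot \max_j \|\beta A_j\|_2 \le 2\overline{\beta}\sqrt{n},
\end{equation*}
by Cauchy--Schwarz. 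In particular each row $\beta A_j$ is feasible for $\mathcal{C} = \{\bm{w}\in\mathbb{R}^{n-1}:\|\bm{w}\|_1\le 2\overline{\beta}\sqrt{n}\}$, so we may take $R_1 = 2\overline{\beta}\sqrt{n}$ as the diameter parameter for $\mathcal{C}$.

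Finally, I invoke \Cref{thm:m-regime} for the Ising model with interaction matrix $\beta A$, constraint set $\mathcal{C}$, and parameters $R_1 = 2\overline{\beta}\sqrt{n}$, $R_2 = 2\overline{\beta}$. The stated per-node sample complexity
\begin{equation*}
\frac{O(R_1^2 \exp(O(R_2)) \log(n/\delta))}{\varepsilon^4} = \frac{O(\overline{\beta}^2 \cdot n \cdot \exp(O(\overline{\beta}))\log(n/\delta))}{\varepsilon^4}
\end{equation*}
suffices for solving \Cref{eq:empiricalopt} for each node to obtain $\|\widehat{A} - \beta A\|_\infty \le \varepsilon$ with probability at least $1-\delta$, which is exactly the claimed bound. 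Since no step in this argument depends on $\beta$ other than through the uniform upper bound $\beta \le \overline{\beta}$, the guarantee holds for all such $\beta$ on the single high-probability event for $A$.

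There is essentially no obstacle beyond bookkeeping: the substantive work has already been done in \Cref{thm:m-regime}, whose proof exploits the fact that an M-regime sample has identity covariance (so arbitrary linear functionals of the outside spins are automatically controlled by their $\ell_2$ norms, as in \Cref{lem:covprob}). The only thing to verify is the tail bound on $\max_j \|A_j\|_2$ for the GOE, which is immediate. The content of the corollary is thus the observation that this $\ell_2$-based bound completely sidesteps the spin-glass phase transition that forces much more delicate arguments (covariance operator norm bounds, TAP equations) in the stationary-sample setting of \Cref{thm:sk_no_external,thm:sk_external}.
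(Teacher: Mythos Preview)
Your proposal is correct and follows essentially the same approach as the paper: apply \Cref{fact:norm_concentration} with a union bound to get $\max_j\|A_j\|_2\le 2$ with probability $1-o(1)$, deduce the $\ell_1$ bound via Cauchy--Schwarz so each row is feasible for $\mathcal{C}$, and invoke \Cref{thm:m-regime} with $R_1=2\overline{\beta}\sqrt{n}$ and $R_2=2\overline{\beta}$. The paper's proof is essentially the same two sentences without the additional commentary.
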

\begin{proof}
    By \Cref{fact:norm_concentration} and a union bound, we see that with probability at least $1-o(1)$ that $\|\beta A_j\|_2\leq 2\beta$ and then by Cauchy-Schwarz, $\|\beta A_j\|_1\leq 2\beta\sqrt{n}$. As such, each row of $\beta A$ is feasible for the stated logistic regression problem, and applying \Cref{thm:m-regime} gives the stated conclusion.
\end{proof}

To reiterate, the key observation is that since the M-regime returns well-behaved samples regardless of the structural features of the Ising model, the learning problem becomes \emph{much} easier statistically. While the actual Ising model may exhibit challenging correlations at the level of Gibbs measure, the samples used for learning do not reflect them, and thus the learning guarantees need not either.

\subsection{Adversarial Glauber Dynamics}
\label{sec:adversarial_glauber}

We now turn to the learnability of adversarial Glauber dynamics. 
\begin{defn}
    Let $A\in \mathbb{R}^{n\times n}$ be the interaction matrix for an Ising model $\mu_A$. Let $C\subseteq [n]$ denote a subset of \textbf{corrupted nodes} and let $H=[n]\setminus C$ denote the \textbf{honest nodes}. Then given an initial configuration $X_0\in \{-1,1\}^n$, \textbf{$\gamma$-smooth and local adversarial Glauber dynamics} proceeds as follows. For each $t\geq 1$:
    \begin{enumerate}
        \item Sample $i_t\sim [n]$ uniformly at random.
        \item If $i_t\in H$, then $X_t$ is obtained from $X_{t-1}$ by updating the $i_t$th coordinate of $X_{t-1}$ via the conditional measure in \Cref{eq:glauber} given $X_t$.
        \item If $i_t\in C$, then $X_t$ is obtained by $X_{t-1}$ by updating the $i_t$th coordinate according to any distribution $\mathcal{D}_{i_t,t}$ on $\{-1,1\}$ that (i) depends only on $X_{0,\mathcal{N}(i_t)},\ldots, X_{t-1,\mathcal{N}(i_t)}$, and (ii) satisfies $\mathcal{D}_{i_t,t}(\varepsilon)\geq \gamma$ for $\varepsilon\in \{-1,1\}$.
    \end{enumerate}
    In words, $\gamma$-smooth and local adversarial Glauber dynamics are such that honest nodes update according to regular Glauber dynamics, while corrupt nodes can update according to any randomized policy that depends only on the history of spins \emph{in their neighborhood} (locality), and constrained to put at least $\gamma$ mass on each possibility (smoothness).
\end{defn}

Again, our main result of this subsection shows that $\gamma$-smooth and local dynamics cannot meaningfully affect the learning guarantees of logistic regression for the honest nodes:

\begin{thm}
\label{thm:adversarial_glauber}
    Let $A\in \mathbb{R}^{n\times n}$ be the interaction matrix for an Ising model $\mu_A$ with maximum degree at most $d$ and $\ell_1$ width $\lambda$. Consider any $\gamma$-smooth and local adversarial Glauber dynamics and run logistic regression as in \Cref{eq:empiricalopt} for each node. Then given at least
    \begin{equation*}
\tilde{O}\left(\frac{\lambda^2\exp(O(\lambda))\log(n/\delta)}{\varepsilon^4 \gamma^{O(d)}}\right)
    \end{equation*}
    samples for each node, it holds with probability at least $1-\delta$ that the logistic regression output $\widehat{w}_j$ for each \emph{honest} node $j\in H$ obtains a $\varepsilon>0$ additive approximation of $A_j$.
\end{thm}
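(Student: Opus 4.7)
The plan is to apply \Cref{thm:wsd} node-by-node to each honest $i \in H$ and then union bound. Since node $i$ is honest, whenever $i$ is selected for updating its spin is drawn exactly according to the conditional law $\mu_A(\cdot \vert X_{-i})$, so the samples generated for node $i$ (as in \Cref{defn:view}) satisfy \Cref{eq:faithful_2} with $\bm{w}^* = A_i$. In particular, \Cref{thm:hpbounds} and \Cref{lem:opt_azuma} apply without any modification, since these uniform martingale bounds hold for arbitrary stochastic processes. This yields the needed uniform convergence bound \Cref{eq:convergence} and the two-sided bound \Cref{eq:opt_deviation} with sample complexity $T(\varepsilon,\delta) = \tilde{O}(\lambda^2 \log(n/\delta)/\varepsilon^2)$ per node.

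The main task is then to reprove the population loss lower bound \Cref{eq:inf_lb} under $\gamma$-smooth and local adversarial Glauber dynamics. I will mirror the proof of \Cref{prop:sufficiecy_lb}, working conditionally on the snapshot at the last update time of node $i$. The good event $\mathcal{E}$ for a fixed $j \neq i$ is now defined to require (a) site $j$ is updated at some time before site $i$ is next updated; (b) $\sum_{k\neq j} |A_{jk}| N_k \le 8\lambda$; and (c) $\sum_{k \in C,\, k \in \mathcal{N}(j)} N_k \le 8d$. Since the site-selection process $i_t \sim \mathrm{Unif}([n])$ is identical to standard Glauber dynamics and is independent of spin values, the exact same symmetry/Markov arguments as in \Cref{prop:goodblocks} (together with $|\{k \in C : k \in \mathcal{N}(j)\}| \leq d$ and $\mathbb{E}[N_k] \leq 2$) imply that $\Pr(\mathcal{E}) \gtrsim 1$, where the constant is absolute.

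The key new step is the likelihood-ratio bound. Writing out $\Pr(\mathcal{P}^+)/\Pr(\mathcal{P}^-)$ as a product of update-probability ratios as in the proof of \Cref{prop:sufficiecy_lb}, each factor splits into two types. For updates at honest nodes $k \in H$, the factor is of the form $\mu_A(X_{t,k} \vert X_{t-1,-k}^{j,+})/\mu_A(X_{t,k} \vert X_{t-1,-k}^{j,-})$, which is bounded by $\exp(4|A_{jk}|)$ via \Cref{lem:ratios}; summing over honest updates on $\mathcal{E}$ contributes at most $\exp(O(\lambda))$. For updates at a corrupt node $k \in C$, the factor is the ratio of two values of $\mathcal{D}_{k,t}$ evaluated on histories that agree outside of the spin at $j$. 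By locality, this ratio equals $1$ unless $j \in \mathcal{N}(k)$; by $\gamma$-smoothness, both numerator and denominator lie in $[\gamma,1-\gamma]$, so each such factor is bounded by $(1-\gamma)/\gamma \le 1/\gamma$. On $\mathcal{E}$ there are at most $8d$ such factors, contributing at most $\gamma^{-O(d)}$. Hence on $\mathcal{E}$, $\Pr(X_j = \varepsilon \vert \mathcal{P}') \geq \exp(-O(\lambda))\gamma^{O(d)}$, and the rest of the deterministic argument at the end of \Cref{prop:sufficiecy_lb} proceeds verbatim, yielding
\[
\mathbb{E}\!\left[\left(\sigma(2(\langle \bm{w},X\rangle+h))-\sigma(2(\langle \bm{w}^*,X\rangle+h^*))\right)^2\bigg\vert \mathcal{F}_0\right] \gtrsim \exp(-O(\lambda))\,\gamma^{O(d)} \min\{1,\,8(w_j-w_j^*)^2\}.
\]
Plugging this into \Cref{thm:wsd} with $c_{\ref{thm:wsd}} = \exp(-O(\lambda))\gamma^{O(d)}$ and union-bounding over honest $i \in H$ gives the claimed per-node sample complexity $\tilde{O}(\lambda^2 \exp(O(\lambda)) \log(n/\delta)/(\varepsilon^4 \gamma^{O(d)}))$.

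The main obstacle to anticipate is the interaction between the path-based conditioning trick and the adversary: we cannot simply pretend adversarial updates are absent, and we must control how much a flip of the starred spin at $j$ can propagate through adversarial responses. The locality assumption (only neighbors of the updated site matter) together with the bounded-degree assumption is what makes the count of relevant adversarial updates an $O(d)$ quantity in expectation, and the smoothness lower bound $\gamma$ is what makes each such update contribute only an $O(\log(1/\gamma))$ factor to the log-likelihood ratio. Without either property, the adversary could, in principle, make the posterior on $X_j$ arbitrarily biased and destroy identifiability.
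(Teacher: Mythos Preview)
Your proposal is correct and follows essentially the same approach as the paper's proof: both verify the conditions of \Cref{thm:wsd} for each honest node by (i) invoking the process-agnostic uniform martingale bounds of \Cref{thm:hpbounds}, and (ii) redoing the path-likelihood-ratio argument of \Cref{prop:sufficiecy_lb} with the additional case split that corrupt updates outside $\mathcal{N}(j)$ contribute ratio $1$ by locality while corrupt updates inside $\mathcal{N}(j)$ contribute at most $1/\gamma$ by smoothness, with the count of the latter controlled by $O(d)$ via Markov's inequality on $N_k$. The only cosmetic difference is that you phrase the good event with $\sum_{k\neq j}|A_{jk}|N_k$ ranging over all $k$ rather than $k\in H\cap \mathcal{N}(j)$, but since $A_{jk}=0$ for $k\notin \mathcal{N}(j)$ these are equivalent.
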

\begin{proof}[Sketch of \Cref{thm:adversarial_glauber}]
    The proof is very similar to that of \Cref{prop:goodblocks} and considers likelihood ratios. For convenience, suppose node $n$ is an honest node; our first goal is to show that a similar conclusion to \Cref{prop:goodblocks} holds. Formally, one can show the following:

    \begin{claim}
    \label{claim:adv_claim}
        Assume the conditions of \Cref{thm:adversarial_glauber} and suppose node $n$ is honest. Fix any $X_0\in \{-1,1\}^n$ and history of $\gamma$-smooth and local adversarial Glauber dynamics up to time $0$ (which we denote $\mathcal{F}_0$ after reindexing). Let $Z=(X_{\tau_1,-n},X_{\tau_1,n})$ be the configuration at the next update time of node $n$ (as in \Cref{defn:view} after reindexing). For simplicity, write $X=X_{\tau_1,-i}\in \{-1,1\}^{n-1}$.

    Then there exists a constant $c>0$ such that for all $j\neq n$, and any $(\bm{w},h)\in \mathbb{R}^{n-1}\times \mathbb{R}$,
    \begin{equation*}
        \mathbb{E}_{X}\left[\left(\sigma\left(2(\langle \bm{w},X\rangle+h)\right)-\sigma\left(2(\langle \bm{w}^*,X\rangle+h^*)\right)\right)^2\bigg\vert \mathcal{F}_0\right]\geq c\gamma^{O(d)}\exp(-O(\lambda))\min\{1,8(w_j-w_j^*)^2\}.
    \end{equation*}
    \end{claim}

    We merely sketch the proof as it is very similar to \Cref{prop:goodblocks}: consider any $j\neq [n]$, honest or not. As in that proof, let $\mathcal{P}$ denote the sequence of updates before the next update of $n$ and let $\mathcal{P}'$ denote the sequence of updates replacing the last occurrence of $j$ with $*$ (if one exists).

    As in \Cref{prop:goodblocks}, it then suffices to show that with constant probability over $\mathcal{P}'$, it holds that $j$ was indeed updated in $\mathcal{P}$ and that the likelihood ratio of $X_j=\varepsilon$ conditional on $\mathcal{P}'$ is bounded above and below. The former holds with probability at least $1/2$, so it suffices to control the likelihood ratio with probability at least $3/4$ and take a union bound. By writing out and factorizing the likelihood of $\mathcal{P}^{\pm}$ as in \Cref{prop:goodblocks}, we note that the likelihood ratio of each node update after the last update of $j$ given the true configuration with $X_j=\pm 1$ is controlled as follows:
    \begin{enumerate}
        \item If node $k\in \mathcal{N}(j)^c$ is updated, the likelihood ratio is $1$ by locality as the update rule does not depend on the value of $X_j$.
        \item If node $k\in C\cap \mathcal{N}(j)$ is updated, the likelihood ratio is at most $1/\gamma$ by smoothness.
        \item If node $k\in H\cap \mathcal{N}(j)$, the likelihood ratio is bounded by $\exp(O(\vert A_{i,j}\vert))$ by \Cref{lem:ratios}.
    \end{enumerate}
    Let $N_k$ denote the number of times node $k$ is updated after the last update of $j$. Then the likelihood ratio of $X_j$ given $\mathcal{P}'$ is bounded by 
    \begin{equation*}
        \exp\left(O\left(\sum_{k\in \mathcal{N}(j)\cap H} N_k\vert A_{ij}\vert\right)\right)\cdot \gamma^{-O(\sum_{k\in C\cap \mathcal{N}(j)} N_k)}.
    \end{equation*}
    As in \Cref{prop:goodblocks}, by Markov's inequality, we have with probability at least $3/4$ over $\mathcal{P}'$ that 
    \begin{gather*}
        \sum_{k\in \mathcal{N}(j)\cap H} N_k\vert A_{ij}\vert\leq 8\lambda\\
        \sum_{k\in \mathcal{N}(j)\cap C} N_k\leq 8d.
    \end{gather*}
    Combining the previous two displays gives bound on the likelihood ratio, and following the rest of the argument in \Cref{prop:goodblocks} gives the stated conclusion of the claim. At this point, we may apply the uniform martingale bounds of \Cref{thm:radbound} with the bound of \Cref{claim:adv_claim} to the meta-theorem \Cref{thm:wsd} to obtain the result.
\end{proof}

\section{Conclusions and Open Questions}
In this work, we have shown that logistic regression can efficiently learn Ising models under more general conditions than previously known. Our analysis identifies  refined structural conditions that more accurately reflect the statistical and computational hardness of learning specific Ising models in a variety of regimes. We conclude with a few interesting open directions:

\begin{question}
    What is the sample complexity of learning from dynamics where one only observes updates when the configuration changes?
\end{question}
Note that in \Cref{thm:learning_block}, as well as the prior work of Bresler, Gamarnik, and Shah \cite{DBLP:journals/tit/BreslerGS18} and Dutt, Lokhov, Vuffray, and Misra \cite{DBLP:conf/icml/DuttLVM21}, we have assumed that the indices of vertices that are chosen for updating at each time are observed regardless of whether their value actually changes. In practice, it is much more likely that one only observes the \emph{changes} in configurations and possibly the clock of updating. One expects that an appropriately debiased version of logistic regression may succeed but would likely need new ideas.

\begin{question}
    Can one remove the extraneous $\mathsf{poly}(\log\log n)$ term from \Cref{thm:learning_block}?
\end{question}
Recall that this term arises from applying the contraction principle for sequential Rademacher complexity, and it appears open whether this factor is necessary in general. But for the more structured processes we consider here, it may be possible to give a more direct argument avoiding these bounds or the use of sequential Rademacher complexity altogether.

\begin{question}
Can one extend the results of \Cref{sec:sk} to general mixed $p$-spin models in the entire high-temperature regime?
\end{question}
We expect that our general results should extend to Markov random fields with higher-order interactions without much effort. However, the specific case of mixed $p$-spin models may exhibit an interesting regime where reducing learning to appropriate moment bounds, rather than dynamical notions, may be provably necessary. Anari, Jain, Koehler, Pham, and Vuong \cite{anari_universality} have shown that one can learn very high-temperature $p$-spin models since entropy factorization holds, but this certainly cannot extend to the entire high-temperature regime: unlike the SK model (which corresponds to $p=2$), general mixed $p$-spin models exhibit \emph{shattering} in an interval of the high-temperature (replica-symmetric) regime, which provably prevents fast mixing and thus entropy factorization (see, e.g. \cite{shattering_baj,alaoui2023shattering,gamarnik2023shattering}). It is not clear to us whether shattering precludes the weaker condition of boundedness of appropriate moment matrices (with respect to an appropriate norm) with no external field. 

\begin{question}
    Can one obtain more refined guarantees for learning from dishonest dynamics, possibly under extra assumptions? 
\end{question}
Note that our result \Cref{thm:adversarial_glauber} only ensures that dishonest updates, under some constraints, cannot affect the recovery of honest nodes given enough samples. Beyond improving the sample complexity given there, it would be interesting to understand whether these constraints can be relaxed. Under some additional modelling conditions, it may also be possible to identify corrupt nodes or otherwise minimize their influence. For instance, one may hope to argue that if one cannot identify a corrupt node while learning from dynamics, then the set of corrupt nodes must ``hide'' in a sense that enables improved recovery guarantees. While it is possible to show that one cannot obtain such a guarantee in the exact setting we have considered,  perhaps imposing other restrictions could allow for improved guarantees. See, for instance, Alon, Mossel, and Pemantle \cite{DBLP:journals/toc/AlonMP20} and subsequent work for the related problem of detecting corruption in graphs under somewhat different conditions.

\bibliography{bibliography}

\newcommand{\etalchar}[1]{$^{#1}$}
\begin{thebibliography}{ZKKW20}

\bibitem[ABXY22]{adhikari2022spectral}
Arka Adhikari, Christian Brennecke, Changji Xu, and Horng-Tzer Yau.
\newblock Spectral gap estimates for mixed $ p $-spin models at high temperature.
\newblock {\em arXiv preprint arXiv:2208.07844}, 2022.

\bibitem[AG22]{alaoui2022bounds}
Ahmed~El Alaoui and Jason Gaitonde.
\newblock Bounds on the covariance matrix of the {S}herrington-{K}irkpatrick model.
\newblock {\em arXiv preprint arXiv:2212.02445}, 2022.

\bibitem[AJ23]{shattering_baj}
Gérard~Ben Arous and Aukosh Jagannath.
\newblock Shattering versus metastability in spin glasses.
\newblock {\em Communications on Pure and Applied Mathematics}, pages 1--38, 2023.

\bibitem[AJK{\etalchar{+}}23]{anari_universality}
Nima Anari, Vishesh Jain, Frederic Koehler, Huy~Tuan Pham, and Thuy{-}Duong Vuong.
\newblock Universality of {S}pectral {I}ndependence with {A}pplications to {F}ast {M}ixing in {S}pin {G}lasses.
\newblock {\em CoRR}, abs/2307.10466, 2023.

\bibitem[AMP20]{DBLP:journals/toc/AlonMP20}
Noga Alon, Elchanan Mossel, and Robin Pemantle.
\newblock Distributed {C}orruption {D}etection in {N}etworks.
\newblock {\em Theory Comput.}, 16:1--23, 2020.

\bibitem[AMS22]{DBLP:conf/focs/AlaouiMS22}
Ahmed~El Alaoui, Andrea Montanari, and Mark Sellke.
\newblock Sampling from the {S}herrington-{K}irkpatrick {G}ibbs measure via algorithmic stochastic localization.
\newblock In {\em 63rd {IEEE} Annual Symposium on Foundations of Computer Science, {FOCS} 2022, Denver, CO, USA, October 31 - November 3, 2022}, pages 323--334. {IEEE}, 2022.

\bibitem[AMS23]{alaoui2023shattering}
Ahmed~El Alaoui, Andrea Montanari, and Mark Sellke.
\newblock Shattering in {P}ure {S}pherical {S}pin {G}lasses.
\newblock {\em arXiv preprint arXiv:2307.04659}, 2023.

\bibitem[AV13]{anandkumar2013learning}
Animashree Anandkumar and Ragupathyraj Valluvan.
\newblock Learning loopy graphical models with latent variables: {E}fficient methods and guarantees.
\newblock {\em The Annals of Statistics}, pages 401--435, 2013.

\bibitem[BB19]{bauerschmidt}
Roland Bauerschmidt and Thierry Bodineau.
\newblock A very simple proof of the {LSI} for high temperature spin systems.
\newblock {\em Journal of Functional Analysis}, 276(8):2582--2588, 2019.

\bibitem[BBK21]{DBLP:conf/focs/Boix-AdseraBK21}
Enric Boix{-}Adser{\`{a}}, Guy Bresler, and Frederic Koehler.
\newblock Chow-{L}iu++: {O}ptimal {P}rediction-{C}entric {L}earning of {T}ree {I}sing {M}odels.
\newblock In {\em 62nd {IEEE} Annual Symposium on Foundations of Computer Science, {FOCS} 2021, Denver, CO, USA, February 7-10, 2022}, pages 417--426. {IEEE}, 2021.

\bibitem[BDR21]{DBLP:conf/colt/BlockDR21}
Adam Block, Yuval Dagan, and Alexander Rakhlin.
\newblock Majorizing {M}easures, {S}equential {C}omplexities, and {O}nline {L}earning.
\newblock In Mikhail Belkin and Samory Kpotufe, editors, {\em Conference on Learning Theory, {COLT} 2021, 15-19 August 2021, Boulder, Colorado, {USA}}, volume 134 of {\em Proceedings of Machine Learning Research}, pages 587--590. {PMLR}, 2021.

\bibitem[BGS18]{DBLP:journals/tit/BreslerGS18}
Guy Bresler, David Gamarnik, and Devavrat Shah.
\newblock Learning {G}raphical {M}odels from the {G}lauber {D}ynamics.
\newblock {\em {IEEE} Trans. Inf. Theory}, 64(6):4072--4080, 2018.

\bibitem[BK20]{bresler_karzand}
Guy Bresler and Mina Karzand.
\newblock {Learning a tree-structured Ising model in order to make predictions}.
\newblock {\em The Annals of Statistics}, 48(2):713 -- 737, 2020.

\bibitem[BKM19]{DBLP:conf/stoc/BreslerKM19}
Guy Bresler, Frederic Koehler, and Ankur Moitra.
\newblock Learning restricted {B}oltzmann machines via influence maximization.
\newblock In Moses Charikar and Edith Cohen, editors, {\em Proceedings of the 51st Annual {ACM} {SIGACT} Symposium on Theory of Computing, {STOC} 2019, Phoenix, AZ, USA, June 23-26, 2019}, pages 828--839. {ACM}, 2019.

\bibitem[BMS13]{DBLP:journals/siamcomp/BreslerMS13}
Guy Bresler, Elchanan Mossel, and Allan Sly.
\newblock Reconstruction of {M}arkov {R}andom {F}ields from {S}amples: {S}ome {O}bservations and {A}lgorithms.
\newblock {\em {SIAM} J. Comput.}, 42(2):563--578, 2013.

\bibitem[BMV08]{DBLP:conf/approx/BogdanovMV08}
Andrej Bogdanov, Elchanan Mossel, and Salil~P. Vadhan.
\newblock The {C}omplexity of {D}istinguishing {M}arkov {R}andom {F}ields.
\newblock In Ashish Goel, Klaus Jansen, Jos{\'{e}} D.~P. Rolim, and Ronitt Rubinfeld, editors, {\em Approximation, Randomization and Combinatorial Optimization. Algorithms and Techniques, 11th International Workshop, {APPROX} 2008, and 12th International Workshop, {RANDOM} 2008, Boston, MA, USA, August 25-27, 2008. Proceedings}, volume 5171 of {\em Lecture Notes in Computer Science}, pages 331--342. Springer, 2008.

\bibitem[Bre15]{DBLP:conf/stoc/Bresler15}
Guy Bresler.
\newblock Efficiently {L}earning {I}sing {M}odels on {A}rbitrary {G}raphs.
\newblock In Rocco~A. Servedio and Ronitt Rubinfeld, editors, {\em Proceedings of the Forty-Seventh Annual {ACM} on Symposium on Theory of Computing, {STOC} 2015, Portland, OR, USA, June 14-17, 2015}, pages 771--782. {ACM}, 2015.

\bibitem[BRO17]{Bachschmid_Romano_2017}
Ludovica Bachschmid-Romano and Manfred Opper.
\newblock A statistical physics approach to learning curves for the inverse {I}sing problem.
\newblock {\em Journal of Statistical Mechanics: Theory and Experiment}, 2017(6):063406, jun 2017.

\bibitem[BSXY22]{brennecke2022two}
Christian Brennecke, Adrien Schertzer, Changji Xu, and Horng-Tzer Yau.
\newblock The {T}wo {P}oint {F}unction of the {SK} model without {E}xternal {F}ield at {H}igh {T}emperature.
\newblock {\em arXiv preprint arXiv:2212.14476}, 2022.

\bibitem[BXY23]{brennecke2023operator}
Christian Brennecke, Changji Xu, and Horng-Tzer Yau.
\newblock Operator {N}orm {B}ounds on the {C}orrelation {M}atrix of the {SK} {M}odel at {H}igh {T}emperature.
\newblock {\em arXiv preprint arXiv:2307.12535}, 2023.

\bibitem[Cel22]{celentano}
Michael Celentano.
\newblock Sudakov-{F}ernique post-{AMP}, and a new proof of the local convexity of the {TAP} free energy.
\newblock {\em CoRR}, abs/2208.09550, 2022.

\bibitem[CL68]{chow_liu}
Chao-Kong Chow and Chao-Ning Liu.
\newblock Approximating discrete probability distributions with dependence trees.
\newblock {\em IEEE Transactions on Information Theory}, 14(3):462--467, 1968.

\bibitem[CMMS23]{adversarial}
Byron Chin, Ankur Moitra, Elchanan Mossel, and Colin Sandon.
\newblock The {P}ower of an {A}dversary in {G}lauber {D}ynamics.
\newblock {\em arXiv preprint arXiv:2302.10841}, 2023.

\bibitem[CMT15]{caputo2015approximate}
Pietro Caputo, Georg Menz, and Prasad Tetali.
\newblock Approximate tensorization of entropy at high temperature.
\newblock In {\em Annales de la Facult{\'e} des sciences de Toulouse: Math{\'e}matiques}, volume~24, pages 691--716, 2015.

\bibitem[DDDK21]{DBLP:conf/stoc/DaganDDK21}
Yuval Dagan, Constantinos Daskalakis, Nishanth Dikkala, and Anthimos~Vardis Kandiros.
\newblock Learning {I}sing models from one or multiple samples.
\newblock In Samir Khuller and Virginia~Vassilevska Williams, editors, {\em {STOC} '21: 53rd Annual {ACM} {SIGACT} Symposium on Theory of Computing, Virtual Event, Italy, June 21-25, 2021}, pages 161--168. {ACM}, 2021.

\bibitem[DDK18]{DBLP:conf/soda/DaskalakisDK18}
Constantinos Daskalakis, Nishanth Dikkala, and Gautam Kamath.
\newblock Testing {I}sing {M}odels.
\newblock In Artur Czumaj, editor, {\em Proceedings of the Twenty-Ninth Annual {ACM-SIAM} Symposium on Discrete Algorithms, {SODA} 2018, New Orleans, LA, USA, January 7-10, 2018}, pages 1989--2007. {SIAM}, 2018.

\bibitem[DKSS21]{DBLP:conf/colt/DiakonikolasKSS21}
Ilias Diakonikolas, Daniel~M. Kane, Alistair Stewart, and Yuxin Sun.
\newblock {Outlier-Robust Learning of Ising Models Under Dobrushin's Condition}.
\newblock In Mikhail Belkin and Samory Kpotufe, editors, {\em Conference on Learning Theory, {COLT} 2021, 15-19 August 2021, Boulder, Colorado, {USA}}, volume 134 of {\em Proceedings of Machine Learning Research}, pages 1645--1682. {PMLR}, 2021.

\bibitem[DLVM21]{DBLP:conf/icml/DuttLVM21}
Arkopal Dutt, Andrey~Y. Lokhov, Marc Vuffray, and Sidhant Misra.
\newblock Exponential {R}eduction in {S}ample {C}omplexity with {L}earning of {I}sing {M}odel {D}ynamics.
\newblock In Marina Meila and Tong Zhang, editors, {\em Proceedings of the 38th International Conference on Machine Learning, {ICML} 2021, 18-24 July 2021, Virtual Event}, volume 139 of {\em Proceedings of Mamoitrae Learning Research}, pages 2914--2925. {PMLR}, 2021.

\bibitem[DMR20]{devroye2020minimax}
Luc Devroye, Abbas Mehrabian, and Tommy Reddad.
\newblock The minimax learning rates of normal and {I}sing undirected graphical models.
\newblock {\em Electronic Journal of Statistics}, 14:2338--2361, 2020.

\bibitem[DP21]{DBLP:conf/stoc/DaskalakisP21}
Constantinos Daskalakis and Qinxuan Pan.
\newblock Sample-optimal and efficient learning of tree {I}sing models.
\newblock In Samir Khuller and Virginia~Vassilevska Williams, editors, {\em {STOC} '21: 53rd Annual {ACM} {SIGACT} Symposium on Theory of Computing, Virtual Event, Italy, June 21-25, 2021}, pages 133--146. {ACM}, 2021.

\bibitem[DS87]{dobrushin1987completely}
Roland~L Dobrushin and Senya~B Shlosman.
\newblock Completely analytical interactions: {C}onstructive description.
\newblock {\em Journal of Statistical Physics}, 46:983--1014, 1987.

\bibitem[EKZ22]{eldan2022spectral}
Ronen Eldan, Frederic Koehler, and Ofer Zeitouni.
\newblock A {S}pectral {C}ondition for {S}pectral {G}ap: {F}ast {M}ixing in {H}igh-{T}emperature {I}sing {M}odels.
\newblock {\em Probability {T}heory and {R}elated {F}ields}, 182(3-4):1035--1051, 2022.

\bibitem[FMM21]{fan_mei_montanari}
Zhou Fan, Song Mei, and Andrea Montanari.
\newblock {TAP free energy, spin glasses and variational inference}.
\newblock {\em The Annals of Probability}, 49(1):1 -- 45, 2021.

\bibitem[FV17]{friedli_velenik_2017}
Sacha Friedli and Yvan Velenik.
\newblock {\em Statistical Mechanics of Lattice Systems: A Concrete Mathematical Introduction}.
\newblock Cambridge University Press, 2017.

\bibitem[GJK23]{gamarnik2023shattering}
David Gamarnik, Aukosh Jagannath, and Eren~C K{\i}z{\i}lda{\u{g}}.
\newblock Shattering in the {I}sing {P}ure $p$-spin {M}odel.
\newblock {\em arXiv preprint arXiv:2307.07461}, 2023.

\bibitem[GKK19]{DBLP:conf/colt/GoelKK19}
Surbhi Goel, Daniel~M. Kane, and Adam~R. Klivans.
\newblock Learning {I}sing {M}odels with {I}ndependent {F}ailures.
\newblock In Alina Beygelzimer and Daniel Hsu, editors, {\em Conference on Learning Theory, {COLT} 2019, 25-28 June 2019, Phoenix, AZ, {USA}}, volume~99 of {\em Proceedings of Machine Learning Research}, pages 1449--1469. {PMLR}, 2019.

\bibitem[GKK20]{DBLP:conf/nips/GoelKK20}
Surbhi Goel, Adam~R. Klivans, and Frederic Koehler.
\newblock From {B}oltzmann {M}achines to {N}eural {N}etworks and {B}ack {A}gain.
\newblock In Hugo Larochelle, Marc'Aurelio Ranzato, Raia Hadsell, Maria{-}Florina Balcan, and Hsuan{-}Tien Lin, editors, {\em Advances in Neural Information Processing Systems 33: Annual Conference on Neural Information Processing Systems 2020, NeurIPS 2020, December 6-12, 2020, virtual}, 2020.

\bibitem[Goe20]{DBLP:conf/aistats/Goel20}
Surbhi Goel.
\newblock Learning {I}sing and {P}otts {M}odels with {L}atent {V}ariables.
\newblock In Silvia Chiappa and Roberto Calandra, editors, {\em The 23rd International Conference on Artificial Intelligence and Statistics, {AISTATS} 2020, 26-28 August 2020, Online [Palermo, Sicily, Italy]}, volume 108 of {\em Proceedings of Machine Learning Research}, pages 3557--3566. {PMLR}, 2020.

\bibitem[HKM17]{DBLP:conf/nips/HamiltonKM17}
Linus Hamilton, Frederic Koehler, and Ankur Moitra.
\newblock Information {T}heoretic {P}roperties of {M}arkov {R}andom {F}ields, and their {A}lgorithmic {A}pplications.
\newblock In Isabelle Guyon, Ulrike von Luxburg, Samy Bengio, Hanna~M. Wallach, Rob Fergus, S.~V.~N. Vishwanathan, and Roman Garnett, editors, {\em Advances in Neural Information Processing Systems 30: Annual Conference on Neural Information Processing Systems 2017, December 4-9, 2017, Long Beach, CA, {USA}}, pages 2463--2472, 2017.

\bibitem[KDDC23]{DBLP:conf/colt/KandirosDDC23}
Anthimos~Vardis Kandiros, Constantinos Daskalakis, Yuval Dagan, and Davin Choo.
\newblock Learning and {T}esting {L}atent-{T}ree {I}sing {M}odels {E}fficiently.
\newblock In Gergely Neu and Lorenzo Rosasco, editors, {\em The Thirty Sixth Annual Conference on Learning Theory, {COLT} 2023, 12-15 July 2023, Bangalore, India}, volume 195 of {\em Proceedings of Machine Learning Research}, pages 1666--1729. {PMLR}, 2023.

\bibitem[KHR23]{DBLP:conf/iclr/KoehlerHR23}
Frederic Koehler, Alexander Heckett, and Andrej Risteski.
\newblock Statistical {E}fficiency of {S}core {M}atching: {T}he {V}iew from {I}soperimetry.
\newblock In {\em The Eleventh International Conference on Learning Representations, {ICLR} 2023, Kigali, Rwanda, May 1-5, 2023}. OpenReview.net, 2023.

\bibitem[KM17]{DBLP:conf/focs/KlivansM17}
Adam~R. Klivans and Raghu Meka.
\newblock Learning {G}raphical {M}odels {U}sing {M}ultiplicative {W}eights.
\newblock In Chris Umans, editor, {\em 58th {IEEE} Annual Symposium on Foundations of Computer Science, {FOCS} 2017, Berkeley, CA, USA, October 15-17, 2017}, pages 343--354. {IEEE} Computer Society, 2017.

\bibitem[LP17]{levin2017markov}
David~A. Levin and Yuval Peres.
\newblock {\em Markov {C}hains and {M}ixing {T}imes}, volume 107.
\newblock American Mathematical Soc., 2017.

\bibitem[MM09]{mezard2009constraint}
Marc M{\'e}zard and Thierry Mora.
\newblock Constraint satisfaction problems and neural networks: A statistical physics perspective.
\newblock {\em Journal of Physiology-Paris}, 103(1-2):107--113, 2009.

\bibitem[MMS21]{DBLP:conf/colt/MoitraMS21}
Ankur Moitra, Elchanan Mossel, and Colin Sandon.
\newblock Learning to {S}ample from {C}ensored {M}arkov {R}andom {F}ields.
\newblock In Mikhail Belkin and Samory Kpotufe, editors, {\em Conference on Learning Theory, {COLT} 2021, 15-19 August 2021, Boulder, Colorado, {USA}}, volume 134 of {\em Proceedings of Machine Learning Research}, pages 3419--3451. {PMLR}, 2021.

\bibitem[MS09]{DBLP:conf/focs/MontanariS09}
Andrea Montanari and Amin Saberi.
\newblock Convergence to {E}quilibrium in {L}ocal {I}nteraction {G}ames.
\newblock In {\em 50th Annual {IEEE} Symposium on Foundations of Computer Science, {FOCS} 2009, October 25-27, 2009, Atlanta, Georgia, {USA}}, pages 303--312. {IEEE} Computer Society, 2009.

\bibitem[Pan13]{panchenko2013sherrington}
Dmitry Panchenko.
\newblock {\em The {S}herrington-{K}irkpatrick {M}odel}.
\newblock Springer Science \& Business Media, 2013.

\bibitem[Par80]{Parisi_1980}
Giorgio Parisi.
\newblock A sequence of approximated solutions to the {S-K} model for spin glasses.
\newblock {\em Journal of Physics A: Mathematical and General}, 13(4):L115, apr 1980.

\bibitem[PSBR20]{DBLP:conf/nips/PrasadSBR20}
Adarsh Prasad, Vishwak Srinivasan, Sivaraman Balakrishnan, and Pradeep Ravikumar.
\newblock On {L}earning {I}sing {M}odels under {H}uber's {C}ontamination {M}odel.
\newblock In Hugo Larochelle, Marc'Aurelio Ranzato, Raia Hadsell, Maria{-}Florina Balcan, and Hsuan{-}Tien Lin, editors, {\em Advances in Neural Information Processing Systems 33: Annual Conference on Neural Information Processing Systems 2020, NeurIPS 2020, December 6-12, 2020, virtual}, 2020.

\bibitem[RS17]{DBLP:conf/colt/RakhlinS17}
Alexander Rakhlin and Karthik Sridharan.
\newblock On {E}quivalence of {M}artingale {T}ail {B}ounds and {D}eterministic {R}egret {I}nequalities.
\newblock In Satyen Kale and Ohad Shamir, editors, {\em Proceedings of the 30th Conference on Learning Theory, {COLT} 2017, Amsterdam, The Netherlands, 7-10 July 2017}, volume~65 of {\em Proceedings of Machine Learning Research}, pages 1704--1722. {PMLR}, 2017.

\bibitem[RST15]{DBLP:journals/jmlr/RakhlinST15}
Alexander Rakhlin, Karthik Sridharan, and Ambuj Tewari.
\newblock Online learning via sequential complexities.
\newblock {\em J. Mach. Learn. Res.}, 16:155--186, 2015.

\bibitem[RWL10]{ravikumar2010high}
Pradeep Ravikumar, Martin~J Wainwright, and John~D Lafferty.
\newblock High-{D}imensional {I}sing {M}odel {S}election {U}sing $\ell_1$-{R}egularized {L}ogistic {R}egression.
\newblock {\em The Annals of Statistics}, pages 1287--1319, 2010.

\bibitem[SK75]{PhysRevLett.35.1792}
David Sherrington and Scott Kirkpatrick.
\newblock Solvable model of a spin-glass.
\newblock {\em Phys. Rev. Lett.}, 35:1792--1796, Dec 1975.

\bibitem[SS12]{DBLP:conf/focs/SlyS12}
Allan Sly and Nike Sun.
\newblock The {C}omputational {H}ardness of {C}ounting in {T}wo-{S}pin {M}odels on $d$-{R}egular {G}raphs.
\newblock In {\em 53rd Annual {IEEE} Symposium on Foundations of Computer Science, {FOCS} 2012, New Brunswick, NJ, USA, October 20-23, 2012}, pages 361--369. {IEEE} Computer Society, 2012.

\bibitem[SSBD14]{books/daglib/0033642}
Shai Shalev-Shwartz and Shai Ben-David.
\newblock {\em Understanding Machine Learning - From Theory to Algorithms.}
\newblock Cambridge University Press, 2014.

\bibitem[SW12]{DBLP:journals/tit/SanthanamW12}
Narayana~P. Santhanam and Martin~J. Wainwright.
\newblock Information-theoretic limits of selecting binary graphical models in high dimensions.
\newblock {\em {IEEE} Trans. Inf. Theory}, 58(7):4117--4134, 2012.

\bibitem[Tal10]{talagrand2010mean}
Michel Talagrand.
\newblock {\em Mean {F}ield {M}odels for {S}pin {G}lasses: {V}olume {I}: {B}asic {E}xamples}, volume~54.
\newblock Springer Science \& Business Media, 2010.

\bibitem[Tal11]{talagrand2011mean}
Michel Talagrand.
\newblock {\em Mean Field Models for Spin Glasses: Volume II: Advanced Replica-Symmetry and Low Temperature}.
\newblock Ergebnisse der Mathematik und ihrer Grenzgebiete. 3. Folge / A Series of Modern Surveys in Mathematics. Springer Berlin Heidelberg, 2011.

\bibitem[Ver18]{vershyninHighdimensionalProbabilityIntroduction2018}
Roman Vershynin.
\newblock {\em High-Dimensional Probability: An Introduction with Applications in Data Science}.
\newblock Number~47 in Cambridge Series in Statistical and Probabilistic Mathematics. {Cambridge University Press}, 2018.

\bibitem[VML20]{DBLP:conf/nips/VuffrayML20}
Marc Vuffray, Sidhant Misra, and Andrey~Y. Lokhov.
\newblock Efficient {L}earning of {D}iscrete {G}raphical {M}odels.
\newblock In Hugo Larochelle, Marc'Aurelio Ranzato, Raia Hadsell, Maria{-}Florina Balcan, and Hsuan{-}Tien Lin, editors, {\em Advances in Neural Information Processing Systems 33: Annual Conference on Neural Information Processing Systems 2020, NeurIPS 2020, December 6-12, 2020, virtual}, 2020.

\bibitem[VMLC16]{DBLP:conf/nips/VuffrayMLC16}
Marc Vuffray, Sidhant Misra, Andrey~Y. Lokhov, and Michael Chertkov.
\newblock Interaction {S}creening: {E}fficient and {S}ample-{O}ptimal {L}earning of {I}sing {M}odels.
\newblock In Daniel~D. Lee, Masashi Sugiyama, Ulrike von Luxburg, Isabelle Guyon, and Roman Garnett, editors, {\em Advances in Neural Information Processing Systems 29: Annual Conference on Neural Information Processing Systems 2016, December 5-10, 2016, Barcelona, Spain}, pages 2595--2603, 2016.

\bibitem[Wai19]{wainwright_2019}
Martin~J. Wainwright.
\newblock {\em High-Dimensional Statistics: A Non-Asymptotic Viewpoint}.
\newblock Cambridge Series in Statistical and Probabilistic Mathematics. Cambridge University Press, 2019.

\bibitem[WSD19]{DBLP:conf/nips/WuSD19}
Shanshan Wu, Sujay Sanghavi, and Alexandros~G. Dimakis.
\newblock Sparse {L}ogistic {R}egression {L}earns {A}ll {D}iscrete {P}airwise {G}raphical {M}odels.
\newblock In Hanna~M. Wallach, Hugo Larochelle, Alina Beygelzimer, Florence d'Alch{\'{e}}{-}Buc, Emily~B. Fox, and Roman Garnett, editors, {\em Advances in Neural Information Processing Systems 32: Annual Conference on Neural Information Processing Systems 2019, NeurIPS 2019, December 8-14, 2019, Vancouver, BC, Canada}, pages 8069--8079, 2019.

\bibitem[ZKKW20]{DBLP:conf/icml/Zhang0KW20}
Huanyu Zhang, Gautam Kamath, Janardhan Kulkarni, and Zhiwei~Steven Wu.
\newblock Privately {L}earning {M}arkov {R}andom {F}ields.
\newblock In {\em Proceedings of the 37th International Conference on Machine Learning, {ICML} 2020, 13-18 July 2020, Virtual Event}, volume 119 of {\em Proceedings of Machine Learning Research}, pages 11129--11140. {PMLR}, 2020.

\end{thebibliography}
\bibliographystyle{alpha}

\appendix

\section{Deferred Proofs}
\label{sec:appendix}
We now provide the deferred proofs of \Cref{thm:cov_bound_external} and \Cref{thm:tap_consequence} for the interested reader. As mentioned, the former is essentially the main result of \cite{brennecke2023operator}, while the second can be obtained from the proof of the TAP equations of Talagrand \cite{talagrand2010mean}. To begin, we require the following simple fact that holds under the AT line showing that the $q$ parameter in \Cref{eq:RSq} remains well-behaved at high-temperature.

\begin{lem}
\label{lem:q_derivative}
    Let $\mathcal{A}$ satisfy \Cref{assumption:high_temp} and suppose $(\beta,\mu,\sigma^2)\in \mathcal{A}$. Then it holds that
    \begin{equation*}
        \frac{\partial}{\partial \beta} q(\beta,\mu,\sigma^2)\leq C_{\ref{lem:q_derivative}}(\mathcal{A})
    \end{equation*}
    for some finite constant $C_{\ref{lem:q_derivative}}(\mathcal{A})>0$ that holds uniformly on $\mathcal{A}$.
\end{lem}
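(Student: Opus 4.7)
The plan is to use implicit differentiation combined with Gaussian integration by parts (Stein's lemma). Define
\begin{equation*}
\phi(\beta,q)\triangleq\mathbb{E}_{h,z}\bigl[\tanh^2(\beta z\sqrt{q}+h)\bigr],
\end{equation*}
where $z\sim\mathcal{N}(0,1)$ and $h\sim\mathcal{N}(\mu,\sigma^2)$, so that the function $q(\beta,\mu,\sigma^2)$ is defined by the fixed point equation $\phi(\beta,q)=q$. Since $\phi$ is smooth in $q$ (by an elementary Taylor expansion in $\sqrt{q}$ at $q=0$, the only seemingly singular point), implicit differentiation yields
\begin{equation*}
\frac{\partial q}{\partial\beta}\;=\;\frac{\partial_\beta\phi(\beta,q)}{1-\partial_q\phi(\beta,q)},
\end{equation*}
so it suffices to bound the numerator uniformly from above and the denominator uniformly away from zero on $\mathcal{A}$.

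Writing $g(x)=\tanh^2(x)$ so that $g''(x)=2\,\mathrm{sech}^4(x)-4\tanh^2(x)\,\mathrm{sech}^2(x)$, I would apply Stein's lemma $\mathbb{E}[z\,F(z)]=\mathbb{E}[F'(z)]$ to the functions of $z$ that appear after differentiating through the expectation. This converts the $1/\sqrt{q}$ factor coming from $\partial_q\sqrt{q}$ into a multiplicative $\sqrt{q}$, giving the clean identities
\begin{equation*}
\partial_q\phi(\beta,q)=\tfrac{\beta^2}{2}\mathbb{E}_{h,z}\bigl[g''(\beta z\sqrt{q}+h)\bigr],\qquad \partial_\beta\phi(\beta,q)=q\beta\,\mathbb{E}_{h,z}\bigl[g''(\beta z\sqrt{q}+h)\bigr].
\end{equation*}
Both formulas remain valid at $q=0$ by continuity, as can be checked by a direct Taylor expansion.

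From these expressions the bounds are immediate. For the numerator, use $q\in[0,1]$, $\beta\le\overline{\beta}$, and $|g''|\le 6$ to obtain $|\partial_\beta\phi|\le 6\overline{\beta}$. For the denominator, the key observation is that
\begin{equation*}
1-\partial_q\phi(\beta,q)=1-\beta^2\mathbb{E}_{h,z}\bigl[\mathrm{sech}^4(\beta z\sqrt{q}+h)\bigr]+2\beta^2\mathbb{E}_{h,z}\bigl[\tanh^2(\beta z\sqrt{q}+h)\,\mathrm{sech}^2(\beta z\sqrt{q}+h)\bigr],
\end{equation*}
and the second term is nonnegative. Thus the AT-line condition \Cref{eq:at_line}, which holds with a uniform additive gap $\delta=\delta(\mathcal{A})>0$ on $\mathcal{A}$ by \Cref{assumption:high_temp}, implies $1-\partial_q\phi\ge\delta$ throughout $\mathcal{A}$. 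Combining these gives $\partial_\beta q\le 6\overline{\beta}/\delta(\mathcal{A})$, which is the desired uniform bound.

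The only subtle point I anticipate is the smoothness of $q(\beta,\mu,\sigma^2)$ itself, i.e.\ a priori justification that the implicit function theorem applies (including across any point where $q$ might equal zero, as can happen when $h\equiv 0$ and $\beta\le 1$). The expansion argument above shows $\phi$ is $C^1$ in $q$ with derivative extending continuously to $q=0$, and the denominator bound $1-\partial_q\phi\ge\delta$ then certifies local smoothness of $q(\beta,\cdot)$ via the implicit function theorem, after which the stated global bound follows by covering $\mathcal{A}$.
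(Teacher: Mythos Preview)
Your proposal is correct and follows essentially the same approach as the paper: implicit differentiation of the fixed-point equation, Gaussian integration by parts to rewrite both partials in terms of $\mathbb{E}[g''(\beta z\sqrt{q}+h)]$, a trivial bound on the numerator, and then using the AT-line condition \Cref{eq:at_line} (with its uniform gap $\delta(\mathcal{A})$) together with nonnegativity of the $\tanh^2\mathrm{sech}^2$ term to lower-bound the denominator. The only cosmetic differences are that the paper uses the sharper $|g''|\le 2$ (one can check $g''(x)=6\,\mathrm{sech}^4 x-4\,\mathrm{sech}^2 x\in[-2/3,2]$) and does not dwell on the $q=0$ smoothness issue you flag.
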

\begin{proof}
    We follow the argument of Lemma 1.7.5 of Talagrand \cite{talagrand2010mean}, which is stated for $\beta\leq 1/2$ but holds under \Cref{assumption:high_temp}. Let $f(x)=\tanh^2(x)$. Fixing $\mu,\sigma^2$, define $F(\beta,q)=\mathbb{E}_{h,z}[f(\beta z\sqrt{q}+h)]$ so that $q$ satisfies $q(\beta)=F(\beta,q(\beta))$. Letting $f(x)=\tanh^2(x)$, implicit differentiation implies
    \begin{align*}
        q'(\beta)=\frac{\frac{\partial F}{\partial \beta}(\beta,q(\beta))}{1-\frac{\partial F}{\partial q}(\beta,q(\beta))}&=\frac{\mathbb{E}[z\sqrt{q}f'(\beta z\sqrt{q}+h]}{1-\frac{\beta \mathbb{E}[zf'(\beta z\sqrt{q}+h)]}{2\sqrt{q}}}\\
        &=\frac{\beta q\mathbb{E}[f''(\beta z\sqrt{q}+h)]}{1-\beta^2\mathbb{E}[f''(\beta z\sqrt{q}+h)]/2},
    \end{align*}
    where $f''(x)=\frac{2-4\sinh^2(x)}{\cosh^4(x)}$ and the last line follows from applying Gaussian integration by parts on $z$. Since $\vert f''(x)\vert \leq 2$ and we assume $\beta$ is bounded on $\mathcal{A}$ (and $q\in [0,1]$), it suffices to show that the denominator is bounded below by a constant depending on $\mathcal{A}$. But this is clear since
    \begin{equation*}
     \frac{\beta^2}{2}\mathbb{E}\left[\frac{2-4\sinh^2(\beta z\sqrt{q}+h)}{\cosh^4(\beta z\sqrt{q}+h)}\right]\leq \beta^2 \mathbb{E}\left[\frac{1}{\cosh^4(\beta z\sqrt{q}+h)}\right]\leq1-\delta(\mathcal{A})
    \end{equation*}
    for some constant $\delta(\mathcal{A})>0$ where the last step is given by \Cref{eq:at_line} and continuity recalling $\mathcal{A}$ is compact.
\end{proof}

The main consequence we require is the following:
\begin{lem}[see e.g. Talagrand, Lemma 1.7.5 of \cite{talagrand2010mean}]
\label{lem:q_close}
    Let $\mathcal{A}$ satisfy \Cref{assumption:high_temp} and suppose $(\beta,\mu,\sigma^2)\in \mathcal{A}$. Let $\beta_{-}=\beta\sqrt{1-1/n}$. Then
    \begin{equation*}
        \vert q(\beta_{-},\mu,\sigma^2)-q(\beta,\mu,\sigma^2)\vert \leq \frac{\beta C_{\ref{lem:q_derivative}}(\mathcal{A})}{n}.
    \end{equation*}
\end{lem}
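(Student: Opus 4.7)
The plan is to apply the mean value theorem to the function $\beta \mapsto q(\beta, \mu, \sigma^2)$ on the interval $[\beta_-, \beta]$, using the uniform derivative bound established in Lemma \ref{lem:q_derivative}. Since $\mathcal{A} = [0, \overline{\beta}] \times [\underline{h}, \overline{h}] \times [\underline{\sigma}^2, \overline{\sigma}^2]$ is a product set and $\beta_- = \beta\sqrt{1 - 1/n} \leq \beta \leq \overline{\beta}$, we have $(\beta', \mu, \sigma^2) \in \mathcal{A}$ for every $\beta' \in [\beta_-, \beta]$, so the bound $|\partial_\beta q| \leq C_{\ref{lem:q_derivative}}(\mathcal{A})$ applies uniformly along this interval.

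The only remaining computation is to bound $\beta - \beta_- = \beta(1 - \sqrt{1 - 1/n})$. Writing
\begin{equation*}
1 - \sqrt{1 - 1/n} = \frac{1/n}{1 + \sqrt{1 - 1/n}} \leq \frac{1}{n},
\end{equation*}
we obtain $\beta - \beta_- \leq \beta/n$. Combining this with the mean value theorem yields
\begin{equation*}
|q(\beta_-, \mu, \sigma^2) - q(\beta, \mu, \sigma^2)| \leq \sup_{\beta' \in [\beta_-, \beta]} \left|\frac{\partial q}{\partial \beta}(\beta', \mu, \sigma^2)\right| \cdot (\beta - \beta_-) \leq C_{\ref{lem:q_derivative}}(\mathcal{A}) \cdot \frac{\beta}{n},
\end{equation*}
which is exactly the claimed bound.

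There is no genuine obstacle here: the result is essentially a corollary of Lemma \ref{lem:q_derivative}, with the only slightly delicate point being the verification that $\beta_-$ stays inside the high-temperature region $\mathcal{A}$, which is immediate from its definition and the monotonicity $\beta_- \leq \beta$. The factor $\beta/n$ (rather than just $1/n$) is exactly what one gets from the first-order Taylor expansion of $\sqrt{1 - 1/n}$ around $1$, and tracking it explicitly is what yields the stated form of the bound.
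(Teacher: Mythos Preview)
Your proof is correct and follows essentially the same approach as the paper: both apply the derivative bound from Lemma~\ref{lem:q_derivative} over the interval $[\beta_-,\beta]$ (you via the mean value theorem, the paper via the integral $\int_{\beta_-}^{\beta} q'(x)\,dx$) and then bound $\beta-\beta_-=\beta(1-\sqrt{1-1/n})\leq \beta/n$. Your explicit check that $(\beta',\mu,\sigma^2)\in\mathcal{A}$ for all $\beta'\in[\beta_-,\beta]$ is a nice detail the paper leaves implicit.
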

\begin{proof}
    We have
    \begin{align*}
        \vert q(\beta_{-},\mu,\sigma^2)-q(\beta,\mu,\sigma^2)\vert&=\left\vert \int_{\beta_{-}}^{\beta} q'(x)\mathrm{d}x \right\vert\\
        &\leq \vert \beta - \beta_{-}\vert C_{\ref{lem:q_derivative}}\\
        &=\beta(1-\sqrt{1-1/n})C_{\ref{lem:q_derivative}}\\
        &\leq \frac{\beta C_{\ref{lem:q_derivative}}}{n},
    \end{align*}
    where we use the fact that $\sqrt{1-x}\geq 1-x$ for $x\in [0,1]$.
\end{proof}

To state the next intermediate results, we require the following notation. 

\begin{defn}
    Given the Sherrington-Kirkpatrick measure $\mu_{\beta,A,\bm{h}}$ as in \Cref{eq:sk_with_external}, let $\sigma^1,\sigma^2,\dots\in \{-1,1\}^n$ denote independent samples from $\mu_{\beta,A,\bm{h}}$. Let 
    \begin{equation*}R_{1,2}\triangleq \frac{1}{n}\sum_{i=1}^n \sigma^1_i\sigma^2_i
    \end{equation*}
    denote the \textbf{overlap} of $\sigma^1,\sigma^2$. 
    
    Finally, let $\nu=\nu_{n,\beta,\mu,\sigma^2}$ denote the measure on $(\{-1,1\}^n)^{\otimes \mathbb{N}}$ defined by
    \begin{equation*}
        \nu \triangleq \mathbb{E}_{A\sim \mathsf{GOE}(n),\bm{h}\sim \mathcal{N}(\mu\cdot \bm{1},\sigma^2 I)}[\mu^{\otimes N}_{A,\beta,\bm{h}}(\cdot)],
    \end{equation*} where $\mu^{\otimes \mathbb{N}}$ denotes the product measure on countably many independent samples from $\mu$.
\end{defn}

Next, we require the following result of Talagrand as stated by Brennecke, Xu, and Yau \cite{brennecke2023operator}:

\begin{thm}[Theorem 13.7.1 of \cite{talagrand2010mean}, see also Theorem 2.1 of \cite{brennecke2023operator}]
\label{thm:moment_bounds}
Suppose $\mathcal{A}$ satisfies \Cref{assumption:high_temp}. Then there exists a constant $K=K(\mathcal{A})$ such that for any $(\beta,\mu,\sigma^2)\in \mathcal{A}$, it holds that
\begin{equation*}
    \nu\left(\exp\left(\frac{n(R_{1,2}-q(\beta,\mu,\sigma^2))^2}{K}\right)\right)\leq 2.
\end{equation*}
Equivalently, the random variable $\sqrt{n}(R_{1,2}-q(\beta,\mu,\sigma^2))$ is subgaussian on $\mathcal{A}$ with uniformly bounded subgaussian norm.
\end{thm}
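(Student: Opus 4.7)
The plan is to follow the classical ``coupled copies'' strategy of Guerra and Talagrand, which is precisely designed to extract sharp subgaussian concentration rates for the overlap of the SK model from local stability conditions on the replica-symmetric solution. The starting point is to view the quantity $\nu(\exp(\lambda n(R_{1,2}-q)^2))$ as the (normalized) partition function of a family of coupled SK replicas with an additional quadratic perturbation pushing the pairwise overlap toward $q$. Bounding it amounts to bounding the free energy of this perturbed multi-replica system uniformly over $(\beta,\mu,\sigma^2)\in \mathcal{A}$.

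First, I would introduce $m$ coupled copies $\sigma^1,\ldots,\sigma^m$ of the SK measure together with a constraint field that couples their pairwise overlaps near a target value $q'$, and then apply Guerra's interpolation: interpolate linearly in a parameter $t\in [0,1]$ between the genuine coupled Hamiltonian and a decoupled Gaussian model whose covariance is dictated by the 1-RSB ansatz at level $(m,q')$. Differentiating the interpolating free energy in $t$ and applying Gaussian integration by parts expresses the derivative as an average of $(R_{\ell,\ell'}-q)^2$ and $(R_{\ell,\ell'}-q')^2$ terms against the interpolated Gibbs measure; the key point is that all such terms appear with a definite sign when averaged correctly. Integrating this differential identity yields an upper bound of the form
\begin{equation*}
\frac{1}{n}\log \mathbb{E}[Z_m] \le m\,\Phi_{\beta,q}(m,q') + o_n(1),
\end{equation*}
for all admissible $(m,q')$. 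At this point, condition \Cref{eq:high_temp_derivative} enters: since $\partial_m \Phi_{\beta,q}(m,q')|_{m=1}<0$ uniformly on $\mathcal{A}$ for each $q'>q$, one may choose $m$ slightly greater than $1$ to pick up a strict improvement in the bound, while condition \Cref{eq:at_line} ensures that the quadratic behavior of $\Phi_{\beta,q}(m,q')$ near $q'=q$ is genuinely coercive and uniform.

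Finally, I would convert the free energy bound into the desired exponential moment estimate on $n(R_{1,2}-q)^2$ via a standard Markov/Chernoff argument: the ratio $\mathbb{E}[Z_m]/(\mathbb{E}[Z])^m$ dominates $\nu(\exp(cn(R_{1,2}-q)^2))$ after the auxiliary perturbation is tuned, so the strict inequality in $m$ above translates into a uniform upper bound of $2$ once the constant $K(\mathcal{A})$ is chosen appropriately (and uniformly, using compactness of $\mathcal{A}$ and the continuity of all quantities involved, together with \Cref{lem:q_derivative} and \Cref{lem:q_close} to control the dependence of $q$ on $\beta$). The main obstacle, and the substance of Talagrand's Chapter 13, is the interpolation step: one has to carry out the Gaussian differentiation with enough care to get the sharp quadratic coercivity in $(q'-q)$, and one has to handle the higher-order remainder terms generated by iterated integration by parts. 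Without this level of precision one would obtain only stretched-exponential tails rather than the Gaussian tail on the $\sqrt{n}$ scale stated in the theorem; the combination of \Cref{eq:high_temp_derivative} and \Cref{eq:at_line} is exactly what is needed to close this gap.
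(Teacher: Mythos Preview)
Your proposal sketches the substance of Talagrand's interpolation argument for his Theorem~13.7.1, and that outline is broadly correct. However, it is a very different route from what the paper actually does. The paper does not reprove Talagrand's result at all: it simply \emph{cites} Talagrand (and Brennecke--Xu--Yau) for the statement that, at each fixed $(\beta,\mu,\sigma^2)$ satisfying \Cref{eq:high_temp_derivative} and \Cref{eq:at_line}, the subgaussian bound holds with a constant $K(\beta,\mu,\sigma^2)$ that is \emph{locally} bounded (i.e., valid on a small neighborhood of that point). The only new content the paper adds is a one-paragraph compactness argument: cover $\mathcal{A}$ by these neighborhoods, extract a finite subcover, and take the maximum of the finitely many constants to obtain a single $K(\mathcal{A})$.

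So the contrast is stark: you are trying to rebuild Guerra interpolation, the $m$-copy free energy bound, and the conversion to an exponential moment estimate from scratch, whereas the paper treats all of that as a black box and contributes only the uniformity-over-$\mathcal{A}$ step via compactness. Your approach would of course yield the result (and is essentially what Talagrand does in Chapter~13), but it is far more than is required here. Also, your invocation of \Cref{lem:q_derivative} and \Cref{lem:q_close} is unnecessary for this theorem; those lemmas are used elsewhere in the appendix (for \Cref{thm:tap_consequence}), not in the proof of \Cref{thm:moment_bounds}.
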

\begin{proof}
    From Talagrand (Theorem 13.7.1 of \cite{talagrand2010mean}) and Theorem 2.1 of \cite{brennecke2023operator}, it is shown that for any $(\beta,\mu,\sigma^2)$ satisfying \Cref{eq:high_temp_derivative} and \Cref{eq:at_line}, the random variable $\sqrt{n} (R_{1,2}-q)$ is subgaussian under $\nu$ with locally bounded subgaussian norm. That is, for each such $(\beta,\mu,\sigma^2)\in \mathcal{A}$, there exists constants $K=K(\beta,\mu,\sigma^2)$ and $\delta=\delta(\beta,\mu,\sigma^2)>0$ such that for any $(\beta',\mu',\sigma'^2)$ satisfying \Cref{eq:high_temp_derivative} and \Cref{eq:at_line} such that $\|(\beta',\mu',\sigma'^2)-(\beta,\mu,\sigma^2)\|_{\infty}\leq \delta$, for $\nu_{n,\beta',\mu',\sigma'^2}$ it holds that
\begin{equation*}
    \nu\left(\exp\left(\frac{n(R_{1,2}-q(\beta',\mu',\sigma'^2))^2}{K}\right)\right)\leq 2.
    \end{equation*}

    A standard compactness argument then implies the uniform bound for any $\mathcal{A}$ satisfying \Cref{assumption:high_temp}: for each $(\beta,\mu,\sigma^2)\in \mathcal{A}$, form the associated nontrivial open set with corresponding subgaussian constant. This collection of open sets covers $\mathcal{A}$, and since $\mathcal{A}$ is compact, there is a finite subcover. Simply taking the largest parameter $K=K(\mathcal{A})<\infty$ over this subcover completes the proof.
\end{proof}

With these preliminaries, we can return to the proof of \Cref{thm:cov_bound_external}.

\begin{thm}[\Cref{thm:cov_bound_external}, restated]
    Let $\mathcal{A}$ satisfy the conditions of \Cref{assumption:high_temp} and let $(\beta,\mu,\sigma^2)\in \mathcal{A}$. Suppose that either:
    \begin{enumerate}
        \item (Small Inverse Temperature) It holds that $\overline{\beta}<1/2-\eta$ for some $\eta>0$.
        \item (Deterministic Field) It holds that $\overline{\sigma}=0$ (that is, the external field is deterministic and parallel to $\mathbf{1}$).
    \end{enumerate}
    
    Then for any $\delta>0$, there exists $K_{\ref{thm:cov_bound_external}}(\mathcal{A},\delta,\varepsilon)$ independent of $n$ such that with probability at least $1-\delta$ over $A\sim \mathsf{GOE}(n)$,
    \begin{equation*} \|\mathrm{Cov}(\mu_{\beta,A,\bm{h}})\|_{\mathsf{op}}\leq K_{\ref{thm:cov_bound_external}}(\mathcal{A},\delta,\varepsilon).
    \end{equation*}
\end{thm}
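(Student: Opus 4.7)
The plan is to invoke the main operator norm bound of Brennecke, Xu, and Yau~\cite{brennecke2023operator} essentially as a black box, verifying that our hypotheses match theirs. Their work establishes that under a subgaussian overlap concentration of the form provided by \Cref{thm:moment_bounds}, together with the technical restrictions we impose (either small inverse temperature $\overline{\beta} < 1/2 - \eta$ or deterministic field $\overline{\sigma} = 0$), one has moment bounds of the form $\mathbb{E}_{A,\bm{h}} \|\mathrm{Cov}(\mu_{\beta,A,\bm{h}})\|_{\mathsf{op}}^p \leq K(\mathcal{A})^p$ for any fixed $p$, with $K(\mathcal{A})$ depending only on the compact parameter region. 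The fact that their conditions align precisely with Talagrand's high-temperature framework is the main reason this reduction is clean.

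The first step is to verify the input hypothesis uniformly over $(\beta, \mu, \sigma^2) \in \mathcal{A}$. This is precisely what \Cref{thm:moment_bounds} provides, as a consequence of \Cref{assumption:high_temp} and a standard compactness argument. The additional restrictions on $\overline{\beta}$ or $\overline{\sigma}$ correspond to the regime in which the Brennecke-Xu-Yau proof technique applies; these limitations are inherited from how they control the contribution of the mean vector $\bm{m}$ in their spectral moment analysis (in the deterministic-field case, $\bm{m}$ is aligned with $\bm{1}$ and simplifies considerably, while the small-$\beta$ restriction allows for a direct perturbative treatment).

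The second step converts from the moment bound to the high-probability statement via Markov's inequality applied to $\|\mathrm{Cov}\|_{\mathsf{op}}^p$, yielding that with probability at least $1 - \delta$,
\[
\|\mathrm{Cov}(\mu_{\beta,A,\bm{h}})\|_{\mathsf{op}} \leq K(\mathcal{A}) \delta^{-1/p},
\]
and we set $K_{\ref{thm:cov_bound_external}}(\mathcal{A}, \delta, \eta) = K(\mathcal{A}) \delta^{-1/p}$ for a suitable fixed $p$. Since the Brennecke-Xu-Yau bound holds for every fixed $p$ with $K(\mathcal{A})$ independent of $p$ (up to some mild dependence), this gives a constant depending only on $\mathcal{A}$, $\delta$, and (in the small-$\beta$ case) $\eta$.

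The main substantive obstacle is entirely contained in the Brennecke-Xu-Yau operator norm bound itself, which proceeds via a delicate spectral moment method: one bounds $\|\mathrm{Cov}\|_{\mathsf{op}}^{2k} \leq \mathrm{Tr}(\mathrm{Cov}^{2k})$, expresses this trace moment as an expectation over $2k$ independent replicas involving products of overlaps, and then recursively controls these higher-order quantities using the subgaussian concentration of \Cref{thm:moment_bounds}. Since this heavy machinery is already established in the cited work, our role is essentially to verify that their conditions hold uniformly on $\mathcal{A}$ and to package the conclusion in the high-probability form needed for the learning application.
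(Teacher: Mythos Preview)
Your high-level strategy is correct and matches the paper's: the result is essentially a corollary of Brennecke--Xu--Yau~\cite{brennecke2023operator}, and your task is just to check that \Cref{assumption:high_temp} supplies the overlap concentration they need (which is exactly \Cref{thm:moment_bounds}). However, your description of the form of their result and the role of the two extra hypotheses does not match what the paper uses.

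The paper's proof does not invoke moment bounds $\mathbb{E}\|\mathrm{Cov}\|_{\mathsf{op}}^p \leq K^p$ followed by Markov. Instead, it cites BXY's Theorem~1.4, which already gives a high-probability statement of the form
\[
\|\mathrm{Cov}(\mu_{\beta,A,\bm{h}})\|_{\mathsf{op}} \leq 2\bigl(\|L^{-1}\|_{\mathsf{op}} + K_1(\mathcal{A},\delta)\bigr),
\]
where $L = \mathrm{diag}\bigl((\bm{1}-\bm{m}^2)^{-1}\bigr) + \beta^2(1-q)I - \beta A$ is the TAP Hessian. The two extra hypotheses enter precisely to bound $\|L^{-1}\|_{\mathsf{op}}$, and they are handled differently: for deterministic field, the paper cites BXY's Proposition~1.4; for small inverse temperature $\beta < 1/2 - \eta$, the paper gives its own one-line spectral argument, namely $L \succeq I - \beta A \succeq (\eta/2)I$ on the high-probability event $\|A\|_{\mathsf{op}} \leq 2+\eta$. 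Your sketch attributes both cases to internal features of BXY's trace-moment analysis, which mischaracterizes both the structure of their result and where the small-$\beta$ restriction actually comes from. The conclusion is the same, but the paper's route is more concrete and makes the role of each hypothesis transparent.
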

\begin{proof}
    The main result of Brennecke, Xu, and Yau (Theorem 1.4 of \cite{brennecke2023operator}) shows for any $\delta>0$ and $(\beta,\mu,\sigma^2)\in \mathcal{A}$, and for large enough $n\geq n(\mathcal{A},\delta)$, there exists a constant $K_1(\mathcal{A},\delta)>0$ such that with probability at least $1-\delta$, it holds that
    \begin{equation*}
    \|\text{Cov}(\mu_{\beta,A,\bm{h}})\|_{\mathsf{op}}\leq 2\cdot (\|L^{-1}\|_{\mathsf{op}}+K_1(\mathcal{A},\delta)),
    \end{equation*}
    where $L$ is the matrix defined by 
    \begin{equation*}
        L\triangleq \text{diag}((\bm{1}-\bm{m}^2_{\beta, A,\bm{h}})^{-1})+\beta^2(1-q)I-\beta A,
    \end{equation*}
    where we interpret $\bm{y}^p$ to be the entrywise $p$th powers of a vector. In comparison the the stated result in \cite{brennecke2023operator}, the fact that this bound holds uniformly over fixed $(\beta,\mu,\sigma^2)\in \mathcal{A}$ is because the estimate in \Cref{thm:moment_bounds} holds uniformly over $\mathcal{A}$ under \Cref{assumption:high_temp}, and the constants above depend just on these parameters.

    In the case of deterministic field, it is further shown in Proposition 1.4 of \cite{brennecke2023operator} that for any $\delta>0$, it holds for all $n\geq n(\delta)$ with probability at least $1-\delta$ that $\|L^{-1}\|_{\mathsf{op}}\leq K_2(\mathcal{A},\delta)$. By replacing $\delta$ with $\delta/2$ and taking a union bound, we deduce the desired conclusion for all large enough $n\geq n(\delta)$. For smaller $n$, we may take the trivial upper bound of $n$.

    In the case where $\beta\leq 1/2-\eta$ for some $\eta>0$ with arbitrary Gaussian field, a more direct argument suffices since for any $\eta'>0$, $\|A\|_{\mathsf{op}}\geq 2+\eta'$ with exponentially small probability in $n$ by standard bounds on the operator norm of GOE matrices (see Lemma 4.12 of El Alaoui, Montanari, and Sellke \cite{DBLP:conf/focs/AlaouiMS22} for details). It follows on the complementary event\footnote{Here, we use the standard notation for Loewner order: $A\preceq B\iff B-A$ is positive semi-definite.} that
    \begin{equation*}
        L\succeq I-\beta A\succeq I - (1/2-\eta)(2+\eta')I\succeq \frac{\eta}{2}I
    \end{equation*}
    if we choose $\eta=\eta'$, for instance. For large enough $n\geq n(\delta,\eta)$, this event occurs with probability at least $1-\delta$, and replacing $\delta$ with $\delta/2$ and taking a union bound yields the desired conclusion in this case. Again, for all $n\leq n(\delta,\eta)$, one may simply take the trivial upper bound of $n$.
\end{proof}

We now turn to the proof of \Cref{thm:tap_consequence}. To prove it, we require the following notation: for an index $i\in [n]$, we consider the Sherrington-Kirkpatrick measures $\mu^{(i)}_{\beta,A,\bm{h}}$ where we remove the $i$th spin. That is, for $\bm{x}\in \{-1,1\}^{[n]\setminus \{i\}}$
\begin{equation*}
    \mu^{(j)}_{\beta,A,\bm{h}}(\bm{x})\propto \exp\left( \beta \bm{x}^TA_{[n]\setminus \{i\},[n]\setminus \{i\}}\bm{x}+\sum_{j\neq i} h_j x_j\right).
\end{equation*}
We note that for each $j\in [n]$, the measure $\mu_{\beta,A,\bm{h}}^{(j)}$ has the law of the Sherrington-Kirkpatrick measure on $n-1$ sites with inverse temperature $\beta_{-}=\beta\sqrt{1-1/n}$ and $A'\sim \mathsf{GOE}(n-1)$ due to the normalization. Of course, these subsystems are highly correlated across $j\in [n]$ since they share the bulk of the original $\mathsf{GOE}(n)$ matrix. With this notation, we write $\bm{m}^{(j)}\in \mathbb{R}^{[n]\setminus \{j\}}$ to denote the mean vector of the measure $\mu^{(j)}_{\beta,A,\bm{h}}$, i.e. for $i\neq j$,
\begin{equation*}
\bm{m}^{(j)}_i=\mathbb{E}_{\bm{x}\sim\mu^{(j)}_{\beta,A,\bm{h}}} [x_i].
\end{equation*}

We are now ready to give the proof of \Cref{thm:tap_consequence}, which we restate for convenience:
\begin{thm}[\Cref{thm:tap_consequence}, restated]
        Let $\mathcal{A}=\mathcal{A}(\overline{\beta},\underline{h},\overline{h},\underline{\sigma}^2,\overline{\sigma}^2)$ satisfy the conditions of \Cref{assumption:high_temp} with $\max\{\underline{h},\underline{\sigma^2}\}>0$. There is a constant $C_{\ref{thm:tap_consequence}}(\mathcal{A})>0$ such that for any $(\beta,\mu,\sigma^2)\in \mathcal{A}$, there exists standard Gaussian random variables $z_1,\ldots,z_n$ such that
    \begin{equation*}
        \mathbb{E}_{A,\bm{h},\bm{z}}\left[\max_{i\in [n]} \left\vert \beta\sum_{j\neq i} A_{ij}\bm{m}_j -\beta^2(1-q)\bm{m}_i-\beta z_i\sqrt{q}\right\vert\right]\leq  \frac{C_{\ref{thm:tap_consequence}}(\mathcal{A})}{n^{1/4}}.
    \end{equation*}
\end{thm}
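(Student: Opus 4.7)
The plan is to follow Talagrand's cavity-based derivation of the TAP equations for the SK model (Chapter~1.7 of \cite{talagrand2010mean}). At a high level, I would construct the Gaussians $z_i$ from a direct decomposition of the row $A_{i,-i}$, split the TAP residual into a ``cavity-shift'' piece and an ``Onsager-correction'' piece, control each using Talagrand's subgaussian overlap estimate (\Cref{thm:moment_bounds}), and then pass from per-coordinate $L^2$ bounds to a bound on the expected maximum via a sum-of-squares inequality. To set up the coupling, fix $i\in[n]$ and condition on $(A_{-i,-i},\bm{h})$, which determines the cavity measure $\mu^{(i)}_{\beta,A,\bm{h}}$ and its mean vector $\bm{m}^{(i)}\in\mathbb{R}^{n-1}$; since $A_{i,-i}$ has i.i.d.\ $\mathcal{N}(0,1/n)$ entries independent of this $\sigma$-algebra, the projection
\[
z_i \triangleq \frac{\sqrt{n}}{\|\bm{m}^{(i)}\|_2}\sum_{j\neq i}A_{ij}\bm{m}^{(i)}_j,\qquad q^{(i)}\triangleq\frac{\|\bm{m}^{(i)}\|_2^2}{n}
\]
satisfies $z_i\mid\mu^{(i)}\sim\mathcal{N}(0,1)$ and hence is marginally standard normal. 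By construction $\beta\sum_{j\neq i}A_{ij}\bm{m}^{(i)}_j=\beta z_i\sqrt{q^{(i)}}$, so the residual of interest splits as $E_i^{(1)}+E_i^{(2)}$ where $E_i^{(1)}\triangleq\beta\sum_{j\neq i}A_{ij}(\bm{m}_j-\bm{m}^{(i)}_j)-\beta^2(1-q)\bm{m}_i$ and $E_i^{(2)}\triangleq\beta z_i(\sqrt{q^{(i)}}-\sqrt{q})$.

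The cavity-shift term $E_i^{(2)}$ is the easier one. The cavity system $\mu^{(i)}$ is an SK model on $n-1$ sites at inverse temperature $\beta_-=\beta\sqrt{1-1/n}$, so \Cref{lem:q_close} yields $|q(\beta_-)-q(\beta)|=O(1/n)$, and \Cref{thm:moment_bounds} applied to $\mu^{(i)}$ combined with the identity $q^{(i)}=\mathbb{E}_{\mu^{(i)}\otimes\mu^{(i)}}[R_{1,2}]$ yields $\mathbb{E}[(q^{(i)}-q)^2]=O(1/n)$ together with uniform subgaussian tails across $i$. Because $\underline{h}\vee\underline{\sigma}^2>0$ keeps $q$ bounded away from zero we have $|\sqrt{q^{(i)}}-\sqrt{q}|\lesssim|q^{(i)}-q|$, and combining the subgaussian maximum of the $n$ marginally standard Gaussians $z_i$ with the subgaussian bound on $\max_i(q^{(i)}-q)^2$ via Cauchy--Schwarz gives $\mathbb{E}[\max_i|E_i^{(2)}|]=\widetilde{O}(n^{-1/2})$, comfortably below the target $n^{-1/4}$.

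The main obstacle is controlling $E_i^{(1)}$, which captures the Onsager correction and forms the technical core of the argument. The strategy is linear response: integrating out $x_i$ in $\bm{m}_j=\langle x_j\rangle_\mu$ and Taylor expanding in $\beta x_i\sum_k A_{ik}x_k+h_ix_i$ around the cavity measure gives a first-order identity
\[
\bm{m}_j-\bm{m}^{(i)}_j\approx\beta A_{ij}\bm{m}_i\bigl(1-(\bm{m}^{(i)}_j)^2\bigr),
\]
up to higher-order cross terms involving connected correlations under $\mu^{(i)}$. Substituting produces $\beta^2\bm{m}_i\sum_{j\neq i}A_{ij}^2(1-(\bm{m}^{(i)}_j)^2)$; standard Gaussian concentration of $\sum_j A_{ij}^2$ around $(n-1)/n$ (independent of $\mu^{(i)}$) together with $\tfrac{1}{n}\sum_j(\bm{m}^{(i)}_j)^2=q^{(i)}\to q$ from \Cref{thm:moment_bounds} makes this leading order cancel $\beta^2(1-q)\bm{m}_i$ exactly. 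The surviving higher-order cross terms, which involve sums such as $\sum_{j,k}A_{ij}A_{ik}\langle x_j;x_k\rangle_{\mu^{(i)}}$, are handled via Gaussian integration by parts on $A_{i,-i}$, reducing them to moments of $R_{1,2}-q$ under replicated cavity measures that \Cref{thm:moment_bounds} bounds subgaussianly. A careful accounting, parallel to Talagrand's Proposition~1.7.11 and its surroundings, should give $\mathbb{E}[(E_i^{(1)})^2]\leq C(\mathcal{A})/n^{3/2}$; obtaining this rate rather than the cruder $1/n$ that would leave the maximum bound vacuous is the most delicate step.

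Finally, Jensen's inequality converts per-coordinate $L^2$ control on $E_i^{(1)}$ into the required bound on the expected max:
\[
\mathbb{E}\!\Big[\max_i|E_i^{(1)}|\Big]\leq\mathbb{E}\!\Big[\Bigl(\sum_i(E_i^{(1)})^2\Bigr)^{1/2}\Big]\leq\Bigl(\sum_i\mathbb{E}[(E_i^{(1)})^2]\Bigr)^{1/2}\leq\sqrt{n\cdot C(\mathcal{A})/n^{3/2}}=O(n^{-1/4}),
\]
and adding this to the $\widetilde{O}(n^{-1/2})$ bound on $\mathbb{E}[\max_i|E_i^{(2)}|]$ yields the stated conclusion, with $C_{\ref{thm:tap_consequence}}(\mathcal{A})$ determined by the constants arising in \Cref{thm:moment_bounds} and \Cref{lem:q_derivative}.
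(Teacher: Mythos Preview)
Your overall strategy---Talagrand's cavity method, constructing $z_i$ by projecting $A_{i,-i}$ onto the cavity magnetization $\bm{m}^{(i)}$, and splitting the residual into the Onsager piece $E_i^{(1)}$ and the overlap-shift piece $E_i^{(2)}$---is exactly what the paper does, quoting Talagrand's Lemma~1.7.6 and Theorem~1.7.7 for the two pieces respectively. The gap is in how you pass from per-coordinate control to the expected maximum.

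You claim $\mathbb{E}[(E_i^{(1)})^2]\leq C(\mathcal{A})/n^{3/2}$ and correctly flag that the cruder $O(1/n)$ would make your $L^2$ sum-of-squares bound vacuous. Unfortunately $O(1/n)$ is the actual rate: the TAP residual fluctuates on the CLT scale $n^{-1/2}$, and Talagrand's analysis yields precisely $\mathbb{E}[(E_i^{(1)})^4]\leq C/n^2$, consistent with second moment of order $1/n$ rather than $n^{-3/2}$. Even the leading surviving term after your Onsager cancellation---essentially $\beta^2\bm{m}_i\bigl(\sum_j A_{ij}^2(1-(\bm{m}_j^{(i)})^2)-(1-q)\bigr)$---already has second moment of order $1/n$, since $\sum_j A_{ij}^2$ alone has variance $\Theta(1/n)$ independently of the cavity system. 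So the $L^2$ route gives only $\mathbb{E}[\max_i|E_i^{(1)}|]\leq\sqrt{n\cdot C/n}=O(1)$, which is useless.

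The fix, and what the paper does, is to work in $L^4$ rather than $L^2$. Talagrand's estimates give $\mathbb{E}[(E_i^{(1)})^4]\leq C/n^2$ (this is \Cref{eq:close_2}) and $\mathbb{E}[(E_i^{(2)})^4]\leq C/n^2$ (this is \Cref{eq:close_1}, obtained via \Cref{lem:q_close} and \Cref{thm:moment_bounds}), and then
\[
\mathbb{E}\Bigl[\max_i|E_i|\Bigr]\leq\Bigl(\mathbb{E}\Bigl[\max_i|E_i|^4\Bigr]\Bigr)^{1/4}\leq\Bigl(\sum_i\mathbb{E}\bigl[|E_i|^4\bigr]\Bigr)^{1/4}\leq\bigl(n\cdot C/n^2\bigr)^{1/4}=O(n^{-1/4}).
\]
Your separate subgaussian treatment of $E_i^{(2)}$ then becomes unnecessary---and it was on shaky ground anyway, since \Cref{thm:moment_bounds} is an annealed bound and does not directly yield quenched subgaussian tails for $q^{(i)}-q$ uniformly in $i$.
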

\begin{proof}
    We rely on several facts established by Talagrand. First, Talagrand (proof of Lemma 1.7.6, \cite{talagrand2010mean}) shows that there exists a standard Gaussian $z=z_n$ defined on the same probability space as $A,\bm{h}$ such that
    \begin{equation*}
        \mathbb{E}_{A,\bm{h},z}\left[\left( z\sqrt{q} - \sum_{i<n} A_{i,n}\bm{m}^{(n)}_{i}\right)^4\right]\leq \frac{3}{q^2}\mathbb{E}_{A,\bm{h},z}\left[\mathbb{E}_{\sigma^1,\sigma^2\sim \mu^{(n)}_{\beta,A,\bm{h}}}\left[\left(R^{-}_{1,2}-q\right)^4\right]\right].
    \end{equation*}
    Here, we write $R_{1,2}^{-}$ to denote $\frac{1}{n-1}\sum_{i=1}^{n-1} \sigma^1_i\sigma^2_i$ since these samples are from the cavity measure on the first $n-1$ spins. We now replace $q$ with $q_-=q(\beta_-,\mu,\sigma^2)$:
    \begin{equation*}
\mathbb{E}_{A,\bm{h},z}\left[\mathbb{E}_{\sigma^1,\sigma^2\sim \mu^{(n)}_{\beta,A,\bm{h}}}\left[\left(R^{-}_{1,2}-q\right)^4\right]\right]\leq 8\mathbb{E}_{A,\bm{h},z}\left[\mathbb{E}_{\sigma^1,\sigma^2\sim \mu^{(n)}_{\beta,A,\bm{h}}}\left[\left(R^{-}_{1,2}-q_-\right)^4\right]\right]+8(q-q_-)^4,
    \end{equation*}
    where we use the simple inequality $(x+y)^4\leq 8x^4+8y^4$. But since $\mu_{\beta,A,\bm{h}}^{(n)}$ has the same law as $\mu_{\beta_-,A',\bm{h}}$ where $A'\sim \mathsf{GOE}(n-1)$ as described above,  we may apply \Cref{thm:moment_bounds} to see that there a constant $K_1(\mathcal{A})$ such that
    \begin{equation*}
\mathbb{E}_{A,\bm{h},z}\left[\mathbb{E}_{\sigma^1,\sigma^2\sim \mu^{(n)}_{\beta,A,\bm{h}}}\left[\left(R^{-}_{1,2}-q_-\right)^4\right]\right]\leq \frac{K_1(\mathcal{A})}{n^2}.
    \end{equation*}
    Finally, applying \Cref{lem:q_close} and combining these inequalities implies that there is some constant $K_2(\mathcal{A})$ such that
    \begin{equation}
    \label{eq:close_1}
        \mathbb{E}_{A,\bm{h},z}\left[\left( z\sqrt{q} - \sum_{i<n} A_{i,n}\bm{m}^{(n)}_{i}\right)^4\right]\leq \frac{K_2(\mathcal{A})}{n^2};
    \end{equation}
    here, we also use the fact that our assumption on $\mathcal{A}$ ensures that $q\geq c(\mathcal{A})>0$ for some constant $c(\mathcal{A})$.

    Next, Talagrand (proof of Theorem 1.7.7 on page 78 of \cite{talagrand2010mean}) shows that under these same conditions, it holds that
    \begin{equation}
    \label{eq:close_2}
        \mathbb{E}_{A,\bm{h},z}\left[\left( \beta\sum_{i=1}^{n-1} A_{i,n}\bm{m}_i-\beta^2(1-q)\bm{m}_n-\beta\sum_{i=1}^{n-1} A_{i,n}\bm{m}^{(n)}_{i} \right)^4\right]\leq \frac{K_3(\mathcal{A})}{n^2}
    \end{equation}
    for some other constant $K_3(\mathcal{A})>0$. Combining \Cref{eq:close_1,eq:close_2} with the same inequality $(x+y)^4\leq 8(x^4+y^4)$ implies that 
    \begin{equation*}
        \mathbb{E}_{A,\bm{h},z}\left[\left( \beta\sum_{i=1}^{n-1} A_{i,n}\bm{m}_i-\beta^2(1-q)\bm{m}_n-\beta z\sqrt{q} \right)^4\right]\leq \frac{K_4(\mathcal{A})}{n^2}.
    \end{equation*}

    The same argument holds for each index $j\in [n]$, so we may find standard normal variables $z_1,\ldots,z_n$ on the same probability space where the previous display holds for each $j\in [n]$. It finally follows by symmetry that 
    \begin{align*}
    \mathbb{E}_{A,\bm{h},z}&\left[\max_{i\in [n]}\left( \beta\sum_{j\neq i} A_{i,j}\bm{m}_j-\beta^2(1-q)\bm{m}_i-\beta z_i\sqrt{q} \right)^4\right]\\
    &\leq \mathbb{E}_{A,\bm{h},z}\left[\sum_{i\in [n]}\left( \beta\sum_{j\neq i} A_{i,j}\bm{m}_j-\beta^2(1-q)\bm{m}_i-\beta z_i\sqrt{q} \right)^4\right]\\
    &\leq \frac{K_4(\mathcal{A})}{n},
    \end{align*}
    and Jensen's inequality implies that 
    \begin{equation*}
        \mathbb{E}_{A,\bm{h},z}\left[\max_{i\in [n]}\left\vert\beta\sum_{j\neq i} A_{i,j}\bm{m}_j-\beta^2(1-q)\bm{m}_i-\beta z_i\sqrt{q} \right\vert\right]\leq \frac{K_4(\mathcal{A})^{1/4}}{n^{1/4}}.
    \end{equation*}
\end{proof}

\end{document}